%% file: arxiv.tex
\documentclass[11pt,letterpaper]{article}
\usepackage[margin=1in]{geometry}
\newcommand{\paperroot}{}
\IfFileExists{arxiv/arxiv.tex}{\renewcommand{\paperroot}{arxiv/}}{}

\usepackage[margin=1in]{geometry}
\usepackage[T1]{fontenc}
\usepackage{lmodern}
\usepackage{amsmath,amssymb,amsthm,mathtools}
\usepackage{microtype}
\usepackage{natbib}
\setcitestyle{authoryear,round,citesep={;},aysep={,},yysep={;}}
\usepackage{hyperref}
\hypersetup{
  hidelinks,
  pdftitle={Symmetry-Aware Feature Learning: A Polynomial Separation for Multi-Index Models},
  pdfauthor={Jivan Waber, Vanessa Piccolo, Yatin Dandi, Florent Krzakala}
}
\usepackage{xurl}
\usepackage{graphicx}
\usepackage{caption}
\usepackage{wrapfig}
\usepackage{enumitem}
\newcommand{\R}{\mathbb{R}}
\newcommand{\N}{\mathbb{N}}

\newcommand{\E}{\mathbb{E}}

\newcommand\tr{{\textnormal{Tr}}}

\newcommand{\Id}{\mathrm{I}}

\newtheorem{theorem}{Theorem}[section]
\newtheorem{proposition}[theorem]{Proposition}
\newtheorem{lemma}[theorem]{Lemma}
\newtheorem{corollary}[theorem]{Corollary}
\theoremstyle{definition}
\newtheorem{definition}[theorem]{Definition}
\theoremstyle{remark}
\newtheorem{remark}[theorem]{Remark}
\newtheorem{assumption}{Assumption}

\title{Symmetry-Aware Feature Learning:\\
A Polynomial Separation for Multi-Index Models}

\author{
Jivan Waber$^{1}$, \quad Vanessa Piccolo$^{1}$, \quad Yatin Dandi$^{1,2}$, \quad Florent Krzakala$^{1}$ \\[0.75em]
{\small $^1$\,Information, Learning and Physics Laboratory}\\
{\small $^2$\,Statistical Physics of Computation Laboratory}\\
{\small \'{E}cole Polytechnique F\'{e}d\'{e}rale de Lausanne (EPFL)}
}

\date{}

\begin{document}

\maketitle

\begin{abstract}
We establish a polynomial sample complexity separation between symmetry-aware and symmetry-agnostic feature learning. We study growing-rank multi-index models with high-dimensional Gaussian covariates in $\mathbb{R}^d$ and $r=\Theta(d^\delta)$ teacher directions forming a cyclic symmetry orbit, where $0<\delta<1/2$. We compare three ways of exploiting this structure: architectural weight sharing, data augmentation over the full symmetry group, and learning without access to the symmetry. In particular, we analyze a symmetry-tied convolutional network, an untied network, and the same untied network trained with full-group data augmentation, using spherical online SGD with correlation loss. For a class of polynomial links with information exponent $p\ge3$, we prove matching sample complexity bounds up to logarithmic factors: the tied and augmented learners achieve weak directional recovery in $\widetilde{\Theta}(d^{p-1})$ samples, whereas the symmetry-agnostic learner requires $\widetilde{\Theta}(rd^{p-1})$. For the pure quadratic Hermite link, the same separation holds for weak recovery of the teacher subspace, with sample complexities $\widetilde{\Theta}(d)$ and $\widetilde{\Theta}(rd)$, respectively.
Thus, full-group data augmentation matches the sample efficiency of architectural weight sharing, and both provide a polynomial advantage over training without symmetry. For $p\ge3$, the proof reveals a two-stage mechanism: fluctuations at initialization select one direction in the teacher orbit, after which localized growth amplifies its overlap to the weak recovery scale while competing overlaps remain near their initialization scale.
\end{abstract}

\begingroup
\small
\tableofcontents
\endgroup

\section{Introduction}
\label{sec:introduction}

Symmetry is one of the most effective inductive biases in machine learning.
When transformations of the input preserve the task, the same feature need not be learned independently at every transformed location. Convolutional neural networks exploit this principle through weight sharing, while group-equivariant architectures extend it to more general transformations~\citep{cohen2016group,kondor2018generalization,bronstein2021geometric}.  More broadly, geometric deep learning views symmetry and geometry as
organizing principles for designing neural architectures~\citep{bronstein2021geometric}.

There are two common ways to exploit this prior knowledge. One is \emph{architectural}: constrain the model through weight sharing or equivariance. The other is \emph{algorithmic}: keep the model unconstrained but train on transformed copies of each observation, as in data augmentation. Group averaging provides a natural mathematical description of the latter~\citep{chen2020group,lyle2020benefits}. A substantial literature has shown that invariance can improve variance, complexity, and generalization~\citep{elesedy2021provably,mei2021learning}, and has quantified advantages of convolutional structure in infinite-width regimes~\citep{cagnetta2023wide}. Here we study how symmetry changes the sample complexity of feature learning by stochastic gradient descent (SGD). In particular, we ask
\begin{quote}
\emph{How much does symmetry reduce the number of samples that SGD needs to discover a relevant feature, and how do weight sharing and data augmentation compare?}
\end{quote}
Answering this question mathematically requires analyzing the high-dimensional, nonconvex SGD dynamics from random initialization, rather than comparing hypothesis classes or estimators after optimization.

\paragraph{Feature learning in single- and multi-index models}
To study this question, we focus on single- and multi-index models of the form
\[
f^\ast(x) = g \left ( \langle w^\ast_1, x\rangle, \ldots, \langle w^\ast_r, x \rangle \right ) ,
\]
where \(g \colon \R^r \to \R\) is the link function and \(w_1^\ast, \ldots, w_r^\ast\) are the latent directions. These models have been used to study feature recovery, computational barriers, and the dynamics of gradient-based learning, see e.g.~\citet{dudeja2018learning,barbier2019optimal,benarous2021online,arnaboldi2023high,abbe2023sgd,dandi2024benefits,troiani2025fundamental,simsek2025learning}. In particular, for Gaussian single-index models, the degree of the first nonzero Hermite coefficient of the link function--its information exponent--determines the time scale on which online SGD escapes the nearly uninformative initialization regime~\citep{benarous2021online}.

Interactions between several latent directions produce richer dynamics. Finite-rank analyses of online SGD show that competing correlations can suppress or select one another, leading to sequential feature recovery. In multi-spiked tensor PCA, \citet{benarous2026} characterize this through a greedy maximum selection mechanism, which determines the order in which latent directions are recovered. Related selection mechanisms have also been identified in shallow multi-index models with growing numbers of features~\citep{ren2025emergence}, while other recent works study growing collections of approximately independent or diverse features~\citep{oko2024diverse,ren2025learning}. Our setting combines the difficulty of interacting directions with that of growing rank: the number of competing directions grows with the dimension, while the directions are strongly dependent because they are symmetry-related copies of a single hidden feature. This structure raises a question that is absent from fixed-rank models: can a learner avoid paying separately for every direction by exploiting their common origin? A more detailed literature review is given in Section~\ref{sec:related_work}.

\paragraph{A symmetric growing-rank model}
We study a growing-rank model designed to isolate this effect:
\[
f^\ast (x)  = \frac{1}{\sqrt r} \sum_{k=0}^{r-1} \sigma \left (\langle \Pi^k w^\ast, x \rangle \right),
\]
where $\Pi$ generates a group action of size $r$. Thus the \(r\) latent directions \(w^\ast,\Pi w^\ast,\ldots,\Pi^{r-1}w^\ast\) are not independent parameters: they form the orbit of a single direction \(w^\ast\). For cyclic shifts, this is a convolutional teacher. We consider the growing-rank regime \(r\asymp d^\delta\) with \(0<\delta<\frac12\).

We analyze two regimes for the link function \(\sigma\). Our main results concern polynomial links with information exponent $p\ge 3$. We also treat the pure quadratic Hermite link $\sigma=h_2$, for which the individual orbit directions are generally not identifiable and the natural recovery target is instead the subspace spanned by the teacher orbit.

We compare three ways of learning this target with spherical online SGD. The \emph{tied} procedure ($\mathsf{T}$) uses a single trainable weight shared across its entire orbit. The \emph{augmented} procedure (\(\mathsf{A}\)) uses an untied two-layer model but averages each sample loss over the full group orbit. The \emph{untied} procedure (\(\mathsf{U}\)) uses the same untied model without augmentation. The latter two therefore have the same parametrization; their only difference is whether the symmetry is used during training or not.

\paragraph{An \(r\)-fold separation}
Our main result shows that exploiting the symmetry reduces the sample complexity of feature recovery by a factor $r$. For polynomial links with information exponent $p\ge 3$, we consider \emph{weak directional recovery}: some learned weight must reach a fixed, non-vanishing correlation with an element of the identifiable teacher orbit. For the dimension dependent step sizes prescribed in our analysis, we prove matching upper and lower bounds (up to logarithmic factors):
\begin{equation*} \label{eq:intro-separation}
n_{\mathsf{T}} = \widetilde\Theta(d^{p-1}), \qquad n_{\mathsf{A}} = \widetilde\Theta(d^{p-1}), \qquad n_{\mathsf{U}} = \widetilde\Theta(rd^{p-1}).
\end{equation*}
The scale $\widetilde\Theta(d^{p-1})$ is the weak recovery scale of vanilla online SGD for Gaussian single-index models with information exponent $p$~\citep{benarous2021online, damian2023smoothing}. Thus, despite the teacher rank growing as $r\asymp d^\delta$, weight sharing and exact data augmentation recover a teacher feature at the single-index scale, whereas the symmetry-agnostic untied learner requires an additional factor equal to the number of orbit directions. Since $r\asymp d^\delta$, this yields a polynomial separation in the dimension. We further show that, with width $s\gtrsim r\log r$, the untied and augmented procedures cover the entire teacher orbit within their respective weak recovery sample complexity scales.

For the pure Hermite-$2$ link, individual orbit directions are generally unidentifiable. Thus, we instead consider \emph{weak subspace recovery}: some learned weight must acquire a fixed, non-vanishing fraction of its squared norm in the subspace spanned by the teacher orbit. We obtain the same $r$-fold separation: the tied and augmented procedures require $\widetilde\Theta(d)$ samples, whereas the untied learner requires $\widetilde\Theta(rd)$.

\paragraph{Proof mechanism}
For $p\ge 3$, the proof reveals a two-step \emph{selection-and-growth mechanism}. At random initialization, one of the $r\asymp d^\delta$ correlations with the teacher orbit is slightly larger than the others. During an initial microscopic phase, the advantage of this leading correlation over the competing correlations is amplified. In a subsequent macroscopic phase, the selected correlation grows to a macroscopic level while the competitors remain microscopic. For the pure quadratic Hermite link, individual orbit directions are not identifiable. Rather than tracking a single leading correlation, we track the squared norm of the projection onto the teacher subspace, which undergoes multiplicative growth.

The three procedures differ in how symmetry affects these dynamics. Weight sharing ties all transformed copies of the feature to a single trainable parameter, whereas data augmentation averages the stochastic gradient over the orbit. Although augmentation leaves the population objective unchanged, orbit averaging reduces stochastic fluctuations. With the corresponding learning-rate rescaling, the tied and augmented procedures have the same effective correlation dynamics. In the symmetry-agnostic untied procedure, the corresponding effective dynamics are slower by a factor $r$, yielding the $r$-fold separation in both recovery regimes. See Subsection~\ref{sec:overview_proof} for a fuller proof overview.

\section{Setting}

\subsection{Teacher model}
Let \(d\ge1\), and let \(r\ge1\) be an integer dividing \(d\). Define \(\Delta \coloneqq d / r\), and let \(\Pi\in \R^{d\times d}\) denote the permutation matrix corresponding to a cyclic shift by \(\Delta\) coordinates. Thus \(\Pi^r=\Id_d\), and the cyclic group generated by \(\Pi\) is
\[
\mathcal{G} \coloneqq \left \{\Id_d,\Pi,\ldots,\Pi^{r-1} \right \}.
\]

We observe a sequence of samples \((x_i,y_i)_{i=1}^n\), where \(x_i \overset{\mathrm{i.i.d.}}{\sim}\mathcal N(0,\Id_d)\) and \(y_i = f^\ast(x_i)\), where
\begin{equation}\label{eq:teacher}
f^\ast(x) \coloneqq \frac1{\sqrt r} \sum_{k=0}^{r-1} \sigma \left(\langle \Pi^kw^\ast,x\rangle\right).
\end{equation}
Here \(w^\ast\in\mathbb{S}^{d-1}\) is the teacher direction and \(\sigma \colon \R \to \R\) is a known nonlinear link function. The teacher depends on \(w^\ast\) only through its orbit \(\{\Pi^k w^\ast \colon 0\le k<r\}\). Thus the statistical model is invariant under \(w^\ast\mapsto \Pi^k w^\ast\).

We analyze two regimes for the link function.
\begin{assumption}[Link function]\label{hyp:link}
The link \(\sigma\) belongs to one of the following two classes.
\begin{itemize}[leftmargin=1.5em, itemsep=0pt, topsep=2pt]
    \item[(i)] \textbf{Polynomial links with information exponent \(p\ge3\).}
    The link is a fixed polynomial with finite Hermite expansion
    \[
    \sigma(z)=\sum_{j=p}^P a_jh_j(z), \qquad 3\le p\le P<\infty, \qquad a_p\neq0,
    \]
    where \((h_j)_{j\ge0}\) is the orthonormal Hermite basis of \(L^2(\mathcal N(0,1))\). The integers \(p,P\) and the coefficients \((a_j)_{j=p}^P\) are fixed independently of \(d\). If \(p\) is even, we also assume that \(\sigma\) is even.
    \item[(ii)] \textbf{Pure quadratic link.}
    \[
    \sigma(z)=h_2(z)=\frac1{\sqrt2}(z^2-1).
    \]
\end{itemize}
\end{assumption}

In case of Assumption~\ref{hyp:link}(i), we call $p = p_\sigma \coloneqq \min \{j\ge1 \colon a_j\neq0\}$ the \emph{information exponent} of the link. Since \(p\ge3\), the Hermite coefficients of degrees \(1,\ldots,p-1\) vanish.

\begin{remark}[On the link assumptions]
The finite Hermite expansion in Assumption~\ref{hyp:link}(i) is mainly technical and could be relaxed under suitable regularity and moment-growth conditions on the link. The parity condition for $p$ ensures that the symmetry of the leading-order dynamics agrees with the exact symmetry of the statistical model. This condition can be removed by considering paired opposite initializations; we do not establish this extension here.
\end{remark}

For \(u \in\mathbb{S}^{d-1}\), define the correlations
\begin{equation} \label{eq:correlation}
m_k (u) \coloneqq \left\langle \Pi^kw^\ast,u \right\rangle, \qquad 0\le k<r.
\end{equation}
The teacher direction \(w^\ast\) is identifiable only up to cyclic shifts. When \(\sigma\) is even, there is an additional global sign ambiguity. We therefore define the \emph{identifiable teacher orbit} by
\[
\mathcal{O}_\sigma(w^\ast) \coloneqq
\begin{cases}
\{\pm\Pi^kw^\ast \colon 0\le k<r\}, & \sigma\text{ is even},\\[1mm]
\{\Pi^kw^\ast \colon 0\le k<r\}, & \text{otherwise}.
\end{cases}
\]
For \(u\in\mathbb{S}^{d-1}\), define its correlation with the identifiable teacher orbit by
\begin{equation} \label{eq:orbit_corr}
M_\sigma (u) \coloneqq \max_{v\in\mathcal O_\sigma(w^\ast)} \langle v,u\rangle.
\end{equation}
The quadratic case in Assumption~\ref{hyp:link}(ii) is qualitatively different: the individual directions in the teacher orbit are generally not identifiable. Accordingly, weak recovery in this case will instead be formulated in terms of the subspace spanned by the teacher orbit (see Definition~\ref{def:weak_subspace_recovery}).

\subsection{Learning procedures}
We compare three learning procedures: a tied convolutional student, an untied student, and the same untied student trained with data augmentation. Throughout, let
\[
\ell(y,z)\coloneqq-yz
\]
denote the correlation loss.

\begin{definition}[Tied convolutional student] \label{def:tied_student}
The tied student is parametrized by the weight \(w \in \mathbb{S}^{d-1}\) and is defined by
\[
f_w^\mathsf{T}  (x) \coloneqq \frac{1}{\sqrt{r}} \sum_{k=0}^{r-1} \sigma \left( \langle \Pi^k w, x \rangle\right).
\]
Its sample loss is
\[
L^\mathsf{T}  (w; x,y) \coloneqq \ell \left (y, f_w^\mathsf{T} (x) \right).
\]
\end{definition}

\begin{definition}[Untied student] \label{def:untied_student}
Let \(W = (w^1,\ldots, w^s) \in (\mathbb{S}^{d-1})^s\) with \(s = s(d) \ge 1\). The untied student is parametrized by \(W\) and defined by
\[
f_W^\mathsf{U} (x) \coloneqq \frac{1}{\sqrt{s}} \sum_{j=1}^s \sigma \left( \langle w^j, x \rangle\right).
\]
Its sample loss is
\[
L^\mathsf{U} (W; x,y) \coloneqq \ell \left (y, f_W^\mathsf{U} (x) \right).
\]
\end{definition}

Unlike the tied student,  \(w^1,\ldots, w^s\) are not constrained to belong to a common orbit under \(\mathcal{G}\).

\begin{definition}[Untied student with data augmentation] \label{def:untied_augmented_student}
The augmented procedure uses the same model class \(f_W^\mathsf{U}\), but replaces the sample loss by its group average:
\[
L^\mathsf{A} (W; x,y) \coloneqq \frac{1}{r} \sum_{k=0}^{r-1}  \ell \left (y, f_W^U  (\Pi^k x) \right).
\]
\end{definition}

Set \(s_{\mathsf{T}}\coloneqq 1\) and \(s_{\mathsf{U}}= s_{\mathsf{A}} \coloneqq s \in \N\), with \(s \ge 1\). For \(\mathsf k\in\{\mathsf{T},\mathsf{U},\mathsf{A}\}\), define \(\mathcal{M}_{\mathsf{k}} \coloneqq (\mathbb{S}^{d-1})^{s_{\mathsf{k}}}\), so that
\[
\mathcal{M}_{\mathsf{T}}=\mathbb S^{d-1}, \qquad \mathcal{M}_{\mathsf{U}} = \mathcal{M}_{\mathsf{A}} = (\mathbb S^{d-1})^s.
\]
For \(\theta \in \mathcal{M}_{\mathsf{k}}\), let
\begin{equation} \label{eq:population_loss}
\Phi^{\mathsf k}(\theta) \coloneqq \E \left[L^{\mathsf k}(\theta;x,y)\right]
\end{equation}
denote the corresponding population objective. Since the Gaussian input distribution is invariant under \(\Pi\) and the teacher satisfies \(f^\ast(\Pi^kx)=f^\ast(x)\), the augmented and unaugmented untied procedures have the same population objective:
\[
\Phi^{\mathsf{A}}(W)=\Phi^{\mathsf{U}}(W).
\]
Thus data augmentation leaves the population drift unchanged.

\paragraph{Online spherical SGD}
For \(\theta=(w^1,\ldots,w^{s_{\mathsf{k}}}) \in \mathcal{M}_{\mathsf{k}}\), write
\[
\nabla_{w^j} L(\theta) \coloneqq  (\Id_d -w^j w^{j\top}) \widehat\nabla_{w^j} L(\theta)
\]
for the spherical gradient, where \(\widehat\nabla_{w^j}\) denotes the Euclidean gradient.
Given an initial condition \(\theta_0\in\mathcal{M}_{\mathsf k}\) and a step size \(\eta_d^{\mathsf k}>0\), single-pass spherical SGD is defined componentwise by
\begin{equation} \label{eq:SGD}
\begin{cases}
\widetilde{w}_t^j = w_{t-1}^j - \eta_d^{\mathsf{k}} \nabla_{w^j} L^{\mathsf{k}}(\theta_{t-1};x_t,y_t),\\
w_t^j = \frac{\widetilde{w}_t^j}{\|\widetilde{w}_t^j\|},
\end{cases}
\qquad 1\le j\le s_{\mathsf{k}}.
\end{equation}
For the tied procedure we identify \(\theta_t=w_t^{1,\mathsf{T}}\) with \(w_t\). For the untied and augmented procedures, \(\theta_t=W_t=(w_t^1,\ldots,w_t^s)\). At each iteration, the algorithm uses one new independent observation \((x_t,y_t)\). In the augmented procedure, this same observation additionally generates the \(r\) transformed inputs \(\Pi^kx_t\) for \(0\le k<r\). Hence the number of SGD iterations equals the number of fresh samples, although augmentation incurs a larger per-iteration computational cost.

\section{Main results}

We now state our main results on weak recovery by online spherical SGD. We work on a probability space \((\Omega,\mathcal F,\mathbb P)\) supporting the teacher direction \(w^\ast\), all parameter initializations, and an i.i.d.\ sequence of Gaussian covariates \((x_i)_{i \ge 1}\) used by online SGD.

\begin{assumption} [Random teacher and initialization] \label{hyp:teacher_init}
The teacher direction is sampled according to $w^\ast \sim \operatorname{Unif}(\mathbb{S}^{d-1})$. For each learning procedure \(\mathsf{k} \in\{\mathsf{T},\mathsf{U},\mathsf{A}\}\), initialize
\[
w_{0}^{1,\mathsf k},\ldots,w_{0}^{s_{\mathsf k},\mathsf k} \stackrel{\mathrm{i.i.d.}}{\sim} \operatorname{Unif}(\mathbb S^{d-1}),
\]
where \(s_{\mathsf{T}}=1\) and \(s_{\mathsf{U}}=s_{\mathsf{A}}=s\). The teacher, all parameter initializations, and the Gaussian covariates \((x_i)_{i\ge1}\) are mutually independent.
\end{assumption}
Unless stated otherwise, \(\mathbb P\) denotes probability with respect to all of these sources of randomness. We work throughout in the following high-dimensional regime.

\begin{assumption}[High-dimensional regime] \label{hyp:scaling}
Fix \(0<\delta<\frac12\) and \(\kappa \ge 0\). We work along a sequence of dimensions \(d\to\infty\) for which
\[
r=r(d)=d^\delta, \qquad 1\le s=s(d)\le d^\kappa,
\]
with \(r(d)\) an integer divisor of \(d\).
\end{assumption}

\subsection{Weak directional recovery (\texorpdfstring{\(p\ge3\)}{p >= 3})}
We first consider Assumption~\ref{hyp:link}(i), namely polynomial links with information exponent \(p \ge 3\). For \(\rho\in(0,1)\) and \(\mathsf k\in\{\mathsf{T}, \mathsf{U},\mathsf{A}\}\), define the event
\[
\mathcal R_n^{\mathsf k}(\rho) \coloneqq \left\{ \max_{0\le t\le n} \max_{1\le j\le s_\mathsf{k}} M_\sigma(w_t^{j,\mathsf k}) \ge \rho\right\},
\]
where \(s_{\mathsf{T}}=1\), \(s_{\mathsf{U}}=s_{\mathsf{A}}=s\), and
\(w_t^{1,\mathsf{T}}=w_t^{\mathsf{T}}\).

\begin{definition}[Weak recovery]
\label{def:weak_recovery}
Fix \(\rho\in(0,1)\), independently of \(d\). We say that procedure
\(\mathsf k\in\{\mathsf{T},\mathsf{U},\mathsf{A}\}\) achieves
\emph{weak recovery at level \(\rho\) within \(n=n(d)\) samples} if
\[
\mathbb P\bigl(\mathcal R_n^{\mathsf k}(\rho)\bigr) \to 1 \qquad\text{as }d\to\infty.
\]
\end{definition}
Thus, weak recovery requires the tied procedure's single weight, or at least one weight of the untied or augmented procedure, to attain a fixed non-vanishing correlation with the identifiable teacher orbit. When \(p\) is even, recovery is only up to cyclic shift and global sign, since both are exact symmetries of the model. When \(p\) is odd, only the cyclic shift remains unidentifiable, and recovery requires positive correlation with the teacher orbit.

For \(\mathsf k\in\{\mathsf{T},\mathsf{U},\mathsf{A}\}\), let \(n_{\mathsf k}(\rho)\) denote the sample complexity of achieving weak recovery at level \(\rho\) in the sense of Definition~\ref{def:weak_recovery}. Throughout, \(\widetilde\Theta\) denotes matching upper and lower sample complexity bounds up to logarithmic factors in \(d\). The upper bounds give recovery with probability \(1-o(1)\) by the stated scale, whereas below the lower-bound scale, recovery does not occur with probability \(1-o(1)\). Our main results identify the sample complexity for the three procedures.

\begin{theorem}[Sample complexity of weak recovery, \(p\ge3\)]
\label{thm:weak_recovery}
Suppose that Assumptions~\ref{hyp:link}(i),~\ref{hyp:teacher_init}, and~\ref{hyp:scaling} hold. Fix \(\rho\in(0,1)\), independently of \(d\). Then there exists \(A_\ast=A_\ast(p,P)>0\) such that, for every fixed \(A>A_\ast\), the following holds. Set 
\[
\eta_d=d^{-p/2}(\log d)^{-A},
\]
and run the three online spherical SGD procedures \eqref{eq:SGD} with step sizes
\begin{equation}
\label{eq:student_stepsizes}
\eta_d^{\mathsf{T}}=\eta_d, \qquad \eta_d^{\mathsf{U}}=\sqrt{\frac{s}{r}}\,\eta_d, \qquad \eta_d^{\mathsf{A}}=\sqrt{rs}\,\eta_d.
\end{equation}
Then the sample complexities for weak recovery at level \(\rho\) satisfy
\[
n_{\mathsf{T}}(\rho),\;n_{\mathsf{A}}(\rho) = \widetilde\Theta\!\left(\frac{d^{(p-2)/2}}{\eta_d}\right) = \widetilde\Theta(d^{p-1}), \qquad n_{\mathsf{U}}(\rho) = \widetilde\Theta\!\left(\frac{r\,d^{(p-2)/2}}{\eta_d}\right) = \widetilde\Theta(rd^{p-1}).
\]
\end{theorem}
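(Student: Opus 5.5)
We will reduce all three procedures to a common family of stochastic recursions for the correlations $m_k(w_t^j)$, $0\le k<r$, and then analyze that family once. First we compute the population drifts. Expanding the correlation loss in Hermite polynomials, the Gaussian identity $\E[h_i(\langle u,x\rangle)h_j(\langle v,x\rangle)]=\delta_{ij}\langle u,v\rangle^j$ gives
\[
\Phi^{\mathsf T}(w)=-\frac1r\sum_{k,l}\sum_j a_j^2\langle \Pi^k w^\ast,\Pi^l w\rangle^j=-\sum_{k}\sum_j a_j^2 m_k(w)^j ,
\]
using that $\Pi$ is orthogonal and the double sum collapses over $\mathcal G$. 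Similarly,
\[
\Phi^{\mathsf U}(W)=\Phi^{\mathsf A}(W)=-\frac{1}{\sqrt{rs}}\sum_{j'}\sum_k\sum_{j}a_j^2 m_k(w^{j'})^j .
\]
The tied and untied drifts therefore have the same shape, differing by the prefactor $1/\sqrt{rs}$. The step sizes in \eqref{eq:student_stepsizes} are chosen so that the drift seen by a single correlation is $\eta_d\, p a_p^2 m_k^{p-1}$ for $\mathsf T$ and $\mathsf A$, and $(\eta_d/r)\,p a_p^2 m_k^{p-1}$ for $\mathsf U$.

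Next we bound the martingale noise. For $\mathsf T$ and $\mathsf U$, the noise in the direction $\Pi^k w^\ast$ has variance of order $\eta^2$ times a constant. For $\mathsf A$, orbit averaging makes the $r$ transformed gradients nearly uncorrelated, because their cross-covariances involve overlaps $\langle \Pi^k w,\Pi^l w\rangle=O(\sqrt{\log d/d})$ at initialization, and these stay small while the weights are delocalized. The variance then drops by a factor $r$, and this exactly compensates the larger step $\sqrt{rs}\,\eta_d$. In every case the effective noise is at most $\eta_d^{2}$ per step up to constants, and the normalization correction $-\tfrac{\eta^2}{2}\|\nabla L\|^2 m_k$ is of order $\eta_d^2 d$. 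Writing $\bar\eta=\eta_d$ for $\mathsf T,\mathsf A$ and $\bar\eta=\eta_d/r$ for $\mathsf U$, each correlation obeys
\[
m_k(t)=m_k(0)+\bar\eta\, c\sum_{t'<t} m_k(t')^{p-1}\bigl(1+o(1)\bigr)+\text{martingale}+O(t\eta^2 d)\,m_k .
\]

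We then run the selection-and-growth argument, in the spirit of \citet{benarous2021online,benarous2026}. At initialization the $r$ correlations are approximately i.i.d.\ $\mathcal N(0,1/d)$. The maximum of $r$ such variables is $\asymp\sqrt{\log r/d}$, and it beats the second largest by a relative margin of at least $1/\mathrm{polylog}$ with probability $1-o(1)$. In the microscopic phase, a bounding-flows comparison with the ODE $\dot m=\bar\eta c\, m^{p-1}$ shows the following. Since $A>A_\ast$, the martingale is at most $\sqrt{t}\,\eta_d\log d\ll m(0)$ up to the blow-up time $T\asymp 1/(\bar\eta\, m(0)^{p-2})\asymp d^{(p-2)/2}/\bar\eta$ (modulo logs), and the normalization term is negligible because $T\eta_d^2 d\ll 1$. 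The ODE blow-up time is very sensitive to $m(0)$, so the leader reaches a small constant while the competitors are still $O(\sqrt{\log d/d})$. The cross terms between different $k$, coming from $\langle\Pi^k w^\ast,\Pi^l w^\ast\rangle=O(\sqrt{\log d/d})$ for $k\ne l$ and from the higher Hermite terms, are controlled because the competitors stay microscopic. A short macroscopic phase, with a drift bounded below on $\{m\ge \epsilon\}$, then pushes the leader to level $\rho$. For even $p$, evenness of $\sigma$ lets us apply the same argument to $|m_k|$. Taking $T$ as above gives the upper bounds $n_{\mathsf T},n_{\mathsf A}=\widetilde O(d^{(p-2)/2}/\eta_d)$ and $n_{\mathsf U}=\widetilde O(r d^{(p-2)/2}/\eta_d)$. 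For $\mathsf U$ and $\mathsf A$ with $s$ weights we run a union bound over the weights; it only needs one of them to succeed.

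For the lower bounds, we use the same comparison from the other side. Every correlation satisfies $|m_k(0)|\le C\sqrt{\log d/d}$ for all $k$ and all $j\le s\le d^\kappa$, by a union bound. Before time $d^{(p-2)/2}/(\bar\eta\,(\log d)^{C'})$ the drift integral and the martingale together increase $\max_k|m_k|$ by at most a constant factor, so $M_\sigma$ stays $o(1)$ and $\mathcal R_n$ fails with probability $1-o(1)$. The step I expect to be the main obstacle is the noise analysis for the augmented procedure. Proving the factor-$r$ variance reduction requires showing that the $r$ transformed gradients stay nearly uncorrelated uniformly along the trajectory, which means controlling the self-overlaps $\langle w_t,\Pi^k w_t\rangle$ and the cross-overlaps among different weights. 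I would handle this with a second bounding-flows argument, run on those overlaps jointly with the correlations.
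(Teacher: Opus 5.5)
Your overall architecture matches the paper's: Hermite drift, an extreme-value gap at initialization, microscopic selection, a short macroscopic phase, and non-escape for the lower bound. The untied case breaks, however, because you rescale only the drift there. For $\mathsf U$ you set $\bar\eta=\eta_d/r$, but you then bound the noise by ``at most $\eta_d^2$ per step'' and the normalization by $\eta_d^2 d$ per step. Over the untied horizon $T_{\mathsf U}\asymp r/(\eta_d\gamma_d^{p-2})$ this gives two failures:
\begin{itemize}
\item The martingale has size $\eta_d\sqrt{T_{\mathsf U}}\asymp\sqrt{r/d}\,(\log d)^{-A/2}\cdot\mathrm{polylog}$. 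This exceeds $\gamma_d$ by a factor $d^{\delta/2}$ up to logarithms, let alone the initial gap $\gamma_d/\mathrm{polylog}$.
\item The normalization error is $T_{\mathsf U}\eta_d^2 d\asymp r(\log d)^{-A}\cdot\mathrm{polylog}\to\infty$.
\end{itemize}
So your claims $\sqrt t\,\eta_d\log d\ll m(0)$ and $T\eta_d^2 d\ll1$ are false for $\mathsf U$. Neither the selection argument nor your non-escape claim (drift plus martingale raise $\max_k|m_k|$ by at most a constant factor) survives over the untied horizon. The missing fact is the actual source of the $r$-fold separation. Each untied weight performs $u\mapsto u-\frac{\eta_d}{\sqrt r}\nabla L^{\mathrm S}(u;x,y)$ with $L^{\mathrm S}(u;x,y)=-y\,\sigma(\langle u,x\rangle)$. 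The single-weight coordinate noise $-y\,\sigma'(\langle u,x\rangle)\langle v,x\rangle$ has $O(1)$ conditional moments uniformly in $u$, since $\|y\|_{L^4}=O(1)$ on the teacher-geometry event. Per step, the drift is therefore $\eta_d/r$, the noise $\eta_d/\sqrt r$, and the normalization $\eta_d^2 d/r$. Over an $r$-times longer horizon, the cumulative noise and normalization are then exactly the tied ones, for both the upper and the lower bound.

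The obstacle you single out for $\mathsf A$ is unnecessary, and your proposed route for it is left unproven. That route tracks the self-overlaps $\langle w_t,\Pi^k w_t\rangle$ jointly with the correlations. Since $y=f^\ast(\Pi^k x)$ and $\langle w,\Pi^k x\rangle=\langle \Pi^{-k}w,x\rangle$, there is an exact identity $L^{\mathsf A}(W;x,y)=(rs)^{-1/2}\sum_j L^{\mathsf T}(w^j;x,y)$. With step $\sqrt{rs}\,\eta_d$, every augmented weight therefore runs exactly tied SGD with step $\eta_d$ on the same samples, and nothing beyond the tied case needs proof. Your tied noise bound is in fact the same statement you flag as hard for $\mathsf A$, because the tied gradient is also a $r^{-1/2}$-normalized sum over the $r$ shifts. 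The proposal thus assumes for $\mathsf T$ what it says it cannot yet prove for $\mathsf A$.

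The paper proves this bound uniformly over $w\in\mathbb S^{d-1}$ with no control of self-overlaps. Let $a_j(w)=\langle w,\Pi^j w\rangle$ and $v=\Pi^k w^\ast-m_k w$. Gaussian integration by parts writes the feature-gradient variance in direction $v$ as
\[
\sum_j\bigl[\phi'(a_j(w))\langle v,\Pi^j v\rangle+\phi''(a_j(w))\langle w,\Pi^j v\rangle\langle\Pi^j w,v\rangle\bigr].
\]
One bounds $\phi'$ and $\phi''$ by constants and puts the smallness on $\sum_{j\neq0}|\langle v,\Pi^j v\rangle|\le r\mu_\ast+2\sqrt{2r}\,|m_k|+r m_k^2$. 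This uses only teacher near-orthogonality and $\sum_\ell m_\ell^2\le 2$. The result is $\E[\xi_{k,t}^2\mid\mathcal F_{t-1}]\le C(1+r m_k^2)$, which is $O(1)$ while $|m_k|\le\beta_d$ because $r\beta_d^2=o(1)$. This bound is not $O(1)$ along the leader once it is macroscopic, so your blanket claim of $O(\eta_d^2)$ noise would again need the self-overlap control. The paper instead absorbs the $r m^2$ excess because the macroscopic phase lasts only $O(1/(\eta_d\beta_d^{p-2}))$ steps. The selected coordinate's cumulative noise variance is then at most $C\eta_d^2 T(1+r\beta_d)=o(\beta_d^2)$.
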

Thus, up to logarithmic factors, weight tying and exact data augmentation achieve weak recovery at the same $d^{p-1}$ scale as vanilla Gaussian single-index online SGD~\citep{benarous2021online, damian2023smoothing}, while the unaugmented untied procedure requires an additional factor \(r\). The upper bounds in Theorem~\ref{thm:weak_recovery} are stronger than Definition~\ref{def:weak_recovery} requires: any fixed weight of the untied or augmented procedure achieves weak recovery at the corresponding sample scale, so the proof does not rely on selecting the best among \(s\) initializations. Conversely, the lower bounds hold uniformly over all \(s\) weights. The proof combines the upper bounds of Theorem~\ref{thm:weak_recovery_appendix} with the lower bounds of Theorem~\ref{thm:nonescape}; see Subsection~\ref{sec:main_results_proofs}.

Figure~\ref{fig:median-recovery-h4} illustrates the predicted separation empirically for the pure Hermite-\(4\) link at \(\delta=1/2\). Since the upper bounds hold for any fixed weight, we set \(s=1\) and use the corresponding rescaled learning rates with a common initialization across procedures. This isolates the single-weight dynamics and controls for differences in initial alignment that can strongly affect recovery times at small dimensions but do not change the asymptotic sample complexity scales established above.

\begin{remark}[Choice of learning rates]
The reference rate \(\eta_d=d^{-p/2}(\log d)^{-A}\) has the largest polynomial order supported by our perturbative analysis. Up to logarithmic factors, the accumulated stochastic and normalization errors, relative to the relevant growth scale, are controlled by \((\eta_d d^{p/2})^{1/2}\) and \(\eta_d d^{p/2}\), respectively. Choosing the constant \(A\) sufficiently large makes both contributions negligible and allows comparison with the population dynamics.
\end{remark}

\paragraph{Orbit coverage}

For the tied student, recovery of one orbit direction automatically represents the entire orbit through weight sharing. For the untied and augmented students, weak recovery guarantees only that one weight aligns with one orbit direction. We therefore ask the stronger question of whether the collection of \(s\) weights can recover every orbit direction. Define the relevant correlation
\begin{equation} \label{eq:relevant_correlation}
q_k(w) \coloneqq
\begin{cases}
|m_k(w)|, & p\ \text{even},\\
m_k(w), & p\ \text{odd}.
\end{cases}
\end{equation}
For \(\mathsf k\in\{\mathsf{U},\mathsf{A}\}\), define the full-orbit coverage event
\[
\mathcal C_n^{\mathsf k}(\rho) \coloneqq \left\{\min_{0\le k<r} \max_{1\le j\le s_\mathsf{k}} \max_{0\le t\le n} q_k(w_t^{j,\mathsf k}) \ge\rho \right\}.
\]
\begin{definition}[Weak orbit coverage] \label{def:weak_orbit_coverage}
We say that procedure \(\mathsf k \in \{\mathsf{U}, \mathsf{A}\}\) \emph{weakly covers the full teacher orbit at level \(\rho\) within \(n=n(d)\) samples} if
\[
\mathbb P\bigl(\mathcal C_n^{\mathsf k}(\rho)\bigr) \to 1.
\]
Write \(n_{\mathsf k}^{\mathrm{cov}}(\rho)\) for the corresponding sample complexity scale.
\end{definition}

Equivalently, Definition~\ref{def:weak_orbit_coverage} requires every identifiable teacher-orbit direction to be recovered by at least one weight at some time before \(n\). With sufficiently large width \(s\), this stronger requirement can be achieved without changing the sample complexity scale.

\begin{corollary}[Weak orbit coverage] \label{cor:weak_orbit_coverage}
Under the assumptions and learning rates of Theorem~\ref{thm:weak_recovery}, fix \(\varepsilon>0\) and suppose \(s\ge(1+\varepsilon)r\log r\). Then
\[
n_{\mathsf{A}}^{\mathrm{cov}}(\rho) = \widetilde\Theta(d^{p-1}), \qquad n_{\mathsf{U}}^{\mathrm{cov}}(\rho) = \widetilde\Theta(rd^{p-1}).
\]
\end{corollary}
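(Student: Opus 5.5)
The lower bounds are immediate. Coverage implies weak recovery, since $\mathcal C_n^{\mathsf k}(\rho)\subseteq\mathcal R_n^{\mathsf k}(\rho)$: $q_k(w)\ge\rho$ forces $M_\sigma(w)\ge\rho$, because for $p$ even $\sigma$ is even and $\pm\Pi^kw^\ast$ are both in the orbit. So the lower bounds of Theorem~\ref{thm:weak_recovery}, which come from Theorem~\ref{thm:nonescape} and hold uniformly over all $s$ weights, give $n_{\mathsf A}^{\mathrm{cov}}\gtrsim d^{p-1}$ and $n_{\mathsf U}^{\mathrm{cov}}\gtrsim rd^{p-1}$ up to logarithms. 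The real work is the upper bound, which I would reduce to a coupon-collector argument over the $s$ independently initialized weights.

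\textbf{Step 1: refine the upper bound to a per-weight statement.} The upper bound of Theorem~\ref{thm:weak_recovery_appendix} is proved for any fixed weight. I would strengthen it to say that, with probability $1-o(1/s)$ (or at least $1-o(1/r)$ uniformly), weight $j$ reaches $q_{k_j}(w_t^j)\ge\rho$ before the relevant time scale. Here $k_j\coloneqq\arg\max_k q_k(w_0^j)$ is the index selected at initialization. This is exactly what the selection-and-growth mechanism yields. The leading initial correlation is amplified, the competitors stay at the $d^{-1/2}$ scale, and so the recovered direction is the one with the largest initial relevant correlation. I would extract this from the proof of the upper bound, not from its statement. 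The first obstacle is that the selection must succeed with high probability, which requires a gap between the top and second order statistics of $r=d^\delta$ Gaussians. That gap is typically only $\Theta(d^{-1/2}/\sqrt{\log r})$, and the upper-bound proof presumably already handles it at logarithmic cost. A union bound over $s\le d^\kappa$ weights then needs failure probabilities $o(d^{-\kappa})$, which is the main technical worry. The fallback is to apply the union bound only to $(1+\varepsilon)r\log r$ weights, or to note that the argument only needs the number of failing weights to be $o(s)$ in a suitable sense.

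\textbf{Step 2: law of $k_j$.} Conditional on $w^\ast$, the weights $w_0^j$ are i.i.d.\ uniform on the sphere. The map $w\mapsto\Pi w$ preserves the uniform measure and cyclically permutes $(m_k(w))_k$. For $p$ even, $w\mapsto -w$ also preserves the measure and leaves $(|m_k|)_k$ unchanged. Hence each $k_j$ is uniform on $\{0,\dots,r-1\}$ (ties have probability zero), and the $k_j$ are i.i.d. Note that for $p$ odd the recovered direction $\Pi^{k_j}w^\ast$ has $q_{k_j}=m_{k_j}$ positive, consistent with the definition of $q_k$.

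\textbf{Step 3: coupon collector.} The probability that some $k$ is missed by all $s$ weights is at most
\[
r(1-1/r)^s\le r\,e^{-s/r}\le r^{-\varepsilon}\to0
\]
when $s\ge(1+\varepsilon)r\log r$. Intersect this event with the event from Step 1 that every weight $j$ recovers $\Pi^{k_j}w^\ast$ (or $\pm$ it) by time $n$. On the intersection, $\mathcal C_n^{\mathsf k}(\rho)$ holds with $n$ the same $\widetilde\Theta$ scale as in Theorem~\ref{thm:weak_recovery}. This uses that the time bound is uniform over weights, which holds because the step sizes and the deterministic horizon do not depend on $j$.

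I expect the main obstacle to be Step 1. The difficulty is certifying, uniformly over polynomially many weights, that the direction recovered equals the initial argmax and not some other orbit element. If uniform control is too costly, I would restrict to the subset of weights whose initial top-two gap exceeds $d^{-1/2}(\log d)^{-C}$. A constant fraction of weights lie in this subset, their $k_j$ remain uniform and independent by the same symmetry, and the coupon-collector bound survives after adjusting constants. To keep the hypothesis $s\ge(1+\varepsilon)r\log r$ exact, however, one needs failures to occur only for a vanishing fraction of weights.
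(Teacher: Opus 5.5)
Your lower bound, the shift-symmetry argument for the law of the leaders, and the coupon-collector count all match the paper. For the untied procedure, your Step~1 is what the paper proves in Lemma~\ref{lem:coverage_untied_recovery}. The untied conditional noise moments are uniform over the current weight, so the truncation--Freedman argument gives per-weight failure probability $d^{-D}$ over the whole horizon, and a union bound over $s\le d^\kappa$ weights is affordable.

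The event you intersect with in Step~3, that \emph{every} weight recovers $\Pi^{k_j}w^\ast$, is however not a high-probability event. The initialization gap of Lemma~\ref{lem:initialization_gap} fails for a given weight with probability only $o(1)$, roughly $1/\log d$ and not polynomially small. So among $s\ge r\log r$ weights, many have near-ties, and their selection is not certified. You must require recovery only from weights in $\mathcal E_d^j$ and collect coupons among those. Since $\mathbb P(\mathcal E_d^j\cap\{i_\ast(j)=k\}\mid w^\ast)=p_{\mathrm{init}}/r$ with $p_{\mathrm{init}}=1-o(1)$ (a $1-o(1)$ fraction, not merely a constant one), you get $r(1-p_{\mathrm{init}}/r)^s\le r^{-\varepsilon+o(1)}$. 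The hypothesis $s\ge(1+\varepsilon)r\log r$ is then kept exactly.

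The genuine gap is Step~1 for the augmented procedure, which you flag but do not resolve. Part~(A) is inherited from the tied analysis. Its macroscopic phase controls the selected coordinate only through a second-moment bound and Doob's inequality, because the conditional variance $C_\sigma(1+r\,m_{i_\ast}^2)$ is not uniform. The available per-weight guarantee is therefore only $1-o(1)$. Your fallbacks do not help. A union bound over even $(1+\varepsilon)r\log r$ weights needs failure $o(1/(r\log r))$. And a vanishing fraction of failures says nothing about whether all weights holding a given direction fail, since the weights are dependent through the shared samples.

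The missing idea is to condition on $\mathcal D_{\mathsf A}=\sigma(w^\ast,(x_t,y_t)_{t\le T_{\mathsf A}})$, where $T_{\mathsf A}$ is the augmented horizon of order $(\eta_d\gamma_d^{p-2})^{-1}$.
\begin{itemize}
\item By~\eqref{eq:augmented_effective_update}, every augmented weight runs the tied recursion on the same samples, so conditionally the $s$ trajectories are i.i.d.
\item The tied update is shift-equivariant, $\nabla L^{\mathsf T}(\Pi^au;x,y)=\Pi^a\nabla L^{\mathsf T}(u;x,y)$, and $q_k(\Pi^au)=q_{k-a}(u)$. Hence the conditional probability of recovering direction $k$ does not depend on $k$, and is at least $p_{\mathrm{rec}}/r$, where $p_{\mathrm{rec}}=\mathbb P(\tau_\rho^{1,\mathsf A}\le T_{\mathsf A}\mid\mathcal D_{\mathsf A})$.
\item The conditional coupon collector bounds the failure by $r\exp(-s\,p_{\mathrm{rec}}/r)$.
\item Since $\mathbb E[1-p_{\mathrm{rec}}]=o(1)$ by part~(A), Markov's inequality with a threshold $\theta\in((1+\varepsilon)^{-1},1)$ removes the conditioning.
\end{itemize}
This needs only the fixed-weight $1-o(1)$ statement and no identification of which direction is recovered.

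Alternatively, your union-bound route could likely be rescued by replacing Doob with Freedman in the tied macroscopic phase. There the cutoff $D_t\le1$ bounds the stopped quadratic variation deterministically. That upgrade, however, has to be proved, not assumed.
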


Thus, once the width is sufficient to cover the \(r\) orbit directions, requiring recovery of the entire orbit does not increase the sample complexity scale relative to weak recovery of a single direction. The proof is given in Section~\ref{sec:coverage_proof}.

\begin{figure}[!htbp]
\centering
\begin{minipage}[t]{0.49\textwidth}
  \vspace{0pt}
  \centering
  \captionsetup{type=figure,font=small,justification=justified,singlelinecheck=false,skip=4pt}
  \includegraphics[width=\linewidth]{\paperroot 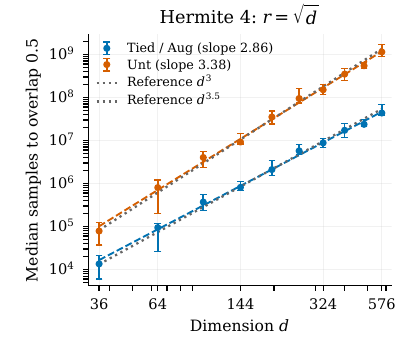}
  \caption{\textbf{Recovery-time scaling.}
Spherical online SGD with correlation loss, Hermite-\(4\) link,
and \(r=\sqrt d\) symmetries.
Points show median samples to reach \(M_\sigma(w)\ge0.5\)
over 10 paired seeds; bars are pointwise 95\% bootstrap intervals.
Colored dashed lines show power-law fits; gray dotted lines show reference powers.
We report equivalent single-weight dynamics with
\(s_{\mathsf T}=s_{\mathsf U}=s_{\mathsf A}=1\),
shared initialization and samples, \(\eta_d=0.1d^{-2}\),
and the rescalings in~\eqref{eq:student_stepsizes}.
Tied and augmented trajectories coincide.}
  \label{fig:median-recovery-h4}
\end{minipage}\hfill\begin{minipage}[t]{0.49\textwidth}
  \vspace{0pt}
  \centering
\captionsetup{type=figure,font=small,justification=justified,singlelinecheck=false,skip=4pt}
\includegraphics[width=\linewidth]{\paperroot 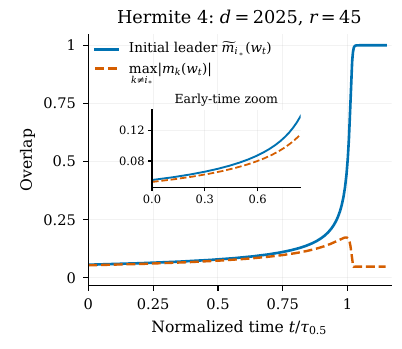}
\caption{\textbf{Selection of an orbit direction in population flow.} Population gradient flow with correlation loss for the tied student with the Hermite-4 link, dimension $d=2025$, and cyclic symmetries $r=\sqrt d=45$. Solid blue follows the correlation $\widetilde m_{i_\ast}(w_t) \coloneqq \operatorname{sgn} (m_{i_\ast} (w_0)) m_{i_\ast} (w_t)$ with index $i_\ast=\arg\max_k|m_k(w_0)|$. Dashed orange shows $\max_{k\ne i_\ast}|m_k(w_t)|$.
The inset magnifies the early gap amplification.
Time is normalized by $\tau_{0.5}$, the first time the population flow
reaches orbit correlation $0.5$.}
\label{fig:population-dynamics-h4}
\end{minipage}
\end{figure}

\subsection{Weak subspace recovery (\texorpdfstring{\(p=2\)}{p = 2})}
We next consider Assumption~\ref{hyp:link}(ii), namely the pure quadratic link. In this case the individual directions in the teacher orbit are generally not identifiable, and the natural recovery target is instead the subspace
\[
\mathcal S^\ast \coloneqq \operatorname{span}\{\Pi^k w^\ast \colon 0\le k<r\}.
\]
For \(w\in\mathbb S^{d-1}\), let \(w^\parallel\) denote its orthogonal projection onto \(\mathcal S^\ast\). Since \(\|w\|=1\), the quantity \(\|w^\parallel\|^2\) measures the fraction of the squared norm of \(w\) contained in the teacher subspace.

\begin{definition}[Weak subspace recovery]
\label{def:weak_subspace_recovery}
Fix \(\rho\in(0,1)\), independently of \(d\). We say that procedure \(\mathsf k\in\{\mathsf{T},\mathsf{U},\mathsf{A}\}\) achieves \emph{weak subspace recovery at level \(\rho\) within \(n=n(d)\) samples} if
\[
\mathbb{P} \left( \max_{0\le t\le n} \max_{1\le j\le s_{\mathsf k}} \left \|(w_t^{j,\mathsf k})^\parallel \right \|^2 \ge \rho \right) \to 1 \qquad\text{as }d\to\infty,
\]
where \(s_{\mathsf{T}}=1\), \(s_{\mathsf{U}}=s_{\mathsf{A}}=s\), and \(w_t^{1,\mathsf{T}}=w_t^{\mathsf{T}}\).
\end{definition}

Let \(n_{\mathsf k}^{(2)}(\rho)\) denote the sample complexity of weak subspace recovery, with upper and lower bounds interpreted as for \(n_{\mathsf k}(\rho)\).

\begin{theorem}[Sample complexity of weak subspace recovery]
\label{thm:h2_weak_recovery}
Suppose Assumption~\ref{hyp:link}(ii),~\ref{hyp:teacher_init}, and~\ref{hyp:scaling} hold. Fix \(\rho\in(0,1)\), independently of \(d\). Then there exists a universal constant \(A_0>0\) such that, for every fixed \(A>A_0\), the following holds. Set
\begin{equation}
\label{eq:h2_learning_rate}
\eta_d=d^{-1}(\log d)^{-A},
\end{equation}
and use the same relative learning-rate scaling across the three procedures as in~\eqref{eq:student_stepsizes}. Then the sample complexities for weak subspace recovery at level \(\rho\) satisfy
\[
\begin{aligned}
n_{\mathsf{T}}^{(2)}(\rho),\; n_{\mathsf{A}}^{(2)}(\rho) = \widetilde\Theta\!\left(\frac{1}{\eta_d}\right) = \widetilde\Theta(d), \qquad n_{\mathsf{U}}^{(2)}(\rho) = \widetilde\Theta\!\left(\frac{r}{\eta_d}\right) = \widetilde\Theta(rd).
\end{aligned}
\]
\end{theorem}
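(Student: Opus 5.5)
The plan is to reduce all three procedures to a single scalar summary, the teacher-subspace mass $\xi_t \coloneqq \|(w_t)^\parallel\|^2$, and to show that each procedure's $\xi_t$ follows the same multiplicative-growth law up to an $r$-dependent time change. First I will compute the population drifts for $\sigma=h_2$. Since $\E[h_2(\langle a,x\rangle)h_2(\langle b,x\rangle)]=\langle a,b\rangle^2$, the correlation loss becomes $\Phi^{\mathsf T}(w)=-\sum_{k}m_k(w)^2=-\|w^\parallel\|^2$ for the tied student. The factor $\frac{1}{r}\sum_{j,k}\langle \Pi^jw,\Pi^kw^\ast\rangle^2$ reduces to $\sum_k m_k^2$ by group invariance, provided the orbit of $w^\ast$ is orthonormal up to $O(\sqrt{\log d/d})$ errors, which holds w.h.p. for random $w^\ast$ (I will work with the exact projector onto $\mathcal S^\ast$ and absorb Gram-matrix deviations into error terms). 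For each untied weight, $\Phi^{\mathsf U}$ gives per-weight drift $-(rs)^{-1/2}\|w^\parallel\|^2$, and $\Phi^{\mathsf A}=\Phi^{\mathsf U}$. The spherical gradient of $-\|w^\parallel\|^2$ is $-2(w^\parallel-\xi w)$, so the population recursion reads $\xi_{t}\approx \xi_{t-1}+2c_{\mathsf k}\eta^{\mathsf k}_d\,\xi_{t-1}(1-\xi_{t-1})$. Here $c_{\mathsf T}=1$ and $c_{\mathsf U}=c_{\mathsf A}=(rs)^{-1/2}$, so the effective rates are $\eta_d$, $\eta_d/r$, and $\eta_d$, respectively. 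Starting from $\xi_0\asymp r/d$ (which holds w.h.p., uniformly over $s\le d^\kappa$ weights), the time to reach level $\rho$ is $\log(d/r)/(2\eta_{\rm eff})$. This gives the claimed $\widetilde\Theta(1/\eta_d)$ and $\widetilde\Theta(r/\eta_d)$ scales at the level of the drift.

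Second, I will control the martingale and normalization errors, writing $\xi_t=\xi_0+\sum(\text{drift})+M_t+N_t$. The spherical normalization contributes $O((\eta_d^{\mathsf k})^2\|\nabla L\|^2)$ per step, with $\|\nabla L\|^2\lesssim d\cdot\mathrm{polylog}$ for the tied and untied gradients. For $\mathsf A$, orbit averaging reduces the noise variance by a factor $r$ in the directions that matter, which I will verify by computing $\E\|\frac1r\sum_k \Pi^{-k}\nabla\|^2$. This is exactly what compensates the larger step $\sqrt{rs}\,\eta_d$. The step size $\eta_d=d^{-1}(\log d)^{-A}$ is then chosen so that both corrections are small relative to the drift $\eta_{\rm eff}\xi$. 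The martingale $M_t$ has increments with conditional variance $\lesssim(\eta^{\mathsf k}_d)^2\xi\cdot\mathrm{polylog}$ (in tied units), so Freedman's inequality gives fluctuations $\ll$ drift once $\xi\gtrsim r/d$. For the upper bound I will run a multiplicative (log-scale) bootstrap, using stopping times at dyadic levels of $\xi$ and comparing to the ODE on each epoch. This is where $A>A_0$ and the condition $\delta<1/2$ enter, through bounds of the form $\eta_d d\,\mathrm{polylog}\ll1$.

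Third, for the lower bound I will show that for $n\le c/(\eta_{\rm eff}\,\mathrm{polylog})$ steps, $\xi_t\le 2\xi_0\,\mathrm{polylog}\ll\rho$ for all weights simultaneously. This follows from the same decomposition: the drift can multiply $\xi$ by at most $e^{2\eta_{\rm eff}n}=O(1)$, and a union bound over $s\le d^\kappa$ weights with the Freedman tail handles the fluctuations.

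I expect the main obstacle to be the noise analysis near initialization, where $\xi\asymp r/d$ is tiny. There the projected noise on $\mathcal S^\ast$ must be shown to scale like $\eta\sqrt{\xi}$ per step, and not like $\eta\sqrt{r/d}$ independently of $\xi$, together with an additive diffusion term $\sim\eta^2 r$ that must stay below the drift $\eta\,r/d$. The augmented case is the hardest instance, since its correlated $r$-fold averaging requires computing fourth moments of $h_2$ across shifted inputs. Here I would first bound the diffusion term directly using $\eta_d\le d^{-1}(\log d)^{-A}$.
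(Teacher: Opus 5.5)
\textbf{Upper bound.} Your upper-bound argument is essentially the paper's. The paper tracks $R_t=\sum_k m_k(w_t)^2$, which equals $\|w_t^\parallel\|^2$ up to a factor $1\pm r\mu_\ast$; the identity $\Phi^{\mathsf T}=-R$ is exact by cyclic invariance, and only the comparison with $\|w^\parallel\|^2$ uses teacher geometry. It uses the exact recursion $(1+\eta_d^2\|G_t\|^2)R_t=R_{t-1}+\eta_d\|\nabla\Phi\|^2+2\eta_d\Xi_t+\eta_d^2\langle G_t,B^\ast G_t\rangle$ with $\|\nabla\Phi\|^2=4(\|B^\ast w\|^2-R^2)\ge4R(1-R-r\mu_\ast)$. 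Your drift constant should therefore be $4$, not $2$, which is harmless. In place of your dyadic epochs, it runs a single log-scale comparison stopped at a lower barrier $\beta R_0$ and at the target level $a$.

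Several obstacles you anticipate vanish there:
\begin{itemize}
\item Since $B^\ast\succeq0$, the additive term $\eta_d^2\langle G_t,B^\ast G_t\rangle$ is nonnegative and is simply dropped.
\item The linear noise need not scale like $\eta_d\sqrt{\xi}$. The paper only proves $\|\Xi_t\|_{L^\nu}\lesssim\nu^2\sqrt{rR_{t-1}}$, by Minkowski over the coordinate bounds. Because $R_{t-1}\gtrsim r/d$ before the lower exit, this already gives relative noise $O(\sqrt d)$ per step and $O(\sqrt{\eta_d d}\,\log d)$ in total. Your sharper $\sqrt{\xi}$ scaling for the tied gradient would in addition require controlling the self-overlaps $\langle w_t,\Pi^jw_t\rangle$ along the trajectory.
\item The augmented case is not the hardest; it is identical to the tied one. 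Since $L^{\mathsf A}(W)=(rs)^{-1/2}\sum_jL^{\mathsf T}(w^j)$, every augmented weight run with step $\sqrt{rs}\,\eta_d$ is exactly a tied SGD trajectory with step $\eta_d$. The tied gradient already contains the correlated $r$-fold sum, so no separate orbit-averaged fourth-moment computation is needed.
\item The condition $\delta<1/2$ enters through $r\mu_\ast=o(1)$, which is needed for $R\approx\|w^\parallel\|^2$ and for the growth bound. It does not enter through $\eta_d d\,\mathrm{polylog}\ll1$.
\end{itemize}

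\textbf{Lower bound.} Here your route genuinely differs. The paper never tracks $\xi$. It proves non-escape of every individual correlation, $|m_k|<C_{\mathrm{esc}}\gamma_d$, for $c/\eta_d$ steps (tied and augmented) and $cr/\eta_d$ steps (untied). This uses the damping lemma with a pathwise drift bound, together with a truncated Freedman bound and a union over all $(1+2s)r$ coordinate processes. It then concludes $\|w^\parallel\|^2\le C_{\mathrm{esc}}^2 r\gamma_d^2/(1-r\mu_\ast)=o(1)$.

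Your direct upper comparison for $\xi$ also works and is specific to $h_2$; it even gives the sharper bound $\xi_t=O(r/d)$. Normalization only helps. The price is that you need $d^{-D}$-tail control, uniformly over $s\le d^\kappa$ weights, of two quantities:
\begin{itemize}
\item the stopped relative martingale;
\item the nonnegative sum $\eta_d^2\sum_t\langle G_t,B^\ast G_t\rangle$. Its conditional mean is $\lesssim r$ per step, so the sum is $O(\eta_d r)$ over the horizon, against $R_0\asymp r/d$. This is negligible because $\eta_d d\to0$.
\end{itemize}
The paper's route instead reuses a single non-escape theorem, valid for every polynomial link with $p\ge2$ and every $A>0$.
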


Thus, the pure quadratic case exhibits the same \(r\)-fold separation between symmetry-aware and symmetry-agnostic learning as for \(p\ge3\): weight tying and data augmentation achieve weak subspace recovery at scale \(\widetilde\Theta(d)\), whereas the untied procedure requires \(\widetilde\Theta(rd)\) samples. The proof is provided in Subsection~\ref{sec:main_results_proofs}, combining the upper bounds of Theorem~\ref{thm:h2_weak_recovery_appendix} with the lower bounds of Theorem~\ref{thm:nonescape}.

\begin{remark}[General links with information exponent \(2\)]
For general centered polynomial links with information exponent \(2\), our lower bound still applies, but our matching upper bound does not: we prove the upper bound only for the pure quadratic link. Higher-order Hermite components can change the appropriate recovery notion and the resulting dynamics. Related results for untied models have been obtained by~\citet{ren2025learning}.
\end{remark}

\section{Related work}
\label{sec:related_work}

Our work connects to several strands of a broad and rapidly developing literature, which we review below.

\paragraph{Learning low-dimensional representations}
Multi-index models provide a tractable setting for studying how learning algorithms discover low-dimensional structure in high-dimensional data. \citet{chen2020learning} developed algorithms for Gaussian polynomial multi-index models using filtered PCA and gradient-based refinement. Neural-network representation learning has been established under second-order nondegeneracy conditions~\citep{damian2022neural}, while \citet{mousavihosseini2023neural} showed that SGD with weight decay suppresses weight components orthogonal to the teacher subspace. These approaches investigate how information about the index subspace can be extracted through moment-based methods or through the evolution of network weights.

\paragraph{Staircase structure and leap complexity}
Hierarchical structure can allow previously learned features to facilitate the discovery of new ones. For Boolean inputs, \citet{abbe2021staircase} introduced the staircase property and established hierarchical learning guarantees under a layer-wise training procedure. For Boolean functions depending on a fixed number of coordinates, \citet{abbe2022merged} identified the merged-staircase property as necessary and nearly sufficient for learning with $O(d)$ samples in the two-layer mean-field setting. \citet{abbe2023sgd} extended this perspective through leap complexity, which quantifies how many new coordinates must be learned together, counting Hermite multiplicities for Gaussian inputs. They proved learning guarantees for a structured class of Gaussian targets under a modified SGD procedure, showing how the dynamics move between successive saddles as additional coordinates are recovered.

\paragraph{Gradient dynamics and subspace discovery}
For Gaussian single-index models, \citet{benarous2021online} identified the information exponent governing the initial search phase of online SGD. \citet{ba2022high} quantified representation learning after one gradient step, highlighting the role of learning-rate scaling. For Gaussian multi-index models, \citet{dandi2024two} showed that successive gradient steps with fresh batches of $O(d)$ samples can reveal additional directions through a staircase mechanism. \citet{bietti2025learning} established global convergence and characterized successive learning phases for a two-timescale population gradient flow, with timescales determined by the target's Hermite decomposition. Related hierarchical subspace learning and weak recovery thresholds were analyzed through approximate message passing by \citet{troiani2025fundamental}. For orthogonal additive teachers, \citet{simsek2025learning} studied directional selection under correlation-loss gradient flow. Competition between signal directions also appears in multi-spiked tensor PCA, where~\citet{benarous2026} characterize a greedy maximum selection mechanism leading to the sequential recovery of individual spikes.

\paragraph{Dependence on the learning procedure}
Information exponent rates depend on the updates being analyzed. Loss smoothing~\citep{damian2023smoothing}, batch reuse~\citep{dandi2024benefits}, and alternating layer-wise updates with suitable learning-rate scales~\citep{tsiolis2025information} can improve upon the rates of the corresponding vanilla dynamics. Batch size and the loss also affect online learning times~\citep{arnaboldi2024online}. Complementary algorithmic results introduce generative exponents and generative leap complexity for fixed-rank Gaussian models, accounting for information accessible through nonlinear label processing and sequential subspace estimation~\citep{damian2024computational,damian2025generative}. These works establish lower bounds within statistical-query or low-degree frameworks and develop algorithms that attain the corresponding scales under their assumptions.

\paragraph{Growing rank and subspace recovery}
When the number of teacher directions grows with dimension, the rank dependence of learning guarantees becomes essential. \citet{oko2024diverse} analyzed learning sums of nearly orthogonal ridge functions with growing rank, and \citet{ren2025emergence} related individual feature recovery times to prediction-risk scaling laws. In particular, the sample complexity of the untied learner in Theorem~\ref{thm:weak_recovery} has the same leading dependence on \(r\) and \(d\) as the upper bound of \cite{oko2024diverse}, where they consider learning an untied teacher instead of a symmetry-tied one as in the present work. For quadratic targets, \citet{benarous2025learning} studied growing-rank networks trained by orthogonality-constrained online SGD. Subspace and directional recovery can also occur at different stages: \mbox{\citet{ren2025learning}} showed how second-order information can reveal the index subspace before higher-order components distinguish individual teacher directions.

\paragraph{Weight sharing and data augmentation}
Group-equivariant architectures~\citep{cohen2016group,kondor2018generalization} and group-averaged training~\citep{chen2020group,lyle2020benefits} provide different ways to exploit symmetry. Statistical benefits have been quantified for equivariant, random-feature, and kernel models~\citep{elesedy2021provably,mei2021learning,cagnetta2023wide}. Beyond these settings, \citet{lahoti2024role} established sample complexity separations between convolutional, locally connected, and fully connected networks in a structured image model, distinguishing the benefits of locality and weight sharing.\\

Our work studies how a known symmetry changes online feature learning when the teacher rank grows with dimension. We compare weak directional and subspace recovery under weight sharing, full-group augmentation, and untied training without augmentation. The matching lower bounds are non-escape bounds for the specified spherical SGD updates and learning rates; they do not assert optimality over alternative learning algorithms.

\section{Proof outline}
\label{sec:appendix_roadmap}

We first explain the mechanisms underlying our recovery results, then state the precise upper and lower bounds and combine them to prove Theorems~\ref{thm:weak_recovery} and~\ref{thm:h2_weak_recovery}. Throughout, the three procedures use the learning-rate scaling in~\eqref{eq:student_stepsizes}, expressed in terms of a reference rate \(\eta_d\). Its value is specified in each theorem below.

\subsection{Proof overview}
\label{sec:overview_proof}
The proofs of our main results are based on tracking the correlations \(m_k(w_t)\) defined in~\eqref{eq:correlation} between the SGD iterates and the teacher orbit. For \(p\ge3\), we track the relevant correlations \(q_k(w_t)\) from~\eqref{eq:relevant_correlation}, so that
\[
M_\sigma(w_t)=\max_{0\le k<r}q_k(w_t).
\]
Recall that \(q_k=m_k\) when \(p\) is odd and \(q_k=|m_k|\) when \(p\) is even. We show that random initialization produces, with high probability, a unique leading relevant correlation separated from its competitors by a gap large enough to dominate the error, and that SGD dynamics preserves and amplifies this gap.

\paragraph{Initialization, interactions, and correlation growth}
At random initialization, each correlation \(m_k(w_0)\) is typically of order \(d^{-1/2}\). Since the teacher orbit is nearly orthogonal, the correlation vector is approximately Gaussian with covariance close to \(d^{-1} \Id_r\). In particular, its coordinates behave approximately as independent \(N(0,d^{-1})\) variables, and their largest relevant value \(q_k(w_0)\) is of order
\[
\gamma_d\coloneqq\sqrt{\frac{\log r}{d}}.
\]
Moreover, with high probability, there is a unique leading relevant correlation, separated from all competing correlations by a quantifiable gap. This initial gap breaks the symmetry between the teacher directions and is then amplified by the SGD dynamics. Hermite orthogonality gives
\[
\Phi^{\mathsf{T}} (w) = -\sum_{k=0}^{r-1}\phi(m_k(w)), \qquad \phi(z)=\sum_{\ell=p}^{P}a_\ell^2z^\ell,
\]
with \(\phi'(m) = p a_p^2 m^{p-1} + O(|m|^p)\) near the origin. Writing \(m_k=m_k(w)\), the spherical population drift of the \(k\)th correlation is
\[
-\langle\Pi^kw^\ast,\nabla\Phi^{\mathsf{T}}(w)\rangle = (1-m_k^2)\phi'(m_k) + \sum_{i\ne k} \langle\Pi^kw^\ast,\Pi^iw^\ast\rangle\phi'(m_i)-m_k\sum_{i\ne k}m_i\phi'(m_i).
\]
The first term gives the self-amplifying growth of \(m_k\), while the remaining terms describe the interactions with the other teacher directions and the effect of the spherical constraint.

The main technical difficulty is to control these interaction terms uniformly when the number \(r\) of teacher directions grows with \(d\). Together with the stochastic fluctuations and normalization errors, we show that their cumulative effect remains smaller than the initialization gap throughout the microscopic phase. As a result, the relevant correlation that is largest at initialization remains the leader: no competitor overtakes it, and its advantage grows over time. The dynamics can therefore be reduced to an effectively 1D growth mechanism along the selected teacher direction (see Fig.~\ref{fig:population-dynamics-h4}).

\begin{wrapfigure}{R}{0.45\textwidth}
\vspace{-0.6cm}
\centering
\captionsetup{font=small,justification=justified,singlelinecheck=false,skip=6pt}
\includegraphics[width=\linewidth]{\paperroot 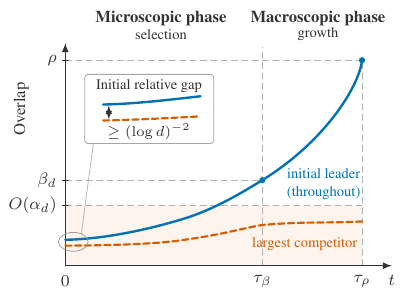}
\caption{\textbf{Proof sketch: two phases of \mbox{recovery.}} Blue follows the initial leader; orange shows the largest competitor, whose index may change. Their initial gap widens during both phases. The shaded band bounds competing absolute overlaps. Axes and trajectories are schematic; the macroscopic phase is expanded for visibility.}
\label{fig:proof-phases}
\vspace{-0.9cm}
\end{wrapfigure}

\paragraph{From microscopic selection to weak recovery}
Introduce the intermediate scales \(\alpha_d,\beta_d\) satisfying
\[
\gamma_d\ll\alpha_d\ll\beta_d\ll1.
\]
During the first, microscopic phase, all correlations grow but at different rates; thus, the initial gap between the leader and its competitors widens. As a result, the correlation that is largest at initialization reaches the threshold \(\beta_d\), while all competing correlations remain at scale \(O(\alpha_d)\). This phase takes \(O((\eta_d \gamma_d^{p-2})^{-1})\) steps. During the second phase, the population drift drives the selected correlation from \(\beta_d\) to the weak recovery level \(\rho\in(0,1)\), while the spherical dynamics keep the competing correlations small. This phase is faster and takes \(O ((\eta_d\beta_d^{p-2})^{-1})\) steps. Figure~\ref{fig:proof-phases} illustrates the sketch of proof with this two-phase argument and Section~\ref{sec:weak_recovery_tied} gives the proof for the tied procedure.

\paragraph{Untied and augmented procedures}
For the untied and augmented learners, it is enough for the upper bound to follow one fixed weight \(w^j\). Under the correlation loss, the population objective is additive across weights, so the gradient with respect to \(w^j\) depends only on \(w^j\):
\[
\nabla_{w^j}\Phi^{\mathsf{U}}(W) = \nabla_{w^j}\Phi^{\mathsf{A}}(W) = \frac{1}{\sqrt{rs}}\nabla\Phi^{\mathsf{T}}(w^j).
\]
Hence, the same correlation argument applies weight by weight. At the prescribed learning rates, an augmented weight has the same effective drift and fluctuation scales as the tied trajectory. For an untied weight, the effective drift is smaller by a factor \(r\), while the stochastic fluctuations are smaller only by \(\sqrt r\). The same selection-and-growth mechanism therefore takes \(r\) times as many samples, producing the separation in Theorem~\ref{thm:weak_recovery}. Section~\ref{sec:weak_recovery_untied} makes these reductions precise.

\paragraph{Subspace recovery for the pure quadratic link}
For $\sigma=h_2$, we seek to grow $\|w_t^\parallel\|^2$ to a fixed positive level. Because the teacher orbit is nearly orthonormal, the squared norm of the projection onto its span is, up to a \(1+o(1)\) factor, the sum of the squared correlations:
\[
\|w_t^\parallel\|^2 =(1+o(1))\sum_{k=0}^{r-1}m_k(w_t)^2.
\]
We therefore track this sum, denoted by $R_t$, with $R_0\asymp r/d$. Its population drift produces multiplicative growth. Controlling the accumulated stochastic and normalization
errors shows that $\|w_t^\parallel\|^2$ reaches the weak recovery level $\rho\in(0,1)$ within $O(\eta_d^{-1}\log(d/r))$ samples for the tied and augmented procedures, and $r$ times as many for the untied procedure (see Section~\ref{sec:weak_recovery_quadratic}).

\paragraph{Matching lower bounds}
For every $p\ge2$, our drift and noise estimates imply that, with high probability, all correlations remain $O(\gamma_d)$ for at least $c/(\eta_d\gamma_d^{p-2})$ steps for the tied and augmented procedures, and for $r$ times as many steps under the untied procedure. Here \(c>0\) is sufficiently small. Since $r\gamma_d^2=o(1)$, this excludes both directional and subspace recovery, uniformly over polynomially many weights (see Section~\ref{sec:weak_recovery_lower_bounds}).

\input{\paperroot proofs}

\section{Conclusion}

We established an $r$-fold separation, up to logarithmic factors, in the sample complexity of symmetry-aware and symmetry-agnostic feature learning. For spherical online SGD with correlation loss, this separation holds for weak directional recovery when $p\geq 3$ and weak subspace recovery for the pure quadratic Hermite link. In this setting, full-group data augmentation matches the statistical benefit of weight sharing: exploiting symmetry through either the architecture or the training procedure improves the balance between informative drift and stochastic fluctuations. We conjecture that the same separation persists for $d^{1/2}\lesssim r\ll d$, even though the initial largest correlation no longer remains dominant until weak recovery, and a different proof mechanism is therefore needed.

Several questions remain open. A natural next step is to study general links with information exponent two, extending the analysis of \citet{ren2025learning} to our symmetric setting. Here, quadratic subspace amplification and the directional information carried by higher-order terms may interact, raising the question of whether the same symmetry advantage persists.
It would also be useful to establish strong recovery and vanishing prediction error. Beyond the recovery criterion, other losses and augmentation schemes deserve investigation: squared loss introduces interactions between filters, while using only a subset of group transformations raises a tradeoff between sample efficiency and computation.

\section*{Acknowledgments}
We acknowledge funding from the Swiss National Science Foundation grants OperaGOST (grant number $200021\_200390$) and DSGIANGO (grant number $225837$). This work was supported by the Simons Collaboration on the Physics of Learning and Neural Computation via the Simons Foundation grant ($\#1257412$) and by the EPFL AI Center PhD Fellowship Program 2026.

\subsection*{AI use statement}

We used generative AI tools to assist with formulating and refining mathematical statements, exploring proof strategies, and drafting and revising parts of the proofs. We also used AI assistance to identify relevant literature and references, improve the organization and exposition of the manuscript, and prepare and revise text, figure captions, and layouts. AI assistance was also used to develop and debug the experimental and analysis code and to prepare plots from the numerical results. The mathematical arguments and exposition were developed through iterative discussion, critical review, and substantive revision by the authors. The authors take responsibility for the final content of this work, including all AI-assisted mathematical claims, proofs, text, and artifacts.

\clearpage
\phantomsection
\addcontentsline{toc}{section}{References}
\bibliography{\paperroot references}
\bibliographystyle{\paperroot references}

\input{\paperroot appendices}

\end{document}

%% file: proofs.tex
\subsection{Upper bounds for directional recovery}

We retain the microscopic correlation scale
\(\gamma_d=\sqrt{\log r/d}\) introduced in
Subsection~\ref{sec:overview_proof}. Recall the orbit correlation \(M_\sigma(w)\) defined in~\eqref{eq:orbit_corr}. For
\(\mathsf k\in\{\mathsf T,\mathsf U,\mathsf A\}\), \(a\in(0,1)\), and \(1\le j\le s_{\mathsf k}\), define the hitting time
\begin{equation}
\label{eq:directional_hitting_time}
\tau_a^{j,\mathsf k} \coloneqq
\inf \left \{ t\ge0 \colon  M_\sigma(w_t^{j,\mathsf k})\ge a \right \},
\end{equation}
with the convention \(\inf\emptyset=\infty\). For the tied trajectory, we write \(w_t^{\mathsf T}= w_t^{1,\mathsf T}\) and \(\tau_a^{\mathsf{T}} = \tau_a^{1, \mathsf{T}}\).

\begin{theorem}[Weak recovery for \(p\ge3\)] \label{thm:weak_recovery_appendix}
Suppose that Assumptions~\ref{hyp:link}(i),~\ref{hyp:teacher_init}, and~\ref{hyp:scaling} hold. Fix \(\rho\in(0,1)\). There exists \(A_\ast=A_\ast(p,P)\ge (p-2)/2\) such that the following holds for every \(A>A_\ast\). Set
\[
\eta_d = d^{-p/2}(\log d)^{-A},
\]
and use the procedure-specific step sizes~\eqref{eq:student_stepsizes}.
\begin{enumerate}
\item[\textup{(T)}]
If tied spherical SGD is run with \(\eta_d^{\mathsf T}\), then there exists \(C_T = C_T (\sigma,\delta,\rho)<\infty\) such that
\[
\mathbb{P} \left( \tau_\rho^{\mathsf T} \le \frac{C_T}{\eta_d \gamma_d^{p-2}} \right) \to 1.
\]
\item[\textup{(U)}] For any fixed \(j\in\{1,\dots,s\}\), if untied spherical SGD is run with \(\eta_d^{\mathsf U}\), then there exists \(C_U=C_U(\sigma,\delta,\rho)<\infty\) such that
\[
\mathbb{P} \left(\tau_\rho^{j,\mathsf U} \le \frac{C_U r}{\eta_d \gamma_d^{p-2}} \right) \to 1.
\]
\item[\textup{(A)}] For any fixed \(j\in\{1,\dots,s\}\), if augmented spherical SGD is run with \(\eta_d^{\mathsf A}\), then there exists \(C_A = C_A (\sigma,\delta,\rho)<\infty\) such that
\[
\mathbb{P} \left( \tau_\rho^{j,\mathsf A} \le \frac{C_A}{\eta_d \gamma_d^{p-2}} \right) \to 1.
\]
\end{enumerate}
\end{theorem}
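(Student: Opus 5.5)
We prove (T) first and then reduce (U) and (A) to it weight by weight. The argument follows the two-phase scheme of Subsection~\ref{sec:overview_proof}. The basic objects are the correlation processes \(m_k(w_t)\), \(0\le k<r\). For each of them we use the exact Doob-type decomposition along spherical SGD:
\[
m_k(w_t)=m_k(w_0)+\eta_d\sum_{s<t}\Bigl(-\langle\Pi^kw^\ast,\nabla\Phi^{\mathsf T}(w_s)\rangle\Bigr)+\eta_d\sum_{s<t}\xi_{k,s}+\sum_{s<t}E_{k,s}.
\]
Here \(\xi_{k,s}\) is the martingale increment obtained by projecting the stochastic gradient onto \(\Pi^kw^\ast\). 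The term \(E_{k,s}\) is the normalization error, and a Taylor expansion of \(\|\widetilde w\|^{-1}\) shows it is \(O(\eta_d^2\|\nabla L\|^2)=O(\eta_d^2 d)\) times \(|m_k|\), up to polylog factors.

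\emph{Initialization.} We first handle the randomness of \(w^\ast\) and \(w_0\). With high probability the orbit is nearly orthonormal: \(\max_{i\ne k}|\langle\Pi^iw^\ast,\Pi^kw^\ast\rangle|\lesssim\sqrt{\log d/d}\). The vector \((m_k(w_0))_k\) is then close to \(N(0,d^{-1}\Id_r)\), so the top relevant correlation \(q_{k_\ast}(w_0)\) is \(\asymp\gamma_d\). Anti-concentration of Gaussian order statistics shows that the gap to the second largest \(q_k\) is at least \(\gamma_d/(\log d)^{c}\) with probability \(1-o(1)\). We also record that \(m_{k_\ast}(w_0)\) has the correct sign: for \(p\) odd this holds by definition of \(q\), and for \(p\) even by symmetry.

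\emph{Microscopic phase.} We run until \(q_{k_\ast}\) reaches \(\beta_d\), taking \(\alpha_d=\gamma_d(\log d)^{c_1}\) and \(\beta_d=(\log d)^{-c_2}\). Let \(\tau\) be the first exit time from a good event \(G_t\). The event \(G_t\) requires that all \(|m_k|\) stay below a bound of the form \(2|m_k(w_0)|\cdot(\text{ODE solution ratio})\), that competitors stay \(\le\alpha_d\), and that the martingale sums are bounded. On \(G_t\) the drift of \(m_k\) equals \(pa_p^2m_k^{p-1}(1+o(1))\) plus interaction terms. Near-orthogonality together with \(r\le d^{1/2-\epsilon}\) controls the term \(\sum_{i\ne k}\langle\Pi^kw^\ast,\Pi^iw^\ast\rangle\phi'(m_i)\) by \(r\sqrt{\log d/d}\,\max_i|m_i|^{p-1}\), which is \(o(|m_{k_\ast}|^{p-1})\). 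The spherical term is cubic and therefore negligible.

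For the martingale we use Freedman's inequality. The conditional variance of \(\xi_{k,s}\) is \(O(1)\) up to polylog factors, so after \(T\asymp(\eta_d\gamma_d^{p-2})^{-1}\) steps the noise is \(\eta_d\sqrt T\), up to polylog factors. This equals \(\gamma_d\,(\eta_d d^{p/2})^{1/2}\) times polylog factors, which is \(\ll\) the gap once \(A>A_\ast\). The normalization error \(\eta_d^2dT\gamma_d\) is similarly small. Comparison with the ODE \(\dot m=pa_p^2m^{p-1}\) then shows two things: the leader reaches \(\beta_d\) in \(O(1/(\eta_d\gamma_d^{p-2}))\) steps, and each competitor, whose ODE blow-up time is strictly later because of the gap, is still \(\le\alpha_d\) at that moment. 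We expect this gap-preservation step to be the main obstacle. Because \(p\ge3\) the ODE blows up in finite time, so a relative gap of size \((\log d)^{-c}\) must be shown to yield a polylog separation of scales at the moment the leader reaches \(\beta_d\). This is the step that forces the choice of \(A_\ast\). We would carry it out by working with the variable \(m^{-(p-2)}\), which evolves linearly in time.

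\emph{Macroscopic phase.} Starting from \(\beta_d\), the leader's drift \(\phi'(m)(1-m^2)\) is bounded below on \([\beta_d,\rho]\), so the leader reaches \(\rho\) in \(O((\eta_d\beta_d^{p-2})^{-1})\) further steps. This is negligible compared with the first phase. Competitors remain \(O(\alpha_d)\) during this time: their drift is \(O(\alpha_d^{p-1})\), the coupling through the spherical term is \(-m_k m_{k_\ast}\phi'(m_{k_\ast})\), which only contracts them, and the noise is controlled as before. This gives (T).

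\emph{Reduction of (U) and (A).} Both use the identity \(\nabla_{w^j}\Phi^{\mathsf U}=\nabla_{w^j}\Phi^{\mathsf A}=(rs)^{-1/2}\nabla\Phi^{\mathsf T}(w^j)\). With \(\eta^{\mathsf A}=\sqrt{rs}\,\eta_d\), the effective drift of the augmented weight matches the tied one exactly. Its orbit-averaged noise has conditional variance of the same order as the tied noise, which we verify by a direct Hermite computation of the second moment of the averaged gradient. The argument for (T) therefore transfers verbatim. With \(\eta^{\mathsf U}=\sqrt{s/r}\,\eta_d\), the drift is \(\eta_d/r\) times that of the tied procedure, while the noise per step is \(\eta_d/\sqrt r\). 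Over \(rT\) steps the accumulated noise is again \(\eta_d\sqrt T\), so all bounds carry over with time rescaled by \(r\).
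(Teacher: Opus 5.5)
\textbf{Verdict: genuine gap, but a repairable one.} Your architecture matches the paper's: an initial relative gap, comparison in the variable $m^{-(p-2)}$, a growth phase in which competitors are contracted, and a weight-by-weight reduction with drift $\eta_d/r$ and noise $\eta_d/\sqrt r$ for (U). However, ending the first phase at $\beta_d=(\log d)^{-c_2}$ breaks two estimates you rely on there.

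\emph{Leader noise.} For the tied procedure, the bound available uniformly in $w$ is $\E[\xi_{k,t}^2\mid\mathcal F_{t-1}]\le C_\sigma(1+r\,m_k(w_{t-1})^2)$ (Lemma~\ref{lem:coordinate_noise_moments}). The $rm_k^2$ part comes from the terms $\phi'(a_j(w))\,m_k^2a_j(w)$ with $a_j(w)=\langle w,\Pi^jw\rangle$, and nothing in your sketch controls these self-overlaps along the trajectory. So your claim that the conditional variance is polylogarithmic fails for the leader once $m_{k_\ast}\gg r^{-1/2}$. A Freedman bound with the worst-case variance $r\beta_d^2$ over $T\asymp(\eta_d\gamma_d^{p-2})^{-1}$ steps gives $\eta_d\beta_d\sqrt{rT}$. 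Up to polylogarithms this is $\sqrt r\,\beta_d\,\gamma_d$, far above $\gamma_d$.

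\emph{Interaction on competitors.} You bound the interaction term by $r\mu_\ast\max_i|m_i|^{p-1}$ and compare it only with the leader's drift. In a competitor's equation, this bound integrates along the leader's trajectory to order $r\mu_\ast\beta_d$. For polylog-small $\beta_d$ this is $\gg\gamma_d$, so it cannot keep competitors below $\alpha_d$, let alone inside the gap $\nu_d\gamma_d$.

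\emph{How the paper avoids this.} It stops the first phase at $\beta_d=\gamma_d(\log d)^B$ with $B>2/(p-2)$. Then $r\beta_d^2=o(1)$, so every coordinate variance is $O(1)$. Also $\beta_d/\gamma_d$ is polylogarithmic, so the crude interaction bound times $\eta_dT_0$ is $o(\nu_d\gamma_d)$, since $r\mu_\ast$ is polynomially small.

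The price is that the phase from $\beta_d$ to $\rho$ is not routine:
\begin{itemize}
\item Each competitor is written in the damped form $\sqrt{1+\eta_d^2\|G_{t+1}\|^2}\,m_k(w_{t+1})=q_tm_k(w_t)+\eta_dF_{k,t}+\eta_d\xi_{k,t+1}$ with $q_t=1-\eta_dm_{k_\ast}\phi'(m_{k_\ast})$.
\item The single leader cross term $c_{k_\ast-k}\phi'(m_{k_\ast})$ is isolated, so its time integral is $O(\mu_\ast)=o(\alpha_d)$. You omit this term in your macroscopic phase.
\item Lemma~\ref{lem:exact_damping} absorbs the martingale part.
\item The leader's noise, whose variance is now only bounded by $C(1+rm_{k_\ast}^2)$, is controlled through $\sum_\ell m_{k_\ast}(w_\ell)^2\lesssim(\eta_d\beta_d^{p-3})^{-1}$.
\end{itemize}
If you want to keep your $\beta_d$, you must bring these same two refinements into your first phase. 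For the leader, use the path-dependent quadratic variation $\eta_d^2\sum_t(1+rm_{k_\ast}(w_t)^2)$. For the competitors, isolate the leader cross term, whose integral is then $O(\mu_\ast\beta_d)$ and must be compared with the gap.

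\emph{Two smaller points.} First, for odd $p$ a competitor with $m_k(w_0)<0$ can have $|m_k(w_0)|>m_{k_\ast}(w_0)$. An envelope for $|m_k|$ started at $|m_k(w_0)|$ would then blow up before the leader. You need the sign-aware argument: the drift pushes such coordinates toward $0$, and after a zero crossing they restart from size $O(e_d)$. Second, for (A) no Hermite computation is needed. Since $L^{\mathsf A}(W;x,y)=(rs)^{-1/2}\sum_jL^{\mathsf T}(w^j;x,y)$, with $\eta_d^{\mathsf A}=\sqrt{rs}\,\eta_d$ each augmented weight follows exactly a tied trajectory.
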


Thus the tied and augmented procedures achieve weak recovery within
\[
\frac{1}{\eta_d \gamma_d^{p-2}} = O \left ( d^{p-1}\operatorname{polylog}(d) \right)
\]
samples, whereas the untied procedure requires at most \(O \left ( r d^{p-1}\operatorname{polylog}(d) \right )\) samples. Theorem~\ref{thm:weak_recovery_appendix} gives the upper bounds stated in Theorem~\ref{thm:weak_recovery}. The matching lower bounds follow from Theorem~\ref{thm:nonescape} below.

\begin{remark}[Fixed-weight upper bounds]
For the untied and augmented procedures, the upper bound holds for any fixed weight index \(j\). In particular, the proof does not exploit the best among the \(s\) random initializations. The width \(s\) enters only through the learning-rate rescaling in~\eqref{eq:student_stepsizes}.
\end{remark}

\subsection{Upper bounds for quadratic subspace recovery}

Recall the teacher orbit subspace \(\mathcal S^\ast\) and the orthogonal projection \(w^\parallel\) used in Definition~\ref{def:weak_subspace_recovery}. For \(a\in(0,1)\) and \(\mathsf k\in\{\mathsf T,\mathsf U,\mathsf A\}\), define
\begin{equation}
\label{eq:h2_recovery_times}
\tau_a^{(2),\mathsf k} \coloneqq \inf\left \{t\ge0 \colon \max_{1\le j\le s_{\mathsf k}} \|(w_t^{j,\mathsf k})^\parallel \|^2 \ge a\right \},
\end{equation}
with \(\inf\emptyset=\infty\). For the tied trajectory, we write \(\tau_a^{(2)} = \tau_a^{(2),\mathsf T}\).

\begin{theorem}[Weak subspace recovery for the quadratic link] \label{thm:h2_weak_recovery_appendix}
Suppose that Assumptions~\ref{hyp:link}(ii),~\ref{hyp:teacher_init}, and~\ref{hyp:scaling} hold. Fix $\rho\in(0,1)$ independently of $d$. There exists a universal constant $A_0>0$ such that the following holds for every \(A>A_0\). Set
\[
\eta_d = d^{-1}(\log d)^{-A},
\]
and use the procedure-specific step sizes~\eqref{eq:student_stepsizes}.
\begin{enumerate}
\item[\textup{(T)}]
If tied spherical SGD is run with $\eta_d^{\mathsf T}$, then there exists $C_T=C_T(\delta, \rho)<\infty$ such that
\[
\mathbb P\left(\tau_\rho^{(2),\mathsf T} \le \frac{C_T\log(d/r)}{\eta_d}\right)\to1.
\]

\item[\textup{(U)}]
If untied spherical SGD is run with $\eta_d^{\mathsf U}$, then there exists $C_U=C_U(\delta, \rho)<\infty$ such that
\[
\mathbb P\left(\tau_\rho^{(2),\mathsf U} \le \frac{C_U r\log(d/r)}{\eta_d}\right)\to1.
\]

\item[\textup{(A)}]
If augmented spherical SGD is run with $\eta_d^{\mathsf A}$, then there exists $C_A=C_A(\delta, \rho)<\infty$ such that
\[
\mathbb P\left(\tau_\rho^{(2),\mathsf A} \le \frac{C_A\log(d/r)}{\eta_d}\right)\to1.
\]
\end{enumerate}
\end{theorem}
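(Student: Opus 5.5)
We propose to track the scalar $R_t \coloneqq \sum_{k=0}^{r-1} m_k(w_t)^2$, which for a fixed weight (tied trajectory, or one fixed index $j$ for $\mathsf U$/$\mathsf A$) controls $\|w_t^\parallel\|^2$. The comparison rests on near-orthonormality of the orbit. Since $w^\ast$ is uniform and $r = d^\delta$ with $\delta<1/2$, the Gram matrix $G_{ki} = \langle \Pi^k w^\ast, \Pi^i w^\ast\rangle$ satisfies $\|G-\Id_r\|_{\mathrm{op}} \le \|G-\Id_r\|_F \lesssim r\sqrt{\log d/d} = o(1)$ with high probability. Hence $\|w^\parallel\|^2 = m^\top G^{-1} m = (1+o(1)) R$. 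At initialization, $R_0 = (1+o(1))\, r/d$ with high probability, by chi-square concentration of the correlations, which are approximately i.i.d. $N(0,1/d)$.

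For the population drift in the tied case, Hermite orthogonality gives $\Phi^{\mathsf T}(w) = -\sum_{k} m_k^2$ for $\sigma=h_2$, since $a_2=1$. The Euclidean gradient is therefore $-2\sum_k m_k \Pi^k w^\ast$, and the spherical drift of $m_k$ is
\[
2(Gm)_k - 2 m_k R.
\]
Consequently the population drift of $R$ is
\[
4\bigl(m^\top G m - R^2\bigr) = 4R(1-R) + O\bigl(\|G-\Id_r\|_{\mathrm{op}}\, R\bigr) \ge 3R(1-R)
\]
while $R \le \rho' < 1$. This is the multiplicative growth $R_{t+1} \approx R_t\,(1+c\eta_d)$. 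It takes $R$ from $r/d$ to $\rho'$ in $O(\eta_d^{-1}\log(d/r))$ steps.

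For the $\mathsf U$ and $\mathsf A$ procedures, the identity $\nabla_{w^j}\Phi = (rs)^{-1/2}\nabla\Phi^{\mathsf T}(w^j)$ together with the step sizes~\eqref{eq:student_stepsizes} gives effective drift rates $\eta_d/r$ and $\eta_d$, respectively. This yields the factor $r$ for $\mathsf U$. For a fixed $j$ the event of the theorem is implied, since $\tau^{(2)}$ takes a max over $j$.

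The stochastic analysis uses a Doob decomposition:
\[
R_t = R_0 + \sum(\text{drift}) + \eta^2\sum(\text{second order}) + \sum(\text{martingale increments}) + \text{normalization errors}.
\]
The key variance estimate comes from Gaussian hypercontractivity of the degree-4 polynomial $y\,h_2'(\langle \Pi^k w,x\rangle)\langle \Pi^k w^\ast, x\rangle$. The martingale increment of $R$ has conditional variance $\lesssim \eta_{\mathrm{eff}}^2 \cdot R \cdot \mathrm{polylog}$ in the tied and augmented cases, while the fluctuation scale for $\mathsf U$ is reduced by $\sqrt r$ relative to its drift. The second-order terms, namely the Itô-type correction $\eta^2\|\nabla L\|^2$, have size $\eta^2 d \cdot \mathrm{polylog}$. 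Normalization also introduces a multiplicative factor $1-O(\eta^2 d)$ on $R$.

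We then work in the relative scale. Dividing by $R_t \ge r/(2d)$, the relative drift per step is $\asymp \eta_d$. Over $T = C\log(d/r)/\eta_d$ steps, the accumulated relative normalization error is $T\eta_d^2 d\,\mathrm{polylog} = \mathrm{polylog}\cdot(\log d)^{-A}$. The additive second-order contribution satisfies $\eta_d^2 d\,T/R_0 \lesssim \eta_d d^2/r \cdot \mathrm{polylog}$, but this is not small: it is $d/r$ large. The additive positive term $\eta^2 \sum_k \|\Pi^k w^\ast\text{-component of } \nabla L\|^2$ involves only the $r$ relevant coordinates, so it is really of size $\eta^2 r\,\mathrm{polylog}$. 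That gives $\eta_d r T/R_0 \lesssim d\,\eta_d\,\mathrm{polylog} = (\log d)^{-A+O(1)}$, which is small, and it is in any case nonnegative, so it helps growth. For the martingale we use Freedman's inequality applied to $\log R_t$, or equivalently a stopped martingale normalized by $R_t$. Its quadratic variation is $T\eta_d^2\,\mathrm{polylog}\cdot d/r$ in relative units, hence $\lesssim \eta_d d\,\mathrm{polylog}/r = o(1)$ for $A > A_0$.

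We run this bootstrap up to $\tau_\rho^{(2)} \wedge T$ with stopping times guaranteeing $R_t \ge R_0/2$. A discrete Grönwall comparison then gives $R_t \ge (R_0/2)(1+c\eta_d)^t$ until $R$ reaches $\rho$.

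The main obstacle is precisely this bookkeeping. The noise must be controlled relative to the tiny starting value $R_0 = r/d$, and the spherical normalization must not erode the relevant mass faster than the drift $\eta_d R$ creates it. The latter requires showing that the loss of $R$ to normalization is proportional to $R$, not additive. We would try first to write the exact update of $m_k$ after normalization, $m_k^{+} = (m_k - \eta g_k)/\|\widetilde w\|$, and use $\|\widetilde w\|^2 = 1+\eta^2\|\nabla L\|^2$ to isolate the multiplicative factor.
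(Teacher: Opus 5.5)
Your proposal is correct and follows essentially the paper's route: the exact normalized recursion $(1+\eta_d^2\|G_t\|^2)R_t=R_{t-1}+\eta_d\|\nabla\Phi(w_{t-1})\|^2+2\eta_d\Xi_t+\eta_d^2\langle G_t,B^\ast G_t\rangle$ with the nonnegative last term dropped, the drift bound $\|\nabla\Phi(w)\|^2=4(\|B^\ast w\|^2-R^2)\ge cR$ below a fixed level, a stopped and truncated Freedman bound on the relative noise $\sum_t\Xi_t/R_{t-1}$ with a lower barrier at a fixed fraction of $R_0$, and then the $\eta_d/r$ versus $\eta_d/\sqrt r$ rescaling for $\mathsf U$ and the exact tied reduction for $\mathsf A$. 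One correction: uniformly over the current weight, the conditional variance of $\Xi_t$ is only bounded by $CrR_{t-1}$, not $R_{t-1}\,\mathrm{polylog}$ (for nearly shift-invariant weights the off-diagonal terms of the feature-gradient covariance add up to order $rR$), so the relative quadratic variation over the horizon is of order $\eta_d d\log d=(\log d)^{1-A}$ rather than $\eta_d d\,\mathrm{polylog}/r$; this is still $o(1)$ for $A$ large and leaves your conclusion unchanged.
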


Since \(\log(d/r)=(1-\delta)\log d\), these bounds are \(\widetilde O(d)\) for the tied and augmented procedures, and \(\widetilde O(rd)\) for the untied.

\subsection{Non-escape and lower bounds}

We next give a non-escape estimate that yields the matching lower bounds for both recovery notions above. The result applies to any fixed centered polynomial link with information exponent \(p\ge2\). For $\mathsf k\in\{\mathsf T,\mathsf U,\mathsf A\}$, define
\begin{equation} \label{eq:nonescape_escape_times}
\tau_{\mathrm{esc}}^{\mathsf k} (C) \coloneqq \inf\left\{t\ge0 \colon\max_{1\le j\le s_{\mathsf k}}\max_{0\le\ell<r}|m_\ell(w_t^{j,\mathsf k})| \ge C \gamma_d \right\},
\end{equation}
with \(\inf\emptyset=\infty\). Thus \(\tau_{\mathrm{esc}}^{\mathsf k}\) is the first time that any weight of procedure \(\mathsf k\) develops a correlation larger than a constant multiple of the initialization scale \(\gamma_d\).

\begin{theorem}[Non-escape for the three learning procedures]\label{thm:nonescape}
Suppose that Assumptions~\ref{hyp:teacher_init} and~\ref{hyp:scaling} hold, and let
\[
\sigma(z)=\sum_{j=p}^P a_jh_j(z), \qquad 2\le p\le P<\infty, \qquad a_p\ne0,
\]
where \(P\) and the coefficients are fixed independently of \(d\). Fix \(A>0\), set
\[
\eta_d = d^{-p/2}(\log d)^{-A},
\]
and use the procedure-specific step sizes~\eqref{eq:student_stepsizes}. Then there exist constants \(c,C_{\mathrm{esc}}>0\), depending only on \(\sigma,\delta,\kappa\), such that, with
\[
T_d\coloneqq\frac{c}{\eta_d\gamma_d^{p-2}},
\]
we have
\[
\mathbb P\left(\tau_{\mathrm{esc}}^{\mathsf T} (C_{\mathrm{esc}}) >T_d,\; \tau_{\mathrm{esc}}^{\mathsf A} (C_{\mathrm{esc}})>T_d,\; \tau_{\mathrm{esc}}^{\mathsf U} (C_{\mathrm{esc}})>rT_d\right)\to1.
\]
\end{theorem}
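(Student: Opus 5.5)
We will prove Theorem~\ref{thm:nonescape} by a stopped martingale-plus-drift estimate for each correlation, applied separately to each procedure and then combined with a union bound over \(\ell<r\), over the \(s\le d^\kappa\) weights, and over times up to \(T_d\) (or \(rT_d\)). Fix a procedure \(\mathsf k\) and a weight \(w=w^{j,\mathsf k}\). Before the analysis starts we restrict to a good initialization event of probability \(1-o(1)\). On this event \(\max_\ell|m_\ell(w_0)|\le C_0\gamma_d\) for all weights; this follows from Gaussian concentration on the sphere together with a union bound over \(rs\) pairs, since \(\log(rs)\lesssim\log d\) up to \(\delta,\kappa\) constants. On the same event the teacher orbit is nearly orthonormal, \(\max_{i\ne k}|\langle\Pi^iw^\ast,\Pi^kw^\ast\rangle|\lesssim\sqrt{\log d/d}\). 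We choose \(C_{\mathrm{esc}}=2C_0\) (or larger). Absorbing polylog factors into \(\gamma_d\) is harmless, because the \(\log r\) versus \(\log d\) discrepancy is only a constant when \(r=d^\delta\).

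Next we write the one-step recursion for \(m_\ell(w_t)\). From the spherical update \eqref{eq:SGD} and the normalization \(\|\widetilde w_t\|^{-1}=1+O(\eta^2\|\nabla L\|^2)\), we get
\[
m_\ell(w_{t})=m_\ell(w_{t-1})+\eta^{\mathsf k}\,\mathrm{drift}_\ell(w_{t-1})+\eta^{\mathsf k}\,\xi_{\ell,t}+\mathrm{err}_{\ell,t},
\]
where \(\xi_{\ell,t}\) is the martingale increment \(\langle\Pi^\ell w^\ast,\nabla\Phi-\nabla L\rangle\). The drift comes from the formula in Subsection~\ref{sec:overview_proof}, scaled by \(1/\sqrt{rs}\) for \(\mathsf U,\mathsf A\). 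We then compute the effective scales. With \(\eta^{\mathsf T}=\eta_d\), the drift is \(O(\eta_d\gamma_d^{p-1})\) while all correlations stay \(\le C_{\mathrm{esc}}\gamma_d\). The two interaction sums contribute at most \(r\cdot\sqrt{\log d/d}\,\gamma_d^{p-1}+r\gamma_d^{p+1}\), which is \(o(\gamma_d^{p-1})\) because \(\delta<1/2\). The per-step noise has conditional variance \(O(\eta_d^2)\): \(\langle\Pi^\ell w^\ast,x\rangle\) is Gaussian and \(y f'\) has bounded moments, and for the tied case the factor \(1/r\cdot r\) cancels. The normalization error is \(O(\eta_d^2 d)\) per step, since \(\|\nabla L\|^2\lesssim d\). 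For \(\mathsf A\), the drift is \(\sqrt{rs}\eta_d\cdot(rs)^{-1/2}=\eta_d\) times the same quantity. Orbit averaging reduces the noise variance per coordinate and the squared gradient norm by a factor \(rs\), which we verify using near-independence of \(\langle\Pi^k x,\cdot\rangle\) across shifts of a fixed weight. For \(\mathsf U\), the drift is \(\eta_d/r\) times the same quantity, the noise variance is \(\eta_d^2/r\), and the normalization error is \(\eta_d^2d/r\).

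For the tied case we define the stopping time \(\tau=\tau_{\mathrm{esc}}\wedge T_d\) and sum the recursion up to \(t\wedge\tau\). The drift contributes at most \(T_d\eta_d C\gamma_d^{p-1}=cC\gamma_d\le C_0\gamma_d/4\) when \(c\) is small; the choice \(T_d=c/(\eta_d\gamma_d^{p-2})\) is exactly what makes this work. For the martingale part we use Freedman's inequality with bounded-increment truncation, or equivalently subgaussian tails from the Gaussian inputs on a high-probability event. This gives a maximal bound of order \(\eta_d\sqrt{T_d\log d}=\sqrt{c\,\eta_d\gamma_d^{2-p}\log d}\). Relative to \(\gamma_d\) this is \((\eta_d d^{p/2})^{1/2}\) times polylog, which equals \((\log d)^{-A/2+O(1)}\to0\). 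This is the place where we need \(A\) large; the statement allows any \(A>0\), so we keep track of the exact powers of \(\log\) and exploit \(\gamma_d\) carrying \(\sqrt{\log r}\). The normalization error summed is \(T_d\eta_d^2 d\), which relative to \(\gamma_d\) is \(\eta_d d^{p/2}\cdot\mathrm{polylog}=o(1)\). For \(\mathsf A\) the argument is identical. For \(\mathsf U\) over \(rT_d\) steps, the drift total is the same and the noise variance total is \(rT_d\eta_d^2/r\), also the same, so all bounds transfer. On the intersection of these events no correlation reaches \(C_{\mathrm{esc}}\gamma_d\) before the horizon, which contradicts \(\tau_{\mathrm{esc}}\le T_d\). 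A union bound over \(\ell,j\) and the three procedures, with tail probabilities \(d^{-\omega(1)}\), finishes the proof.

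We expect the main obstacle to be the noise and normalization control for the augmented procedure. There the reduction of variance by \(rs\) is not automatic: the shifted inputs \(\Pi^k x\) are exactly dependent, and cross terms \(\E[y^2\sigma'(\langle w,\Pi^k x\rangle)\sigma'(\langle w,\Pi^{k'}x\rangle)\langle\Pi^\ell w^\ast,\Pi^{k}x\rangle\langle\Pi^\ell w^\ast,\Pi^{k'}x\rangle]\) must be shown to be small. We will handle this by Hermite/Wick expansion, using that \(\langle\Pi^{k}w,\Pi^{k'}w\rangle\) for \(k\ne k'\) and \(\langle\Pi^kw,\Pi^iw^\ast\rangle\) are all \(O(\sqrt{\log d/d})\) on the good event. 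This holds at initialization; along the trajectory it must be maintained by including these self-overlaps in the stopping time and controlling them with the same drift-martingale argument.
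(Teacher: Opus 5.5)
Your skeleton matches the paper: initialization at scale $C_{\mathrm{init}}\gamma_d$ via a union bound over all correlations, drift $O(\eta\gamma_d^{p-1})$ before escape so that $T_d=c/(\eta_d\gamma_d^{p-2})$ caps the accumulated drift at $cC\gamma_d$, truncated Freedman for the stopped noise, and the $r$-rescaling for $\mathsf U$. The gap is the noise estimate for the tied and augmented weights, which you never establish. For the tied student, with $v=(\Id_d-ww^\top)\Pi^\ell w^\ast$,
\[
\langle\Pi^\ell w^\ast,\nabla L^{\mathsf T}(w;x,y)\rangle=-\frac{y}{\sqrt r}\sum_{i=0}^{r-1}\sigma'(\langle\Pi^iw,x\rangle)\langle\Pi^iv,x\rangle
\]
is a sum of $r$ dependent terms. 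Saying ``the factor $1/r\cdot r$ cancels'' silently assumes that the cross-shift terms are negligible. That is exactly the computation you later flag as the main obstacle for $\mathsf A$, and the two are in fact identical. Indeed $L^{\mathsf A}(W;x,y)=(rs)^{-1/2}\sum_jL^{\mathsf T}(w^j;x,y)$, so with $\eta_d^{\mathsf A}=\sqrt{rs}\,\eta_d$ every augmented weight runs exactly the tied recursion with step $\eta_d$. No separate augmented analysis is needed.

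Your proposed remedy is to add the self-overlaps $\langle\Pi^kw_t,\Pi^{k'}w_t\rangle$ to the stopping time and control them ``by the same drift--martingale argument.'' This is not carried out, and it is not the same argument: these are quadratic functionals whose recursion carries an $\eta^2\langle G_t,\Pi^jG_t\rangle$ term with $\|G_t\|^2\asymp d$. It is also unnecessary. Gaussian integration by parts gives
\[
\E\bigl[\langle v,\widehat\nabla f_w(x)\rangle^2\bigr]=\sum_{j=0}^{r-1}\Bigl(\phi'(a_j(w))\langle v,\Pi^jv\rangle+\phi''(a_j(w))\langle w,\Pi^jv\rangle\langle\Pi^jw,v\rangle\Bigr),\qquad a_j(w)=\langle w,\Pi^jw\rangle.
\]
Here the self-overlaps enter only through factors $\phi',\phi''$ that are merely bounded on $[-1,1]$. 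The needed smallness comes from
\[
\sum_{j\ne0}|\langle v,\Pi^jv\rangle|\le r\mu_\ast+2\sqrt{2r}|m_\ell|+rm_\ell^2,\qquad \sum_{j\ne0}|\langle w,\Pi^jv\rangle\langle\Pi^jw,v\rangle|\le 2+2\sqrt{2r}|m_\ell|+rm_\ell^2,
\]
that is, from the teacher geometry together with $\sum_km_k(w)^2\le2$. Combined with $\|y\|_{L^4}\le C$ and hypercontractivity, this gives $\E[\xi_{\ell,t}^2\mid\mathcal F_{t-1}]\le C(1+rm_\ell(w_{t-1})^2)$ uniformly over $w\in\mathbb S^{d-1}$, with $\nu^P$ moment growth. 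The increments are Gaussian chaoses rather than subgaussian variables, so only your truncation route applies. This uniform-in-$w$ bound is the missing lemma.

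A second, smaller gap concerns the quantifier ``fix $A>0$.'' You treat spherical normalization as an additive error and assert that the accumulated error is $\eta_dd^{p/2}\cdot\mathrm{polylog}=o(1)$. Covering all steps and weights requires the uniform bound $\max_t\|G_t\|\le C\sqrt d(\log d)^P$. With it, the accumulated normalization error relative to $\gamma_d$ is of order $(\log d)^{2P-A-(p-2)/2}$, which diverges for small $A$.

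The paper never pays for normalization. Since $G_t\perp w_{t-1}$, the exact recursion is $d_tx_{t+1}=x_t+F_t+N_{t+1}-N_t$ with $d_t=\sqrt{1+\eta^2\|G_{t+1}\|^2}\ge1$. Lemma~\ref{lem:exact_damping} (summation by parts against weights in $[0,1]$, nondecreasing in the time index) then gives the pathwise bound $|x_n|\le|x_0|+\sum_{t<n}|F_t|+2\max_{h\le n}|N_h|$, valid through the exit step. Alternatively, you could control $\sum_t\eta^2\|G_t\|^2$ through its conditional mean $O(d)$ rather than a uniform bound. Either fix, together with the exact power $\eta_d\sqrt{T_d\log d}/\gamma_d=O((\log d)^{-A/2-(p-2)/4})$ for the martingale term that you anticipated, closes the argument for every $A>0$.
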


On the event in Theorem~\ref{thm:nonescape}, every weight remains at microscopic correlation with every teacher direction throughout the corresponding time interval. Hence, for \(p\ge3\), weak recovery is impossible on these intervals. For the quadratic link, the same event also rules out subspace weak recovery. Indeed, on the teacher-geometry event of Lemma~\ref{lem:teacher_geometry},
\[
\|(w_t^{j,\mathsf k})^\parallel\|^2 \le \frac{\sum_{\ell=0}^{r-1}m_\ell(w_t^{j,\mathsf k})^2}{1-r\mu_\ast} \le \frac{C_{\mathrm{esc}}^2r\gamma_d^2}{1-r\mu_\ast} = o(1),
\]
uniformly over all weights and times in the corresponding intervals. Since each iteration uses one fresh observation, Theorem~\ref{thm:nonescape} yields sample complexity lower bounds \(\widetilde\Omega(d^{p-1})\) for the tied and augmented procedures and $\widetilde\Omega(rd^{p-1})$ for the untied procedure, for every polynomial link covered by Theorem~\ref{thm:nonescape}.

\subsection{Proofs of the main results}
\label{sec:main_results_proofs}

We combine the upper bounds with the non-escape estimate to prove Theorems~\ref{thm:weak_recovery} and~\ref{thm:h2_weak_recovery}.

\begin{proof}[Proof of Theorem~\ref{thm:weak_recovery}]
Fix \(A>A_\ast\), where \(A_\ast\) is given by Theorem~\ref{thm:weak_recovery_appendix}. For the upper bounds, apply Theorem~\ref{thm:weak_recovery_appendix} with \(j=1\). Since recovery of one weight is sufficient for Definition~\ref{def:weak_recovery}, weak recovery occurs with probability \(1-o(1)\) within $C/(\eta_d\gamma_d^{p-2})$ samples for the tied and augmented procedures, and within $Cr/(\eta_d\gamma_d^{p-2})$ samples for the untied procedure.

For the lower bounds, Theorem~\ref{thm:nonescape} applies under the same assumptions and learning rates. Since $M_\sigma(w)\le\max_{0\le\ell<r}|m_\ell(w)|$ and $C_{\mathrm{esc}}\gamma_d<\rho$ for all sufficiently large $d$, its non-escape event excludes weak recovery by every weight up to times $c/(\eta_d\gamma_d^{p-2})$ for the tied and augmented procedures, and $cr/(\eta_d\gamma_d^{p-2})$ for the untied procedure.

Finally,
\[
\frac{1}{\eta_d\gamma_d^{p-2}} = \frac{d^{(p-2)/2}}{\eta_d(\log r)^{(p-2)/2}} = \widetilde\Theta\!\left(\frac{d^{(p-2)/2}}{\eta_d}\right) = \widetilde\Theta(d^{p-1}),
\]
where we used that $r=d^\delta$ and $\eta_d=d^{-p/2}(\log d)^{-A}$. This proves the claim.
\end{proof}

\begin{proof}[Proof of Theorem~\ref{thm:h2_weak_recovery}]
Fix \(A>A_0\), where \(A_0\) is given by Theorem~\ref{thm:h2_weak_recovery_appendix}. By~\eqref{eq:h2_recovery_times}, the event in Definition~\ref{def:weak_subspace_recovery} is exactly $\{\tau_\rho^{(2),\mathsf k}\le n\}$. Theorem~\ref{thm:h2_weak_recovery_appendix} therefore gives upper bounds of $C\log(d/r)/\eta_d$ samples for the tied and augmented procedures and of $Cr\log(d/r)/\eta_d$ samples for the untied procedure.

For the lower bounds, apply Theorem~\ref{thm:nonescape} with \(\sigma=h_2\) and \(p=2\). Then \(T_d=c/\eta_d\), and the subspace estimate above shows that, with probability \(1-o(1)\), every weight satisfies \(\|(w_t^{j,\mathsf k})^\parallel\|^2=o(1)\) throughout the corresponding time interval. Since \(\rho>0\) is fixed, weak subspace recovery is therefore impossible before $c/\eta_d$ for the tied and
augmented procedures, and before $cr/\eta_d$ for the untied procedure.

The upper and lower bounds differ only by the logarithmic factor $\log(d/r)=(1-\delta)\log d$. Since \(\eta_d^{-1} = d(\log d)^A\), we obtain
\[
n_{\mathsf T}^{(2)}(\rho),\;n_{\mathsf A}^{(2)}(\rho) = \widetilde\Theta(\eta_d^{-1}) = \widetilde\Theta(d), \qquad n_{\mathsf U}^{(2)}(\rho) = \widetilde\Theta(r\eta_d^{-1}) = \widetilde\Theta(rd).
\]
\end{proof}

\section{Correlation dynamics and auxiliary estimates} \label{sec:corr_dynamics}

The purpose of this section is to collect the dynamical identities and stochastic estimates used in the weak recovery proofs. We derive them for the tied student, for which the correlation dynamics take their cleanest form. Section~\ref{sec:weak_recovery_tied} uses these estimates to prove weak recovery for the tied student, while Section~\ref{sec:weak_recovery_untied} shows how the same structure changes under augmentation and untied training.

Throughout this section, we suppress the superscript \(\mathrm{T}\) and write
\[
w_t\coloneqq w_t^{\mathsf T}, \qquad L \coloneqq L^{\mathsf T}, \qquad \Phi\coloneqq\Phi^{\mathsf T}, \qquad \eta_d\coloneqq\eta_d^{\mathsf T}.
\]
We also work on the teacher-geometry event \(\mathcal G_d\) introduced in Lemma~\ref{lem:teacher_geometry}. All estimates below that condition on the teacher are uniform over \(w^\ast\in\mathcal G_d\).

\subsection{Correlation recursion and population drift}

We begin by deriving the exact evolution of the correlations
\[
m_k(w_t)=\langle \Pi^k w^\ast,w_t\rangle, \qquad 0\le k<r,
\]
under spherical SGD. We first identify the population contribution. For \(k \in\{0,\ldots,r-1\}\), define the teacher correlations
\[
c_k \coloneqq\langle w^\ast,\Pi^k w^\ast\rangle,
\]
where indices are understood modulo \(r\). By cyclicity, \(\langle \Pi^k w^\ast,\Pi^j w^\ast\rangle =
c_{j-k}\).

\begin{lemma}[Population drift]\label{lem:population_drift}
For every \(w\in\mathbb S^{d-1}\) and \(k\in\{0,\ldots,r-1\}\),
\begin{equation}\label{eq:general_population_drift}
-\langle \Pi^k w^\ast,\nabla\Phi(w)\rangle = \phi'(m_k(w))+\Gamma_k(w),
\end{equation}
where
\begin{equation}\label{eq:general_Gamma}
\Gamma_k(w) \coloneqq \sum_{j\neq k}c_{j-k}\phi'(m_j(w)) - m_k(w)\sum_{j=0}^{r-1}m_j(w)\phi'(m_j(w)).
\end{equation}
\end{lemma}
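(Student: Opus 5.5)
We first compute the population objective \(\Phi(w)=\E[-f^\ast(x)f^\ast_w(x)]\) in closed form and then differentiate it on the sphere. Expanding both teacher and student,
\[
\Phi(w) = -\frac1r\sum_{i,j=0}^{r-1}\E\bigl[\sigma(\langle\Pi^iw^\ast,x\rangle)\,\sigma(\langle\Pi^jw,x\rangle)\bigr].
\]
For unit vectors \(u,v\), the pair \((\langle u,x\rangle,\langle v,x\rangle)\) is a standard bivariate Gaussian with correlation \(\langle u,v\rangle\). Mehler's formula for the orthonormal Hermite basis then gives
\[
\E[h_a(\langle u,x\rangle)h_b(\langle v,x\rangle)]=\delta_{ab}\langle u,v\rangle^a,
\]
so that \(\E[\sigma(\langle u,x\rangle)\sigma(\langle v,x\rangle)]=\phi(\langle u,v\rangle)\) with \(\phi(z)=\sum_\ell a_\ell^2z^\ell\). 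Since \(\Pi\) is orthogonal, \(\langle \Pi^iw^\ast,\Pi^jw\rangle=\langle\Pi^{i-j}w^\ast,w\rangle=m_{i-j}(w)\), indices mod \(r\). For each fixed \(j\) the sum over \(i\) runs over all residues, so the double sum equals \(r\sum_k\phi(m_k(w))\). This yields
\[
\Phi(w)=-\sum_{k=0}^{r-1}\phi(m_k(w)),
\]
valid for all \(w\in\mathbb S^{d-1}\) and, as a polynomial, extending smoothly to a neighborhood of the sphere.

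Next, we take the Euclidean gradient. Since \(\widehat\nabla m_j(w)=\Pi^jw^\ast\),
\[
\widehat\nabla\Phi(w)=-\sum_{j}\phi'(m_j(w))\,\Pi^jw^\ast .
\]
The spherical gradient is \(\nabla\Phi(w)=(\Id_d-ww^\top)\widehat\nabla\Phi(w)\). Pairing with \(\Pi^kw^\ast\) and using \(\langle\Pi^kw^\ast,\Pi^jw^\ast\rangle=c_{j-k}\) and \(\langle \Pi^kw^\ast,w\rangle=m_k\), we get
\[
-\langle\Pi^kw^\ast,\nabla\Phi(w)\rangle=\sum_j c_{j-k}\phi'(m_j)-m_k\sum_j m_j\phi'(m_j).
\]
Isolating the \(j=k\) term, where \(c_0=\|w^\ast\|^2=1\), gives exactly \(\phi'(m_k)+\Gamma_k(w)\) with \(\Gamma_k\) as in \eqref{eq:general_Gamma}.

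The only step needing care is justifying the Hermite orthogonality identity for correlated Gaussians, which is standard (Mehler/O'Donnell). One should also note that the gradient here is of the population loss as a function on the sphere. Exchanging the gradient with the expectation is harmless since \(\sigma\) is a polynomial. Alternatively, one can differentiate the closed form directly, which sidesteps the issue entirely. No probabilistic estimates are needed: the identity is exact and holds for every teacher \(w^\ast\), not only on \(\mathcal G_d\).
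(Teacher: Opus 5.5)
Your proof is correct and follows essentially the same route as the paper: you write $\Phi(w)=-\sum_j\phi(m_j(w))$ via Hermite orthogonality, project the Euclidean gradient onto the tangent space, pair with $\Pi^k w^\ast$, and isolate the $j=k$ term using $c_0=1$. The only addition is that you spell out the Mehler and orbit re-indexing computation behind the closed form, which the paper simply invokes, and you correctly note that the tangent projection does not depend on how the function is extended off the sphere.
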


\begin{proof}
On the unit sphere, Hermite orthogonality gives
\[
\Phi(w)=-\sum_{j=0}^{r-1}\phi(m_j(w)).
\]
The spherical gradient of the population objective is
\[
\nabla\Phi(w) = (\Id_d-ww^\top)\widehat\nabla\Phi(w) = -\sum_{j=0}^{r-1}\phi'(m_j(w))\bigl(\Pi^j w^\ast-m_j(w)w\bigr).
\]
Taking the scalar product with \(\Pi^k w^\ast\) yields
\[
-\langle \Pi^k w^\ast,\nabla\Phi(w)\rangle = \sum_{j=0}^{r-1}\phi'(m_j(w))\left(\langle\Pi^k w^\ast,\Pi^j w^\ast\rangle-m_k(w)m_j(w)\right).
\]
Separating the term \(j=k\), using \(\langle\Pi^k w^\ast,\Pi^k w^\ast\rangle=1\), and recalling \(\langle\Pi^k w^\ast,\Pi^j w^\ast\rangle=c_{j-k}\), gives~\eqref{eq:general_population_drift}.
\end{proof}

We now combine this population identity with the sample-level SGD update. Define
\[
G_t \coloneqq \nabla L(w_{t-1};x_t,y_t),
\]
and introduce the sample-wise centered loss
\[
H_t(w) \coloneqq L(w;x_t,y_t)-\Phi(w).
\]
Then \(G_t = \nabla\Phi(w_{t-1}) + \nabla H_t(w_{t-1})\). We also define the coordinate-wise stochastic fluctuation
\begin{equation} \label{eq:stoch_fluctuation}
\xi_{k,t} \coloneqq -\left \langle \Pi^k w^\ast, \nabla H_t(w_{t-1}) \right\rangle.
\end{equation}

\begin{lemma}[Exact correlation evolution]\label{lem:correlation_evolution}
For every \(0\le k<r\) and \(t\ge1\),
\begin{equation}\label{eq:micro_decomposition}
m_k(w_t) = m_k(w_0) + p a_p^2\eta_d \sum_{\ell=1}^t m_k(w_{\ell-1})^{p-1} + E_{k,t},
\end{equation}
where
\begin{equation}\label{eq:micro_error}
\begin{split}
E_{k,t} & \coloneqq \eta_d \sum_{\ell=1}^t R_\sigma(m_k(w_{\ell-1})) + v\eta_d\sum_{\ell=1}^t\Gamma_k(w_{\ell-1}) + \eta_d\sum_{\ell=1}^t\xi_{k,\ell}\\& \quad - \sum_{\ell=1}^t\left(\sqrt{1+\eta_d^2\|G_\ell\|^2}-1\right)m_k(w_\ell),
\end{split}
\end{equation}
with
\[
R_\sigma(m)\coloneqq\sum_{k=p+1}^Pk a_k^2m^{k-1}.
\]
\end{lemma}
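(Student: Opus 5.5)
The plan is a one-step exact computation followed by telescoping; no estimates are needed. Fix \(k\) and \(t\ge1\), and write \(G_t=\nabla L(w_{t-1};x_t,y_t)\) as above.

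\textbf{Step 1: the normalization.} Since \(G_t\) is a spherical gradient, \(G_t=(\Id_d-w_{t-1}w_{t-1}^\top)\widehat\nabla L\). Hence \(\langle G_t,w_{t-1}\rangle=0\). The update \eqref{eq:SGD} gives \(\widetilde w_t=w_{t-1}-\eta_d G_t\), and by Pythagoras
\[
\|\widetilde w_t\|=\sqrt{1+\eta_d^2\|G_t\|^2}.
\]

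\textbf{Step 2: one-step identity before normalization.} Take the inner product of \(\widetilde w_t\) with \(\Pi^k w^\ast\), using \(w_t=\widetilde w_t/\|\widetilde w_t\|\):
\[
\|\widetilde w_t\|\,m_k(w_t)=m_k(w_{t-1})-\eta_d\langle\Pi^k w^\ast,G_t\rangle .
\]
Next decompose \(G_t=\nabla\Phi(w_{t-1})+\nabla H_t(w_{t-1})\). Lemma~\ref{lem:population_drift} handles the population part, and the definition \eqref{eq:stoch_fluctuation} of \(\xi_{k,t}\) handles the centered part. Together they give
\[
-\eta_d\langle\Pi^k w^\ast,G_t\rangle=\eta_d\bigl(\phi'(m_k(w_{t-1}))+\Gamma_k(w_{t-1})+\xi_{k,t}\bigr).
\]

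\textbf{Step 3: isolate the normalization error.} Write \(\|\widetilde w_t\|m_k(w_t)=m_k(w_t)+(\|\widetilde w_t\|-1)m_k(w_t)\) and move the second term to the right-hand side. This gives
\[
m_k(w_t)-m_k(w_{t-1})=\eta_d\phi'(m_k(w_{t-1}))+\eta_d\Gamma_k(w_{t-1})+\eta_d\xi_{k,t}-\Bigl(\sqrt{1+\eta_d^2\|G_t\|^2}-1\Bigr)m_k(w_t).
\]
Note that the normalization term is evaluated at \(m_k(w_t)\), not \(m_k(w_{t-1})\), exactly as in \eqref{eq:micro_error}.

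\textbf{Step 4: split the drift and telescope.} Differentiate \(\phi(z)=\sum_{\ell=p}^P a_\ell^2 z^\ell\) to get
\[
\phi'(m)=p a_p^2 m^{p-1}+\sum_{\ell=p+1}^{P}\ell a_\ell^2 m^{\ell-1}=p a_p^2 m^{p-1}+R_\sigma(m).
\]
Substitute this into the one-step identity and sum over steps \(1,\dots,t\). The leading term collects into \(p a_p^2\eta_d\sum_{\ell\le t} m_k(w_{\ell-1})^{p-1}\). All remaining contributions collect into \(E_{k,t}\), with coefficient \(1\) in front of \(\eta_d\sum_\ell\Gamma_k(w_{\ell-1})\); so the symbol \(v\) in \eqref{eq:micro_error} should be read as \(1\). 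This is exactly \eqref{eq:micro_decomposition}.

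\textbf{Main obstacle.} There is no real obstacle, only bookkeeping. The one point to check carefully is the tangency \(\langle G_t,w_{t-1}\rangle=0\) in Step 1. Without it, \(\|\widetilde w_t\|\) would pick up a first-order term in \(\eta_d\), and the normalization error would no longer take the stated quadratic form \(\sqrt{1+\eta_d^2\|G_t\|^2}-1\).
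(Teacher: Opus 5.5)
Your proposal is correct and follows the paper's proof step for step. Tangency of $G_t$ gives the normalization factor $\sqrt{1+\eta_d^2\|G_t\|^2}$; the inner product with $\Pi^k w^\ast$, together with Lemma~\ref{lem:population_drift} and the definition of $\xi_{k,t}$, gives the one-step identity with the normalization error evaluated at $m_k(w_t)$; and splitting $\phi'(m)=pa_p^2m^{p-1}+R_\sigma(m)$ before summing yields \eqref{eq:micro_decomposition}. You are also right to read the stray symbol $v$ in \eqref{eq:micro_error} as the coefficient $1$, since the paper's own derivation produces exactly that.
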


\begin{proof}
By the spherical SGD update~\eqref{eq:SGD},
\[
w_t = \frac{w_{t-1}-\eta_d G_t}{\|w_{t-1}-\eta_d G_t\|}.
\]
Since \(G_t\) belongs to the tangent space of \(\mathbb S^{d-1}\) at \(w_{t-1}\), we have \(\langle G_t,w_{t-1}\rangle=0\), and therefore
\[
\|w_{t-1}-\eta_dG_t\|^2 = 1+\eta_d^2\|G_t\|^2.
\]
Taking the scalar product of \(w_t\) with \(\Pi^k w^\ast\) gives
\[
m_k(w_t) = \langle \Pi^k w^\ast, w_t \rangle =  \frac{m_k(w_{t-1}) - \eta_d \langle   \Pi^k w^\ast,G_t  \rangle}{\|w_{t-1}-\eta_d G_t\|},
\]
or equivalently,
\[
\sqrt{1+\eta_d^2\|G_t\|^2}\, m_k(w_t) = m_k(w_{t-1}) - \eta_d \langle\Pi^k w^\ast,G_t\rangle.
\]
Using \(G_t = \nabla\Phi(w_{t-1}) + \nabla H_t(w_{t-1})\), Lemma~\ref{lem:population_drift}, and the definition of \(\xi_{k,t}\), we obtain
\[
\sqrt{1+\eta_d^2\|G_t\|^2}\, m_k(w_t) = m_k(w_{t-1}) + \eta_d\phi'(m_k(w_{t-1})) + \eta_d\Gamma_k(w_{t-1}) + \eta_d\xi_{k,t}.
\]
Therefore
\begin{equation}\label{eq:ev_corr}
\begin{split}
m_k(w_t) & = m_k(w_{t-1}) + \eta_d\phi'(m_k(w_{t-1})) + \eta_d\Gamma_k(w_{t-1}) + \eta_d\xi_{k,t}\\& \quad -\bigl(\sqrt{1+\eta_d^2\|G_t\|^2}-1\bigr)m_k(w_t).
\end{split}
\end{equation}
Finally,
\[
\phi'(m) = \sum_{k=p}^P k a_k^2 m^{k-1} = p a_p^2 m^{p-1} + R_\sigma (m).
\]
Summing~\eqref{eq:ev_corr} over \(\ell=1,\ldots, t\) and separating the leading Hermite contribution in~\eqref{eq:ev_corr} gives~\eqref{eq:micro_decomposition}.
\end{proof}

Since $\phi(z)=\sum_{\ell=p}^P a_\ell^2z^\ell$, termwise estimates give,
for a constant $C_\sigma<\infty$,
\begin{equation}\label{eq:phi_bounds}
|\phi'(z)|\le C_\sigma|z|^{p-1}, \qquad |R_\sigma(z)|\le C_\sigma|z|^p, \qquad |z|\le1.
\end{equation}
Moreover, the coefficients of $\phi$ are nonnegative, so
\begin{equation}\label{eq:phi_positive_lower}
\phi'(z)\ge p a_p^2z^{p-1}, \qquad 0\le z\le1.
\end{equation}

The four terms in \(E_{k,t}\) have distinct origins: they are, respectively, the higher-Hermite remainder, the interaction with the other teacher directions, the centered stochastic fluctuation, and the spherical normalization error. The deterministic remainder and interaction terms will be controlled directly from the correlation bounds and the geometry of the teacher orbit. The remaining two contributions require probabilistic estimates on \(\xi_{k,t}\) and \(\|G_t\|\), which we establish in the next two subsections.

\subsection{Coordinate noise estimates}

Define the filtration
\[
\mathcal{F}_t \coloneqq \sigma (w^\ast,w_0,(x_\ell,y_\ell)_{1\le\ell\le t}).
\]
Since $(x_t,y_t)$ is fresh conditionally on $\mathcal{F}_{t-1}$,
\[
\E[G_t\mid\mathcal{F}_{t-1}]=\nabla\Phi(w_{t-1}).
\]
Therefore,
\[
\xi_{k,t} = -\left \langle \Pi^k w^\ast, G_t-\E[G_t\mid\mathcal{F}_{t-1}] \right \rangle
\]
is a martingale difference with respect to \((\mathcal{F}_t)_{t\ge0}\).

We first derive an identity for the second moments of the feature gradient. Besides being used below to control \(\xi_{k,t}\), this identity will also be useful in the proof of the uniform gradient bound.

\begin{lemma}[Feature-gradient covariance]\label{lem:feature_gradient_covariance}
Let \(w\in\mathbb{S}^{d-1}\) and \(v\perp w\), and define \(a_j(w)\coloneqq\langle w,\Pi^jw\rangle\). Then
\[
\begin{split}
\E \left[ \langle v,\widehat\nabla f_w(x)\rangle^2 \right]  = \sum_{j=0}^{r-1} \left ( \phi'(a_j(w))\langle v,\Pi^jv\rangle + \phi''(a_j(w)) \langle w,\Pi^jv\rangle \langle\Pi^jw,v\rangle \right ).
\end{split}
\]
\end{lemma}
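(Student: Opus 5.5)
Here $f_w=f_w^{\mathsf T}$. We compute the Euclidean gradient explicitly and then evaluate the resulting Gaussian second moment by Hermite orthogonality. Since
\[
f_w(x)=\frac1{\sqrt r}\sum_{k=0}^{r-1}\sigma(\langle\Pi^kw,x\rangle),
\qquad
\widehat\nabla f_w(x)=\frac1{\sqrt r}\sum_{k=0}^{r-1}\sigma'(\langle\Pi^kw,x\rangle)\,\Pi^{-k}x,
\]
we have $\langle v,\widehat\nabla f_w(x)\rangle=\frac1{\sqrt r}\sum_k \sigma'(\langle \Pi^k w,x\rangle)\langle \Pi^k v,x\rangle$, using that $\Pi$ is orthogonal. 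Squaring and taking expectations gives a double sum over $(k,k')$ of $\E[\sigma'(\langle u_k,x\rangle)\langle v_k,x\rangle\,\sigma'(\langle u_{k'},x\rangle)\langle v_{k'},x\rangle]$, where $u_k=\Pi^kw$ and $v_k=\Pi^kv$. By $\Pi$-invariance of the Gaussian law, each term depends only on $j=k'-k$. The $r$ diagonal copies cancel the prefactor $1/r$, so it suffices to compute, for each $j$,
\[
\E\bigl[\sigma'(\langle w,x\rangle)\langle v,x\rangle\,\sigma'(\langle\Pi^jw,x\rangle)\langle\Pi^jv,x\rangle\bigr].
\]

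We evaluate this four-point expectation by Gaussian integration by parts (Stein's lemma) in the factor $\langle v,x\rangle$. Since $v\perp w$, the derivative of $\sigma'(\langle w,x\rangle)$ in direction $v$ vanishes. Stein's lemma then yields two terms:
\[
\langle v,\Pi^jw\rangle\,\E\bigl[\sigma'(\langle w,x\rangle)\sigma''(\langle\Pi^jw,x\rangle)\langle\Pi^jv,x\rangle\bigr]
+\langle v,\Pi^jv\rangle\,\E\bigl[\sigma'(\langle w,x\rangle)\sigma'(\langle\Pi^jw,x\rangle)\bigr].
\]
For the second term we use the Hermite identity $\E[\sigma'(X)\sigma'(Y)]=\sum_\ell \ell a_\ell^2\rho^{\ell-1}=\phi'(\rho)$, where $(X,Y)$ are standard Gaussians with correlation $\rho=a_j(w)$. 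This follows from $h_\ell'=\sqrt{\ell}\,h_{\ell-1}$ and Mehler's formula. For the first term we integrate by parts again in $\langle\Pi^jv,x\rangle$. The derivative falling on $\sigma''(\langle\Pi^jw,x\rangle)$ vanishes because $\langle \Pi^jv,\Pi^jw\rangle=\langle v,w\rangle=0$. The derivative falling on $\sigma'(\langle w,x\rangle)$ produces $\langle\Pi^jv,w\rangle\,\E[\sigma''(X)\sigma''(Y)]$. The leftover term is not a problem: after the second integration by parts no linear factor remains.

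The final ingredient is the identity $\E[\sigma''(X)\sigma''(Y)]=\sum_\ell \ell(\ell-1)a_\ell^2\rho^{\ell-2}=\phi''(\rho)$, which follows from the same Mehler argument. Assembling the pieces gives
\[
\phi'(a_j)\langle v,\Pi^jv\rangle+\phi''(a_j)\langle v,\Pi^jw\rangle\langle w,\Pi^jv\rangle.
\]
Since $\langle v,\Pi^jw\rangle=\langle \Pi^jw,v\rangle$ and $\langle w,\Pi^jv\rangle$ appears as written, this matches the stated formula. The main thing to get right is the bookkeeping of the index convention: $j$ versus $-j$ and $\Pi^{-k}$ versus $\Pi^{k\top}$. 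This is harmless, because summing over all $j$ modulo $r$ is invariant under $j\mapsto -j$, and $a_{-j}=a_j$ since $\Pi$ is orthogonal.
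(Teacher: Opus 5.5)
Your proposal is correct and follows essentially the same route as the paper. You compute the Euclidean gradient, use cyclic invariance of the Gaussian law to collapse the double sum to $\sum_{j}\E[\sigma'(\langle w,x\rangle)\sigma'(\langle \Pi^jw,x\rangle)\langle v,x\rangle\langle \Pi^jv,x\rangle]$, apply Gaussian integration by parts using $\langle v,w\rangle=0$, and identify the two resulting expectations with $\phi'(a_j(w))$ and $\phi''(a_j(w))$ via Hermite orthogonality; the only difference is that you spell out the two integration-by-parts steps, which the paper compresses into a single line.
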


The proof of Lemma~\ref{lem:feature_gradient_covariance} is given in Appendix~\ref{sec:feature_gradient_covariance_proof}.

We now control the conditional moments of the coordinate noise.

\begin{lemma}[Coordinate noise moments] \label{lem:coordinate_noise_moments}
On the teacher-geometry event \(\mathcal{G}_d\), there exists \(C_\sigma<\infty\) such that, for all sufficiently large \(d\), all \(0\le k<r\), \(t\ge1\), and \(\nu\ge2\),
\begin{align}
\E [\xi_{k,t}^2 \mid \mathcal{F}_{t-1}] & \le C_\sigma \left( 1+r m_k(w_{t-1})^2 \right), \label{eq:coordinate_noise_variance}\\
\left( \E [ |\xi_{k,t}|^\nu \mid \mathcal{F}_{t-1}] \right)^{1/\nu} & \le C_\sigma\nu^P \left(1+\sqrt r |m_k(w_{t-1})| \right).
\label{eq:coordinate_noise_moments}
\end{align}
The bounds hold uniformly over \(w_{t-1}\in\mathbb S^{d-1}\) and over \(w^\ast\in\mathcal{G}_d\).
\end{lemma}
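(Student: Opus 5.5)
Since the loss is the correlation loss \(\ell(y,z)=-yz\), the sample gradient is \(G_t = -y_t (\Id_d - w_{t-1}w_{t-1}^\top)\widehat\nabla f_{w_{t-1}}(x_t)\). Writing \(w=w_{t-1}\) and \(v_k \coloneqq (\Id_d-ww^\top)\Pi^k w^\ast = \Pi^kw^\ast - m_k(w)w\), we get
\[
\langle \Pi^kw^\ast, G_t\rangle = -y_t\langle v_k,\widehat\nabla f_w(x_t)\rangle .
\]
Since \(\xi_{k,t}\) is this quantity minus its conditional mean, it suffices to bound the conditional second and \(\nu\)-th moments of \(Z_k\coloneqq f^\ast(x)\,\langle v_k,\widehat\nabla f_w(x)\rangle\) with \(x\sim\mathcal N(0,\Id_d)\) independent of \(\mathcal F_{t-1}\). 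Centering costs only a factor \(2\) in \(L^\nu\).

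For the \(\nu\)-th moment I would apply Cauchy--Schwarz, \(\|Z_k\|_\nu\le \|f^\ast\|_{2\nu}\,\|\langle v_k,\widehat\nabla f_w\rangle\|_{2\nu}\), and then Gaussian hypercontractivity. Both factors are polynomials of degree at most \(P\) in \(x\), so \(\|\cdot\|_{2\nu}\le (2\nu-1)^{P/2}\|\cdot\|_2\). This produces the \(\nu^P\) factor.

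It remains to bound the two \(L^2\) norms.

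\emph{The teacher factor.} Hermite orthogonality gives \(\E[f^\ast(x)^2]=\frac1r\sum_{j,k}\sum_\ell a_\ell^2 c_{j-k}^\ell = \sum_{j}\phi(c_j)\). On \(\mathcal G_d\) the off-diagonal overlaps satisfy \(|c_j|\le\mu_\ast\), and \(r\mu_\ast^2=o(1)\) since \(r\mu_\ast=o(1)\) is used after Theorem~\ref{thm:nonescape}. Hence \(\E[(f^\ast)^2]\le\phi(1)+C r\mu_\ast^p=O(1)\), because \(p\ge2\).

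\emph{The feature-gradient factor.} I would invoke Lemma~\ref{lem:feature_gradient_covariance} with \(v=v_k\perp w\), giving
\[
\E\bigl[\langle v_k,\widehat\nabla f_w\rangle^2\bigr]=\sum_j\Bigl(\phi'(a_j(w))\langle v_k,\Pi^jv_k\rangle+\phi''(a_j(w))\langle w,\Pi^jv_k\rangle\langle\Pi^jw,v_k\rangle\Bigr).
\]
Here \(\phi',\phi''\) are bounded on \([-1,1]\), and \(|\langle v_k,\Pi^jv_k\rangle|\le\|v_k\|^2\le1\). The \(j=0\) term is \(O(1)\). The main obstacle is that summing over \(r\) values of \(j\) naively gives \(O(r)\) rather than \(O(1+rm_k^2)\). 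To handle this, I would expand \(v_k=\Pi^kw^\ast-m_kw\):
\[
\langle v_k,\Pi^jv_k\rangle = c_j - m_k\,\langle \Pi^kw^\ast,\Pi^jw\rangle - m_k\,\langle w,\Pi^{j+k}w^\ast\rangle + m_k^2\,a_j(w).
\]
The \(c_j\) terms sum to \(1+O(r\mu_\ast)\). The \(m_k^2\) terms contribute at most \(C r m_k^2\).

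The cross terms are \(m_k\) times correlations of \(w\) with the teacher orbit, namely \(m_{k-j}\) and \(m_{j+k}\). I would bound them by Cauchy--Schwarz together with \(\sum_j m_j(w)^2\le(1+o(1))\|w^\parallel\|^2\le 2\), which holds by the near-orthonormality in Lemma~\ref{lem:teacher_geometry}. This gives \(|m_k|\sum_j|m_{\cdot}|\le |m_k|\sqrt{2r}\le 1+rm_k^2\).

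The \(\phi''\) terms are treated in the same way. One has \(\langle w,\Pi^jv_k\rangle = m_{k+j}-m_k a_j\) and \(\langle \Pi^jw,v_k\rangle=m_{k-j}-m_ka_{-j}\) (indices up to sign conventions). Their product, summed over \(j\), is bounded by \(\sum_j m_{\cdot}^2 + r m_k^2\) after AM--GM. I must also check that \(\phi'(a_j(w))\) is used only through boundedness. If the \(\phi'\) sum against the \(c_j\) terms needs \(|c_j|\) small, it already has that.

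Combining the two factors gives \(\E[Z_k^2]\le C(1+rm_k^2)\), which is \eqref{eq:coordinate_noise_variance}. Taking square roots and using \(\sqrt{1+rm_k^2}\le 1+\sqrt r|m_k|\) yields \eqref{eq:coordinate_noise_moments}. All constants depend only on \(\sigma\) and on the \(\mathcal G_d\) bounds, so the estimates are uniform in \(w_{t-1}\) and \(w^\ast\in\mathcal G_d\).
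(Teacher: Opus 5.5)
Your proposal is correct and follows essentially the same route as the paper: the identity $\langle \Pi^k w^\ast, G_t\rangle = -y\,\langle v_k,\widehat\nabla f_w(x)\rangle$, the bound $\E[y^2]=\sum_a\phi(c_a)=O(1)$, Lemma~\ref{lem:feature_gradient_covariance} with the expansion of $v_k=\Pi^kw^\ast-m_kw$, and the overlap-energy bound $\sum_j m_j(w)^2\le 2$, which controls the cross terms by $C(1+rm_k^2)$. The differences are cosmetic. The paper first bounds the variance via $\|y\|_{L^4}\|Q\|_{L^4}$ and then applies hypercontractivity to $\xi_{k,t}$ as a degree-$2P$ polynomial, whereas you apply Cauchy--Schwarz in $L^{2\nu}$ and hypercontractivity to each degree-$P$ factor, and you use AM--GM instead of Cauchy--Schwarz for the $\phi''$ products; this yields the same $\nu^P$ dependence.
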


The proof of Lemma~\ref{lem:coordinate_noise_moments} is given in Appendix~\ref{sec:coordinate_noise_moments_proof}.

\subsection{Uniform control of the stochastic gradient}

We finally establish a high-probability bound on the norm of the stochastic gradient, uniformly over polynomially many SGD iterations.

\begin{lemma} \label{lem:uniform_gradient}
For every fixed $C>0$ there exists $C_{\sigma,C}<\infty$ such that, for every deterministic horizon $T\le d^C$,
\[
\mathbb{P}\left( \max_{1\le t\le T}\|G_t\| > C_{\sigma,C} (\log d)^P \sqrt{d} \right)=o(1).
\]
\end{lemma}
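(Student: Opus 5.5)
We bound the stochastic gradient \(G_t=\nabla L(w_{t-1};x_t,y_t)\) for a single step and then take a union bound over \(t\le T\le d^C\). Since \(L(w;x,y)=-y f_w(x)\) and the spherical projection is a contraction,
\[
\|G_t\|\le |y_t|\,\|\widehat\nabla f_{w_{t-1}}(x_t)\|, \qquad \widehat\nabla f_w(x)=\frac{1}{\sqrt r}\sum_{k=0}^{r-1}\sigma'(\langle \Pi^k w,x\rangle)\,\Pi^{-k}x ,
\]
where we used \(\Pi^\top=\Pi^{-1}\). The key point is that \(x_t\) is independent of \(\mathcal F_{t-1}\), while \(w_{t-1}\) and \(w^\ast\) are \(\mathcal F_{t-1}\)-measurable. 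So it suffices to prove a tail bound for a fixed \(w\in\mathbb S^{d-1}\) and a fixed \(w^\ast\), uniform in both. Applied conditionally on \(\mathcal F_{t-1}\), this bound then transfers to \(G_t\).

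\textbf{Label factor.} For fixed \(w^\ast\), each \(\langle\Pi^kw^\ast,x\rangle\) is standard Gaussian, and \(\sigma\) is a polynomial of degree \(P\). A crude triangle-inequality bound gives \(|y|\le \sqrt r\,\max_k|\sigma(\langle \Pi^kw^\ast,x\rangle)|\). This loses \(\sqrt r\), which is too costly, so we avoid it. Instead, \(y=f^\ast(x)\) is a polynomial of degree \(P\) in \(x\) with \(\E[y^2]=\sum_{j,k}\phi(c_{j-k})/r\), where \(\phi(z)=\sum_{\ell}a_\ell^2 z^\ell\) as in Lemma~\ref{lem:population_drift}. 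On \(\mathcal G_d\) the off-diagonal correlations \(c_{j-k}\) are small, so \(\E[y^2]=O(1)\). For general \(w^\ast\) we still have \(\E[y^2]\le \phi(1)\, r\); I would prefer to work on \(\mathcal G_d\), which has probability \(1-o(1)\). Gaussian hypercontractivity for degree-\(P\) polynomials then gives \(\mathbb P(|y|>u(\log d)^{P/2})\le d^{-C-2}\) once \(u\) is a large constant.

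\textbf{Gradient factor.} Write \(\widehat\nabla f_w(x)=\frac{1}{\sqrt r}\sum_k\sigma'(\langle\Pi^kw,x\rangle)\Pi^{-k}x\). Its norm squared is a polynomial of degree \(2P\) in \(x\), so again hypercontractivity reduces the problem to bounding its expectation. That expectation is the trace of the feature-gradient covariance. Lemma~\ref{lem:feature_gradient_covariance} handles directions \(v\perp w\); summing it over an orthonormal basis of \(w^\perp\) gives
\[
\E\|(\Id-ww^\top)\widehat\nabla f_w\|^2=\sum_j\bigl(\phi'(a_j)\,\tr(\Pi^j(\Id-ww^\top))+\phi''(a_j)\langle w,\Pi^j(\Id-ww^\top)\Pi^jw\rangle\bigr).
\]
Here \(\tr \Pi^j=0\) for \(j\ne0\) and \(\tr\Pi^0=d\). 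All remaining terms are bounded in absolute value by \(C_\sigma\), so the sum is at most \(C_\sigma(d+r)=O(d)\), uniformly in \(w\). Since the spherical gradient is exactly this projection, only this projected quantity is needed. Hypercontractivity at degree \(2P\) then gives \(\|\nabla f_w(x)\|\le C\sqrt d(\log d)^{P/2}\) with probability at least \(1-d^{-C-2}\).

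\textbf{Conclusion.} Multiplying the two factors gives \(\|G_t\|\le C_{\sigma,C}(\log d)^P\sqrt d\) with conditional probability at least \(1-2d^{-C-2}\) on \(\mathcal G_d\). A union bound over \(t\le d^C\) gives failure probability \(O(d^{-2})+\mathbb P(\mathcal G_d^c)=o(1)\).

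\textbf{Main obstacle.} The difficult step is the label factor: controlling \(\E[y^2]\) requires the near-orthogonality of the teacher orbit, i.e.\ the event \(\mathcal G_d\) of Lemma~\ref{lem:teacher_geometry}. The trace computation, by contrast, is routine.
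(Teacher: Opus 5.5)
Your proof is correct and follows essentially the paper's argument: you condition on $\mathcal F_{t-1}$ and on $\mathcal G_d$, get an $O(d)$ second moment from the same trace computation via Lemma~\ref{lem:feature_gradient_covariance}, use $\E[y^2]=O(1)$ from near-orthogonality of the orbit, apply hypercontractivity at moment order $\asymp\log d$, and finish with a union bound over $t\le d^C$. The only difference is cosmetic: you tail-bound $|y_t|$ and $\|(\Id-ww^\top)\widehat\nabla f_w\|$ separately and multiply (reducing the latter to its expectation tacitly uses $\|Z\|_{L^2}\le 9^P\E Z$ for the nonnegative degree-$2P$ polynomial $Z$, or the paper's coordinatewise Minkowski step), whereas the paper first bounds $\E\|G\|^2$ via Cauchy--Schwarz with $\|y\|_{L^4}$ and then applies hypercontractivity directly to the coordinates of $G_t$.
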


The proof of Lemma~\ref{lem:uniform_gradient} is given in Appendix~\ref{sec:uniform_gradient_proof}.

\section{Proof of weak recovery for the tied student} \label{sec:weak_recovery_tied}

In this section we prove part~\textup{(T)} of Theorem~\ref{thm:weak_recovery_appendix}. The proof shows that the leading polynomial dependence of the recovery time is determined by the information exponent \(p\): higher Hermite components affect the constants and logarithmic factors, but not the leading \(d^{p-1}\) sample complexity exponent.

\subsection{Proof strategy}

The proof proceeds in two main steps. The first step selects and amplifies one of the $r$ teacher correlations from its random initialization scale until the trajectory enters a suitable intermediate region. The second step starts from any point in this region and propagates the selected correlation to the fixed recovery level $\rho$.

Fix \(B>\frac{2}{p-2}\), depending only on \(p\), and define the two intermediate scales
\[
\alpha_d\coloneqq \gamma_d(\log d)^{2/(p-2)}, \qquad \beta_d\coloneqq \gamma_d(\log d)^B.
\]
Then 
\[
\gamma_d\ll\alpha_d\ll\beta_d\ll1. 
\]

We next introduce notation for the correlation selected at initialization. Define
\[
i_\ast \coloneqq \min \operatorname*{arg\,max}_{0\le k<r} q_k (w_0),
\]
where \(q_k(w)\) denotes the relevant correlation defined in~\eqref{eq:relevant_correlation}. Define the corresponding oriented correlation by 
\begin{equation} \label{eq:oriented_correlation}
\widetilde{m}_{i_\ast} (w_t) \coloneqq
\begin{cases}
\operatorname{sgn}(m_{i_\ast}(w_0)) m_{i_\ast} (w_t), & p\text{ even},\\[1mm]
m_{i_\ast} (w_t), & p\text{ odd},
\end{cases}
\end{equation}
with the convention \(\operatorname{sgn}(0)=1\). Lemma~\ref{lem:initialization_gap} shows that, with probability tending to one, the maximizer is unique, \(q_{i_\ast} (w_0) \asymp \gamma_d\), and is separated from all competing relevant correlations by a quantitative relative gap. 

Let 
\[
\mathcal{F}_t \coloneqq \sigma \left ( w^\ast, w_0, (x_\ell,y_\ell)_{1 \le \ell \le t} \right)
\]
denote the natural filtration. Then \(i_\ast\) is \(\mathcal F_0\)-measurable, \(w_t\) is \(\mathcal F_t\)-measurable, and each hitting time \(\tau_a\) is an \((\mathcal F_t)_{t\ge0}\)-stopping time.

The first phase shows that the small advantage present at initialization survives the stochastic fluctuations and interaction errors and is amplified until the selected correlation reaches the scale \(\beta_d\), while all competing correlations remain at the smaller scale \(\alpha_d\).

\begin{proposition}[Microscopic selection]\label{prop:microscopic_selection}
Suppose that Assumptions~\ref{hyp:link}(i),~\ref{hyp:teacher_init}, and~\ref{hyp:scaling} hold. There exists \(A_\ast=A_\ast(p,P)\ge(p-2)/2\) such that
the following holds for every fixed \(A>A_\ast\).
Run tied spherical SGD~\eqref{eq:SGD} with step size
\[
\eta_d=d^{-p/2}(\log d)^{-A}.
\]
Then there exist constants \(C_\alpha,C_\beta<\infty\), depending only on the fixed model parameters, such that, with probability \(1-o(1)\), 
\[
\tau_{\beta_d}\le\frac{C_\beta}{\eta_d\gamma_d^{p-2}},
\]
and
\[
\beta_d\le \widetilde{m}_{i_\ast} (w_{\tau_{\beta_d}}) \le2\beta_d, \qquad \max_{k\ne i_\ast}|m_k(w_{\tau_{\beta_d}})|\le C_\alpha\alpha_d.
\]
Moreover, \(i_\ast\) remains the unique relevant leader up to time \(\tau_{\beta_d}\).
\end{proposition}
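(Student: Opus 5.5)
The plan is a stopping-time bootstrap around the exact recursion of Lemma~\ref{lem:correlation_evolution}. First I will work on the intersection of three events: the teacher-geometry event $\mathcal G_d$ of Lemma~\ref{lem:teacher_geometry}, so that $|c_j|\le\mu_\ast$ with $r\mu_\ast=o(1)$ (up to logs, $\mu_\ast\lesssim\sqrt{\log d/d}$); the initialization-gap event of Lemma~\ref{lem:initialization_gap}, so that $q_{i_\ast}(w_0)\ge c_0\gamma_d$, $\max_k|m_k(w_0)|\le C_0\gamma_d$, and $q_{i_\ast}(w_0)-\max_{k\ne i_\ast}q_k(w_0)\ge\gamma_d/(\log d)^{b}$ for some fixed $b$; and the gradient-norm event of Lemma~\ref{lem:uniform_gradient} with horizon $T_\ast\coloneqq C_\beta/(\eta_d\gamma_d^{p-2})\le d^{C}$. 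Then I will introduce the stopping time $\sigma$, defined as the first time at which either $\widetilde m_{i_\ast}\ge\beta_d$, or some competitor satisfies $|m_k|\ge C_\alpha\alpha_d$, or $\widetilde m_{i_\ast}$ drops below $\tfrac12 q_{i_\ast}(w_0)$. The goal is to show that, with high probability, $\sigma\le T_\ast$ and $\sigma$ is triggered by the first condition only.

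Next I will bound the error terms $E_{k,t\wedge\sigma}$ uniformly in $k$ and $t\le T_\ast$, using the a priori bounds $|m_j|\le2\beta_d$ that hold before $\sigma$. The Hermite remainder contributes $\eta_d\sum_\ell|m_k|^p$, which is a factor $\beta_d$ smaller than the main drift. For the interaction term, \eqref{eq:phi_bounds} gives
\[
|\Gamma_k|\le \mu_\ast\Bigl(C\beta_d^{p-1}+r\,C\alpha_d^{p-1}\Bigr)+|m_k|\Bigl(\beta_d^p+r\alpha_d^p\Bigr),
\]
and I will compare it with the drift $\eta_d|m_k|^{p-1}$; this comparison uses $\delta<1/2$. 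For the noise, Freedman's inequality with the moment bounds of Lemma~\ref{lem:coordinate_noise_moments} yields $\sup_{t\le T_\ast}\bigl|\eta_d\sum_\ell\xi_{k,\ell}\bigr|\lesssim\eta_d\sqrt{T_\ast}\,\mathrm{polylog}=\gamma_d\,(\eta_d d^{p/2})^{1/2}\mathrm{polylog}$, and a union bound over the $r$ indices preserves this. Choosing $A$ large makes this at most $\gamma_d(\log d)^{-b-1}$, which is below the initial gap. For the normalization error, $\eta_d^2\|G\|^2T_\ast\lesssim\eta_d d\,T_\ast\,\mathrm{polylog}=(\eta_d d^{p/2})\,\mathrm{polylog}$, which is again negligible after multiplying by $|m_k|$.

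With these bounds in hand, I will compare with a deterministic ODE. The leader satisfies $\widetilde m\ge \widetilde m_0+pa_p^2\eta_d\sum\widetilde m^{p-1}-o(\text{gap})$, so a discrete Bihari comparison with $\dot u=pa_p^2u^{p-1}$ shows that it blows past $\beta_d$ before $t\approx \widetilde m_0^{2-p}/((p-2)pa_p^2\eta_d)$. Meanwhile every competitor is dominated by the solution $u^{(k)}$ of the same ODE started from $q_k(w_0)+o(\text{gap})$. The separation of the blow-up times is the key quantity: a relative gap of $\epsilon=(\log d)^{-b}$ gives the competitors extra time of relative order $\epsilon$. At the moment the leader reaches $\beta_d$, a competitor has $u^{-(p-2)}\gtrsim\epsilon\gamma_d^{-(p-2)}$, that is $u\lesssim\gamma_d\epsilon^{-1/(p-2)}$. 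This is what fixes $\alpha_d=\gamma_d(\log d)^{2/(p-2)}$, provided $b\le2$ in the gap lemma (otherwise I would adjust the constant). The overshoot bound $\widetilde m\le2\beta_d$ follows because a single step changes $m$ by at most $\eta_d\|G\|=o(\beta_d)$. Uniqueness of the leader follows from the competitor bound $C_\alpha\alpha_d<\beta_d$ together with a monotone gap comparison up to $\tau_{\beta_d}$.

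The main obstacle is the last comparison near the leader's blow-up. Once the error terms are only additively small, they must remain small relative to the gap on the time window where the ODE's sensitivity to its initial condition explodes. I would handle this by working with the inverse variables $m^{-(p-2)}$, which evolve nearly linearly in time, and by propagating the perturbation multiplicatively so that it stays at relative size $o(\epsilon)$. This is where $A_\ast$ is determined.
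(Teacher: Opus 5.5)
Your route is the paper's. You bound the cumulative error $E_{k,t}$ of Lemma~\ref{lem:correlation_evolution} by some $e_d=o(\nu_d\gamma_d)$ up to a stopping time, using truncated Freedman for the noise, Lemma~\ref{lem:uniform_gradient} for normalization, and crude bounds for the remainder and interaction terms. You then compare with scalar envelopes in the variable $x^{-(p-2)}$, and the relative gap $\nu_d=(\log d)^{-2}$ of Lemma~\ref{lem:initialization_gap} produces $\alpha_d$.

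Two small remarks on this part. First, compare the interaction term cumulatively with the gap (it is $o(\nu_d\gamma_d)$ over the horizon), not with the coordinate's own drift $\eta_d|m_k|^{p-1}$, which can vanish for a competitor near zero. Second, the ``main obstacle'' you flag does not arise. Since $e_d$ bounds the cumulative error uniformly in $t$, the paper absorbs it into initial values:
$L_t=m_0-e_d+pa_p^2\eta_d\sum_{\ell<t}\widetilde m_{i_\ast}(w_\ell)^{p-1}$ and $U_{k,t}=b_{k,0}+e_d+pa_p^2\eta_d\sum_{\ell<t}b_{k,\ell}^{p-1}$.
These are exact discrete sub- and supersolutions. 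Convexity of $x\mapsto x^{-(p-2)}$ then gives $U_{k,t}\le U_{k,0}\bigl(1-(p-2)pa_p^2\eta_dU_{k,0}^{p-2}t\bigr)^{-1/(p-2)}$, with no sensitivity issue near the leader's blow-up.

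The step that fails is the competitor bound for odd $p$ when $m_k(w_0)<0$. Then $q_k=m_k$, and the most negative correlation is typically about $-\sqrt2\gamma_d$, as large in absolute value as the leader. You claim such an $m_k$ is dominated by the ODE solution started at $q_k(w_0)+o(\mathrm{gap})<0$. This is false: that solution stays negative for all time, while $m_k(w_t)$ can cross zero through the $O(e_d)$ errors.

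Structurally, the envelope needs $m_{k,\ell}^{p-1}\le U_{k,\ell}^{p-1}$, i.e.\ $|m_{k,\ell}|\le U_{k,\ell}$, but $x\mapsto x^{p-1}$ is not monotone on $\R$ when $p-1$ is even. Switching to $|m_k|$ does give a valid envelope by the triangle inequality, but it loses the gap, since $|m_k(w_0)|$ need not lie below $(1-\nu_d)q_{i_\ast}(w_0)$. For even $p$, and for odd $p$ with $m_k(w_0)\ge0$, your envelope is fine.

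The missing argument, as in the paper, is a crossing analysis:
\begin{itemize}
\item Let $\nu_k$ be the first time $m_k(w_t)\ge0$. Before $\nu_k$, the leading drift is nonnegative, so $-C\gamma_d-e_d\le m_k(w_t)<0$.
\item At the crossing, $u=-m_k(w_{\nu_k-1})\in(0,C\gamma_d]$ satisfies $-u+pa_p^2\eta_du^{p-1}\le0$, hence $m_k(w_{\nu_k})\le2e_d$.
\item Restarting there, $m_k(w_t)\ge-2e_d$ afterwards. So the envelope $4e_d+pa_p^2\eta_d\sum_{\nu_k\le\ell<t}m_k(w_\ell)^{p-1}$ genuinely dominates $|m_k(w_t)|$.
\item This envelope stays $O(e_d)$ up to $\tau_{\beta_d}$, because $\eta_de_d^{p-2}\tau_{\beta_d}\lesssim(e_d/\gamma_d)^{p-2}=o(1)$.
\end{itemize}
This also handles later sign changes and keeps $i_\ast$ the unique leader against these competitors. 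With this case added, your argument closes.
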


The proof of Proposition~\ref{prop:microscopic_selection} is given in Section~\ref{sec:micro_tied}.

Since \(\tau_{\beta_d}\) is a stopping time and the SGD recursion is driven by i.i.d.\ observations, the strong Markov property implies that, conditionally on \(\mathcal F_{\tau_{\beta_d}}\), the post-\(\tau_{\beta_d}\) trajectory has the law of tied spherical SGD started from \(w_{\tau_{\beta_d}}\) and driven by fresh observations, with \(w^\ast\) fixed. This allows us to restart the analysis at \(\tau_{\beta_d}\).

\begin{proposition}[Growth to macroscopic size] \label{prop:macroscopic_growth}
Suppose that Assumptions~\ref{hyp:link}(i),~\ref{hyp:teacher_init}, and~\ref{hyp:scaling} hold. Fix \(\rho\in(0,1)\), independently of \(d\). There exists \(A_\ast=A_\ast(p,P)\ge(p-2)/2\) such that
the following holds for every fixed \(A>A_\ast\).
Run tied spherical SGD~\eqref{eq:SGD} with step size
\[
\eta_d=d^{-p/2}(\log d)^{-A}.
\]
Then there exist \(C_\rho<\infty\) and a deterministic sequence \(\varepsilon_d\to 0\) such that, almost surely on 
\[
\mathcal G_d \cap \left\{ \beta_d \le \widetilde m_{i_\ast}(w_{\tau_{\beta_d}}) \le 2\beta_d, \quad \max_{k\neq i_\ast}|m_k(w_{\tau_{\beta_d}})| \le C_\alpha\alpha_d \right\},
\]
it holds that
\[
\mathbb{P} \left( \tau_\rho-\tau_{\beta_d} \le \frac{C_\rho}{\eta_d\beta_d^{p-2}} \;\middle|\; \mathcal{F}_{\tau_{\beta_d}} \right) \ge1-\varepsilon_d,
\]
where \(\mathcal G_d\) is the teacher-geometry event of Lemma~\ref{lem:teacher_geometry}. Moreover, with the same conditional probability lower bound, \(i_\ast\) remains the unique favorable leader up to time \(\tau_\rho\), while all competing overlaps remain microscopic.
\end{proposition}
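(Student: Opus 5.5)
We work conditionally on \(\mathcal F_{\tau_{\beta_d}}\). By the strong Markov property we restart time at \(0\), with initial point \(w_0\) satisfying \(\beta_d\le\widetilde m_{i_\ast}(w_0)\le2\beta_d\) and \(\max_{k\ne i_\ast}|m_k(w_0)|\le C_\alpha\alpha_d\). Set \(X_t\coloneqq\widetilde m_{i_\ast}(w_t)\) and \(Y_t\coloneqq\max_{k\ne i_\ast}|m_k(w_t)|\). The plan is a bootstrap on the stopping time
\[
\tau\coloneqq\tau_\rho\wedge\inf\{t:\ Y_t>2C_\alpha\alpha_d\}\wedge\inf\{t:\ X_t<\tfrac12\beta_d\}\wedge T,\qquad T\coloneqq \frac{C_\rho}{\eta_d\beta_d^{p-2}}.
\]
We show that with conditional probability \(1-\varepsilon_d\) the stopping time \(\tau\) equals \(\tau_\rho\). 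Uniqueness of the leader and microscopic competitors then follow immediately, since \(2C_\alpha\alpha_d\ll\beta_d/2\).

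\textbf{Leader growth.} We use the one-step recursion \eqref{eq:ev_corr} for \(k=i_\ast\), in the oriented form. For even \(p\), the sign flip is harmless because \(\phi'\) is odd, as \(\sigma\) is even. Before \(\tau\) we have \(X\in[\beta_d/2,\rho]\), so \eqref{eq:phi_positive_lower} gives a drift of at least \(\eta_d(1-\rho^2)pa_p^2X^{p-1}\) from the \(\phi'\) and \(-X\cdot X\phi'(X)\) parts.

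The interaction terms are controlled on \(\mathcal G_d\), where \(\max_{j\ne 0}|c_j|\le\mu_\ast\) with \(r\mu_\ast=o(1)\). The first sum in \(\Gamma\) is then bounded by \(C r\mu_\ast(X^{p-1}+\alpha_d^{p-1})\). The competitor part of the spherical term is bounded by \(X\, r\, C\alpha_d^{p}\). Since \(r\alpha_d^2=o(1)\) and \(\alpha_d\ll\beta_d/2\le X\), both are \(o(X^{p-1})\).

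For the martingale part, Lemma~\ref{lem:coordinate_noise_moments} together with Freedman/Burkholder bounds the sum over the window by
\[
\eta_d\sqrt{T(1+r)}\,\mathrm{polylog}\, d.
\]
Compared with \(\beta_d\), this ratio is
\[
\sqrt{\eta_d\, r}\,\beta_d^{-(p-2)/2}\beta_d^{-1}\,\mathrm{polylog}\;\lesssim\;(\eta_d d^{p/2})^{1/2}\,\mathrm{polylog},
\]
where we use \(r\gamma_d^2\ll1\). Choosing \(A\) large makes this \(o(1)\).

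For the normalization error, Lemma~\ref{lem:uniform_gradient} gives \(\eta_d^2\|G\|^2\lesssim\eta_d^2 d(\log d)^{2P}\). Summed over \(T\) steps, this is a multiplicative \(1-o(1)\) factor, again because \(\eta_d d\,T\lesssim \eta_d d^{p/2}\,\mathrm{polylog}=o(1)\).

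A discrete comparison with \(\dot x=c\,x^{p-1}\) then shows that \(X\) reaches \(\rho\) within time \(C_\rho/(\eta_d\beta_d^{p-2})\), since the blow-up time from \(\beta_d\) is \(O(1/(\eta_d\beta_d^{p-2}))\). A slightly more careful treatment, splitting into dyadic epochs \(X\in[2^i\beta_d,2^{i+1}\beta_d]\), handles the noise relative to the current scale. Each epoch has length \(\asymp(\eta_d(2^i\beta_d)^{p-2})^{-1}\), and the epoch lengths sum geometrically.

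\textbf{Competitors stay small (main obstacle).} For \(k\ne i_\ast\), the self-drift is \(\eta_d\phi'(m_k)\le \eta_dC\alpha_d^{p-1}\). Over time \(T\) this sums to \(C_\rho\alpha_d(\alpha_d/\beta_d)^{p-2}=o(\alpha_d)\); this is exactly why \(\beta_d\gg\alpha_d\) is needed. The delicate terms involve the leader.

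The first such term is \(c_{i_\ast-k}\phi'(X)\). Its time-integral is \(\mu_\ast\int\eta_d\phi'(X)\,dt\). Using the epoch decomposition, \(\sum\eta_d\phi'(X_t)\) over the window is \(O(1)\), as it is comparable to the total increment of \(X\). This gives \(O(\mu_\ast)\), and \(\mu_\ast\lesssim\sqrt{\log d/d}\ll\alpha_d\).

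The second is the spherical term \(-m_k X\phi'(X)\). It is a contraction, so it is either absorbed or bounded by a Grönwall factor \(\exp(O(1))\), since \(\int\eta_d X\phi'(X)\le O(1)\). This is acceptable because we allowed a factor \(2\).

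The noise for \(m_k\) has variance \(O(1+r\alpha_d^2)=O(1)\) by Lemma~\ref{lem:coordinate_noise_moments}. Its sum over the window is \(\eta_d\sqrt T\,\mathrm{polylog}\ll\alpha_d\), by the same ratio as before. A union bound over \(r\) indices using the \(\nu\)-th moment bounds loses only \(\log r\).

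Normalization is again a \(1+o(1)\) factor. Hence \(Y_t\le 2C_\alpha\alpha_d\) throughout. All failure probabilities are deterministic \(o(1)\), uniform on \(\mathcal G_d\), and together they give \(\varepsilon_d\). The hardest part is the uniform-in-\(r\) control of the interaction integral \(\int\eta_d\phi'(X)\) and of the Grönwall factor as \(X\) becomes macroscopic. If the crude \(O(\mu_\ast)\) bound fails near \(\rho\), we would instead exploit the fact that the final epoch has bounded length.
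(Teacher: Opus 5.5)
Your architecture is the paper's: restart at $\tau_{\beta_d}$, a bootstrap stopping time, a leader drift of at least $(1-\rho^2)pa_p^2X^{p-1}$ up to $o(X^{p-1})$ interaction errors, competitor self-drift $o(\alpha_d)$ because $\alpha_d\ll\beta_d$, and leader-induced forcing $\mu_\ast\eta_d\sum_t\phi'(X_t)=O(\mu_\ast)=o(\alpha_d)$.

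\textbf{The gap is in the leader's martingale estimate.} Lemma~\ref{lem:coordinate_noise_moments} gives conditional variance $C(1+r\,m_{i_\ast}^2)$, which is of order $r$ once $X$ is macroscopic. Your window bound $\eta_d\sqrt{T(1+r)}$ is therefore the honest crude bound. Relative to $\beta_d$, however, it equals $\sqrt{C_\rho\,\eta_d r\,\beta_d^{-p}}$. This is $d^{\delta/2}$ times a negative power of $\log d$, so it diverges for every fixed $A$. Your reduction to $(\eta_d d^{p/2})^{1/2}\,\mathrm{polylog}$ tacitly replaces $1+rm_{i_\ast}^2$ by $O(1)$, which is valid only while $m_{i_\ast}\ll r^{-1/2}$. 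So the crude estimate fails polynomially, and the refinement is not optional.

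The missing idea is that the large variance acts only during the short time the leader spends at large values. The paper encodes this by adding $D_t\coloneqq c\eta_d\sum_{\ell<t}m_{i_\ast}(w_\ell)^{p-1}\ge1$ to the stopping rule. Since $p\ge3$ and $m_{i_\ast}\ge\beta_d/2$ before $\tau$,
$\sum_{\ell<\tau}m_{i_\ast}(w_\ell)^2\le(\beta_d/2)^{3-p}\sum_{\ell<\tau}m_{i_\ast}(w_\ell)^{p-1}\lesssim\beta_d^{3-p}/\eta_d$.
Hence $\E[N_{i_\ast,T}^2]\lesssim\eta_d^2T(1+r\beta_d)=o(\beta_d^2)$, because $r\beta_d\to0$.

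Your dyadic epochs can do the same job, but only if each epoch carries its own variance $1+rX_i^2$. The relative error in epoch $i$ is then about $\sqrt{\eta_dX_i^{-p}(1+rX_i^2)\log d}$. For $p\ge3$ this is largest at $X_i=\beta_d$, where it is $o(1)$ because $r\beta_d^2\to0$. You also need a cap on the epoch lengths, the analogue of the $D_t$ cutoff, to break the circularity between noise control and time spent at each scale. That same cap is what justifies your claim $\eta_d\sum_t\phi'(X_t)=O(1)$.

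\textbf{Two smaller points on the competitors.}

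First, the Gr\"onwall fallback does not fit your factor-$2$ budget. One has $\eta_d\sum_tX_t\phi'(X_t)\approx\int_{\beta_d}^{\rho}\frac{x\,dx}{1-x^2}=\tfrac12\log\frac1{1-\rho^2}$, so the factor is about $(1-\rho^2)^{-1/2}$. This exceeds $2$ for $\rho>\sqrt3/2$. You must either enlarge the threshold to a $\rho$-dependent multiple of $\alpha_d$ (harmless, since only $O(\alpha_d)=o(\beta_d)$ is needed) or genuinely use the sign of the contraction.

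Second, using the contraction in the presence of noise needs an argument. The weight $\prod_{j>\ell}q_j/d_j$ multiplying the noise increment at time $\ell$ depends on the future, so Freedman cannot be applied to the weighted sum directly. Lemma~\ref{lem:exact_damping} handles this: the weights lie in $[0,1]$ and are nondecreasing in $\ell$. Summation by parts then reduces the noise contribution to $2\max_\ell|N_{k,\ell}|$ of the unweighted martingale, which your truncation--Freedman bound controls.
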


The proof of Proposition~\ref{prop:macroscopic_growth} is given in Section~\ref{sec:macro_tied}. We now provide the proof of part \textup{(T)} of Theorem~\ref{thm:weak_recovery_appendix}.

\begin{proof}[Proof of part \textup{(T)} of Theorem~\ref{thm:weak_recovery_appendix}]
Let
\[
\mathcal E_d \coloneqq \mathcal G_d \cap \left\{ \tau_{\beta_d} \le \frac{C_\beta}{\eta_d \gamma_d^{p-2}}, \quad \beta_d \le \widetilde m_{i_\ast}(w_{\tau_{\beta_d}}) \le 2\beta_d, \quad \max_{k\neq i_\ast} |m_k(w_{\tau_{\beta_d}})| \le C_\alpha\alpha_d \right\}.
\]
By Proposition~\ref{prop:microscopic_selection} and Lemma~\ref{lem:teacher_geometry}, \(\mathbb P(\mathcal E_d)=1-o(1)\). Moreover, \(\mathcal E_d\in\mathcal F_{\tau_{\beta_d}}\).

By Proposition~\ref{prop:macroscopic_growth} and the tower property,
\begin{align*}
&\mathbb P\left( \mathcal E_d\cap\left\{\tau_\rho-\tau_{\beta_d}>\frac{C_\rho}{\eta_d\beta_d^{p-2}}\right\}\right) = \mathbb E\left[\mathbf 1_{\mathcal E_d}\mathbb P\left(\tau_\rho-\tau_{\beta_d}>\frac{C_\rho}{\eta_d\beta_d^{p-2}}\,\middle|\,\mathcal F_{\tau_{\beta_d}}\right)\right] \le \varepsilon_d.
\end{align*}
Hence, with probability \(1-o(1)\),
\[
\tau_\rho \le \frac{C_\beta}{\eta_d \gamma_d^{p-2}} + \frac{C_\rho}{\eta_d\beta_d^{p-2}}.
\]
Since $\beta_d=\gamma_d(\log d)^B$, we have
\[
\frac{1}{\eta_d\beta_d^{p-2}} = (\log d)^{-B(p-2)}\frac{1}{\eta_d \gamma_d^{p-2}} = o\left(\frac{1}{\eta_d \gamma_d^{p-2}}\right).
\]
Therefore, after increasing the constant \(C\) if necessary,
\[
\tau_\rho \le \frac{C}{\eta_d \gamma_d^{p-2}}
\]
with probability \(1-o(1)\).
\end{proof}

\subsection{Microscopic selection}
\label{sec:micro_tied}

We now study the dynamics while all correlations remain microscopic and prove Proposition~\ref{prop:microscopic_selection}. According to~\eqref{eq:micro_decomposition}, we have the evolution
\[
m_k(w_t) = m_k(w_0) +p a_p^2\eta_d\sum_{\ell=1}^t m_k(w_{\ell-1})^{p-1} +E_{k,t},
\]
where \(E_{k,t}\) is defined in~\eqref{eq:micro_error}. Set
\begin{equation}\label{eq:time_horizon}
T_0\coloneqq\left\lceil\frac{C_0}{\eta_d\gamma_d^{p-2}}\right\rceil,
\end{equation}
where $C_0>0$ is fixed and sufficiently large, and define
\[ 
\hat\tau_{\beta_d} \coloneqq \inf\left\{t\ge0 \colon \max_{0\le k\le r-1}|m_k(w_t)|\ge\beta_d\right\}.
\]
For even $p$, $\hat\tau_{\beta_d}=\tau_{\beta_d}$. For odd $p$ we prove below that the first absolute crossing is necessarily positive, and hence the two stopping times agree up to $T_0$.

\begin{lemma}\label{lem:micro_error}
Suppose $\beta_d=\gamma_d(\log d)^B$ with $B>2/(p-2)$ and
$\eta_d=d^{-p/2}(\log d)^{-A}$ with $A>A_\ast(p,P)$. Then, with probability $1-o(1)$,
\[
\max_{0\le k\le r-1} \max_{0\le t\le\hat\tau_{\beta_d}\wedge T_0}|E_{k,t}| = o\bigl((\log d)^{-2}\gamma_d\bigr).
\]
\end{lemma}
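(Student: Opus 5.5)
We bound the four contributions to \(E_{k,t}\) in \eqref{eq:micro_error} separately on the window \(t\le\hat\tau_{\beta_d}\wedge T_0\). On this window every correlation satisfies \(|m_j(w_{t-1})|\le\beta_d\). Throughout we intersect with the teacher-geometry event \(\mathcal G_d\) of Lemma~\ref{lem:teacher_geometry}, which gives \(\max_{j\ne0}|c_j|\le\mu_\ast\) with \(\mu_\ast=O(\sqrt{\log d/d})\). We also intersect with the gradient event of Lemma~\ref{lem:uniform_gradient} for the horizon \(T_0\le d^{C}\). Each of the four terms will be shown to be \(o((\log d)^{-2}\gamma_d)\) once \(A\) is large. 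The key bookkeeping identity is \(\eta_d T_0\beta_d^{q}=C_0\gamma_d^{2-p}\beta_d^{q}\), together with \(\eta_d d^{p/2}=(\log d)^{-A}\).

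\textbf{Deterministic terms.} By \eqref{eq:phi_bounds}, the remainder term is at most
\[
\eta_d T_0 C_\sigma\beta_d^{p}=C\gamma_d^{2}(\log d)^{Bp},
\]
which is \(o((\log d)^{-2}\gamma_d)\) because \(\gamma_d\) is polynomially small. For the interaction term \(\Gamma_k\), the first sum is bounded by
\[
r\mu_\ast C_\sigma\beta_d^{p-1}\lesssim d^{\delta-1/2}\sqrt{\log d}\,\beta_d^{p-1}.
\]
Multiplying by \(\eta_dT_0\) gives \(\gamma_d\cdot d^{\delta-1/2}\,\mathrm{polylog}\), and this is \(o\) of the target since \(\delta<1/2\). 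The second sum is bounded by \(\beta_d\cdot r\beta_d^{p}\), so after multiplying by \(\eta_dT_0\) it is \(\lesssim r\gamma_d^2\,\mathrm{polylog}\cdot\gamma_d=o(\gamma_d(\log d)^{-2})\). Here we use \(r\gamma_d^2=d^{\delta-1}\log r\), which is polynomially small.

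\textbf{Normalization term.} On the gradient event, \(\sqrt{1+\eta_d^2\|G_\ell\|^2}-1\le\eta_d^2C(\log d)^{2P}d\), and \(|m_k(w_\ell)|\le\beta_d\). Summing over \(T_0\) steps gives
\[
T_0\eta_d^2 d(\log d)^{2P}\beta_d=C_0\,\eta_d d\,\gamma_d^{2-p}(\log d)^{2P}\beta_d .
\]
Now \(\eta_d d\gamma_d^{2-p}=\eta_d d^{p/2}(\log r)^{-(p-2)/2}\le(\log d)^{-A}\). The term is therefore \(\gamma_d(\log d)^{2P+B-A}\), which is negligible once \(A>2P+B+2\). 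The stray step at \(\ell=\hat\tau\) is handled identically, since it has a single extra factor.

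\textbf{Martingale term (main obstacle).} Define the stopped process
\[
M_{k,t}=\eta_d\sum_{\ell\le t\wedge\hat\tau_{\beta_d}}\xi_{k,\ell},
\]
which is a martingale because \(\hat\tau_{\beta_d}\) is a stopping time. By \eqref{eq:coordinate_noise_variance}, before \(\hat\tau\) its conditional variance increments are at most \(C_\sigma(1+r\beta_d^2)\le 2C_\sigma\), using \(r\beta_d^2=o(1)\). The predictable quadratic variation is thus at most
\[
2C_\sigma\eta_d^2T_0=C\,\eta_d\gamma_d^{2-p}=C\gamma_d^2\,(\eta_d d^{p/2})(\log r)^{-(p-2)/2}\le C\gamma_d^2(\log d)^{-A}.
\]
The noise is only polynomially-tailed, with moments growing like \(\nu^P\) by \eqref{eq:coordinate_noise_moments}. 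I will therefore apply Freedman's inequality after truncating each \(\xi_{k,\ell}\) at level \((\log d)^{P+1}\), or alternatively a Burkholder/Rosenthal moment bound with \(\nu\asymp\log d\) followed by Markov. In the truncation route, the truncated part contributes \(O(d^{-\omega(1)})\) in probability per term; a union bound over \(rT_0\le d^{C}\) pairs handles this. The centering correction from truncation is also negligible. Freedman then gives, for each \(k\), a deviation of at most
\[
\gamma_d(\log d)^{-A/2}\sqrt{\log d}+\eta_d(\log d)^{P+2}
\]
except with probability \(d^{-\omega(1)}\). The second summand is tiny compared with \(\gamma_d\). Doob's maximal form handles the maximum over \(t\), and a union bound handles the maximum over \(k\). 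This yields \(o((\log d)^{-2}\gamma_d)\) provided \(A/2>5/2\).

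\textbf{Conclusion.} Collecting the constraints on \(A\) from all four parts determines \(A_\ast(p,P)\). The delicate point is that the fourth moment structure only allows \(\nu^P\) growth, so the tail argument must be done with care.
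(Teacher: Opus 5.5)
Your proposal is correct and follows the paper's proof essentially step for step: the same four-way split of $E_{k,t}$, the same bookkeeping via $\eta_d T_0\asymp C_0\gamma_d^{2-p}$, the gradient event for the normalization term, and the same truncation-plus-Freedman argument for the stopped martingale (Lemma~\ref{lem:martingale} in the paper). The only blemishes are cosmetic and neither affects your conclusions, since you use only upper bounds and a union bound over $r\le d$ indices. First, $\eta_d\gamma_d^{2-p}=\gamma_d^2(\eta_d d^{p/2})(\log r)^{-p/2}$ rather than $(\log r)^{-(p-2)/2}$. Second, Freedman at deviation level $\sqrt{\log d}$ gives failure probability $d^{-c}$ for a fixed large $c$, not $d^{-\omega(1)}$.
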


\begin{corollary}\label{cor:stopping_times_agree}
With probability $1-o(1)$,
\[
\hat\tau_{\beta_d}\wedge T_0 = \tau_{\beta_d}\wedge T_0.
\]
Consequently Lemma~\ref{lem:micro_error} remains valid with $\hat\tau_{\beta_d}$ replaced by $\tau_{\beta_d}$.
\end{corollary}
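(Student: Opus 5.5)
The corollary asks for two things: the stopping times \(\hat\tau_{\beta_d}\wedge T_0\) and \(\tau_{\beta_d}\wedge T_0\) agree, and Lemma~\ref{lem:micro_error} transfers to \(\tau_{\beta_d}\). For even \(p\) there is nothing to show, since \(q_k=|m_k|\) and the two times coincide by definition. So the work is for odd \(p\), where \(q_k=m_k\) and \(\tau_{\beta_d}\) is the first time some \(m_k\ge\beta_d\). Trivially \(\hat\tau_{\beta_d}\le\tau_{\beta_d}\). The plan is to show that, on a high-probability event, at time \(\hat\tau_{\beta_d}\le T_0\) the correlation achieving \(|m_k|\ge\beta_d\) is positive. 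Then \(\tau_{\beta_d}\le\hat\tau_{\beta_d}\), which gives equality; if \(\hat\tau_{\beta_d}>T_0\), both minima equal \(T_0\).

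I would work on the intersection of three events. The first is the event of Lemma~\ref{lem:micro_error}: \(|E_{k,t}|\le\varepsilon_d\gamma_d\) with \(\varepsilon_d=o((\log d)^{-2})\), uniformly in \(k\) and \(t\le\hat\tau_{\beta_d}\wedge T_0\). The second is the initialization event \(\max_k|m_k(w_0)|\le C\gamma_d\), which holds w.h.p.\ by a Gaussian maximum bound over the \(r\) nearly independent \(N(0,1/d)\) coordinates (and follows from Lemma~\ref{lem:initialization_gap}). The third is the teacher-geometry event \(\mathcal G_d\). Fix \(k\) with \(m_k(w_0)\le0\) and suppose, toward a contradiction, that \(m_k(w_t)\le-\beta_d\) at some \(t\le\hat\tau_{\beta_d}\wedge T_0\).

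The key comparison uses the decomposition~\eqref{eq:micro_decomposition}. Since \(p-1\) is even, the drift term \(pa_p^2\eta_d\sum_\ell m_k(w_{\ell-1})^{p-1}\) is nonnegative. Therefore
\[
m_k(w_t)\ge m_k(w_0)+E_{k,t}\ge -C\gamma_d-\varepsilon_d\gamma_d > -\beta_d,
\]
because \(\beta_d=\gamma_d(\log d)^B\gg\gamma_d\). This is a contradiction.

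For \(k\) with \(m_k(w_0)>0\), the same lower bound shows \(m_k(w_t)\ge-\varepsilon_d\gamma_d>-\beta_d\). So in every case no correlation can reach \(-\beta_d\) before \(\hat\tau_{\beta_d}\wedge T_0\), including at that time itself, since the error bound holds up to and including the stopping time. Hence at \(\hat\tau_{\beta_d}\) (when \(\le T_0\)) some \(m_k\ge\beta_d\), i.e.\ \(\tau_{\beta_d}=\hat\tau_{\beta_d}\).

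The last step is the transfer: on this event the two ranges \([0,\hat\tau_{\beta_d}\wedge T_0]\) and \([0,\tau_{\beta_d}\wedge T_0]\) coincide, so the bound of Lemma~\ref{lem:micro_error} holds verbatim with \(\tau_{\beta_d}\). The only delicate point is making sure the error bound covers the stopping index itself rather than only strictly earlier times. Lemma~\ref{lem:micro_error} is stated with \(t\le\hat\tau_{\beta_d}\wedge T_0\), which is inclusive, so no extra one-step overshoot estimate is needed. If it were not inclusive, I would add a single-step bound \(|m_k(w_t)-m_k(w_{t-1})|\lesssim\eta_d\|G_t\|\) via Lemma~\ref{lem:uniform_gradient}, which is \(o(\gamma_d)\).
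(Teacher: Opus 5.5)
Your proposal is correct and follows the paper's argument. For odd $p$ the leading drift in \eqref{eq:micro_decomposition} is nonnegative, so $m_k(w_t)\ge m_k(w_0)-|E_{k,t}|\ge -C\gamma_d-o(\gamma_d)>-\beta_d$ on the stopped interval, including the stopping index itself, which rules out an exit through $-\beta_d$.

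Two small remarks. The bound $\max_k|m_k(w_0)|\le C\gamma_d$ comes from Lemma~\ref{lem:initialization_scale}, not Lemma~\ref{lem:initialization_gap}. Your case split on the sign of $m_k(w_0)$ is unnecessary.
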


\begin{proof}
For even $p$ the two stopping times coincide by definition. Suppose $p$ is odd. Then $p-1$ is even, so the leading term in~\eqref{eq:micro_decomposition} is nonnegative. Hence, for $t\le\hat\tau_{\beta_d}\wedge T_0$,
\[
m_k(w_t)\ge m_k(w_0)-|E_{k,t}|.
\]
On the initialization event, $\max_k|m_k(w_0)|\le C\gamma_d$, while Lemma~\ref{lem:micro_error} gives $\max_{k,t}|E_{k,t}|=o(\gamma_d)$. Thus
\[
m_k(w_t)\ge-C\gamma_d-o(\gamma_d)>-\beta_d
\]
for all sufficiently large $d$, uniformly in $k$ and in the stopped time interval. No coordinate can therefore exit through $-\beta_d$ before $T_0$, proving the claim.
\end{proof}

\begin{lemma} \label{lem:martingale}
With probability $1-o(1)$,
\[
\max_{0\le k\le r-1} \max_{0\le t\le\hat\tau_{\beta_d}\wedge T_0} \left|\eta_d\sum_{\ell=1}^t\xi_{k,\ell}\right| \le C\eta_d\left(\sqrt{T_0\log d}+(\log d)^{P+1}\right).
\]
\end{lemma}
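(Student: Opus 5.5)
The plan is to apply a martingale maximal inequality of Freedman/Bernstein type to the stopped martingale $M_{k,t}\coloneqq\sum_{\ell=1}^{t\wedge\hat\tau_{\beta_d}}\xi_{k,\ell}$ for each fixed $k$, and then take a union bound over $k$. Stopping at $\hat\tau_{\beta_d}$ preserves the martingale-difference property, since $\{\ell\le\hat\tau_{\beta_d}\}\in\mathcal F_{\ell-1}$. On this event every correlation satisfies $|m_k(w_{\ell-1})|\le\beta_d$, because $\hat\tau_{\beta_d}$ is the first time some $|m_k|$ reaches $\beta_d$. Throughout, I work on the teacher-geometry event $\mathcal G_d$, which has probability $1-o(1)$ by Lemma~\ref{lem:teacher_geometry}, so that Lemma~\ref{lem:coordinate_noise_moments} applies.

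The first step is to bound the conditional variance. By \eqref{eq:coordinate_noise_variance}, for $\ell\le\hat\tau_{\beta_d}$,
\[
\E[\xi_{k,\ell}^2\mid\mathcal F_{\ell-1}]\le C_\sigma(1+r\beta_d^2).
\]
Here $r\beta_d^2=r\gamma_d^2(\log d)^{2B}=d^{\delta-1}\log r\,(\log d)^{2B}=o(1)$, because $\delta<1/2$. The predictable quadratic variation up to $T_0$ is therefore at most $2C_\sigma T_0$.

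The second step handles the increments, which are unbounded, by truncation. Using \eqref{eq:coordinate_noise_moments} with $\nu\asymp\log d$ and Markov's inequality, we get
\[
\mathbb P\bigl(|\xi_{k,\ell}|>K\mid\mathcal F_{\ell-1}\bigr)\le d^{-C'}, \qquad K\coloneqq C(\log d)^{P}.
\]
The constant $C'$ can be made large, and a union bound over $\ell\le T_0\le d^{p}$ and $k<r$ then shows that with probability $1-o(1)$ no stopped increment exceeds $K$. I would split $\xi_{k,\ell}=\xi_{k,\ell}\mathbf 1_{|\xi|\le K}+\xi_{k,\ell}\mathbf 1_{|\xi|>K}$ and recenter the truncated part. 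The recentering bias is at most $\E[|\xi|\mathbf 1_{|\xi|>K}\mid\mathcal F_{\ell-1}]\le d^{-C'}$ by Cauchy--Schwarz and the moment bound. Summed over $T_0$ steps, this bias is negligible compared with $\sqrt{T_0\log d}$.

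The third step applies Freedman's inequality to the recentered bounded martingale, with increments bounded by $2K$ and variance bounded by $2C_\sigma T_0$, at level
\[
x=C\bigl(\sqrt{T_0\log d}+K\log d\bigr).
\]
This gives
\[
\mathbb P\Bigl(\max_{t\le T_0}|M_{k,t}|\ge x\Bigr)\le 2\exp\Bigl(-\frac{x^2}{2(2C_\sigma T_0+2Kx/3)}\Bigr)\le d^{-2},
\]
once $C$ is chosen large. A union bound over the $r\le d$ indices $k$ finishes the argument, and multiplying by $\eta_d$ yields the stated bound $C\eta_d(\sqrt{T_0\log d}+(\log d)^{P+1})$.

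The main obstacle is the truncation step. The moment bound in \eqref{eq:coordinate_noise_moments} grows like $\nu^P$, which gives only stretched-exponential tails, so one must check that choosing $\nu=\Theta(\log d)$ yields a tail probability that is polynomially small with an arbitrarily large exponent, while $K$ stays of order $(\log d)^P$. A second point to check is that the factor $\sqrt r|m_k|$ in the moment bound stays $O(1)$ under the stopping. This holds because $\sqrt r\beta_d=o(1)$.
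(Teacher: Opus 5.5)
Your proposal is correct and follows the paper's proof essentially step by step. Both arguments stop the martingale at $\hat\tau_{\beta_d}$, use $r\beta_d^2=o(1)$ to get uniform conditional variance and $L^\nu$ bounds from Lemma~\ref{lem:coordinate_noise_moments}, truncate at $(\log d)^P$ with moment order $\asymp\log d$, recenter with a negligible bias, apply Freedman's maximal inequality, and take a union bound over $k<r$. Your Cauchy--Schwarz bound on the truncation bias, in place of the paper's direct moment/Markov estimate, makes no difference to the argument.
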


\begin{proof}
For $1\le\ell\le T_0$ set
\[
Y_{k,\ell}=\mathbf1_{\{\ell\le\hat\tau_{\beta_d}\}}\xi_{k,\ell}.
\]
Then $(Y_{k,\ell})_\ell$ is a martingale difference sequence and
\[
\max_{t\le\hat\tau_{\beta_d}\wedge T_0}\left|\sum_{\ell=1}^t\xi_{k,\ell}\right| = \max_{t\le T_0}\left|\sum_{\ell=1}^tY_{k,\ell}\right|.
\]
On $\{\ell\le\hat\tau_{\beta_d}\}$, $|m_k(w_{\ell-1})|<\beta_d$. Since
$r\beta_d^2=o(1)$, Lemma~\ref{lem:coordinate_noise_moments} gives, for all sufficiently large $d$,
\begin{equation}\label{eq:Y_uniform_moments}
\E[Y_{k,\ell}^2\mid\mathcal{F}_{\ell-1}]\le C_\sigma, \qquad \left(\E[|Y_{k,\ell}|^q\mid\mathcal{F}_{\ell-1}]\right)^{1/q} \le C_\sigma q^P.
\end{equation}
Choose $q_d=\lceil c\log d\rceil$ and $R_d=C(\log d)^P$, with $c,C$ sufficiently large. Conditional Markov's inequality gives
\[
\mathbb{P}(|Y_{k,\ell}|>R_d\mid\mathcal{F}_{\ell-1})\le4^{-q_d}.
\]
Since $T_0=d^{p-1}\operatorname{polylog}(d)\le d^p$ and $r\le d$, a union bound shows that, with probability $1-o(1)$,
\[
|Y_{k,\ell}|\le R_d \qquad\text{for all }k<r,\ \ell\le T_0.
\]

Define the truncated and recentered increments
\[
\widehat Y_{k,\ell} = Y_{k,\ell}\mathbf1_{\{|Y_{k,\ell}|\le R_d\}} - \E\bigl[Y_{k,\ell}\mathbf1_{\{|Y_{k,\ell}| \le R_d\}}\mid\mathcal{F}_{\ell-1}\bigr].
\]
Then $|\widehat Y_{k,\ell}|\le2R_d$ and the predictable quadratic variation up to $T_0$ is at most $C_\sigma T_0$. Freedman's maximal inequality therefore gives, after a union bound over $k<r$,
\[
\max_{k<r}\max_{t\le T_0}\left|\sum_{\ell=1}^t\widehat Y_{k,\ell}\right| \le C\left(\sqrt{T_0\log d}+R_d\log d\right)
\]
with probability $1-o(1)$.

Finally, the truncation bias satisfies
\[
\left|\E\bigl[Y_{k,\ell}\mathbf1_{\{|Y_{k,\ell}|\le R_d\}}\mid\mathcal{F}_{\ell-1}\bigr]\right| \le R_d4^{-q_d},
\]
so $T_0R_d4^{-q_d}=o(1)$ after increasing $c$. Since $R_d=C(\log d)^P$, the claim follows after multiplying by $\eta_d$.
\end{proof}

\begin{proof}[Proof of Lemma~\ref{lem:micro_error}]
Set $\nu_d=(\log d)^{-2}$. We control the four terms in~\eqref{eq:micro_error}.

\medskip
\noindent\textbf{Higher-Hermite remainder.}
For $\ell\le\hat\tau_{\beta_d}\wedge T_0$,~\eqref{eq:phi_bounds} gives
\[
|R_\sigma(m_k(w_{\ell-1}))|\le C_\sigma\beta_d^p.
\]
Therefore
\[
\frac{\displaystyle\max_{k,t}\left|\eta_d\sum_{\ell=1}^tR_\sigma(m_k(w_{\ell-1}))\right|}{\nu_d\gamma_d} \le C_\sigma\,\frac{\eta_dT_0\beta_d^p}{\nu_d\gamma_d} \lesssim \frac{\gamma_d}{\nu_d}\left(\frac{\beta_d}{\gamma_d}\right)^p = o(1).
\]

\medskip
\noindent\textbf{Interaction term.}
On the same stopped interval,~\eqref{eq:general_Gamma} and~\eqref{eq:phi_bounds} imply
\[
|\Gamma_k(w_{\ell-1})| \le C_\sigma\bigl(r\mu_\ast+r\beta_d^2\bigr)\beta_d^{p-1}.
\]
Hence
\[
\frac{\displaystyle\max_{k,t}\left|\eta_d\sum_{\ell=1}^t\Gamma_k(w_{\ell-1})\right|}{\nu_d\gamma_d} \le C_\sigma\bigl(r\mu_\ast+r\beta_d^2\bigr)\nu_d^{-1}\left(\frac{\beta_d}{\gamma_d}\right)^{p-1} = o(1),
\]
because $r\mu_\ast$ and $r\beta_d^2$ decay polynomially in $d$, whereas all remaining factors are polylogarithmic.

\medskip
\noindent\textbf{Normalization term.}
Since $T_0$ is polynomial in $d$, Lemma~\ref{lem:uniform_gradient} gives, with probability $1-o(1)$,
\begin{equation}\label{eq:uniform_gradient_norm}
\max_{1\le\ell\le T_0}\|G_\ell\| \le C_\sigma\sqrt d\,(\log d)^P.
\end{equation}
Our choice of $A$ ensures $C_\sigma\sqrt d\,(\log d)^P=o(\beta_d)$. Thus, also at the exit step,
\[
|m_k(w_\ell)|\le\beta_d+ C_\sigma \eta_d \sqrt d\,(\log d)^P \le2\beta_d.
\]
Using $\sqrt{1+x}-1\le x/2$,
\[
\max_{k,t} \left|\sum_{\ell=1}^t\bigl(\sqrt{1+\eta_d^2\|G_\ell\|^2}-1\bigr)m_k(w_\ell)\right|\le C \eta_d^2 T_0 d (\log d )^{2P} \beta_d.
\]
After division by $\nu_d\gamma_d$ this is at most
\[
C_\sigma(\log d)^{-A-(p-2)/2+2P+B+2},
\]
which is $o(1)$ for $A>A_\ast(p,P)$.

\medskip
\noindent\textbf{Martingale term.}
By Lemma~\ref{lem:martingale}, after division by $\nu_d\gamma_d$ the first stochastic contribution is
\[
\frac{\eta_d\sqrt{T_0\log d}}{\nu_d\gamma_d} \lesssim (\log d)^{-A/2+5/2-p/4}=o(1)
\]
for $A$ sufficiently large, while
\[
\frac{\eta_d(\log d)^{P+1}}{\nu_d\gamma_d} \lesssim d^{-(p-1)/2}(\log d)^{-A+P+5/2}=o(1).
\]
Combining the four estimates proves
\[
\max_{k<r}\max_{t\le\hat\tau_{\beta_d}\wedge T_0}|E_{k,t}| = o(\nu_d\gamma_d)
\]
with probability $1-o(1)$, uniformly over admissible teachers and initializations.
\end{proof}

We now give the proof of Proposition~\ref{prop:microscopic_selection}.

\begin{proof}[Proof of Proposition~\ref{prop:microscopic_selection}]
Set $\nu_d=(\log d)^{-2}$ and work on the intersection of the high-probability events from Lemma~\ref{lem:initialization_gap}, Lemma~\ref{lem:micro_error}, Corollary~\ref{cor:stopping_times_agree}, and~\eqref{eq:uniform_gradient_norm}. Let
\[
m_0 \coloneqq M_\sigma (w_0),\qquad c\gamma_d\le m_0\le C\gamma_d,
\]
and
\[
e_d\coloneqq \max_{k<r}\max_{t\le\tau_{\beta_d}\wedge T_0}|E_{k,t}| = o(\nu_dm_0).
\]
Then, for every $t\le\tau_{\beta_d}\wedge T_0$,
\begin{equation}\label{eq:corr_inequality}
m_k(w_0)+p a_p^2\eta_d\sum_{\ell=0}^{t-1}m_k(w_\ell)^{p-1}-e_d \le m_k(w_t) \le m_k(w_0)+p a_p^2\eta_d\sum_{\ell=0}^{t-1}m_k(w_\ell)^{p-1}+e_d.
\end{equation}

Recall the oriented correlation \(\widetilde{m}_{i_\ast} (w_t)\) defined in~\eqref{eq:oriented_correlation}. Then \(\widetilde m_{i_\ast}(w_0)=m_0\) and
\[
\widetilde m_{i_\ast}(w_t) = m_0+p a_p^2\eta_d\sum_{\ell=0}^{t-1}\widetilde m_{i_\ast}(w_\ell)^{p-1}+\overline E_{\ast,t}, \qquad |\overline E_{\ast,t}|\le e_d.
\]
Define
\[
L_t \coloneqq m_0-e_d+p a_p^2\eta_d \sum_{\ell=0}^{t-1}\widetilde m_{i_\ast}(w_\ell)^{p-1}.
\]
Then \(\widetilde m_{i_\ast}(w_t)\ge L_t\), \(L_0>0\), and
\begin{equation}\label{eq:lower_bound}
L_t-L_{t-1} = p a_p^2\eta_d\widetilde m_{i_\ast}(w_{t-1})^{p-1} \ge p a_p^2\eta_d L_{t-1}^{p-1}.
\end{equation}

For a favorable competitor, set $b_{k,t}=|m_k(w_t)|$ when $p$ is even. When $p$ is odd, first consider competitors with $m_k(w_0)\ge0$ and set $b_{k,t}=m_k(w_t)$ as long as this remains nonnegative. Define
\[
U_{k,t} \coloneqq b_{k,0}+e_d+p a_p^2\eta_d\sum_{\ell=0}^{t-1}b_{k,\ell}^{p-1}.
\]
Then $b_{k,t}\le U_{k,t}$ and
\[
U_{k,t}-U_{k,t-1} \le p a_p^2\eta_dU_{k,t-1}^{p-1}.
\]
The initialization gap gives
\[
L_0-U_{k,0} \ge \nu_dm_0-2e_d>0,
\]
so monotonicity of $x\mapsto x^{p-1}$ on $\R_+$ implies inductively that the selected coordinate remains strictly ahead of every favorable competitor.

\medskip
\noindent\textbf{Time to reach $\beta_d$.}
Set $q=p-2$ and
\[
t_\beta\coloneqq\inf\{t\ge0:L_t\ge\beta_d\}.
\]
Before this time, the mean value theorem and~\eqref{eq:lower_bound} give
\[
L_{t-1}^{-q}-L_t^{-q} \ge \frac{qp a_p^2\eta_d}{(1+p a_p^2\eta_d\beta_d^q)^{q+1}}.
\]
Hence
\begin{equation}\label{eq:sharp_leader_time}
t_\beta \le 1 + \frac{(1+p a_p^2\eta_d\beta_d^q)^{q+1}}{qp a_p^2\eta_dL_0^q}\left[1-\left(\frac{L_0}{\beta_d}\right)^q\right].
\end{equation}
Since $p a_p^2\eta_d\beta_d^q=o(\nu_d)$ and $L_0\asymp \gamma_d$,
\[
t_\beta\lesssim\frac1{\eta_d\gamma_d^{p-2}}.
\]
Taking $C_0$ sufficiently large gives $t_\beta\le T_0$, and therefore $\tau_{\beta_d}\le t_\beta$.

\medskip
\noindent\textbf{Size of the competitors.}
The initialization gap implies
\[
\frac{U_{k,0}}{L_0} \le 1-(1-o(1))\nu_d.
\]
Iterating the upper scalar recursion gives
\[
U_{k,t} \le U_{k,0}\left(1-qp a_p^2\eta_dU_{k,0}^qt\right)^{-1/q}
\]
whenever the denominator is positive. Combining this with~\eqref{eq:sharp_leader_time}, and using the extra $+1$ term only through $qp a_p^2\eta_dU_{k,0}^q=o(\nu_d)$, yields
\[
1-qp a_p^2\eta_dU_{k,0}^qt \ge c_p\nu_d, \qquad t\le\tau_{\beta_d},
\]
for some $c_p>0$. Consequently
\begin{equation}\label{eq:competitor_final_bound}
\max_{t\le\tau_{\beta_d}}|m_k(w_t)| \lesssim \gamma_d\nu_d^{-1/(p-2)} = \alpha_d = o(\beta_d).
\end{equation}

Suppose now that $p$ is odd and $m_k(w_0)<0$. Let
\[
\nu_k \coloneqq\inf\{t\ge0 \colon m_k(w_t)\ge0\}.
\]
Before $\nu_k$,~\eqref{eq:micro_decomposition} gives $|m_k(w_t)|\lesssim \gamma_d$. If $\nu_k\le\tau_{\beta_d}$ and $m_k(w_{\nu_k-1})=-u$, then $u<\beta_d$ and
\[
-u+p a_p^2\eta_du^{p-1} = -u(1-p a_p^2\eta_du^{p-2})\le 0.
\]
The crossing overshoot is therefore at most $2e_d$. Starting from this size, the positive scalar envelope remains $O(e_d)$ because
\[
p a_p^2\eta_de_d^{p-2}\tau_{\beta_d} \lesssim \left(\frac{e_d}{\gamma_d}\right)^{p-2} = o(1).
\]
This proves~\eqref{eq:competitor_final_bound} for initially negative competitors as well. The same argument handles later sign changes.

Finally, the preceding comparison shows that $i_\ast$ is the coordinate that reaches the threshold. Immediately before the crossing its favorable magnitude is below $\beta_d$, and the exact normalized update gives
\[
\beta_d \le M_\sigma (w_{\tau_{\beta_d}}) \le \beta_d+\eta_d\|G_{\tau_{\beta_d}}\|.
\]
By~\eqref{eq:uniform_gradient_norm}, $\eta_d\|G_{\tau_{\beta_d}}\|=o(\beta_d)$, so
\[
M_\sigma (w_{\tau_{\beta_d}})=\beta_d(1+o(1)).
\]
This proves Proposition~\ref{prop:microscopic_selection}.
\end{proof}

\subsection{Growth to a macroscopic level} \label{sec:macro_tied}

We now prove Proposition~\ref{prop:macroscopic_growth}. By Proposition~\ref{prop:microscopic_selection}, with probability $1-o(1)$ there is a unique index $i_\ast$ such that
\[
\beta_d\le \widetilde{m}_{i_\ast}  (w_{\tau_{\beta_d}} )\le2\beta_d, \qquad \max_{k\ne i_\ast}|m_k(w_{\tau_{\beta_d}})|\le C_\alpha\alpha_d,
\]
where \(\widetilde{m}_{i_\ast} (w_t)\) is the oriented correlation defined in~\eqref{eq:oriented_correlation}. We may thus orient all correlations according to the selected direction and suppress the bars below, so that $m_{i_\ast}(w_t)=\widetilde{m}_{i_\ast} (w_t)>0$.

The competitor dynamics contain a contractive factor generated by the selected overlap. We first record the deterministic estimate used to exploit this damping.

\begin{lemma}[Damping preserves partial-sum bounds]\label{lem:exact_damping}
Suppose real sequences satisfy
\[
d_t x_{t+1}=q_t x_t+F_t+N_{t+1}-N_t, \qquad 0\le q_t\le1, \quad d_t\ge1, \quad N_0=0.
\]
Then, for every $t\ge0$,
\[
|x_t| \le |x_0| + \sum_{\ell=0}^{t-1}|F_\ell| + 2\max_{0\le\ell\le t}|N_\ell|.
\]
\end{lemma}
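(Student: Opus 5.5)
The proof goes by a change of variables that turns the recursion into a telescoping sum, followed by an Abel-type estimate that controls the martingale part through its partial sums. First we set $y_t \coloneqq x_t - (N_t - N_{t-1})$? It is cleaner to absorb the noise as $z_t \coloneqq x_t - N_t/D_t$, but the multiplicative factors $q_t/d_t$ complicate that. We therefore argue directly by induction with a shifted variable. Define $u_t \coloneqq x_t - N_t$. Then
\[
d_t u_{t+1} = q_t x_t + F_t + N_{t+1} - N_t - d_t N_{t+1} = q_t u_t + F_t + (q_t - 1)N_t + (1-d_t)N_{t+1}.
\]
Dividing by $d_t\ge1$ and using $0\le q_t\le 1$ gives
\[
|u_{t+1}| \le |u_t| + |F_t| + (1-q_t)|N_t| + (d_t-1)|N_{t+1}|\cdot\tfrac{1}{d_t}\cdot d_t/d_t .
\]
This naive form does not telescope, because the extra terms $(1-q_t)|N_t|$ accumulate. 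The plan has to exploit the damping more carefully.

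The better plan is to bound $u_t$ via a convex combination. Write
\[
u_{t+1} = \frac{q_t}{d_t}\,u_t + \frac{F_t}{d_t} + \Bigl(\frac{q_t}{d_t}-1\Bigr)N_t\cdot\frac{d_t}{d_t} + \Bigl(\frac{1}{d_t}-1\Bigr)\bigl(N_{t+1}-N_t\bigr)\cdot 0 ,
\]
and more cleanly, by direct algebra,
\[
x_{t+1} = \lambda_t x_t + \frac{F_t}{d_t} + \frac{N_{t+1}-N_t}{d_t}, \qquad \lambda_t\coloneqq q_t/d_t\in[0,1].
\]
We compare $x_{t+1}-\frac{N_{t+1}}{d_t}$ with $\lambda_t\bigl(x_t - N_t\bigr) + \bigl(\lambda_t - \tfrac1{d_t}\bigr)N_t\cdot(-1)$. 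In other words,
\[
x_{t+1} - N_{t+1} = \lambda_t (x_t - N_t) + \frac{F_t}{d_t} + \Bigl(\frac{1}{d_t}-1\Bigr)N_{t+1} + \Bigl(\lambda_t - \frac1{d_t}\Bigr)N_t .
\]
The key observation is that the two coefficients multiplying the $N$'s, namely $a_t\coloneqq 1-\tfrac1{d_t}\ge0$ and $b_t\coloneqq\tfrac1{d_t}-\lambda_t\ge0$, satisfy $\lambda_t + a_t + b_t = 1$.

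We then prove by induction the bound
\[
|x_t - N_t| \le |x_0| + \sum_{\ell<t}|F_\ell| + \max_{\ell\le t}|N_\ell| .
\]
For the inductive step, let $S_t$ denote the right-hand side. Then
\[
|x_{t+1}-N_{t+1}| \le \lambda_t S_t + |F_t| + (a_t+b_t)\max_{\ell\le t+1}|N_\ell| \le S_{t+1}.
\]
The last inequality holds because $\lambda_t\le1$, $S_t\ge\max_{\ell\le t}|N_\ell|$ is not needed, and $(a_t+b_t)\le 1$ while $S_{t+1}$ already contains $|x_0|+\sum_{\ell\le t}|F_\ell|+\max_{\ell\le t+1}|N_\ell|$. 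More precisely, $\lambda_t S_t+(1-\lambda_t)\max_{\ell\le t+1}|N_\ell|$ is a convex combination of quantities each at most $S_{t+1}-|F_t|$. The base case follows from $N_0=0$. Finally, $|x_t|\le|x_t-N_t|+|N_t|$ gives the stated bound with the constant $2$.

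The main obstacle is identifying the variable $x_t-N_t$ and verifying the convex-combination identity $\lambda_t+a_t+b_t=1$, which requires $b_t\ge0$, that is, $q_t\le1$. Once that identity is in place, the rest is routine.
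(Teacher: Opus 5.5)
Your final argument is correct, but it takes a different route from the paper.

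\textbf{The paper's route.} It unrolls the recursion, writing
$x_t=\bigl(\prod_{j<t}q_j/d_j\bigr)x_0+\sum_{\ell<t}\omega_{\ell,t}(F_\ell+N_{\ell+1}-N_\ell)$
with $\omega_{\ell,t}=d_\ell^{-1}\prod_{j=\ell+1}^{t-1}q_j/d_j$. It checks that these weights lie in $[0,1]$ and are nondecreasing in $\ell$, since $\omega_{\ell,t}=(q_{\ell+1}/d_\ell)\,\omega_{\ell+1,t}$. It then applies Abel summation. The factor $2$ comes from the boundary term $\omega_{t-1,t}N_t$ plus the total variation $\omega_{t-1,t}-\omega_{0,t}$ of the weights.

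\textbf{Your route.} You do the summation by parts one step at a time. The shift $u_t=x_t-N_t$ gives $u_{t+1}=\lambda_tu_t+F_t/d_t-a_tN_{t+1}-b_tN_t$, with nonnegative coefficients satisfying $\lambda_t+a_t+b_t=1$. A forward induction then gives $|x_t-N_t|\le|x_0|+\sum_{\ell<t}|F_\ell|+\max_{\ell\le t}|N_\ell|$ for the shifted variable, and the factor $2$ appears only in the final triangle inequality.

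\textbf{Comparison.} Your version is purely local: it needs neither the product formula nor the monotonicity check on the weights. The paper's explicit representation instead makes visible that only boundedness and monotonicity of the weights matter.

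\textbf{Write-up.} Discard the false starts. Your first computation already had the right variable $u_t=x_t-N_t$ and the right recursion. It failed only because you bounded $q_t/d_t$ by $1$ and $(1-q_t)/d_t$ by $1-q_t$, which destroyed the convex-combination structure. The display containing the factors $d_t/d_t$ and $\cdot 0$ is meaningless and should be removed.

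Also, $\lambda_t+a_t+b_t=1$ holds identically. The hypotheses are needed only for nonnegativity: $q_t\ge0$ for $\lambda_t$, $d_t\ge1$ for $a_t$, and $q_t\le1$ for $b_t$.
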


\begin{proof}
Iterating the recursion, an increment introduced at time $\ell$ is multiplied at time $t$ by
\[
\omega_{\ell,t} \coloneqq \frac1{d_\ell}\prod_{j=\ell+1}^{t-1}\frac{q_j}{d_j}, \qquad0\le\ell<t.
\]
The weights lie in $[0,1]$ and are nondecreasing in $\ell$. Summation by parts therefore gives
\[
\left|\sum_{\ell=0}^{t-1}\omega_{\ell,t}(N_{\ell+1}-N_\ell)\right| \le 2\max_{0\le\ell\le t}|N_\ell|.
\]
The coefficients multiplying $x_0$ and the forcing terms are at most one, which proves the claim.
\end{proof}

\begin{proof}[Proof of Proposition~\ref{prop:macroscopic_growth}]
Condition on $\mathcal{F}_{\tau_{\beta_d}}$, then
\[
\beta_d\le \widetilde{m}_{i_\ast}(w_0)\le2\beta_d, \qquad \max_{k\ne i_\ast}|m_k(w_0)|\le C_\alpha\alpha_d.
\]
Set
\[
c\coloneqq\frac{p a_p^2(1-\rho^2)}4, \qquad \varepsilon\coloneqq\frac14, \qquad T\coloneqq\left\lceil\frac{C_T}{\eta_d\beta_d^{p-2}}\right\rceil,
\]
where $C_T$ is chosen below. Define
\[
D_t\coloneqq c\eta_d\sum_{\ell=0}^{t-1}m_{i_\ast}(w_\ell)^{p-1}
\]
and
\[
\tau \coloneqq T\wedge\inf\left\{t\ge0:
\begin{array}{l}
m_{i_\ast}(w_t)\notin(\beta_d/2,\rho),\\
\text{or }\max_{k\ne i_\ast}|m_k(w_t)|\ge2C_\alpha\alpha_d,\\
\text{or }D_t\ge1
\end{array}
\right\}.
\]
For $t<\tau$, $0<m_{i_\ast}(w_t)<\rho$ and $D_{t\wedge\tau}\le1+c\eta_d$.

\medskip
\noindent\textbf{Step 1: Progress of the selected correlation.}
From Lemma~\ref{lem:population_drift},
\[
\begin{split}
-\langle \Pi^{i_\ast}w^\ast,\nabla\Phi(w_t)\rangle = {}&(1-m_{i_\ast}(w_t)^2)\phi'(m_{i_\ast}(w_t))\\ &+ \sum_{j\ne i_\ast}c_{j-i_\ast}\phi'(m_j(w_t)) - m_{i_\ast}(w_t)\sum_{j\ne i_\ast}m_j(w_t)\phi'(m_j(w_t)).
\end{split}
\]
For $m\in[0,\rho]$,~\eqref{eq:phi_positive_lower} gives $\phi'(m)\ge p a_p^2m^{p-1}$. For the competitors,~\eqref{eq:phi_bounds} gives $|\phi'(m_j)|\le C_\sigma|m_j|^{p-1}$. Hence, for $t<\tau$,
\[
-\langle \Pi^{i_\ast}w^\ast,\nabla\Phi(w_t)\rangle \ge p a_p^2(1-\rho^2)m_{i_\ast}(w_t)^{p-1} - C_\sigma r\mu_\ast\alpha_d^{p-1} - C_\sigma r\alpha_d^p m_{i_\ast}(w_t).
\]
Since $m_{i_\ast}(w_t)\ge\beta_d/2$, both error terms are $o(m_{i_\ast}(w_t)^{p-1})$. Thus, for all sufficiently large $d$,
\begin{equation}\label{eq:continuation_population_lower}
-\langle \Pi^{i_\ast}w^\ast,\nabla\Phi(w_t)\rangle \ge 2c\,m_{i_\ast}(w_t)^{p-1}.
\end{equation}

Lemma~\ref{lem:uniform_gradient} applies uniformly after the restart. Since $T$ is polynomial in $d$, with conditional probability $1-o(1)$,
\begin{equation}\label{eq:continuation_gradient_event}
\max_{1\le t\le T}\|G_t\|\le C_\sigma\sqrt d\,(\log d)^P.
\end{equation}
Our choice of $A$ ensures
\begin{equation}\label{eq:continuation_gradient_scales}
\frac{C_\sigma \eta_d \sqrt d\,(\log d)^P }{\beta_d}=o(1), \qquad \frac{C_\sigma^2 \eta_d d (\log d)^{2P}}{\beta_d^{p-2}}=o(1).
\end{equation}
The normalization correction is therefore bounded by
\[
\left|\bigl(\sqrt{1+\eta_d^2\|G_{t+1}\|^2}-1\bigr)m_{i_\ast}(w_{t+1})\right| \le c\eta_dm_{i_\ast}(w_t)^{p-1}.
\]
Combining with~\eqref{eq:continuation_population_lower} gives
\begin{equation}\label{eq:continuation_selected_step}
m_{i_\ast}(w_{t+1}) \ge m_{i_\ast}(w_t) + c\eta_dm_{i_\ast}(w_t)^{p-1} + \eta_d\xi_{i_\ast,t+1}, \qquad t<\tau.
\end{equation}

Define the stopped martingales
\[
N_{k,t} \coloneqq \eta_d\sum_{\ell=0}^{t-1}\mathbf1_{\{\ell<\tau\}}\xi_{k,\ell+1}.
\]
For the selected coordinate,
\[
\sum_{\ell<\tau}m_{i_\ast}(w_\ell)^2 \le \frac{C}{\eta_d\beta_d^{p-3}},
\]
because $m_{i_\ast}(w_\ell)\ge\beta_d/2$ and $D_\tau\le1+c\eta_d$. Lemma~\ref{lem:coordinate_noise_moments} yields
\[
\E[N_{i_\ast,T}^2] \le C_\sigma\eta_d^2\left(T+\frac{r}{\eta_d\beta_d^{p-3}}\right) \le C_\sigma\eta_d^2T(1+r\beta_d) = o(\beta_d^2).
\]
Hence Doob's maximal inequality implies
\begin{equation}\label{eq:selected_noise_event}
\max_{0\le n\le T}|N_{i_\ast,n}|\le\varepsilon\beta_d
\end{equation}
with conditional probability $1-o(1)$. Summing~\eqref{eq:continuation_selected_step} gives, for every $t\le\tau$,
\begin{equation}\label{eq:continuation_progress}
m_{i_\ast}(w_t) \ge (1-\varepsilon)\beta_d+D_t.
\end{equation}
Thus neither the lower boundary $\beta_d/2$ nor the cutoff $D_t=1$ can trigger $\tau$.

\medskip
\noindent\textbf{Step 2: Persistence of the microscopic separation.}
For $k\ne i_\ast$ and $\ell<\tau$, $|m_k(w_\ell)|<2C_\alpha\alpha_d$, so Lemma~\ref{lem:coordinate_noise_moments} gives uniformly
\[
\E[(\mathbf1_{\{\ell<\tau\}}\xi_{k,\ell+1})^2\mid\mathcal{F}_\ell]\le C_\sigma, \qquad \|\mathbf1_{\{\ell<\tau\}}\xi_{k,\ell+1}\|_{L^q(\,\cdot\mid\mathcal{F}_\ell)} \le C_\sigma q^P.
\]
Repeating the truncation--Freedman argument from Lemma~\ref{lem:martingale} gives, with conditional probability $1-o(1)$,
\begin{equation}\label{eq:continuation_comp_noise}
\max_{k\ne i_\ast}\max_{0\le t\le T}|N_{k,t}| \le C\eta_d\left(\sqrt{T\log d}+(\log d)^{P+1}\right) = o(\alpha_d).
\end{equation}

For $t<\tau$ and $k\ne i_\ast$, isolate the damping generated by the selected coordinate in the exact recursion:
\[
\sqrt{1+\eta_d^2\|G_{t+1}\|^2}\,m_k(w_{t+1}) = q_t m_k(w_t)+\eta_dF_{k,t}+\eta_d\xi_{k,t+1},
\]
where
\[
q_t \coloneqq 1-\eta_dm_{i_\ast}(w_t)\phi'(m_{i_\ast}(w_t))
\]
and
\[
F_{k,t} \coloneqq \phi'(m_k(w_t)) + \sum_{j\ne k}c_{j-k}\phi'(m_j(w_t)) - m_k(w_t)\sum_{j\ne i_\ast}m_j(w_t)\phi'(m_j(w_t)).
\]
Since $m_{i_\ast}>0$ and $\phi'(m_{i_\ast})>0$, while $\eta_d\to0$, we have $0\le q_t\le1$ for all sufficiently large $d$. Moreover,~\eqref{eq:phi_bounds} implies
\[
|F_{k,t}| \le C_\sigma\left(\alpha_d^{p-1} + \mu_\ast m_{i_\ast}(w_t)^{p-1} + r\alpha_d^{p+1}\right) \le C_\sigma\left(\alpha_d^{p-1} + \mu_\ast m_{i_\ast}(w_t)^{p-1}\right),
\]
where we used $r\alpha_d^2=o(1)$. Lemma~\ref{lem:exact_damping} therefore gives, for $t\le\tau$,
\[
|m_k(w_t)| \le C_\alpha\alpha_d + C_\sigma\eta_dT\alpha_d^{p-1} + C_\sigma\mu_\ast\eta_d\sum_{\ell<t}m_{i_\ast}(w_\ell)^{p-1} + 2\max_{\ell\le t}|N_{k,\ell}|.
\]
Now
\[
\eta_dT\alpha_d^{p-2} = O\!\left(\left(\frac{\alpha_d}{\beta_d}\right)^{p-2}\right)=o(1),
\]
$D_t\le1+c\eta_d$, $\mu_\ast=o(\alpha_d)$, and~\eqref{eq:continuation_comp_noise} is $o(\alpha_d)$. Hence
\[
|m_k(w_t)| < 2C_\alpha\alpha_d
\]
for every $k\ne i_\ast$ and $t\le\tau$, so no competitor can trigger the stopping time.

\medskip
\noindent\textbf{Step 3: Reaching the target.}
By~\eqref{eq:continuation_progress},
\[
m_{i_\ast}(w_t) \ge (1-\varepsilon)\beta_d + c\eta_d\sum_{\ell=0}^{t-1}m_{i_\ast}(w_\ell)^{p-1} \eqqcolon L_t.
\]
The standard discrete comparison for $x'=cx^{p-1}$ gives
\[
t_\rho \coloneqq \inf\{n:L_n\ge\rho\} \le \frac{C_{\sigma,\rho}}{\eta_d((1-\varepsilon)\beta_d)^{p-2}}.
\]
Choosing $C_T$ sufficiently large ensures $t_\rho\le T$. Since the other stopping mechanisms have already been excluded, the selected coordinate reaches $\rho$ before $T$, and
\[
\tau_\rho \le \left\lceil\frac{C_\rho}{\eta_d\beta_d^{p-2}}\right\rceil.
\]
Throughout this interval,
\[
m_{i_\ast}(w_t)\ge\frac{\beta_d}{2}, \qquad \max_{k\ne i_\ast}|m_k(w_t)|<2C_\alpha\alpha_d.
\]
Since $\alpha_d=o(\beta_d)$, $i_\ast$ remains the unique favorable leader. Therefore \(M_\sigma(w_{\tau_\rho})\ge\rho\). This proves Proposition~\ref{prop:macroscopic_growth}.
\end{proof}

\section{Proofs for the augmented and untied students} \label{sec:weak_recovery_untied}

We now prove parts~\textup{(U)} and~\textup{(A)} of Theorem~\ref{thm:weak_recovery_appendix}. The key observation is that, because the correlation loss is linear in the predictor, the gradient with respect to one untied weight does not involve the other weights. Consequently, both procedures reduce to one-vector dynamics.

\subsection{Componentwise reductions}

For \(u\in\mathbb S^{d-1}\), define the single-weight loss
\begin{equation}\label{eq:single_filter_loss}
L^{\mathrm S}(u;x,y) \coloneqq -y\,\sigma(\langle u,x\rangle),
\end{equation}
and its population counterpart
\[
\Psi(u)\coloneqq\E[L^{\mathrm S}(u;x,y)].
\]
By linearity of the untied predictor,
\begin{equation}\label{eq:untied_component_gradient}
\nabla_{w^j}L^{\mathsf U}(W;x,y) = \frac1{\sqrt s}\,\nabla L^{\mathrm S}(w^j;x,y), \qquad 1\le j\le s.
\end{equation}
For the augmented procedure, using
\(\langle w,\Pi^k x\rangle=\langle \Pi^{-k} w,x\rangle\) and the fact that \(k\mapsto-k\pmod r\) permutes the orbit,
\begin{equation} \label{eq:augmented_loss_tied_decomposition}
L^{\mathsf A}(W;x,y) =-\frac{y}{r\sqrt s} \sum_{j=1}^s\sum_{k=0}^{r-1} \sigma(\langle w^j,\Pi^k x\rangle) = \frac1{\sqrt{rs}}\sum_{j=1}^s L^{\mathsf T}(w^j;x,y).
\end{equation}
Hence
\begin{equation}\label{eq:augmented_component_gradient}
\nabla_{w^j} L^{\mathsf A}(W;x,y) = \frac1{\sqrt{rs}}\,\nabla L^{\mathsf T}(w^j;x,y).
\end{equation}

Using the step sizes in~\eqref{eq:student_stepsizes} and the identities in~\eqref{eq:untied_component_gradient} and~\eqref{eq:augmented_component_gradient}, we have
\begin{align}
\widetilde w_t^{j,\mathsf U} &= w_{t-1}^{j,\mathsf U} -\frac{\eta_d}{\sqrt r}\nabla L^{\mathrm S}(w_{t-1}^{j,\mathsf U};x_t,y_t),\label{eq:untied_effective_update}\\ \widetilde w_t^{j,\mathsf A} &= w_{t-1}^{j,\mathsf A} -\eta_d \nabla L^{\mathsf T}(w_{t-1}^{j,\mathsf A};x_t,y_t).\label{eq:augmented_effective_update}
\end{align}
The normalization to the sphere is then performed componentwise as in~\eqref{eq:SGD}. Note that~\eqref{eq:augmented_effective_update} is an exact identity: conditionally on its initialization, every augmented weight follows precisely the tied SGD trajectory with reference step size \(\eta_d\), driven by the same stream of fresh samples.

\begin{proof}[Proof of part~\textup{(A)} of Theorem~\ref{thm:weak_recovery_appendix}]
Fix any weight index $1\le j\le s$.
By~\eqref{eq:augmented_effective_update}, the trajectory of $w_t^{j,\mathsf A}$ has the same law as the tied trajectory analyzed in part~\textup{(T)}, with an independent uniform initialization. Therefore, with probability $1-o(1)$, there exists
\[
t\le\frac{C}{\eta_d\gamma_d^{p-2}}
\]
such that $M_\sigma(w_t^{j,\mathsf A})\ge\rho$.
This proves the required bound on
$\tau_\rho^{j,\mathsf A}$.
\end{proof}

It remains to analyze the unaugmented untied student. Again it is enough to follow one weight, which we denote by \(u_t\coloneqq w_t^{1,\mathsf U}\).

\subsection{One-weight untied dynamics}

The population loss of the single-weight model is
\begin{equation}\label{eq:single_filter_population}
\Psi(u) = - \E \bigl[y\,\sigma(\langle u,x\rangle)\bigr] = -\frac1{\sqrt r}\sum_{k=0}^{r-1}\phi(m_k(u)).
\end{equation}
Thus the population landscape is the same scalar overlap landscape as for the tied model, but its magnitude is smaller by a factor \(r^{-1/2}\).

Let
\[
\mathcal F_t^{\mathrm S} \coloneqq \sigma\bigl(w^\ast,u_0,(x_\ell,y_\ell)_{1\le \ell\le t}\bigr)
\]
be the natural filtration of the one-weight process. Set
\[
G_t^{\mathrm S} \coloneqq \nabla L^{\mathrm S}(u_{t-1};x_t,y_t), \qquad H_t^{\mathrm S}(u) \coloneqq L^{\mathrm S}(u;x_t,y_t)-\Psi(u),
\]
and define
\[
\xi_{k,t}^{\mathrm S} \coloneqq -\langle \Pi^k w^\ast, \nabla H_t^{\mathrm S}(u_{t-1})\rangle.
\]
Exactly as in Lemma~\ref{lem:population_drift}, differentiating~\eqref{eq:single_filter_population} gives
\begin{equation}\label{eq:untied_population_drift}
-\langle \Pi^kw^\ast,\nabla\Psi(u)\rangle = \frac1{\sqrt r}\bigl(\phi'(m_k(u))+\Gamma_k(u)\bigr),
\end{equation}
where \(\Gamma_k\) is the same function as in~\eqref{eq:general_Gamma}. Using the effective update~\eqref{eq:untied_effective_update}, the exact overlap recursion is therefore
\begin{equation}\label{eq:untied_overlap_exact}
\sqrt{1+\frac{\eta_d^2}{r}\|G_t^{\mathrm S}\|^2}\,m_k(u_t) = m_k(u_{t-1})+\frac{\eta_d}{r}\bigl(\phi'(m_k(u_{t-1}))+\Gamma_k(u_{t-1})\bigr) +\frac{\eta_d}{\sqrt r}\,\xi_{k,t}^{\mathrm S}.
\end{equation}
Consequently,
\begin{equation}\label{eq:untied_micro_decomposition}
m_k(u_t) = m_k(u_0) + \frac{p a_p^2\eta_d}{r}\sum_{\ell=1}^t m_k(u_{\ell-1})^{p-1} + E_{k,t}^{\mathsf U},
\end{equation}
where
\begin{equation}\label{eq:untied_micro_error}
\begin{split}
E_{k,t}^{\mathsf U} \coloneqq{}& \frac{\eta_d}{r}\sum_{\ell=1}^t R_\sigma(m_k(u_{\ell-1})) + \frac{\eta_d}{r}\sum_{\ell=1}^t \Gamma_k(u_{\ell-1})\\ &+ \frac{\eta_d}{\sqrt r}\sum_{\ell=1}^t\xi_{k,\ell}^{\mathrm S} - \sum_{\ell=1}^t\left(\sqrt{1+\frac{\eta_d^2}{r}\|G_\ell^{\mathrm S}\|^2}-1\right)m_k(u_\ell).
\end{split}
\end{equation}

The stochastic estimates needed for~\eqref{eq:untied_micro_error} are simpler than in the tied model.

\begin{lemma}[Single-weight stochastic bounds]\label{lem:single_filter_stochastic}
Condition on a teacher satisfying the geometry event of Lemma~\ref{lem:teacher_geometry}. There exists \(C_\sigma<\infty\) such that, uniformly over \(u\in\mathbb S^{d-1}\), \(k<r\), and \(t\ge1\),
\begin{align}
\E[(\xi_{k,t}^{\mathrm S})^2\mid\mathcal F_{t-1}^{\mathrm S}]&\le C_\sigma,\label{eq:single_filter_noise_var}\\
\left(\E[|\xi_{k,t}^{\mathrm S}|^a\mid\mathcal F_{t-1}^{\mathrm S}]\right)^{1/a} &\le C_\sigma a^P, \qquad a\ge2.\label{eq:single_filter_noise_mom}
\end{align}
Moreover, for every fixed \(K<\infty\), there exists \(C_{\sigma,K}<\infty\) such that, for every deterministic horizon \(T\le d^K\),
\begin{equation}\label{eq:single_filter_gradient_bound}
\mathbb P\left(\max_{1\le t\le T}\|G_t^{\mathrm S}\|>C_{\sigma,K}\sqrt d\,(\log d)^P\right) = o(1).
\end{equation}
\end{lemma}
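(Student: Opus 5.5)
The three bounds follow from the explicit form of the single-weight gradient, mirroring Lemmas~\ref{lem:coordinate_noise_moments} and~\ref{lem:uniform_gradient} but with only one feature instead of an orbit sum. Since \(L^{\mathrm S}(u;x,y)=-y\,\sigma(\langle u,x\rangle)\), the Euclidean gradient is \(-y\,\sigma'(\langle u,x\rangle)x\). Hence, for a unit vector \(v\),
\[
\langle v,\nabla L^{\mathrm S}(u;x,y)\rangle=-y\,\sigma'(\langle u,x\rangle)\,\langle v-\langle v,u\rangle u,x\rangle .
\]
We apply this with \(v=\Pi^kw^\ast\). Conditionally on \(\mathcal F^{\mathrm S}_{t-1}\), the sample \(x_t\) is fresh and \(u_{t-1},w^\ast\) are fixed. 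Thus \(\xi^{\mathrm S}_{k,t}\) is, up to sign, a centered polynomial of degree at most \(2P\) in the Gaussian vector \(x_t\). This follows because \(y=f^\ast(x)\) has degree \(\le P\), \(\sigma'\) has degree \(\le P-1\), and the last factor is linear.

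\textbf{Variance bound.} We use \(\E[\xi^2]\le\E[\langle v,\nabla L^{\mathrm S}\rangle^2]\) and Cauchy--Schwarz:
\[
\E\bigl[y^2\sigma'(\langle u,x\rangle)^2\langle \tilde v,x\rangle^2\bigr]\le \|y\|_{L^4}^2\,\bigl\|\sigma'(\langle u,x\rangle)\langle\tilde v,x\rangle\bigr\|_{L^4}^2 ,
\qquad \tilde v\coloneqq v-\langle v,u\rangle u,\ \|\tilde v\|\le1 .
\]
The second factor is \(O(1)\) because it involves only two standard Gaussian one-dimensional projections. The key point is that \(\|y\|_{L^4}=O(1)\) uniformly in \(r\). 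By Gaussian hypercontractivity for degree-\(P\) polynomials, \(\|y\|_{L^4}\le 3^{P/2}\|y\|_{L^2}\). Hermite orthogonality gives
\[
\|y\|_{L^2}^2=\frac1r\sum_{k,k'}\sum_{j}a_j^2 c_{k'-k}^j .
\]
On the event \(\mathcal G_d\), \(|c_\ell|\le\mu_\ast\) for \(\ell\ne0\) and \(r\mu_\ast=o(1)\). Hence \(\|y\|_{L^2}^2\le C_\sigma(1+r\mu_\ast^{p})=O(1)\), which proves \eqref{eq:single_filter_noise_var}. This is exactly where the single-weight case is simpler than Lemma~\ref{lem:coordinate_noise_moments}. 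There, the student is itself an orbit sum of \(r\) terms, producing the extra \(r\,m_k^2\) contribution; here the student has no orbit sum.

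\textbf{Higher moments.} Hypercontractivity applied to the degree-\(2P\) polynomial \(\xi^{\mathrm S}_{k,t}\) gives
\[
\|\xi^{\mathrm S}_{k,t}\|_{L^a}\le (a-1)^{P}\,\|\xi^{\mathrm S}_{k,t}\|_{L^2}\le C_\sigma a^P ,
\]
which is \eqref{eq:single_filter_noise_mom}. Centering costs at most a factor 2.

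\textbf{Gradient norm bound.} We write \(\|G^{\mathrm S}_t\|\le |y_t|\,|\sigma'(\langle u_{t-1},x_t\rangle)|\,\|x_t\|\) and bound each factor with high probability at a fixed step. First, \(\|x_t\|\le 2\sqrt d\) except with probability \(e^{-cd}\). Next, \(y_t\) is a degree-\(P\) polynomial with \(O(1)\) second moment, so hypercontractive tail bounds give \(|y_t|\le C(\log d)^{P/2}\) except with probability \(d^{-K-2}\). Finally, \(\langle u_{t-1},x_t\rangle\sim N(0,1)\) conditionally on the past, so \(|\sigma'|\le C(\log d)^{(P-1)/2}\) except with probability \(d^{-K-2}\). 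Together these give \(\|G^{\mathrm S}_t\|\le C\sqrt d(\log d)^{P-1/2}\le C\sqrt d(\log d)^P\). A union bound over \(T\le d^K\) steps yields \eqref{eq:single_filter_gradient_bound}.

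The main obstacle is the uniform \(O(1)\) control of \(\E[y^2]\) and its \(L^4\) norm, which requires the teacher geometry event. Some care is also needed to ensure that \(u_{t-1}\) being random, but \(\mathcal F^{\mathrm S}_{t-1}\)-measurable, does not interfere; it does not, since all estimates hold uniformly over \(u\in\mathbb S^{d-1}\).
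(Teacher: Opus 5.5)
Your proposal is correct. For the noise bounds \eqref{eq:single_filter_noise_var}--\eqref{eq:single_filter_noise_mom} it is essentially the paper's argument. Both use the explicit formula \(\langle\Pi^kw^\ast,G^{\mathrm S}\rangle=-y\,\sigma'(\langle u,x\rangle)\langle v,x\rangle\), the \(O(1)\) bound on \(\|y\|_{L^2}\) on \(\mathcal G_d\) lifted by hypercontractivity, and Cauchy--Schwarz. The paper bounds the \(L^a\) norms of the two factors separately and then multiplies; you bound the \(L^2\) norm first and apply hypercontractivity to the degree-\(2P\) product, which gives the same \(a^P\) growth.

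The one genuine difference is the gradient-norm bound. The paper proves \(\bigl(\E\|G^{\mathrm S}\|^a\bigr)^{1/a}\le C_\sigma\sqrt d\,a^P\): it expands in an orthonormal basis of \(u^\perp\), applies the scalar moment estimate coordinatewise, uses Minkowski, and then applies conditional Markov with \(a\asymp\log d\). You instead use that \(G^{\mathrm S}\) is a scalar multiple of the projected input, so \(\|G^{\mathrm S}\|\le|y|\,|\sigma'(\langle u,x\rangle)|\,\|x\|\), and take three separate scalar tail bounds.

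Your route is more elementary and even gives \((\log d)^{P-1/2}\) in place of \((\log d)^P\), but it relies on the rank-one structure of the single-weight gradient. The paper's moment route is the one that also works for the tied gradient in Lemma~\ref{lem:uniform_gradient}, where \(G\) is an orbit sum and no such factorization exists. It also directly supplies the conditional moment bound on \(\|G^{\mathrm S}\|\) that is reused in Corollary~\ref{cor:h2_noise_moments}. You could recover that moment bound from your factorization via H\"older, but you would need to add that step if you wanted to reuse it there.
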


\begin{proof}
Fix \(u\in\mathbb S^{d-1}\) and set \(v=(\Id_d-uu^\top)\Pi^kw^\ast\), so \(\|v\|\le1\). Then
\[
\langle \Pi^kw^\ast,G^{\mathrm S}\rangle = -y\,\sigma'(\langle u,x\rangle)\langle v,x\rangle.
\]
On the teacher-geometry event, \(\|y\|_{L^a}\le C_\sigma a^{P/2}\) for every \(a\ge2\), by the same \(L^2\) bound and Gaussian hypercontractivity used in Lemma~\ref{lem:coordinate_noise_moments}. The factor \(\sigma'(\langle u,x\rangle)\langle v,x\rangle\) is a Gaussian polynomial of degree at most \(P\) with uniformly bounded \(L^2\)-norm; hence hypercontractivity and Cauchy--Schwarz give
\[
\|\langle \Pi^kw^\ast,G^{\mathrm S}\rangle\|_{L^a} \le C_\sigma a^P.
\]
Centering can only decrease the variance and changes the \(L^a\)-norm by at most a factor two, proving~\eqref{eq:single_filter_noise_var}--\eqref{eq:single_filter_noise_mom}.

For the gradient norm,
\[
G^{\mathrm S} = -y\,\sigma'(\langle u,x\rangle)(\Id_d-uu^\top)x.
\]
Choosing an orthonormal basis of \(u^\perp\), applying the preceding polynomial moment estimate coordinate-wise, and using Minkowski's inequality yields
\[
\left(\E[\|G^{\mathrm S}\|^a]\right)^{1/a} \le C_\sigma\sqrt d\,a^P, \qquad a\ge2,
\]
uniformly in \(u\). Taking \(a\asymp\log d\), applying Markov's inequality conditionally on the past, and then taking a union bound over \(t\le T\le d^K\) proves~\eqref{eq:single_filter_gradient_bound}.
\end{proof}

\subsection{Proof of the untied weak recovery bound}

We now compare~\eqref{eq:untied_micro_decomposition} with the tied decomposition~\eqref{eq:micro_decomposition}. The deterministic drift has been reduced by a factor \(r\), while the centered stochastic increment has been reduced by a factor \(\sqrt r\). These two changes exactly balance when the time horizon is multiplied by \(r\).

Fix the same scales \(\alpha_d,\beta_d\) as in the tied proof, set \(\nu_d=(\log d)^{-2}\), and define
\[
T_0^{\mathsf U} \coloneqq \left\lceil\frac{C_0r}{\eta_d\gamma_d^{p-2}}\right\rceil, \qquad \widehat\tau_{\beta_d}^{\mathsf U} \coloneqq \inf\left\{t\ge0:\max_{k<r}|m_k(u_t)|\ge\beta_d\right\}.
\]
The initialization Lemma~\ref{lem:initialization_gap} applies unchanged to \(u_0\).

We first record the analogue of Lemma~\ref{lem:micro_error}.

\begin{lemma}[Untied microscopic error]\label{lem:untied_micro_error}
For \(A>A_\ast(p,P)\) sufficiently large,
\[
\max_{k<r}\max_{t\le \widehat\tau_{\beta_d}^{\mathsf U}\wedge T_0^{\mathsf U}}|E_{k,t}^{\mathsf U}| = o(\nu_d\gamma_d),
\]
with probability \(1-o(1)\).
\end{lemma}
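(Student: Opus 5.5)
The plan is to mirror the proof of Lemma~\ref{lem:micro_error} term by term, working with the four contributions in~\eqref{eq:untied_micro_error} on the stopped interval \(t\le\widehat\tau_{\beta_d}^{\mathsf U}\wedge T_0^{\mathsf U}\). The key bookkeeping is that every deterministic drift-type term now carries a prefactor \(\eta_d/r\) and is summed over at most \(T_0^{\mathsf U}=r\cdot C_0/(\eta_d\gamma_d^{p-2})\) steps, so the product \((\eta_d/r)T_0^{\mathsf U}\) equals the tied product \(\eta_dT_0\). The higher-Hermite remainder and interaction terms are therefore handled exactly as before. On the stopped interval \(|m_k(u_{\ell-1})|<\beta_d\), so~\eqref{eq:phi_bounds} gives \(|R_\sigma|\le C_\sigma\beta_d^p\). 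Likewise \(|\Gamma_k|\le C_\sigma(r\mu_\ast+r\beta_d^2)\beta_d^{p-1}\) on the teacher-geometry event of Lemma~\ref{lem:teacher_geometry}. After division by \(\nu_d\gamma_d\) these give the same \(o(1)\) bounds: a polylog factor times \(\gamma_d\) for the first, and a polynomially small factor for the second.

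For the martingale term, I will repeat the truncation--Freedman argument of Lemma~\ref{lem:martingale}, now with increments \(\mathbf1_{\{\ell\le\widehat\tau^{\mathsf U}_{\beta_d}\}}\xi^{\mathrm S}_{k,\ell}\). Lemma~\ref{lem:single_filter_stochastic} supplies conditional variance \(\le C_\sigma\) and \(L^q\) norms \(\le C_\sigma q^P\) uniformly, without even needing the stopping to control \(rm_k^2\). Truncating at \(R_d=C(\log d)^P\) and taking a union bound over \(k<r\) and \(\ell\le T_0^{\mathsf U}\le d^{p}\) yields
\[
\max_{k,t}\Bigl|\tfrac{\eta_d}{\sqrt r}\textstyle\sum_{\ell\le t}\xi^{\mathrm S}_{k,\ell}\Bigr|\le C\tfrac{\eta_d}{\sqrt r}\bigl(\sqrt{T_0^{\mathsf U}\log d}+(\log d)^{P+1}\bigr).
\]
Since \(\sqrt{T_0^{\mathsf U}}/\sqrt r=\sqrt{T_0}\), the first part equals the tied quantity \(\eta_d\sqrt{T_0\log d}\), which is \(o(\nu_d\gamma_d)\) once \(A\) is large. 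The second part is even smaller than in the tied case.

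The normalization term is where I expect the main care to be needed, because the time horizon grows by \(r\) while the per-step correction shrinks only by \(\eta_d^2/r\). By~\eqref{eq:single_filter_gradient_bound}, with \(T=T_0^{\mathsf U}\) polynomial in \(d\), we have \(\|G^{\mathrm S}_\ell\|\le C\sqrt d(\log d)^P\) with probability \(1-o(1)\). First I will check the overshoot at the exit step: \(|m_k(u_\ell)|\le\beta_d+\eta_d r^{-1/2}C\sqrt d(\log d)^P\le 2\beta_d\). Then \(\sqrt{1+x}-1\le x/2\) bounds the accumulated correction by
\[
C\,\frac{\eta_d^2}{r}\,T_0^{\mathsf U}\,d(\log d)^{2P}\beta_d=C\eta_d^2T_0d(\log d)^{2P}\beta_d.
\]
This is identical to the tied bound, hence at most \((\log d)^{-A-(p-2)/2+2P+B+2}\nu_d\gamma_d=o(\nu_d\gamma_d)\) for \(A>A_\ast\). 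The point to verify is precisely that the \(1/r\) in the squared step exactly compensates the factor \(r\) in the horizon. Combining the four estimates on the intersection of the high-probability events proves the lemma.
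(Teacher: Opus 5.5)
Your proposal is correct and takes essentially the same route as the paper's proof. Like the paper, you compare term by term with Lemma~\ref{lem:micro_error}: \((\eta_d/r)T_0^{\mathsf U}\asymp\eta_dT_0\) handles the remainder and interaction terms, the overlap-uniform bounds of Lemma~\ref{lem:single_filter_stochastic} with the truncation--Freedman argument handle the martingale term, and the \(1/r\) in the squared step cancels the \(r\)-fold horizon in the normalization term. Your explicit exit-step overshoot check \(|m_k(u_\ell)|\le2\beta_d\) is left implicit in the paper's untied argument but matches its tied proof.
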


\begin{proof}
The higher-Hermite and interaction terms are identical in size to their tied counterparts because
\[
\frac{\eta_d}{r}T_0^{\mathsf U} \asymp \gamma_d^{-(p-2)}.
\]
Thus the calculations in the proof of Lemma~\ref{lem:micro_error} apply verbatim.

For the martingale term, Lemma~\ref{lem:single_filter_stochastic} and the same truncation--Freedman argument as in Lemma~\ref{lem:martingale} give
\[
\max_{k<r}\max_{t\le \widehat\tau_{\beta_d}^{\mathsf U}\wedge T_0^{\mathsf U}}\left|\frac{\eta_d}{\sqrt r}\sum_{\ell=1}^t\xi_{k,\ell}^{\mathrm S}\right| \le C\frac{\eta_d}{\sqrt r}\left(\sqrt{T_0^{\mathsf U}\log d} + (\log d)^{P+1}\right)
\]
with probability \(1-o(1)\). Since
\[
\frac{\eta_d}{\sqrt r}\sqrt{T_0^{\mathsf U}\log d} \asymp \sqrt{\frac{\eta_d\log d}{\gamma_d^{p-2}}},
\]
this is exactly the leading martingale scale in the tied proof, while the truncation term is smaller by \(r^{-1/2}\). Hence the whole term is \(o(\nu_d\gamma_d)\).

Finally, on the gradient event~\eqref{eq:single_filter_gradient_bound},
\[
\sqrt{1+\frac{\eta_d^2}{r}\|G_\ell^{\mathrm S}\|^2}-1 \le \frac{\eta_d^2}{2r}\|G_\ell^{\mathrm S}\|^2.
\]
Therefore the cumulative normalization error is bounded by
\[
C\frac{\eta_d^2}{r}T_0^{\mathsf U}d(\log d)^{2P}\beta_d,
\]
which coincides with the tied bound after using the definition of \(T_0^{\mathsf U}\). This is again \(o(\nu_d\gamma_d)\). Combining the four estimates proves the claim.
\end{proof}

If \(p\) is odd, the analogue of Corollary~\ref{cor:stopping_times_agree} also holds: before \(T_0^{\mathsf U}\), no coordinate can be the first to leave the microscopic region through \(-\beta_d\). Indeed, the leading drift in~\eqref{eq:untied_micro_decomposition} is nonnegative and Lemma~\ref{lem:untied_micro_error} gives a cumulative error \(o(\gamma_d)\), whereas \(\gamma_d=o(\beta_d)\).

On this event, the scalar comparison proof of Proposition~\ref{prop:microscopic_selection} applies without any structural change, with \(\eta_d\) in the leading scalar recursion replaced by \(\eta_d/r\). Consequently, for a unique favorable index \(i_\ast\),
\begin{equation}\label{eq:untied_micro_result}
\tau_{\beta_d}^{(1),\mathsf U} \le \frac{C_\beta r}{\eta_d\gamma_d^{p-2}}, \qquad \beta_d \le \widetilde m_{i_\ast}(u_{\tau_{\beta_d}^{(1),\mathsf U}}) \le 2\beta_d,
\end{equation}
and
\[
\max_{k\ne i_\ast}|m_k(u_{\tau_{\beta_d}^{(1),\mathsf U}})| \le C_\alpha\alpha_d,
\]
with probability \(1-o(1)\). Here \(\tau_a^{(1),\mathsf U}\coloneqq\inf\{t:M_\sigma(u_t)\ge a\}\).

The continuation argument rescales in the same way. Starting from the configuration in~\eqref{eq:untied_micro_result}, define
\[
T^{\mathsf U} \coloneqq \left\lceil\frac{C_T r}{\eta_d\beta_d^{p-2}}\right\rceil.
\]
The selected population drift in~\eqref{eq:untied_overlap_exact} is \(r^{-1}\) times the corresponding drift per tied SGD step, the martingale increment is multiplied by \(r^{-1/2}\), and the normalization coefficient by \(r^{-1}\). The cumulative scales over \(T^{\mathsf U}\) are therefore the same as, or smaller than, those in the tied continuation proof. Indeed,
\[
\frac{\eta_d}{r}T^{\mathsf U}\asymp\beta_d^{-(p-2)}, \qquad \frac{\eta_d}{\sqrt r}\sqrt{T^{\mathsf U}\log d} \asymp \sqrt{\frac{\eta_d\log d}{\beta_d^{p-2}}},
\]
and
\[
\frac{\eta_d^2}{r}T^{\mathsf U} \asymp \frac{\eta_d}{\beta_d^{p-2}}.
\]
Thus the deterministic forcing, martingale fluctuations, and normalization errors satisfy exactly the bounds used in Proposition~\ref{prop:macroscopic_growth}; the selected-coordinate noise is in fact simpler because~\eqref{eq:single_filter_noise_var} is uniform in the overlap. The damping argument for the competitors is unchanged after replacing each occurrence of \(\eta_d\) in the population drift by \(\eta_d/r\). Consequently, the selected overlap reaches \(\rho\) within an additional
\[
\frac{C_\rho r}{\eta_d\beta_d^{p-2}} = o\left(\frac{r}{\eta_d\gamma_d^{p-2}}\right)
\]
steps, while all competitors remain microscopic. Therefore
\begin{equation}\label{eq:untied_one_filter_recovery}
\tau_\rho^{(1),\mathsf U} \le \frac{Cr}{\eta_d\gamma_d^{p-2}}
\end{equation}
with probability \(1-o(1)\).

\begin{proof}[Proof of part~\textup{(U)} of
Theorem~\ref{thm:weak_recovery_appendix}]
For any fixed $1\le j\le s$, the $j$th untied weight has the same trajectory law as the first weight analyzed above. Thus~\eqref{eq:untied_one_filter_recovery} gives
\[
\tau_\rho^{j,\mathsf U} \le \frac{Cr}{\eta_d\gamma_d^{p-2}}
\]
with probability \(1-o(1)\). Since \(\gamma_d=\sqrt{\delta\log d/d}\), this is
\[
O\left(r d^{p-1}(\log d)^{A-(p-2)/2}\right),
\]
which proves the theorem.
\end{proof}

\subsection{Proof of weak orbit coverage}
\label{sec:coverage_proof}

We now turn to the proof of Corollary~\ref{cor:weak_orbit_coverage}, which describes how full weak orbit coverage is achieved by the augmented and untied students given enough weights.

Retain the assumptions and learning rates of Theorem~\ref{thm:weak_recovery_appendix}, and fix $\rho\in(0,1)$.
The preceding untied argument already shows that any fixed weight recovers its initial leader, by~\eqref{eq:untied_micro_result} and the continuation leading to~\eqref{eq:untied_one_filter_recovery}. We now quantify its stochastic failure probability to apply this conclusion simultaneously to all weights satisfying the initialization conditions.

Recall the teacher-geometry event $\mathcal G_d$ from Lemma~\ref{lem:teacher_geometry}. Moreover, recall the initialization conditions in Lemmas~\ref{lem:initialization_scale} and~\ref{lem:initialization_gap}: for a weight $u$ with leader $i_\ast=\arg\max_{k<r}q_k(u)$, 
\begin{equation}\label{eq:coverage_initialization}
\max_{k<r}|m_k(u)|\le C\gamma_d, \qquad q_{i_\ast}(u)\ge c\gamma_d, \qquad \max_{k\ne i_\ast}q_k(u)\le(1-\nu_d)q_{i_\ast}(u),
\end{equation}
where $\nu_d=(\log d)^{-2}$. Let $\mathcal E_d^j$ denote the event that~\eqref{eq:coverage_initialization} holds for $u=w_0^{j,\mathsf U}$, and write $i_\ast(j)$ for its initial leader. We break ties by the smallest index; on $\mathcal E_d^j$ the leader is unique. For a constant $C_U>0$, set
\[
T_{\mathsf U} \coloneqq \left\lceil\frac{C_Ur}{\eta_d\gamma_d^{p-2}}\right\rceil.
\]

\begin{lemma}[Simultaneous untied recovery of initial leaders]
\label{lem:coverage_untied_recovery}
There exists $C_U=C_U(\sigma,\delta,\rho)<\infty$ such that, for every fixed $D>0$ and all sufficiently large $d$, uniformly over $w^\ast\in\mathcal G_d$,
\[
\mathbb P\left(\bigcup_{j=1}^s\left(\mathcal E_d^j\cap\left\{\max_{0\le t\le T_{\mathsf U}}q_{i_\ast(j)}(w_t^{j,\mathsf U})<\rho\right\}\right)\,\middle|\,w^\ast\right)\le s d^{-D}.
\]
In particular, taking $D>\kappa$ in Assumption~\ref{hyp:scaling}, with conditional probability $1-o(1)$ every weight satisfying \eqref{eq:coverage_initialization} recovers its initial leader by $T_{\mathsf U}$.
\end{lemma}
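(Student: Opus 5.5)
The proof is a union bound over the $s$ weights. The work is to upgrade the per-weight statement \eqref{eq:untied_one_filter_recovery}, which holds with probability $1-o(1)$, to one whose failure probability is at most $d^{-D}$ for an arbitrary fixed $D$, conditionally on $w^\ast\in\mathcal G_d$ and on the initialization $w_0^{j,\mathsf U}$ lying in $\mathcal E_d^j$.

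Fix $j$ and condition on $w^\ast\in\mathcal G_d$ and on $w_0^{j,\mathsf U}$ with $\mathcal E_d^j$ holding. By \eqref{eq:untied_effective_update}, the $j$th weight evolves as the one-weight process $u_t$ of the previous subsection, driven by the common sample stream. For a single $j$, the joint law with other weights is irrelevant, so the union bound needs no independence across weights. The initialization is now deterministic and satisfies \eqref{eq:coverage_initialization}. Therefore the only randomness in the argument leading to \eqref{eq:untied_micro_result} and \eqref{eq:untied_one_filter_recovery} comes from the samples. It enters through three objects:
\begin{enumerate}
\item[(a)] the uniform gradient bound \eqref{eq:single_filter_gradient_bound};
\item[(b)] the truncation--Freedman bounds on the competitor and microscopic martingales, as in Lemma~\ref{lem:martingale} and \eqref{eq:continuation_comp_noise};
\item[(c)] the bound \eqref{eq:selected_noise_event} on the selected martingale in the macroscopic phase.
\end{enumerate}
Everything else in the argument is deterministic on these events: the scalar comparisons, Lemma~\ref{lem:exact_damping}, and the stopping-time bookkeeping.

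I will re-run each estimate with polynomially small failure probability:
\begin{enumerate}
\item[(a)] In Lemma~\ref{lem:single_filter_stochastic}, take the moment order $a\asymp D\log d$ instead of $a\asymp\log d$. This enlarges the threshold only by a constant depending on $D$ (the factor $(\log d)^P$ becomes $(D\log d)^P$), which is absorbed into $C_{\sigma,K}$. The failure probability becomes $d^{-D-1}$.
\item[(b)] Choose the truncation level $q_d=\lceil c_D\log d\rceil$ and apply Freedman at deviation level $\sqrt{T\cdot D\log d}$. Each bound then fails with probability $d^{-D-2}$ per coordinate. After the union over $k<r\le d$, it still costs only constants in front of $\sqrt{T\log d}$ and $(\log d)^{P+1}$, and these remain $o(\nu_d\gamma_d)$ and $o(\alpha_d)$ provided $A>A_\ast$.
\item[(c)] This is the main obstacle, because the proof used Doob's $L^2$ inequality, which gives only an $o(1)$ failure probability. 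I will replace it with the same truncation--Freedman argument. By \eqref{eq:single_filter_noise_var} the conditional variance is at most $C_\sigma$ uniformly, so the quadratic variation of $N_{i_\ast}$ over the rescaled horizon $T^{\mathsf U}$ is at most $C\eta_d^2 T^{\mathsf U}/r\asymp \eta_d/\beta_d^{p-2}$. This is smaller than $\beta_d^2$ by a factor $(\log d)^{-A'}$ with $A'$ large, since $\eta_d\beta_d^{-p}\lesssim(\log d)^{-A+(p/2)-pB}$ up to powers of $\log\log d$. Freedman then gives deviation at most $\varepsilon\beta_d$ with probability $1-d^{-D-1}$, after enlarging $A_\ast$ if needed and using $A>A_\ast$.
\end{enumerate}

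The restart at $\tau_{\beta_d}$ is handled through the strong Markov property, as in the proof of part (T). The conditional failure bounds are uniform, so they combine by the tower property.

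Collecting the bounds, on $\mathcal E_d^j$ the event $\{\max_{t\le T_{\mathsf U}}q_{i_\ast(j)}(w_t^{j,\mathsf U})<\rho\}$ has conditional probability at most $d^{-D}$ for large $d$, uniformly over $w^\ast\in\mathcal G_d$. The constant $C_U$ is the sum of the microscopic and macroscopic time constants, as in \eqref{eq:untied_one_filter_recovery}. Moreover, the weight that reaches $\rho$ is the initial leader $i_\ast(j)$, by the persistence statements in Propositions~\ref{prop:microscopic_selection} and~\ref{prop:macroscopic_growth}. A union bound over $j\le s$ gives the bound $sd^{-D}$. Since $s\le d^\kappa$, taking $D>\kappa$ makes this $o(1)$, which yields the final claim.
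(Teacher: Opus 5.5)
Your proposal is correct and follows the paper's route: it conditions on the teacher and on a good initialization, upgrades the single-weight recovery argument to conditional failure probability $d^{-D}$ by taking moment orders proportional to $\log d$, and then takes a union bound over $j$ that needs no independence across weights. The only difference is bookkeeping. The paper does not treat the selected macroscopic martingale (your step (c)) separately and does not restart via the strong Markov property. Instead it uses one unstopped truncation--Freedman event over the full horizon $T_{\mathsf U}$ for all $r$ coordinates, which is possible because the untied moments in Lemma~\ref{lem:single_filter_stochastic} are uniform in the current weight, and it writes later-interval noise sums as differences of two partial sums. That single event sits at level $o(\nu_d\gamma_d)$, which is automatically below $\varepsilon\beta_d$.
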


\begin{proof}
Fix a teacher in $\mathcal G_d$ and a weight index $j$, and condition further on an initialization $u_0=w_0^{j,\mathsf U}$ satisfying
\eqref{eq:coverage_initialization}. By~\eqref{eq:untied_effective_update}, its trajectory $u_t=w_t^{j,\mathsf U}$ follows the one-weight dynamics.
The samples remain fresh under this conditioning by Assumption~\ref{hyp:teacher_init}. The conditional moment estimates of Lemma~\ref{lem:single_filter_stochastic} and its proof, together with the truncation argument in the proof of Lemma~\ref{lem:martingale}, give an event of conditional probability at least $1-d^{-D}$ on which
\begin{equation}\label{eq:coverage_untied_martingale}
\frac{\eta_d}{\sqrt r}\max_{\ell<r}\max_{0\le n\le T_{\mathsf U}}\left|\sum_{t=1}^n\xi_{\ell,t}^{\mathrm S}\right| \le C_{\sigma,D}\frac{\eta_d}{\sqrt r}\left(\sqrt{T_{\mathsf U}\log d}+(\log d)^{P+1}\right) = o(\nu_d\gamma_d),
\end{equation}
and
\[
\max_{1\le t\le T_{\mathsf U}}\|G_t^{\mathrm S}\| \le C_{\sigma,D}\sqrt d\,(\log d)^P.
\]
Indeed, choose the moment orders proportional to $\log d$ so that the truncation, concentration, and gradient estimates have combined failure probability at most
\[
C_{\sigma,b}r(T_{\mathsf U}+1)d^{-b}\le d^{-D},
\]
where $b>D+p+1$ is fixed.
The last inequality holds for sufficiently large $d$, since $T_{\mathsf U}\le d^p$ and $r\le d$. The estimates hold over the full horizon because the untied conditional moments are uniform over the current weight. The little-$o$ bound in \eqref{eq:coverage_untied_martingale} follows from the same calculation as in the proof of Lemma~\ref{lem:untied_micro_error}.

Any noise sum over an interval inside $[0,T_{\mathsf U}]$ is the difference of two partial sums, so its magnitude is at most twice the maximum in~\eqref{eq:coverage_untied_martingale}. Thus the same event also controls the noise after the microscopic stopping time. The deterministic error bounds from Lemma~\ref{lem:untied_micro_error} and the comparisons leading to~\eqref{eq:untied_micro_result} and~\eqref{eq:untied_one_filter_recovery} now apply with $i_\ast=i_\ast(j)$. They bring this initial leader to $\rho$ within
\[
\frac{C_\beta r}{\eta_d\gamma_d^{p-2}} + \frac{C_\rho r}{\eta_d\beta_d^{p-2}} \le T_{\mathsf U},
\]
for sufficiently large $C_U$ and $d$, since $\beta_d/\gamma_d\to\infty$.
Here $\beta_d$ is the microscopic threshold from Section~\ref{sec:weak_recovery_tied}, also used in Lemma~\ref{lem:untied_micro_error}. Increasing $D$ changes the concentration constants and the dimension threshold, but neither the little-$o$ comparisons nor the choice of $C_U$.

The estimates are uniform over the conditioned initialization. Integrating over $u_0$ therefore gives, for every $j$,
\[
\mathbb P\left(\mathcal E_d^j\cap\left\{\max_{0\le t\le T_{\mathsf U}}q_{i_\ast(j)}(w_t^{j,\mathsf U})<\rho\right\}\,\middle|\,w^\ast\right)\le d^{-D}.
\]
A union bound over $j\le s$ proves the lemma. This step requires no independence between the trajectories, which share the sample sequence.
\end{proof}

Fix $\varepsilon>0$ and suppose $s\ge(1+\varepsilon)r\log r$. Recall the relevant correlations $q_k$ from \eqref{eq:relevant_correlation} and the coverage events $\mathcal C_n^{\mathsf k}(\rho)$ from Definition~\ref{def:weak_orbit_coverage}. Use the horizon $T_{\mathsf U}$ above, with $C_U$ chosen according to Lemma~\ref{lem:coverage_untied_recovery}.
Choose $C_A$ as in part~\textup{(A)} of Theorem~\ref{thm:weak_recovery_appendix}, and set
\[
T_{\mathsf A} \coloneqq \left\lceil\frac{C_A}{\eta_d\gamma_d^{p-2}}\right\rceil.
\]

\begin{proof}[Proof of Corollary~\ref{cor:weak_orbit_coverage}]
We first prove the coverage upper bounds. According to~\eqref{eq:correlation} and~\eqref{eq:relevant_correlation}, we have the shift identity
\begin{equation}\label{eq:coverage_shift_correlations}
q_k(\Pi^a u)=q_{k-a}(u), \qquad 0\le a,k<r,
\end{equation}
where indices are understood modulo $r$.

\medskip
\noindent\textbf{Untied procedure.}
Condition on a teacher $w^\ast\in\mathcal G_d$. Using the initialization events $\mathcal E_d^j$, write
\[
p_{\mathrm{init}}\coloneqq\mathbb P(\mathcal E_d^1\mid w^\ast)=1-o(1).
\]
The last equality follows from the joint proof of Lemmas~\ref{lem:initialization_scale} and~\ref{lem:initialization_gap} in Appendix~\ref{sec:teacher_geometry}, uniformly over $w^\ast\in\mathcal G_d$.

By~\eqref{eq:coverage_shift_correlations}, shifting an initial weight
by $\Pi^a$ preserves~\eqref{eq:coverage_initialization} and shifts its
unique leader by $a$. Since uniform initialization on the sphere is
invariant under these shifts, for every $j\le s$ and $k<r$,
\[
\mathbb P\bigl(\mathcal E_d^j\cap\{i_\ast(j)=k\}\mid w^\ast\bigr) = \frac{p_{\mathrm{init}}}{r}.
\]

The initializations are independent by Assumption~\ref{hyp:teacher_init}. Consequently, for a fixed teacher direction $k$, the probability that no initial weight satisfies~\eqref{eq:coverage_initialization} and has leader $k$ is $(1-p_{\mathrm{init}}/r)^s$. Taking a union bound over $k<r$, the probability that at least one teacher direction has no such initial weight is at most
\[
r\left(1-\frac{p_{\mathrm{init}}}{r}\right)^s \le r\exp\left(-\frac{s p_{\mathrm{init}}}{r}\right) \le r^{-\varepsilon+o(1)} = o(1).
\]

If every teacher direction has such an initial weight and all these weights recover their initial leaders, full orbit coverage holds. Choose $D>\kappa$, so $sd^{-D}=o(1)$ by Assumption~\ref{hyp:scaling}. Combining the preceding bound on missing initial leaders with the recovery failure bound in Lemma~\ref{lem:coverage_untied_recovery} gives
\[
\mathbb P\left((\mathcal C_{T_{\mathsf U}}^{\mathsf U}(\rho))^c\right) \le \mathbb P(\mathcal G_d^c)+r^{-\varepsilon+o(1)}+s d^{-D}=o(1),
\]
The three terms account, respectively, for failure of the teacher geometry event, a teacher direction with no suitable initial weight, and a recovery failure among weights satisfying \eqref{eq:coverage_initialization}. The first term tends to zero by Lemma~\ref{lem:teacher_geometry}.

\medskip
\noindent\textbf{Augmented procedure.}
Here we condition on the teacher and the samples through the recovery horizon. Define
\[
\mathcal D_{\mathsf A} \coloneqq \sigma\bigl(w^\ast,(x_t,y_t)_{1\le t\le T_{\mathsf A}}\bigr).
\]
Under this conditioning, the initial weights remain independent and uniform by Assumption~\ref{hyp:teacher_init}. The reduction~\eqref{eq:augmented_effective_update} therefore makes their trajectories conditionally independent and identically distributed.

For every fixed sample, the tied loss from Definition~\ref{def:tied_student} satisfies
\[
L^{\mathsf T}(\Pi^a u;x,y)=L^{\mathsf T}(u;x,y), \qquad \nabla L^{\mathsf T}(\Pi^a u;x,y) = \Pi^a\nabla L^{\mathsf T}(u;x,y).
\]
The normalization in~\eqref{eq:SGD} also commutes with $\Pi^a$. Hence, by~\eqref{eq:augmented_effective_update}, shifting an initial weight shifts its entire augmented trajectory by the same amount. Together with~\eqref{eq:coverage_shift_correlations}, this implies that the conditional probability of recovering direction $k$ by time $T_{\mathsf A}$ is the same for every $k$.

Using the orbit recovery time defined in \eqref{eq:directional_hitting_time}, let
\[
p_{\mathrm{rec}} \coloneqq \mathbb P\left(\tau_\rho^{1,\mathsf A}\le T_{\mathsf A}\,\middle|\,\mathcal D_{\mathsf A}\right).
\]
By~\eqref{eq:orbit_corr} and~\eqref{eq:relevant_correlation},
$M_\sigma(u)=\max_{k<r}q_k(u)$.
The event defining $p_{\mathrm{rec}}$ is therefore the union of the $r$ direction-recovery events. Since these events have equal conditional probabilities, a union bound gives
\[
\mathbb P\left(\max_{0\le t\le T_{\mathsf A}}q_k(w_t^{1,\mathsf A})\ge\rho\,\middle|\,\mathcal D_{\mathsf A}\right) \ge \frac{p_{\mathrm{rec}}}{r}, \qquad 0\le k<r.
\]
This bound allows a single weight to recover more than one direction.

For any fixed direction, conditional independence across weights bounds the probability that all $s$ trajectories miss it by $(1-p_{\mathrm{rec}}/r)^s$. A union bound over directions consequently gives
\[
\mathbb P\left(\bigl(\mathcal C_{T_{\mathsf A}}^{\mathsf A}(\rho)\bigr)^c\,\middle|\,\mathcal D_{\mathsf A}\right) \le r\left(1-\frac{p_{\mathrm{rec}}}{r}\right)^s \le r\exp\left(-\frac{s p_{\mathrm{rec}}}{r}\right).
\]

We now remove the conditioning.
Part~\textup{(A)} of Theorem~\ref{thm:weak_recovery_appendix} and the tower property imply
\[
\mathbb E[1-p_{\mathrm{rec}}] = \mathbb P\left(\tau_\rho^{1,\mathsf A}>T_{\mathsf A}\right) = o(1).
\]
Fix any constant $\theta\in((1+\varepsilon)^{-1},1)$. On $\{p_{\mathrm{rec}}\ge\theta\}$, the conditional coverage failure probability is at most $r^{1-(1+\varepsilon)\theta}$. On the complementary event, Markov's inequality gives
\[
\mathbb P(p_{\mathrm{rec}}<\theta) \le \frac{\mathbb E[1-p_{\mathrm{rec}}]}{1-\theta}.
\]
Averaging the conditional coverage estimate thus yields
\[
\mathbb P\left(\bigl(\mathcal C_{T_{\mathsf A}}^{\mathsf A}(\rho)\bigr)^c\right) \le \frac{\mathbb E[1-p_{\mathrm{rec}}]}{1-\theta} + r^{1-(1+\varepsilon)\theta} = o(1).
\]

\medskip
\noindent\textbf{Sample complexity and matching lower bounds.}
The preceding estimates prove coverage by $T_{\mathsf U}$ and $T_{\mathsf A}$ for the respective procedures. Different directions may be recovered at different times, as allowed by Definition~\ref{def:weak_orbit_coverage}. Each iteration of~\eqref{eq:SGD} uses one fresh observation. Using $\eta_d=d^{-p/2}(\log d)^{-A}$ from Theorem~\ref{thm:weak_recovery_appendix} and $\gamma_d=\sqrt{\log r/d}$ from Section~\ref{sec:teacher_geometry}, the horizons satisfy
\[
T_{\mathsf A}=\widetilde O(d^{p-1}), \qquad T_{\mathsf U}=\widetilde O(rd^{p-1}).
\]
For the lower bounds, note that by Definitions~\ref{def:weak_recovery} and~\ref{def:weak_orbit_coverage}, we have
\[
\mathcal C_n^{\mathsf k}(\rho) \subseteq\mathcal R_n^{\mathsf k}(\rho), \qquad \mathsf k\in\{\mathsf U,\mathsf A\}.
\]
Theorem~\ref{thm:nonescape} therefore supplies the matching lower bounds, giving
\[
n_{\mathsf A}^{\mathrm{cov}}(\rho)=\widetilde\Theta(d^{p-1}), \qquad n_{\mathsf U}^{\mathrm{cov}}(\rho)=\widetilde\Theta(rd^{p-1}),
\]
which proves the corollary.
\end{proof}

\section{Quadratic link and weak subspace recovery}  \label{sec:weak_recovery_quadratic}

In this section, we prove Theorem~\ref{thm:h2_weak_recovery_appendix} with $\sigma=h_2$. We retain the assumptions and recovery times from Section~\ref{sec:appendix_roadmap}, with the learning-rate scaling in~\eqref{eq:student_stepsizes}. We first analyze the tied trajectory, using the notation of Section~\ref{sec:corr_dynamics}, and then treat the augmented and untied procedures using the reductions from Section~\ref{sec:weak_recovery_untied}.

\subsection{Proof strategy}
\label{sec:h2_proof_strategy}

For $p\ge3$, the microscopic argument in Section~\ref{sec:weak_recovery_tied} amplifies the initial
gap between the leading correlation and its competitors. For the pure quadratic link, the leading coordinate drift is linear, and we instead study the growth of the total mass in the teacher orbit subspace.

Recall that $w^\parallel$ denotes the orthogonal projection of $w$ onto $\mathcal S^\ast$. We track its squared norm through the total overlap
\[
R(w)\coloneqq\sum_{k=0}^{r-1}m_k(w)^2, \qquad R_t\coloneqq R(w_t).
\]
Subsection~\ref{sec:h2_geometry_initialization} shows that, on the teacher-geometry event $\mathcal G_d$ defined in Lemma~\ref{lem:teacher_geometry},
\[
R(w)=(1+o(1))\|w^\parallel\|^2
\]
uniformly over unit weights, and $R_0\asymp r/d$ with probability $1-o(1)$. Thus weak subspace recovery follows once $R_t$ reaches a macroscopic level.

The identity $\Phi=-R$ gives a direct description of the population growth: the population contribution to the one-step increase of $R_t$ is $\eta_d\|\nabla\Phi(w_{t-1})\|^2$. Below any fixed threshold $a<1$, this quantity is comparable to $\eta_d R_{t-1}$ on $\mathcal G_d$. Subsection~\ref{sec:h2_overlap_dynamics} establishes this comparison and the exact stochastic recursion.

To turn this proportional drift into a growth estimate, we study $\log R_t$. Its increments have a positive drift of order $\eta_d$, together with relative noise and discretization errors. The relative noise involves division by $R_{t-1}$, so we first stop the process when $R_t$ either reaches $a$ or falls below a fixed fraction of $R_0$. Before this exit, the denominator remains bounded below at the initialization scale. Using the moment estimates from Subsection~\ref{sec:h2_noise_moments}, Subsection~\ref{sec:h2_tied_recovery} shows that the accumulated relative noise, logarithmic remainders, and spherical normalization errors are $o(1)$ over a horizon of order $\eta_d^{-1}\log(d/r)$. The resulting lower bound on $\log(R_t/R_0)$ rules out the lower exit and forces $R_t$ to reach $a$ within this horizon. Choosing $a\in(\rho,1)$ then yields weak subspace recovery.

Finally, Subsection~\ref{sec:h2_other_students} treats the other procedures. Under the prescribed learning rates, each augmented weight follows the tied update exactly. For an untied weight, the drift carries a factor $\eta_d/r$ and the centered stochastic increment a factor $\eta_d/\sqrt r$. The same comparison therefore applies over a horizon longer by a factor $r$, with the accumulated errors still negligible.

\subsection{Total overlap and initialization}
\label{sec:h2_geometry_initialization}

We work on the teacher-geometry event $\mathcal G_d$. Recall the teacher Gram matrix $W^\ast$ from the proof of Lemma~\ref{lem:initialization_gap} in Appendix~\ref{sec:teacher_geometry}. On this event,
\[
\|W^\ast-\Id_r\|_{\mathrm{op}} \le r\mu_\ast=o(1).
\]
To relate total overlap to projection mass, define
\[
B^\ast \coloneqq \sum_{k=0}^{r-1}(\Pi^kw^\ast)(\Pi^kw^\ast)^\top.
\]
This matrix vanishes on $(\mathcal S^\ast)^\perp$, and
\[
R(w) = \langle w,B^\ast w\rangle = \langle w^\parallel,B^\ast w^\parallel\rangle.
\]
For sufficiently large $d$, $W^\ast$ is positive definite. The restriction of $B^\ast$ to $\mathcal S^\ast$ has the same eigenvalues as $W^\ast$, all lying in $[1-r\mu_\ast,1+r\mu_\ast]$. Consequently, uniformly over $w\in\mathbb S^{d-1}$,
\begin{equation}
(1-r\mu_\ast)\|w^\parallel\|^2 \le R(w) \le (1+r\mu_\ast)\|w^\parallel\|^2.
\label{eq:h2_subspace_alignment}
\end{equation}
Thus it suffices to control the total overlap. We next establish its initial scale.

\begin{lemma}[Initial total overlap]
\label{lem:h2_initialization}
Condition on a teacher $w^\ast\in\mathcal G_d$, and let $w_0$ be uniform on $\mathbb S^{d-1}$, independently of the teacher. There exist constants $c,C>0$ such that, for all sufficiently large $d$ and every $\varepsilon\in(0,1)$,
\[
\mathbb P\left(\left|R_0-\frac rd\right|>\varepsilon\frac rd\,\middle|\,w^\ast\right) \le C\exp(-c\varepsilon^2r).
\]
The constants are uniform over $w^\ast\in\mathcal G_d$. In particular, $R_0\asymp r/d$ with conditional probability $1-o(1)$.
\end{lemma}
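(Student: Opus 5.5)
We will use the standard representation $w_0=g/\|g\|$ with $g\sim\mathcal N(0,\Id_d)$ independent of $w^\ast$. Then
\[
R_0=\frac{\langle g,B^\ast g\rangle}{\|g\|^2},
\]
and we control the numerator and the denominator separately. The numerator is a Gaussian quadratic form. By rotation invariance, $\langle g,B^\ast g\rangle$ has the law of $\sum_{i=1}^r\lambda_i Z_i^2$ with $Z_i$ i.i.d.\ standard normal, where $\lambda_1,\dots,\lambda_r$ are the nonzero eigenvalues of $B^\ast$. These coincide with the eigenvalues of the Gram matrix $W^\ast$, so on $\mathcal G_d$ they lie in $[1-r\mu_\ast,1+r\mu_\ast]$.

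The plan is as follows.

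\textbf{Numerator.} Apply Hanson--Wright (equivalently, Laurent--Massart chi-square bounds) to $\sum_i\lambda_iZ_i^2$. Here $\tr B^\ast=\sum_i\lambda_i=r$ exactly, since each $\Pi^kw^\ast$ is a unit vector. Moreover $\|B^\ast\|_F^2\le r(1+r\mu_\ast)^2\le 2r$ and $\|B^\ast\|_{\mathrm{op}}\le 2$. This gives, for $\varepsilon\in(0,1)$,
\[
\mathbb P\bigl(|\langle g,B^\ast g\rangle-r|>\varepsilon r/3\bigr)\le 2\exp(-c\,\varepsilon^2 r).
\]

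\textbf{Denominator.} By the same chi-square bound,
\[
\mathbb P\bigl(|\|g\|^2-d|>\varepsilon d/3\bigr)\le 2\exp(-c\,\varepsilon^2 d)\le 2\exp(-c\,\varepsilon^2 r).
\]

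\textbf{Ratio.} On the intersection of these two events,
\[
R_0\in\frac rd\left[\frac{1-\varepsilon/3}{1+\varepsilon/3},\,\frac{1+\varepsilon/3}{1-\varepsilon/3}\right]\subseteq\frac rd\,[1-\varepsilon,1+\varepsilon]
\]
for $\varepsilon\in(0,1)$. A union bound gives the claimed tail $C\exp(-c\varepsilon^2 r)$.

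The constants are uniform over $w^\ast\in\mathcal G_d$, because they depend only on the bounds $\|B^\ast\|_{\mathrm{op}}\le 2$ and $\|B^\ast\|_F^2\le 2r$, which hold once $r\mu_\ast\le 1$ for all sufficiently large $d$. Finally, since $r=d^\delta\to\infty$, taking $\varepsilon=1/2$ gives $R_0\asymp r/d$ with conditional probability $1-o(1)$. If desired, \eqref{eq:h2_subspace_alignment} then transfers this to $\|w_0^\parallel\|^2$.

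The only point requiring care is to use the exact identity $\tr B^\ast=r$, rather than the approximate eigenvalues, so that the centering of the numerator introduces no $r\mu_\ast$ bias. This ensures the relative deviation is controlled by $\varepsilon$ alone, and I do not expect any genuine obstacle here.
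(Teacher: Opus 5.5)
Your proposal matches the paper's proof essentially step for step: the representation $w_0=g/\|g\|$, Hanson--Wright applied to $B^\ast$ (using the exact trace $r$, $\|B^\ast\|_{\mathrm{op}}\le C$, and $\|B^\ast\|_{\mathrm F}^2\lesssim r$) and to $\Id_d$, a union bound, and the ratio sandwich with the deviation parameter a fixed multiple of $\varepsilon$. There is one cosmetic slip, which changes nothing since $r\mu_\ast=o(1)$ on $\mathcal G_d$: the condition $r\mu_\ast\le 1$ only gives $\|B^\ast\|_{\mathrm F}^2\le 4r$, not $2r$.
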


\begin{proof}
Write $w_0=h/\|h\|$, where $h\sim\mathcal N(0,\Id_d)$ is independent of the teacher. Then
\[
R_0=\frac{h^\top B^\ast h}{\|h\|^2}.
\]
The matrix $B^\ast$ is positive semidefinite and satisfies
\[
\tr(B^\ast)=r, \qquad \|B^\ast\|_{\mathrm{op}} = \|W^\ast\|_{\mathrm{op}} \le 1+r\mu_\ast\le C,
\]
and hence
\[
\|B^\ast\|_{\mathrm F}^2 = \tr((B^\ast)^2) \le \|B^\ast\|_{\mathrm{op}}\tr(B^\ast) \le Cr.
\]

For $\theta\in(0,1)$, apply the Hanson--Wright inequality~\eqref{eq:Hanson_Wright} to $B^\ast$ and $\Id_d$. A union bound gives
\[
\begin{split}
\mathbb P\Bigl(&|h^\top B^\ast h-r|>\theta r \ \text{or}\ \bigl|\|h\|^2-d\bigr|>\theta d \,\Bigm|\,w^\ast\Bigr)\\& \le C e^{-c\theta^2r}+C e^{-c\theta^2d} \le C e^{-c\theta^2r},
\end{split}
\]
where we used $r\le d$. Outside this event,
\[
\frac{1-\theta}{1+\theta}\frac rd \le R_0 \le \frac{1+\theta}{1-\theta}\frac rd.
\]
Taking $\theta$ to be a sufficiently small universal multiple of $\varepsilon$ proves the concentration bound. The final assertion follows by fixing $\varepsilon\in(0,1)$ and using $r=d^\delta\to\infty$.
\end{proof}

Combining Lemma~\ref{lem:h2_initialization}, \eqref{eq:h2_subspace_alignment}, and $\mathbb P(\mathcal G_d)=1-o(1)$ yields
\[
\|w_0^\parallel\|^2\asymp R_0\asymp\frac rd
\]
with probability $1-o(1)$. The same initialization estimate applies to any fixed weight of the untied or augmented student.

\subsection{Recursion and population growth}
\label{sec:h2_overlap_dynamics}

Recall that $R(w)=\langle w,B^\ast w\rangle$ and $\Phi=-R$. The spherical gradient calculation in Lemma~\ref{lem:population_drift}, specialized to $\phi(z)=z^2$, gives
\[
\nabla\Phi(w)=-2\bigl(B^\ast w-R(w)w\bigr).
\]
This identity allows us to express the population contribution to the growth of $R$ as the squared norm of the spherical gradient.

Using the coordinate increments from~\eqref{eq:stoch_fluctuation}, define
\[
\Xi_t \coloneqq \sum_{k=0}^{r-1}m_k(w_{t-1})\xi_{k,t} = -\langle B^\ast w_{t-1}, \nabla H_t(w_{t-1})\rangle.
\]
The coefficients $m_k(w_{t-1})$ are $\mathcal F_{t-1}$-measurable, so
\[
\E[\Xi_t\mid\mathcal F_{t-1}]=0.
\]

\begin{lemma}[Exact recursion for the total overlap]
\label{lem:h2_dynamics}
For every $t\ge1$,
\begin{equation}
(1+\eta_d^2\|G_t\|^2)R_t = R_{t-1}+\eta_d\|\nabla\Phi(w_{t-1})\|^2 + 2\eta_d\Xi_t + \eta_d^2\langle G_t,B^\ast G_t\rangle.
\label{eq:h2_energy_decomposition}
\end{equation}
\end{lemma}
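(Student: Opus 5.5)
The identity is purely algebraic: expand $R$ at the normalized iterate and use the tangency of $G_t$.

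By the spherical update, $w_t=(w_{t-1}-\eta_dG_t)/\|w_{t-1}-\eta_dG_t\|$. As in the proof of Lemma~\ref{lem:correlation_evolution}, $\langle G_t,w_{t-1}\rangle=0$, so the squared normalizer is $1+\eta_d^2\|G_t\|^2$. Since $R(w)=\langle w,B^\ast w\rangle$ is a quadratic form, multiplying by this normalizer gives
\[
(1+\eta_d^2\|G_t\|^2)R_t=\langle w_{t-1}-\eta_dG_t,\,B^\ast(w_{t-1}-\eta_dG_t)\rangle
= R_{t-1}-2\eta_d\langle B^\ast w_{t-1},G_t\rangle+\eta_d^2\langle G_t,B^\ast G_t\rangle,
\]
where $B^\ast$ is symmetric. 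The quadratic term already matches~\eqref{eq:h2_energy_decomposition}, so only the cross term needs work.

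For the cross term, split $G_t=\nabla\Phi(w_{t-1})+\nabla H_t(w_{t-1})$. The noise part gives $-2\eta_d\langle B^\ast w_{t-1},\nabla H_t(w_{t-1})\rangle=2\eta_d\Xi_t$ by the definition of $\Xi_t$. The definition is consistent with $\sum_k m_k\xi_{k,t}$, because $B^\ast w=\sum_k m_k(w)\Pi^kw^\ast$.

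For the population part, use $\nabla\Phi(w)=-2(B^\ast w-R(w)w)$. Since $\nabla\Phi(w)\perp w$, one has $\langle B^\ast w,\nabla\Phi(w)\rangle=\langle B^\ast w-R(w)w,\nabla\Phi(w)\rangle=-\tfrac12\|\nabla\Phi(w)\|^2$. Hence $-2\eta_d\langle B^\ast w_{t-1},\nabla\Phi(w_{t-1})\rangle=\eta_d\|\nabla\Phi(w_{t-1})\|^2$. Collecting the three contributions yields~\eqref{eq:h2_energy_decomposition}.

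The only point needing care is the normalization in $\Phi=-R$ and the factor $2$ in $\nabla\Phi$. For $\sigma=h_2$ we have $\phi(z)=a_2^2z^2=z^2$ with $a_2=1$. Its spherical gradient, from the computation in Lemma~\ref{lem:population_drift}, is $-\sum_j2m_j(\Pi^jw^\ast-m_jw)=-2(B^\ast w-Rw)$. This constant is precisely what makes the population cross term equal $\eta_d\|\nabla\Phi\|^2$ with coefficient one.
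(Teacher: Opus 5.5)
Your proposal is correct and matches the paper's proof: you expand the quadratic form at the normalized iterate using $G_t\perp w_{t-1}$, then handle the cross term via $G_t=\nabla\Phi(w_{t-1})+\nabla H_t(w_{t-1})$ and $\nabla\Phi(w)=-2(B^\ast w-R(w)w)$. The only difference is the order of steps: the paper first rewrites $-2\langle B^\ast w_{t-1},G_t\rangle=\langle\nabla\Phi(w_{t-1}),G_t\rangle$ using tangency of $G_t$ and then splits, whereas you split first and need only the tangency of $\nabla\Phi$.
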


\begin{proof}
Since $G_t\perp w_{t-1}$, the spherical update~\eqref{eq:SGD} yields
\[
(1+\eta_d^2\|G_t\|^2)R_t = R_{t-1} - 2\eta_d\langle B^\ast w_{t-1},G_t\rangle + \eta_d^2\langle G_t,B^\ast G_t\rangle.
\]
Using $G_t=\nabla\Phi(w_{t-1})+\nabla H_t(w_{t-1})$ and the spherical gradient formula above,
\[
\begin{aligned}
-2\langle B^\ast w_{t-1},G_t\rangle &=\langle\nabla\Phi(w_{t-1}),G_t\rangle\\ &=\|\nabla\Phi(w_{t-1})\|^2+2\Xi_t.
\end{aligned}
\]
Substitution proves the identity.
\end{proof}

The population contribution is therefore nonnegative. Teacher geometry gives the quantitative lower bound needed to amplify a small total overlap.

\begin{proposition}[Population growth of the total overlap]
\label{prop:h2_population_growth}
On the teacher-geometry event $\mathcal G_d$, for every fixed $a\in(0,1)$ there exist constants $c_a,C_a>0$, depending only on $a$, such that, for all sufficiently large $d$, uniformly over $w\in\mathbb S^{d-1}$ with $0<R(w)\le a$,
\[
c_aR(w) \le \|\nabla\Phi(w)\|^2 \le C_aR(w).
\]
\end{proposition}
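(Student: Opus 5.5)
The plan is to compute \(\|\nabla\Phi(w)\|^2\) exactly from the formula \(\nabla\Phi(w)=-2(B^\ast w-R(w)w)\) and then compare it with \(R(w)\) using the spectral bounds on \(B^\ast\) available on \(\mathcal G_d\). Since \(w\) is a unit vector and \(\langle w,B^\ast w\rangle=R(w)\), expanding the square gives
\[
\|\nabla\Phi(w)\|^2 = 4\left(\|B^\ast w\|^2 - 2R(w)\langle w,B^\ast w\rangle + R(w)^2\right) = 4\left(\|B^\ast w\|^2 - R(w)^2\right).
\]
So the whole statement reduces to comparing \(\|B^\ast w\|^2=\langle w,(B^\ast)^2w\rangle\) with \(R(w)\) and \(R(w)^2\).

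Next I would use the fact that \(B^\ast\) vanishes on \((\mathcal S^\ast)^\perp\) and that its restriction to \(\mathcal S^\ast\) has eigenvalues in \([1-r\mu_\ast,1+r\mu_\ast]\), as recorded before~\eqref{eq:h2_subspace_alignment}. Writing \(w^\parallel\) in an eigenbasis of \(B^\ast|_{\mathcal S^\ast}\), with eigenvalues \(\lambda_i\) and coefficients \(u_i\), we have
\[
R(w)=\sum_i\lambda_iu_i^2, \qquad \|B^\ast w\|^2=\sum_i\lambda_i^2u_i^2.
\]
Since every \(\lambda_i\) lies in \([1-r\mu_\ast,1+r\mu_\ast]\), this gives
\[
(1-r\mu_\ast)R(w)\le\|B^\ast w\|^2\le(1+r\mu_\ast)R(w).
\]

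The two bounds then follow directly. For the upper bound, \(\|\nabla\Phi(w)\|^2\le 4\|B^\ast w\|^2\le 4(1+r\mu_\ast)R(w)\le 8R(w)\) once \(d\) is large enough, so one may take \(C_a=8\). For the lower bound, \(R(w)^2\le aR(w)\) because \(R(w)\le a\), and hence
\[
\|\nabla\Phi(w)\|^2\ge 4\left(1-r\mu_\ast-a\right)R(w)\ge 2(1-a)R(w)
\]
as soon as \(r\mu_\ast\le(1-a)/2\). Since \(r\mu_\ast=o(1)\) on \(\mathcal G_d\), this holds for all sufficiently large \(d\), and one may take \(c_a=2(1-a)\).

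The only delicate point is the lower bound near the top of the allowed range, where \(R\) is close to \(a\). The subtracted \(R^2\) term is the spherical-constraint effect, and it must not cancel the leading \(R\) term. This is exactly why the bound requires \(a<1\) to be fixed, and why the constants depend only on \(a\). All estimates are deterministic and uniform over \(w\in\mathbb S^{d-1}\), because they use only the operator bounds on \(W^\ast\) from Lemma~\ref{lem:teacher_geometry}.
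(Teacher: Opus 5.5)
Your proposal is correct and follows the paper's proof essentially step for step. It uses the identity \(\|\nabla\Phi(w)\|^2=4(\|B^\ast w\|^2-R(w)^2)\), the spectral sandwich \((1-r\mu_\ast)R(w)\le\|B^\ast w\|^2\le(1+r\mu_\ast)R(w)\) from the eigenvalues of \(B^\ast\) on \(\mathcal S^\ast\), and then \(R(w)\le a<1\) together with \(r\mu_\ast=o(1)\) on \(\mathcal G_d\).
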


\begin{proof}
Since $\langle w,B^\ast w\rangle=R(w)$ and $\|w\|=1$,
\begin{equation}
\|\nabla\Phi(w)\|^2 = 4\bigl(\|B^\ast w\|^2-R(w)^2\bigr).
\label{eq:h2_population_drift}
\end{equation}
By the spectral comparison established in Subsection~\ref{sec:h2_geometry_initialization}, the nonzero eigenvalues of $B^\ast$ belong to $[1-r\mu_\ast,1+r\mu_\ast]$. Hence,
\[
(1-r\mu_\ast)R(w) \le \|B^\ast w\|^2 \le (1+r\mu_\ast)R(w).
\]
Combining these bounds with \eqref{eq:h2_population_drift} gives
\[
4R(w)(1-R(w)-r\mu_\ast) \le \|\nabla\Phi(w)\|^2 \le 4R(w)(1-R(w)+r\mu_\ast).
\]
On $\mathcal G_d$, we have $r\mu_\ast=o(1)$ uniformly over the teacher. Since $R(w)\le a<1$, the claimed bounds follow for sufficiently large $d$.
\end{proof}

The matrix $B^\ast$ is positive semidefinite, so the last term in~\eqref{eq:h2_energy_decomposition} is nonnegative. Discarding it and applying Proposition~\ref{prop:h2_population_growth}, we obtain, on $\mathcal G_d$ and for sufficiently large $d$,
\begin{equation}
(1+\eta_d^2\|G_t\|^2)R_t \ge (1+c_a\eta_d)R_{t-1}+2\eta_d\Xi_t
\label{eq:h2_overlap_lower_recursion}
\end{equation}
whenever $0<R_{t-1}<a$.
The next subsection controls the centered increment $\Xi_t$ and the normalization term $\eta_d^2\|G_t\|^2$.

\subsection{Noise and gradient moments}
\label{sec:h2_noise_moments}

We work on the teacher-geometry event $\mathcal G_d$. The coordinate and gradient estimates from Sections~\ref{sec:corr_dynamics} and~\ref{sec:weak_recovery_untied}, specialized to $p=P=2$, give the stochastic bounds needed for \eqref{eq:h2_overlap_lower_recursion}.

For the untied procedure, recall the single-weight trajectory $u_t=w_t^{1,\mathsf U}$ and the quantities $G_t^{\mathrm S}$, $\xi_{k,t}^{\mathrm S}$, and $\mathcal F_t^{\mathrm S}$ from Section~\ref{sec:weak_recovery_untied}. Define
\[
\Xi_t^{\mathrm S} \coloneqq \sum_{k=0}^{r-1}m_k(u_{t-1})\xi_{k,t}^{\mathrm S}.
\]
As for the tied process, $\E[\Xi_t^{\mathrm S}\mid\mathcal F_{t-1}^{\mathrm S}]=0$.

\begin{corollary}[Noise and gradient moments]
\label{cor:h2_noise_moments}
There exists a constant $C>0$ such that, for all sufficiently large $d$, every $t\ge1$, and every $\nu\ge2$,
\begin{align}
\E[\Xi_t^2\mid\mathcal F_{t-1}] &\le CrR_{t-1},
\label{eq:h2_noise_variance}\\
\left(\E[|\Xi_t|^\nu\mid\mathcal F_{t-1}]\right)^{1/\nu} &\le C\nu^2\sqrt{rR_{t-1}},
\label{eq:h2_noise_moments}\\
\left(\E[\|G_t\|^\nu\mid\mathcal F_{t-1}]\right)^{1/\nu} &\le C\nu^2\sqrt d.
\label{eq:h2_gradient_moments}
\end{align}
The estimates are uniform over the current weight and over teachers in $\mathcal G_d$. They also hold for the untied single-weight process, with
\[
(G_t,\Xi_t,R_{t-1},\mathcal F_{t-1}) \quad \text{replaced by}\quad (G_t^{\mathrm S},\Xi_t^{\mathrm S}, R(u_{t-1}),\mathcal F_{t-1}^{\mathrm S}).
\]
\end{corollary}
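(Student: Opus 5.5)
The proof reduces all three estimates to Lemma~\ref{lem:coordinate_noise_moments}, Lemma~\ref{lem:single_filter_stochastic} and the moment bound behind Lemma~\ref{lem:uniform_gradient}, all specialized to $p=P=2$, so that the polynomial moment factor $\nu^P$ becomes $\nu^2$. We begin with \eqref{eq:h2_gradient_moments}. In the proofs of Lemmas~\ref{lem:uniform_gradient} and~\ref{lem:single_filter_stochastic}, before the union bound over time, the conditional estimate $(\E[\|G_t\|^\nu\mid\mathcal F_{t-1}])^{1/\nu}\le C\nu^P\sqrt d$ holds uniformly in the current weight. The argument runs as follows. The gradient equals $-y\,\widehat\nabla f_w(x)$ projected onto $w^\perp$. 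The label $y$ is a degree-$2$ Gaussian polynomial with $L^2$ norm bounded on $\mathcal G_d$. Each coordinate of the projected feature gradient in an orthonormal basis of $w^\perp$ is a Gaussian polynomial of degree at most $P$ with uniformly bounded $L^2$ norm, by Lemma~\ref{lem:feature_gradient_covariance}. Hypercontractivity, Cauchy--Schwarz and Minkowski then give the bound. Taking $P=2$ yields \eqref{eq:h2_gradient_moments}.

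\textbf{Variance of $\Xi_t$.} Write $\Xi_t=-\langle v,\nabla H_t(w_{t-1})\rangle$ with $v=(\Id_d-w_{t-1}w_{t-1}^\top)B^\ast w_{t-1}$, which is tangent at $w_{t-1}$. Its norm satisfies $\|v\|^2\le\|B^\ast w_{t-1}\|^2\le(1+r\mu_\ast)R_{t-1}\le 2R_{t-1}$, by the spectral bound in Subsection~\ref{sec:h2_geometry_initialization}. Centering only decreases the variance, so it is enough to bound $\E[y^2\langle v,\widehat\nabla f_w(x)\rangle^2]$. We would rather not go through coordinates: bounding $\Xi_t$ by $\sum_k|m_k|\,|\xi_{k,t}|$ with Cauchy--Schwarz loses a factor $r$. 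Instead we apply the argument of Lemma~\ref{lem:coordinate_noise_moments} directly to the tangent direction $v$. Hölder gives $\E[y^2\langle v,\widehat\nabla f_w\rangle^2]\le\|y\|_{L^4}^2\|\langle v,\widehat\nabla f_w\rangle\|_{L^4}^2$, and hypercontractivity reduces the last factor to $C\,\E[\langle v,\widehat\nabla f_w\rangle^2]$.

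Lemma~\ref{lem:feature_gradient_covariance} with $\phi(z)=z^2$, so $\phi'(z)=2z$ and $\phi''=2$, evaluates this second moment as
\[
\sum_{j}\bigl(2a_j(w)\langle v,\Pi^jv\rangle+2\langle w,\Pi^jv\rangle\langle\Pi^jw,v\rangle\bigr).
\]
Cauchy--Schwarz in $j$ bounds this by $C\|v\|^2\sum_j(|a_j|+1)\le Cr\|v\|^2$. Since $\|v\|^2\le 2R_{t-1}$, this is $\le CrR_{t-1}$, which is \eqref{eq:h2_noise_variance}. This is exactly the mechanism that produces the $1+rm_k^2$ factor in Lemma~\ref{lem:coordinate_noise_moments}. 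There the analysis is for a unit vector $v$, so the constant $1$ dominates, whereas here the scaling with $\|v\|^2$ must be tracked. We expect this to be the main obstacle.

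Concretely, the bound above treats the factor $y$ as $O(1)$ in $L^4$ and charges everything to $r\|v\|^2$. We must make sure the coupling between $y$ and $\langle v,\widehat\nabla f_w\rangle$ does not produce a term of order $\|v\|^2$ times something growing faster than $r$. For $p=2$ both factors are degree-$2$ polynomials, so Hölder plus hypercontractivity gives constant-factor losses only, and this is fine. The $\nu$-th moment bound \eqref{eq:h2_noise_moments} then follows by hypercontractivity of the degree-$4$ polynomial $y\langle v,\widehat\nabla f_w\rangle$, which costs a factor $\nu^{2}$ over the $L^2$ norm; centering costs at most a factor $2$.

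\textbf{Untied process.} For the single-weight gradient $G^{\mathrm S}=-y\sigma'(\langle u,x\rangle)(\Id_d-uu^\top)x$, the same computation gives $\langle v,G^{\mathrm S}\rangle=-y\,\sigma'(\langle u,x\rangle)\langle v,x\rangle$ with $v\perp u$. Hölder and hypercontractivity give $L^\nu$ norm at most $C\nu^2\|v\|\le C\nu^2\sqrt{R}$. This is even better than needed, so the claimed bounds with $\sqrt{rR}$ hold a fortiori. The gradient bound is \eqref{eq:single_filter_noise_mom}/\eqref{eq:single_filter_gradient_bound} before the union bound, with $P=2$.
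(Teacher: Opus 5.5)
Your proof is correct, but for the $\Xi_t$ bounds it takes a different route from the paper.

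The paper expands $\Xi_t=\sum_k m_k(w_{t-1})\xi_{k,t}$ and applies conditional Minkowski with the coordinate bound $C\nu^2(1+\sqrt r\,|m_k|)$ from Lemma~\ref{lem:coordinate_noise_moments}. It then uses Cauchy--Schwarz, $\sum_k|m_k|\le\sqrt{rR_{t-1}}$, together with the uniform bound $R\le 1+r\mu_\ast\le C$ to absorb the $\sqrt r\,R_{t-1}$ term. The untied case is the same computation with the overlap-free bound $C\nu^2$ of Lemma~\ref{lem:single_filter_stochastic}.

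You instead collapse $\Xi_t$ onto a single tangent direction $v=(\Id_d-w_{t-1}w_{t-1}^\top)B^\ast w_{t-1}$ with $\|v\|^2\le(1+r\mu_\ast)R_{t-1}$. You then use that Lemma~\ref{lem:feature_gradient_covariance} holds for any (not necessarily unit) $v\perp w$ and is quadratic in $v$. So the crude termwise bound $4r\|v\|^2$ already gives $CrR_{t-1}$.

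The paper's route is a two-line reuse of an existing lemma. Yours is self-contained from the covariance identity. In the untied case it actually yields the sharper bound $C\nu^2\sqrt{R(u_{t-1})}$. This shows that the extra $\sqrt r$ in the untied statement is only an artifact of summing over coordinates; it is harmless, since the corollary is used only as an upper bound.

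Two side remarks.

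First, your assertion that the coordinate route ``loses a factor $r$'' is not right. Lemma~\ref{lem:coordinate_noise_moments} gives $1+\sqrt r\,|m_k|$ rather than a uniform $\sqrt r$, so the coordinate route produces $C\nu^2(\sqrt{rR}+\sqrt r\,R)\le C\nu^2\sqrt{rR}$. That is exactly the claimed scale, and it is what the paper does.

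Second, your recap of the gradient-moment bound is inaccurate for the tied model. A single coordinate $\langle v_i,\widehat\nabla f_w(x)\rangle$ with unit $v_i\perp w$ need not have bounded $L^2$ norm: for $\Pi$-invariant $w$ and $v_i$, one gets second moment $r\phi'(1)=2r$. What is $O(d)$ is the sum over an orthonormal basis of $w^\perp$, via $\sum_i\langle v_i,\Pi^jv_i\rangle=\tr(\mathrm{P}_w\Pi^j)=-a_j(w)$ for $j\neq0$, where $\mathrm{P}_w=\Id_d-ww^\top$. Since you ultimately invoke \eqref{eq:gradient_high_moments}, which is proved this way, the conclusion \eqref{eq:h2_gradient_moments} is unaffected.
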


\begin{proof}
By~\eqref{eq:h2_subspace_alignment},
\[
R(w)\le1+r\mu_\ast\le C
\]
uniformly over unit weights on $\mathcal G_d$, for sufficiently large $d$. 

Condition on $\mathcal F_{t-1}$. Conditional Minkowski's inequality and Lemma~\ref{lem:coordinate_noise_moments}, with $P=2$, give
\[
\begin{aligned}
\left(\E[|\Xi_t|^\nu\mid\mathcal F_{t-1}]\right)^{1/\nu} &\le C\nu^2\sum_{k=0}^{r-1}
|m_k(w_{t-1})|\bigl(1+\sqrt r\,|m_k(w_{t-1})|\bigr)\\ &\le C\nu^2\bigl(\sqrt{rR_{t-1}}+\sqrt r\,R_{t-1}\bigr)\\ &\le C\nu^2\sqrt{rR_{t-1}}.
\end{aligned}
\]
Here the second inequality uses Cauchy--Schwarz, and the last uses the uniform bound on $R$. This proves~\eqref{eq:h2_noise_moments}; taking $\nu=2$ also gives~\eqref{eq:h2_noise_variance}. The gradient estimate~\eqref{eq:h2_gradient_moments} is the conditional moment bound~\eqref{eq:gradient_high_moments} established in the proof of Lemma~\ref{lem:uniform_gradient} in Appendix~\ref{sec:uniform_gradient_proof}, specialized to $P=2$.

For the untied process, Lemma~\ref{lem:single_filter_stochastic} gives
\[
\left(\E[|\xi_{k,t}^{\mathrm S}|^\nu\mid\mathcal F_{t-1}^{\mathrm S}]\right)^{1/\nu} \le C\nu^2.
\]
Consequently, the same conditional Minkowski and Cauchy--Schwarz inequalities yield
\[
\left(\E[|\Xi_t^{\mathrm S}|^\nu\mid\mathcal F_{t-1}^{\mathrm S}]\right)^{1/\nu} \le C\nu^2\sum_{k=0}^{r-1}|m_k(u_{t-1})| \le C\nu^2\sqrt{rR(u_{t-1})}.
\]
Taking $\nu=2$ gives the corresponding variance bound. The gradient-moment calculation in the proof of Lemma~\ref{lem:single_filter_stochastic}, with $P=2$,
gives the asserted estimate for $G_t^{\mathrm S}$. All these bounds are uniform over the current weight and the conditioned teacher.
\end{proof}

We record two consequences for the growth argument. First, for every fixed $c_0>0$, whenever $R_{t-1}\ge c_0r/d$,
\begin{equation}
\left(
\E\left[\left|\frac{\Xi_t}{R_{t-1}}\right|^\nu\,\middle|\,\mathcal F_{t-1}\right]\right)^{1/\nu} \le C_{c_0}\nu^2\sqrt d, \qquad \nu\ge2.
\label{eq:h2_relative_noise_moments}
\end{equation}
Indeed, $R_{t-1}$ is measurable with respect to the conditioning, and $\sqrt{r/R_{t-1}}\le\sqrt{d/c_0}$. This estimate controls the relative fluctuations of the total overlap.

Second, for every fixed $B,D>0$, there exists $C_{B,D}<\infty$ such that, for every deterministic integer horizon $1\le T\le d^B$,
\begin{equation}
\mathbb P\left(\max_{1\le t\le T}\|G_t\|^2>C_{B,D}d(\log d)^4\,\middle|\,w^\ast,w_0\right) \le d^{-D}.
\label{eq:h2_uniform_gradient_norm}
\end{equation}
This holds for sufficiently large $d$, uniformly over $w^\ast\in\mathcal G_d$ and $w_0\in\mathbb S^{d-1}$. To see this, take $\nu$ to be a sufficiently large multiple of $\log d$ in~\eqref{eq:h2_gradient_moments}. Conditional Markov's inequality makes the failure probability at each step at most $d^{-B-D-1}$, and a union bound over $t\le T$ proves the claim.

Both~\eqref{eq:h2_relative_noise_moments} and~\eqref{eq:h2_uniform_gradient_norm} hold for the untied process with the substitutions in Corollary~\ref{cor:h2_noise_moments}, conditioning on $w^\ast,u_0$ in the latter bound.

\subsection{Growth and proof of the tied upper bound}
\label{sec:h2_tied_recovery}

We combine the preceding estimates to prove exponential growth of the total overlap until it reaches a fixed threshold. Fix $a,\theta,c_{\mathrm{init}}\in(0,1)$ and $C_0>0$, and define
\[
\tau_a^R\coloneqq\inf\{t\ge0:R_t\ge a\}, \qquad T_2\coloneqq \left\lceil\frac{C_0\log(d/r)}{\eta_d}\right\rceil.
\]
We first condition on a teacher $w^\ast\in\mathcal G_d$ and an initialization satisfying
\begin{equation}
R_0\ge c_{\mathrm{init}}\frac rd.
\label{eq:h2_initial_lower_bound}
\end{equation}
By Lemma~\ref{lem:h2_initialization}, a uniform independent initialization satisfies this condition with conditional probability at least $1-Ce^{-cr}$, uniformly over $w^\ast\in\mathcal G_d$.

\begin{proposition}[Growth of the total overlap]
\label{prop:h2_logarithmic_growth}
There exist a universal constant $A_0>0$ and a constant $c_a>0$, depending only on $a$, such that, for every fixed $A>A_0$ and $D>0$, the following holds for sufficiently large $d$.

For every fixed teacher $w^\ast\in\mathcal G_d$ and initialization satisfying~\eqref{eq:h2_initial_lower_bound}, with probability at least $1-d^{-D}$ over the SGD samples, simultaneously for every integer $0\le n\le T_2$,
\begin{equation}
R_{n\wedge\tau_a^R} \ge \theta R_0\exp\bigl(c_a\eta_d(n\wedge\tau_a^R)\bigr).
\label{eq:h2_total_overlap_growth}
\end{equation}
\end{proposition}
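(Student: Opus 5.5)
We work with $\log R_t$ along the stopped process. Let $\tau_{\mathrm{low}}\coloneqq\inf\{t\colon R_t<\theta R_0\}$ and $\tau\coloneqq\tau_a^R\wedge\tau_{\mathrm{low}}\wedge T_2$. For $t\le\tau$ we have $R_{t-1}\ge\theta c_{\mathrm{init}}r/d$. Hence the relative-noise bound~\eqref{eq:h2_relative_noise_moments} applies with $c_0=\theta c_{\mathrm{init}}$. We also work on the gradient event~\eqref{eq:h2_uniform_gradient_norm} with $B$ chosen so that $T_2\le d^B$; this event has failure probability $d^{-D}$.

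First, divide~\eqref{eq:h2_overlap_lower_recursion} by $(1+\eta_d^2\|G_t\|^2)R_{t-1}$ and take logarithms. Using $\log(1+x)\ge x-x^2$ for $|x|\le1/2$, we obtain
\[
\log\frac{R_t}{R_{t-1}} \ge \log(1+c_a\eta_d) + 2\eta_d\frac{\Xi_t}{R_{t-1}} - C\eta_d^2\frac{\Xi_t^2}{R_{t-1}^2} - \eta_d^2\|G_t\|^2 - C\eta_d^2 .
\]
We need $|2\eta_d\Xi_t/R_{t-1}|\le1/2$ (rather, $\le\frac12(1+c_a\eta_d)$) for this expansion. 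On a high-probability truncation event, the moment bound~\eqref{eq:h2_relative_noise_moments} with $\nu\asymp\log d$ gives $|\Xi_t/R_{t-1}|\le C\sqrt d(\log d)^2$ for all $t\le T_2$, with failure probability $\le d^{-D}$ after a union bound. Since $\eta_d\sqrt d(\log d)^2=d^{-1/2}(\log d)^{2-A}\to0$, the condition holds.

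Summing up to $n\wedge\tau$ gives
\[
\log\frac{R_{n\wedge\tau}}{R_0}\ge c_a'\eta_d(n\wedge\tau)+M_{n\wedge\tau}-\mathcal E_{n\wedge\tau},
\]
where $M_n\coloneqq2\eta_d\sum_{t\le n}\mathbf 1_{\{t\le\tau\}}\Xi_t/R_{t-1}$ is a martingale and $c_a'$ is slightly smaller than $c_a$. The remaining error terms are controlled as follows.

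\textbf{Normalization.} The sum $\sum\eta_d^2\|G_t\|^2$ is at most $T_2\eta_d^2Cd(\log d)^4\lesssim \log(d/r)(\log d)^{4-A}=o(1)$ for $A>5$.

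\textbf{Quadratic noise.} The term $\eta_d^2\sum\Xi_t^2/R_{t-1}^2$ is at most $T_2\eta_d^2\,Cd(\log d)^4$ on the truncation event, so it is again $o(1)$.

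\textbf{Martingale.} Its increments are bounded by $C\eta_d\sqrt d(\log d)^2$ after truncation. Its predictable quadratic variation is at most $T_2\eta_d^2\,Cd/(\theta c_{\mathrm{init}})$, which is of order $\log d\,(\log d)^{-A}$ and hence $o(1)$. Freedman's inequality, applied with truncation exactly as in Lemma~\ref{lem:martingale}, gives $\max_{n\le T_2}|M_{n\wedge\tau}|=o(1)$ with probability $\ge1-d^{-D}$. The point is that the relative noise variance per step is of order $\eta_d^2d$, which is $\ll\eta_d$; this is where the choice $\eta_d=d^{-1}(\log d)^{-A}$ is used.

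This martingale step is the main obstacle, because the variance bound degenerates when $R_{t-1}$ is small. This is why we stop at $\tau_{\mathrm{low}}$. We also need a uniform high-probability version to obtain the $1-d^{-D}$ claim, which requires choosing the Freedman deviation level as a $\log d$ multiple.

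On the intersection of these events, for all $n\le T_2$,
\[
\log(R_{n\wedge\tau}/R_0)\ge c_a'\eta_d(n\wedge\tau)-o(1)\ge\log\theta+c_a'\eta_d(n\wedge\tau)
\]
for large $d$, since $\log\theta<0$ is fixed. In particular $R_{n\wedge\tau}\ge\theta R_0$ up to the step at which the process could first drop. A one-step check, using $\eta_d\|G_t\|\to0$ and the truncated $\Xi_t$ bound, shows that $R$ cannot jump below $\theta R_0$ from above while the bound holds. Hence $\tau_{\mathrm{low}}>\tau_a^R\wedge T_2$, so $\tau=\tau_a^R\wedge T_2$, and~\eqref{eq:h2_total_overlap_growth} follows with constant $c_a'$ (rename $c_a$).

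Before the step $t=\tau_a^R$, the inequality $R_{t-1}<a$ needed for~\eqref{eq:h2_overlap_lower_recursion} holds. At $t=\tau_a^R$ itself, the bound~\eqref{eq:h2_total_overlap_growth} is trivial once $R\ge a$, provided $\theta R_0e^{c_a\eta_d\tau_a^R}\le a$; if that fails, we instead use the final-step inequality directly.
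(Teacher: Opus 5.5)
Your proposal is correct and follows essentially the same route as the paper. You stop at a lower barrier so that \eqref{eq:h2_relative_noise_moments} stays valid, expand $\log(R_t/R_{t-1})$ from \eqref{eq:h2_overlap_lower_recursion}, control the stopped relative martingale by truncation plus Freedman, bound the normalization and quadratic remainders by $O((\log d)^{5-A})$, and then rule out the lower exit.

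The differences are cosmetic. The paper puts the barrier at $\beta R_0$ with $\beta<\theta$, whereas you use $\theta R_0$ with a strict inequality; this also works because your comparison already holds through the exit update, so your extra one-step check is unnecessary. Likewise, the step $t=\tau_a^R$ needs no special treatment, since \eqref{eq:h2_overlap_lower_recursion} only requires $R_{t-1}<a$.
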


To apply the relative-noise estimate, fix $\beta\in(0,\theta)$ and introduce
\[
\widehat\tau_a^R \coloneqq \inf\{t\ge0:R_t\le\beta R_0\ \text{or}\ R_t\ge a\}.
\]
For $1\le t\le\widehat\tau_a^R$,
\[
R_{t-1}>\beta R_0 \ge \beta c_{\mathrm{init}}\frac rd,
\]
so~\eqref{eq:h2_relative_noise_moments} applies. We will show that the lower boundary cannot be reached before $T_2$.

\begin{lemma}[Stopped relative-noise bounds]
\label{lem:h2_relative_martingale}
Fix a teacher $w^\ast\in\mathcal G_d$ and an initialization satisfying~\eqref{eq:h2_initial_lower_bound}. For every fixed $D>0$, there exists $C_D<\infty$ such that, for sufficiently large $d$, with conditional probability at least $1-d^{-D}$ over the SGD samples,
\[
\max_{0\le n\le T_2}\left|2\eta_d\sum_{t=1}^{n\wedge\widehat\tau_a^R}\frac{\Xi_t}{R_{t-1}}\right| \le C_D\eta_d\sqrt{dT_2\log d},
\]
and, simultaneously,
\[
\left|\frac{\Xi_t}{R_{t-1}}\right| \le C_D\sqrt d(\log d)^2, \qquad 1\le t\le\widehat\tau_a^R\wedge T_2.
\]
The constant may depend on the fixed parameters, but is uniform over the conditioned teachers and initializations.
\end{lemma}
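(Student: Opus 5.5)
We prove both bounds through a truncation--Freedman argument, modelled on the proof of Lemma~\ref{lem:martingale}, applied to the stopped relative increments
\[
Z_t \coloneqq \mathbf 1_{\{t\le\widehat\tau_a^R\}}\frac{\Xi_t}{R_{t-1}}, \qquad 1\le t\le T_2 .
\]
The indicator $\mathbf 1_{\{t\le\widehat\tau_a^R\}}=\mathbf 1_{\{\widehat\tau_a^R>t-1\}}$ is $\mathcal F_{t-1}$-measurable. By Lemma~\ref{lem:h2_dynamics} we also have $\E[\Xi_t\mid\mathcal F_{t-1}]=0$. Hence $(Z_t)$ is a martingale difference sequence, and the sum in the statement equals $\sum_{t=1}^n Z_t$.

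On $\{t\le\widehat\tau_a^R\}$ we have $R_{t-1}>\beta R_0\ge\beta c_{\mathrm{init}} r/d$. The relative moment bound \eqref{eq:h2_relative_noise_moments} with $c_0=\beta c_{\mathrm{init}}$ therefore gives, uniformly over teachers in $\mathcal G_d$ and admissible initializations,
\[
\bigl(\E[|Z_t|^\nu\mid\mathcal F_{t-1}]\bigr)^{1/\nu}\le C\nu^2\sqrt d,\qquad \nu\ge 2 .
\]
In particular $\E[Z_t^2\mid\mathcal F_{t-1}]\le Cd$.

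\textbf{Pointwise bound.} Take $\nu_d=\lceil b\log d\rceil$ with $b$ large and set $R_d\coloneqq e\,C\nu_d^2\sqrt d\asymp\sqrt d(\log d)^2$. Conditional Markov gives
\[
\mathbb P(|Z_t|>R_d\mid\mathcal F_{t-1})\le e^{-\nu_d}\le d^{-b}.
\]
Since $T_2\le d^2$, choosing $b>D+3$ and taking a union bound over $t\le T_2$ yields the second claim, with failure probability at most $d^{-D}/2$. This holds for every $t\le\widehat\tau_a^R\wedge T_2$, where $Z_t=\Xi_t/R_{t-1}$.

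\textbf{Maximal bound.} Form the truncated, recentred increments
\[
\widehat Z_t \coloneqq Z_t\mathbf 1_{\{|Z_t|\le R_d\}}-\E\bigl[Z_t\mathbf 1_{\{|Z_t|\le R_d\}}\mid\mathcal F_{t-1}\bigr].
\]
These satisfy $|\widehat Z_t|\le 2R_d$, and their predictable quadratic variation up to $T_2$ is at most $CdT_2$. Freedman's inequality, at a deviation level that makes the tail at most $d^{-D}/4$, gives
\[
\max_{n\le T_2}\Bigl|\sum_{t\le n}\widehat Z_t\Bigr|\le C_D\bigl(\sqrt{dT_2\log d}+R_d\log d\bigr).
\]
Now compare the scales:
\[
R_d\log d\asymp\sqrt d(\log d)^3, \qquad \sqrt{dT_2\log d}\gtrsim d(\log d)^{(A+2)/2}.
\]
The second term therefore dominates the first, because $T_2\asymp d(\log d)^{A+1}$. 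The truncation bias is at most $\E[|Z_t|\mathbf 1_{\{|Z_t|>R_d\}}\mid\mathcal F_{t-1}]$. By Cauchy--Schwarz this is at most $C\sqrt d\,d^{-b/2}$. Summed over $T_2$ steps it is $o(1)$ once $b$ is large.

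On the event where no truncation occurs, $Z_t=Z_t\mathbf 1_{\{|Z_t|\le R_d\}}$. The partial sums of $Z_t$ then differ from those of $\widehat Z_t$ only by the bias. Multiplying by $2\eta_d$ gives the first claim.

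\textbf{Uniformity and main obstacle.} Every constant enters only through \eqref{eq:h2_relative_noise_moments} and $c_0=\beta c_{\mathrm{init}}$, which gives the required uniformity. The only delicate point is the one this argument is designed around. The raw moment bound \eqref{eq:h2_noise_moments} divided by $R_{t-1}$ blows up if $R_{t-1}$ becomes small. Stopping at $\widehat\tau_a^R$ keeps the denominator above $\beta c_{\mathrm{init}}r/d$, and this lower bound is exactly what produces the $\sqrt d$ scale. We must make sure the stopped indicator is predictable, so that the martingale property and the conditional moment bounds survive the stopping. That holds here because $\widehat\tau_a^R$ is a stopping time.
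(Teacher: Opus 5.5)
Your proposal is correct and follows essentially the same route as the paper. You stop the relative increments $\Xi_t/R_{t-1}$ at $\widehat\tau_a^R$ so that the relative moment bound applies with $c_0=\beta c_{\mathrm{init}}$, truncate at level $\asymp\sqrt d(\log d)^2$ via conditional Markov with moment order $\asymp\log d$ (which also gives the pointwise bound), recenter, and apply Freedman's maximal inequality. The differences are cosmetic: the paper normalizes by $\sqrt d$ so that the increments have $O(1)$ moments, and it bounds the recentering bias through the $q_d$-th moment rather than Cauchy--Schwarz; neither changes the conclusion.
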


\begin{proof}
Fix the teacher and initialization as in the statement. All probabilities below are conditional on this pair.

\medskip
\noindent\textbf{Step 1: Stopping and conditional moments.}

For $1\le t\le T_2$, define
\[
Y_t\coloneqq
\begin{cases}
\displaystyle\frac{\Xi_t}{\sqrt d\,R_{t-1}},
&t \le \widehat\tau_a^R,\\[1ex]
0,&t>\widehat\tau_a^R.
\end{cases}
\]
The event $\{t\le\widehat\tau_a^R\}$ belongs to $\mathcal F_{t-1}$, and on this event,
\[
R_{t-1}>\beta R_0 \ge \beta c_{\mathrm{init}}\frac rd.
\]
Thus the denominator is positive and $\E[Y_t\mid\mathcal F_{t-1}]=0$. By~\eqref{eq:h2_relative_noise_moments}, there exists $K<\infty$, uniform over the conditioned teacher and initialization, such that
\[
\E[Y_t^2\mid\mathcal F_{t-1}] \le K, \qquad \left(\E[|Y_t|^\nu\mid\mathcal F_{t-1}]\right)^{1/\nu} \le K\nu^2, \qquad \nu\ge2.
\]

\medskip
\noindent\textbf{Step 2: Truncation and recentering.}

Fix $b>D+2$ and set
\[
q_d\coloneqq\lceil b\log d\rceil, \qquad L_d\coloneqq eKq_d^2.
\]
Conditional Markov's inequality gives
\[
\mathbb P\bigl(|Y_t|>L_d\mid\mathcal F_{t-1}\bigr) \le \frac{\E[|Y_t|^{q_d}\mid\mathcal F_{t-1}]}{L_d^{q_d}} \le e^{-q_d}.
\]
Since $T_2\asymp d(\log d)^{A+1}\le d^2$ for sufficiently large $d$, a union bound yields
\[
\mathbb P\left(\max_{1\le t\le T_2}|Y_t|>L_d\right) \le T_2e^{-q_d}.
\]

Define the truncated and recentered increments
\[
\widehat Y_t \coloneqq Y_t\mathbf1_{\{|Y_t|\le L_d\}} - \E\bigl[Y_t\mathbf1_{\{|Y_t|\le L_d\}}\mid\mathcal F_{t-1}\bigr].
\]
They are martingale differences and satisfy $|\widehat Y_t|\le2L_d$. To control the centering correction, use $\E[Y_t\mid\mathcal F_{t-1}]=0$ and the same moment bound:
\[
\begin{aligned}
\left|\E\bigl[Y_t\mathbf1_{\{|Y_t|\le L_d\}}\mid\mathcal F_{t-1}\bigr]\right| &= \left|\E\bigl[Y_t\mathbf1_{\{|Y_t|>L_d\}}\mid\mathcal F_{t-1}\bigr]\right|\\ &\le \frac{\E[|Y_t|^{q_d}\mid\mathcal F_{t-1}]}{L_d^{q_d-1}}\\ &\le L_de^{-q_d}.
\end{aligned}
\]
Consequently, on the event that no increment is truncated,
\[
\max_{0\le n\le T_2}\left|\sum_{t=1}^nY_t\right| \le \max_{0\le n\le T_2}\left|\sum_{t=1}^n\widehat Y_t\right| + T_2L_de^{-q_d}.
\]

\medskip
\noindent\textbf{Step 3: Maximal concentration.}

Centering decreases the second moment, so
\[
\E[\widehat Y_t^2\mid\mathcal F_{t-1}] \le \E\bigl[Y_t^2\mathbf1_{\{|Y_t|\le L_d\}}\mid\mathcal F_{t-1}\bigr] \le K.
\]
The predictable quadratic variation is therefore at most $KT_2$. Applying Freedman's maximal inequality \citep{freedman1975tail} to both signs of the martingale
gives, for every $z>0$,
\[
\mathbb P\left(\max_{0\le n\le T_2}\left|\sum_{t=1}^n\widehat Y_t\right| \ge z\right) \le 2\exp\left(-\frac{z^2}{2(KT_2+\frac23L_dz)}\right).
\]
Take
\[
z_d\coloneqq \sqrt{2KT_2q_d}+\frac43L_dq_d.
\]
This choice satisfies
\[
z_d^2 \ge 2q_d\left(KT_2+\frac23L_dz_d\right),
\]
so the preceding failure probability is at most
$2e^{-q_d}$.

Combining this event with the absence of truncation, with probability at least
\[
1-(T_2+2)e^{-q_d}\ge1-d^{-D}
\]
for sufficiently large $d$, we have simultaneously
\[
\max_{0\le n\le T_2}\left|\sum_{t=1}^nY_t\right|\le z_d+T_2L_de^{-q_d}, \qquad \max_{1\le t\le T_2}|Y_t|\le L_d.
\]
Here
\[
z_d \le C_D\bigl(\sqrt{T_2\log d}+(\log d)^3\bigr), \qquad T_2L_de^{-q_d} \le C_D(\log d)^2d^{2-b} = o(1).
\]
Since $(\log d)^3=o(\sqrt{T_2\log d})$, multiplying the partial-sum bound by $2\eta_d\sqrt d$ proves the first assertion. The second follows from $|\Xi_t/R_{t-1}|=\sqrt d\,|Y_t|$ before and including the stopped update, and $L_d\le C_D(\log d)^2$.
\end{proof}

\begin{proof}[Proof of Proposition~\ref{prop:h2_logarithmic_growth}]
Fix a teacher $w^\ast\in\mathcal G_d$ and an initialization satisfying~\eqref{eq:h2_initial_lower_bound}. All probabilities below are conditional on this pair. If $R_0\ge a$, then $\tau_a^R=0$ and the conclusion is immediate. Assume henceforth that $R_0<a$.

\medskip
\noindent\textbf{Step 1: Control of the stochastic terms.}

Fix $D'>D$. Apply Lemma~\ref{lem:h2_relative_martingale} and \eqref{eq:h2_uniform_gradient_norm}, both with failure
exponent $D'$. With probability at least $1-2d^{-D'}$, the following bounds hold simultaneously:
\begin{align}
\max_{0\le n\le T_2}\left|2\eta_d\sum_{t=1}^{n\wedge\widehat\tau_a^R}\frac{\Xi_t}{R_{t-1}}\right| &\le C\eta_d\sqrt{dT_2\log d},
\label{eq:h2_cumulative_relative_noise}\\
\max_{1\le t\le\widehat\tau_a^R\wedge T_2}\left|\frac{\Xi_t}{R_{t-1}}\right| &\le C\sqrt d(\log d)^2,
\label{eq:h2_individual_relative_noise}\\
\max_{1\le t\le T_2}\|G_t\|^2 &\le Cd(\log d)^4.
\label{eq:h2_growth_normalization}
\end{align}
We work on this intersection. The first estimate controls the accumulated linear
noise term. The second ensures that each logarithmic increment is well defined and bounds its quadratic remainder. The third controls the spherical normalization.

\medskip
\noindent\textbf{Step 2: Logarithmic comparison.}

For $1\le t\le\widehat\tau_a^R\wedge T_2$, we have $0<R_{t-1}<a$. Therefore \eqref{eq:h2_overlap_lower_recursion} gives
\[
\frac{R_t}{R_{t-1}} \ge \frac{1+c_a\eta_d+2\eta_d\Xi_t/R_{t-1}}{1+\eta_d^2\|G_t\|^2},
\]
where $c_a$ is the lower-bound constant from Proposition~\ref{prop:h2_population_growth}. By~\eqref{eq:h2_individual_relative_noise},
\[
\left|c_a\eta_d+2\eta_d\frac{\Xi_t}{R_{t-1}}\right| \le C\eta_d\sqrt d(\log d)^2=o(1).
\]

Thus the numerator is positive for sufficiently large $d$, including at the update reaching either stopping boundary. In particular, $R_t>0$ throughout these updates, so we may take logarithms.

Using $\log(1+x)\ge x-Cx^2$ for sufficiently small $|x|$ and $\log(1+y)\le y$ for $y\ge0$, we obtain
\[
\log\frac{R_t}{R_{t-1}} \ge c_a\eta_d + 2\eta_d\frac{\Xi_t}{R_{t-1}}-C\eta_d^2\left(1+\left|\frac{\Xi_t}{R_{t-1}}\right|^2\right) - \eta_d^2\|G_t\|^2.
\]
Applying~\eqref{eq:h2_individual_relative_noise} and~\eqref{eq:h2_growth_normalization} therefore gives
\begin{equation}
\log\frac{R_t}{R_{t-1}} \ge c_a\eta_d + 2\eta_d\frac{\Xi_t}{R_{t-1}} - C\eta_d^2d(\log d)^4.
\label{eq:h2_logarithmic_increment}
\end{equation}

Summing the logarithmic increments up to $n\wedge\widehat\tau_a^R$ and using \eqref{eq:h2_cumulative_relative_noise}, we find, simultaneously for every $0\le n\le T_2$,
\[
\log\frac{R_{n\wedge\widehat\tau_a^R}}{R_0} \ge c_a\eta_d(n\wedge\widehat\tau_a^R)-C\left[\eta_d\sqrt{dT_2\log d} + \eta_d^2dT_2(\log d)^4\right].
\]
Since $T_2\asymp\eta_d^{-1}\log d$,
\[
\begin{aligned}
\eta_d\sqrt{dT_2\log d} &= O\bigl(\sqrt{\eta_dd}\,\log d\bigr) = O\bigl((\log d)^{1-A/2}\bigr),\\
\eta_d^2dT_2(\log d)^4 &= O\bigl(\eta_dd(\log d)^5\bigr) = O\bigl((\log d)^{5-A}\bigr).
\end{aligned}
\]
Both errors tend to zero for a sufficiently large universal choice of $A_0$. For sufficiently large $d$, their total contribution, including the constant $C$, is at most $-\log\theta$. Exponentiating yields
\[
R_{n\wedge\widehat\tau_a^R} \ge \theta R_0 \exp\bigl(c_a\eta_d(n\wedge\widehat\tau_a^R)\bigr), \qquad 0\le n\le T_2.
\]

\medskip
\noindent\textbf{Step 3: Exclusion of the lower exit.}

Suppose that the auxiliary process exits through its lower boundary at some $\widehat\tau_a^R\le T_2$. Evaluating the preceding inequality at this time gives
\[
R_{\widehat\tau_a^R} \ge \theta R_0\exp(c_a\eta_d\widehat\tau_a^R) \ge\theta R_0>\beta R_0.
\]
This contradicts the condition $R_{\widehat\tau_a^R}\le\beta R_0$ defining a lower exit. Therefore the auxiliary stopping time can only record an upper exit before $T_2$, and
\[
\widehat\tau_a^R\wedge T_2=\tau_a^R\wedge T_2.
\]
Replacing the auxiliary stopping time by $\tau_a^R$ in the growth estimate proves \eqref{eq:h2_total_overlap_growth}.

The event used in the proof has conditional probability at least $1-2d^{-D'}\ge1-d^{-D}$ for sufficiently large $d$. All constants are uniform over the fixed teacher and initialization under consideration.
\end{proof}

We now translate total-overlap growth into subspace recovery.

\begin{proof}[Proof of part~\textup{(T)} of Theorem~\ref{thm:h2_weak_recovery_appendix}]
Choose $a=a(\rho)\in(\rho,1)$ and fix $\theta,c_{\mathrm{init}}\in(0,1)$. Let $c_a$ be the constant from Proposition~\ref{prop:h2_logarithmic_growth}, and choose $C_0=C_0(\rho)$ such that $c_aC_0>1$.

By Lemmas~\ref{lem:teacher_geometry} and~\ref{lem:h2_initialization}, the teacher belongs to $\mathcal G_d$ and $R_0\ge c_{\mathrm{init}}r/d$ with probability $1-o(1)$. The uniform conditional estimate in Proposition~\ref{prop:h2_logarithmic_growth} then shows that~\eqref{eq:h2_total_overlap_growth} also holds up to $T_2$ with probability $1-o(1)$. Work on this intersection.

If $\tau_a^R>T_2$, then
\[
\begin{aligned}
a &>R_{T_2}\\ &\ge \theta c_{\mathrm{init}}\frac rd\exp(c_a\eta_dT_2)\\ &\ge \theta c_{\mathrm{init}}\left(\frac dr\right)^{c_aC_0-1}.
\end{aligned}
\]
The last expression diverges because $d/r\to\infty$ and $c_aC_0>1$, a contradiction. Thus $\tau_a^R\le T_2$.

At that time,~\eqref{eq:h2_subspace_alignment} gives
\[
\|w_{\tau_a^R}^\parallel\|^2 \ge \frac{R_{\tau_a^R}}{1+r\mu_\ast} \ge \frac{a}{1+r\mu_\ast} > \rho
\]
for sufficiently large $d$. Consequently,
\[
\tau_\rho^{(2),\mathsf T} \le \tau_a^R \le \left\lceil\frac{C_0\log(d/r)}{\eta_d}\right\rceil.
\]
Absorbing the ceiling into a constant $C_T(\rho)$ proves part~\textup{(T)}.
\end{proof}

\subsection{Augmented and untied upper bounds}
\label{sec:h2_other_students}

We complete the proof of Theorem~\ref{thm:h2_weak_recovery_appendix} using the componentwise reductions from Section~\ref{sec:weak_recovery_untied}. For each procedure, it suffices to follow one fixed weight.

\begin{proof}[Proof of part~\textup{(A)} of Theorem~\ref{thm:h2_weak_recovery_appendix}]
By~\eqref{eq:augmented_effective_update}, the first augmented weight follows the tied update with reference step size $\eta_d$. Its initialization has the same distribution as the tied initialization, so part~\textup{(T)} gives, with probability $1-o(1)$,
\[
\max_{0\le t\le C_T\log(d/r)/\eta_d}\|(w_t^{1,\mathsf A})^\parallel\|^2\ge\rho.
\]
Recovery of this weight implies recovery of the student. Thus part~\textup{(A)} holds with $C_A=C_T$.
\end{proof}

\begin{proof}[Proof of part~\textup{(U)} of Theorem~\ref{thm:h2_weak_recovery_appendix}]
\noindent\textbf{Step 1: Effective recursion.}
Retain the single-weight trajectory $u_t=w_t^{1,\mathsf U}$ and the quantities $G_t^{\mathrm S}$ and $\Xi_t^{\mathrm S}$ from Subsection~\ref{sec:h2_noise_moments}, and write
\[
R_t^{\mathsf U}\coloneqq R(u_t).
\]
By~\eqref{eq:single_filter_population}, $\Psi=\Phi/\sqrt r$. Applying the quadratic-form calculation of Lemma~\ref{lem:h2_dynamics} to the effective update~\eqref{eq:untied_effective_update} therefore gives
\begin{equation}
\left(1+\frac{\eta_d^2}{r}\|G_t^{\mathrm S}\|^2\right)R_t^{\mathsf U} =R_{t-1}^{\mathsf U} + \frac{\eta_d}{r}\|\nabla\Phi(u_{t-1})\|^2 + \frac{2\eta_d}{\sqrt r}\Xi_t^{\mathrm S} + \frac{\eta_d^2}{r}\langle G_t^{\mathrm S},B^\ast G_t^{\mathrm S}\rangle.
\label{eq:h2_untied_energy}
\end{equation}
Since $B^\ast$ is positive semidefinite, Proposition~\ref{prop:h2_population_growth} yields
\begin{equation}
\left(1+\frac{\eta_d^2}{r}\|G_t^{\mathrm S}\|^2\right)R_t^{\mathsf U} \ge \left(1+\frac{c_a\eta_d}{r}\right)R_{t-1}^{\mathsf U} + \frac{2\eta_d}{\sqrt r}\Xi_t^{\mathrm S}
\label{eq:h2_untied_lower_recursion}
\end{equation}
whenever $0<R_{t-1}^{\mathsf U}<a$, on $\mathcal G_d$ and for sufficiently large $d$. Thus the drift and normalization coefficients are divided by $r$, while the noise coefficient is divided by $\sqrt r$.

\medskip
\noindent\textbf{Step 2: Stopped logarithmic growth.}
Choose $a=a(\rho)\in(\rho,1)$, fix $\theta,c_{\mathrm{init}}\in(0,1)$ and $\beta\in(0,\theta)$, and choose $C_0=C_0(\rho)$ such that $c_aC_0>1$. Set
\[
T_2^{\mathsf U} \coloneqq \left\lceil\frac{C_0r\log(d/r)}{\eta_d}\right\rceil.
\]
Condition on a teacher $w^\ast\in\mathcal G_d$ and an initialization satisfying $R_0^{\mathsf U}\ge c_{\mathrm{init}}r/d$. If $R_0^{\mathsf U}\ge a$, recovery already holds for sufficiently large $d$ by~\eqref{eq:h2_subspace_alignment}. Otherwise, define
\[
\widehat\tau_a^{R,\mathsf U} \coloneqq \inf\{t\ge0:R_t^{\mathsf U}\le\beta R_0^{\mathsf U}\ \text{or}\ R_t^{\mathsf U}\ge a\}.
\]
Before and including the update reaching this stopping time,
\[
R_{t-1}^{\mathsf U} > \beta c_{\mathrm{init}}\frac rd.
\]
Hence the untied relative-noise estimate from Subsection~\ref{sec:h2_noise_moments} applies.

The variables $\Xi_t^{\mathrm S}/(\sqrt d\,R_{t-1}^{\mathsf U})$, set to zero after $\widehat\tau_a^{R,\mathsf U}$, are martingale differences with conditional $L^\nu$-norm at most $C\nu^2$. The stopping indicator includes the exit update and is measurable with respect to $\mathcal F_{t-1}^{\mathrm S}$. Since $r=d^\delta$ with $\delta<1/2$ and $A$ is fixed,
\[
T_2^{\mathsf U} \asymp d^{1+\delta}(\log d)^{A+1} \le d^2
\]
for sufficiently large $d$. Thus the truncation and concentration argument in the proof of Lemma~\ref{lem:h2_relative_martingale} applies with $T_2$ replaced by $T_2^{\mathsf U}$. It controls both the cumulative stopped relative noise and the individual increments over this horizon. Together with the untied version of \eqref{eq:h2_uniform_gradient_norm}, these estimates give the logarithmic comparison below.

More precisely, the one-step perturbation of the numerator in~\eqref{eq:h2_untied_lower_recursion}, after division by $R_{t-1}^{\mathsf U}$, satisfies
\[
\left|\frac{c_a\eta_d}{r} + \frac{2\eta_d}{\sqrt r}\frac{\Xi_t^{\mathrm S}}{R_{t-1}^{\mathsf U}}\right| \le C_D\frac{\eta_d}{\sqrt r}\sqrt d(\log d)^2 = o(1).
\]
Thus the numerator remains positive through the exit update. Expanding the logarithm and summing, as in Subsection~\ref{sec:h2_tied_recovery}, gives, for every fixed $D>0$, with conditional probability at least $1-d^{-D}$, simultaneously for $0 \le n\le T_2^{\mathsf U}$,
\[
\log\frac{R_{n\wedge\widehat\tau_a^{R,\mathsf U}}^{\mathsf U}}{R_0^{\mathsf U}} \ge
\frac{c_a\eta_d}{r}(n\wedge\widehat\tau_a^{R,\mathsf U})-C_D\left[\frac{\eta_d}{\sqrt r}\sqrt{dT_2^{\mathsf U}\log d} + \frac{\eta_d^2}{r}dT_2^{\mathsf U}(\log d)^4\right].
\]
All conditional estimates are uniform over the teacher and initialization under consideration.

Since $T_2^{\mathsf U}\asymp r\eta_d^{-1}\log d$, the accumulated errors satisfy
\[\frac{\eta_d}{\sqrt r}\sqrt{dT_2^{\mathsf U}\log d} = O\bigl((\log d)^{1-A/2}\bigr), \qquad \frac{\eta_d^2}{r}dT_2^{\mathsf U}(\log d)^4 = O\bigl((\log d)^{5-A}\bigr).
\]
These are the same orders as in the tied proof. For the same sufficiently large universal choice of $A_0$, their total contribution tends to zero. Consequently, for sufficiently large $d$,
\[
R_{n\wedge\widehat\tau_a^{R,\mathsf U}}^{\mathsf U} \ge \theta R_0^{\mathsf U}\exp\left(
\frac{c_a\eta_d}{r}(n\wedge\widehat\tau_a^{R,\mathsf U})\right), \qquad 0\le n\le T_2^{\mathsf U}.
\]

\medskip
\noindent\textbf{Step 3: Reaching the recovery threshold.}

Since $\theta>\beta$, this excludes an exit through the lower boundary before $T_2^{\mathsf U}$, exactly as in the tied proof. Moreover, if $\widehat\tau_a^{R,\mathsf U}>T_2^{\mathsf U}$, then
\[
a>R_{T_2^{\mathsf U}}^{\mathsf U} \ge \theta c_{\mathrm{init}}\left(\frac dr\right)^{c_aC_0-1},
\]
which is impossible for sufficiently large $d$. Therefore the process reaches $R_t^{\mathsf U} \ge a$ at some $t\le T_2^{\mathsf U}$. At that time,
\[
\|u_t^\parallel\|^2 \ge \frac{R_t^{\mathsf U}}{1+r\mu_\ast} \ge \frac{a}{1+r\mu_\ast}>\rho.
\]

Finally, Lemma~\ref{lem:h2_initialization} bounds the excluded initialization probability by $Ce^{-cr}$, uniformly over $w^\ast\in\mathcal G_d$. Averaging the conditional estimates and using Lemma~\ref{lem:teacher_geometry} shows that, with probability $1-o(1)$,
\[
\tau_\rho^{(2),\mathsf U} \le T_2^{\mathsf U} \le \frac{C_U(\rho)r\log(d/r)}{\eta_d},
\]
after absorbing the ceiling into $C_U(\rho)$. This proves part~\textup{(U)}.
\end{proof}

Together with part~\textup{(T)}, a finite union bound gives the three recovery guarantees simultaneously with probability $1-o(1)$.

\begin{remark}[Shifted tied weights]
The subspaces $\mathcal S^\ast$ and $(\mathcal S^\ast)^\perp$ are invariant under $\Pi$. Consequently,
\[
(\Pi^kw)^\parallel=\Pi^kw^\parallel, \qquad \|(\Pi^kw)^\parallel\|^2=\|w^\parallel\|^2.
\]
Thus, when the tied weight achieves subspace recovery, all its cyclic shifts have the same projection mass in the teacher subspace.
\end{remark}

\section{Lower bounds for weak recovery}
\label{sec:weak_recovery_lower_bounds}

In this section, we prove Theorem~\ref{thm:nonescape} under its assumptions on the teacher, initialization, and scaling. The link is a fixed centered polynomial of degree $P$ with information exponent $p\ge2$. The population identities and componentwise reductions from Sections~\ref{sec:corr_dynamics} and~\ref{sec:weak_recovery_untied} apply to this entire class: their derivations use the finite Hermite expansion and do not require a parity assumption. Throughout, $A>0$ is fixed, the reference learning rate is
\[
\eta_d=d^{-p/2}(\log d)^{-A},
\]
and the three procedures use the step sizes in~\eqref{eq:student_stepsizes}. We retain the notation $w_t^{j,\mathsf k}$ and $s_{\mathsf k}$ for the iterates and numbers of weights.

\subsection{Proof strategy}
\label{sec:nonescape_strategy}

We show that all absolute correlations remain at their initialization scale $O(\gamma_d)$ for a sufficiently small constant multiple of the microscopic growth time. This excludes both weak directional recovery and weak subspace recovery. The argument applies simultaneously to every weight, including when the procedures share their observations.

The proof has two ingredients. First, we bound the absolute population drift by the largest absolute correlation raised to the power $p-1$. Lemma~\ref{lem:exact_damping} then gives a pathwise upper comparison that retains the exact spherical normalization. Second, we control the accumulated coordinate noise until the first escape from the initialization scale, uniformly over all weights and teacher directions.

For the tied and augmented procedures, the relevant horizon is of order $(\eta_d\gamma_d^{p-2})^{-1}$. For the untied procedure, it is longer by a factor $r$. On these horizons, the stochastic contribution is $o(\gamma_d)$, while a sufficiently small time constant makes the accumulated drift smaller than the distance to the escape threshold. Evaluating the comparison at a hypothetical first exit gives the desired contradiction.

\subsection{Initialization and escape times}
\label{sec:nonescape_initialization}

Define the largest absolute correlation of a weight and of a procedure by
\[
\widehat M(w) \coloneqq \max_{0\le k<r}|m_k(w)|, \qquad \widehat M_t^{\mathsf k} \coloneqq \max_{1\le j\le s_{\mathsf k}}\widehat M(w_t^{j,\mathsf k}).
\]
We control absolute correlations for every $p\ge2$.

Lemma~\ref{lem:initialization_scale} establishes that $\widehat M(w_0)=O(\gamma_d)$ with probability $1-o(1)$
for a single random weight. Here we need this upper bound simultaneously over all initial weights, so we use a quantitative tail estimate before taking the union bound.

Fix a teacher $w^\ast$. By Assumption~\ref{hyp:teacher_init}, each initial weight is conditionally uniform on $\mathbb S^{d-1}$. The standard spherical-cap bound therefore gives, for every procedure, weight, and teacher direction,
\[
\mathbb P\left(|m_k(w_0^{j,\mathsf k})|>z\,\middle|\,w^\ast\right) \le C e^{-cdz^2}, \qquad 0<z<1,
\]
with universal constants $c,C>0$.
Since $\gamma_d=o(1)$, we may take $z=C_{\mathrm{init}}\gamma_d$ for any fixed $C_{\mathrm{init}}>0$ and sufficiently large $d$. A union bound over the $(1+2s)r$ initial correlations then yields
\begin{equation}
\begin{split}
\mathbb P\left(\max_{\mathsf k\in\{\mathsf T,\mathsf U,\mathsf A\}}\widehat M_0^{\mathsf k} > C_{\mathrm{init}}\gamma_d\,\middle|\,w^\ast\right) &\le C(1+2s)re^{-cC_{\mathrm{init}}^2\log r} \le C d^{\kappa+\delta-cC_{\mathrm{init}}^2\delta}.
\end{split}
\label{eq:nonescape_initialization}
\end{equation}
The last inequality uses $s\le d^\kappa$ and $r=d^\delta$ from Assumption~\ref{hyp:scaling}. Choose $C_{\mathrm{init}}=C_{\mathrm{init}}(\delta,\kappa)$ large enough that $cC_{\mathrm{init}}^2\delta>\kappa+\delta$. Then~\eqref{eq:nonescape_initialization} is $o(1)$, uniformly over the teacher.

Combining this estimate with Lemma~\ref{lem:teacher_geometry}, we obtain, with probability $1-o(1)$,
\[
w^\ast\in\mathcal G_d, \qquad \max_{\mathsf k\in\{\mathsf T,\mathsf U,\mathsf A\}}\widehat M_0^{\mathsf k} \le C_{\mathrm{init}}\gamma_d.
\]
We will first establish the subsequent estimates conditionally on a teacher and initial weights satisfying these conditions.

Fix any constant $C_{\mathrm{esc}}>C_{\mathrm{init}}$ and write $\tau_{\mathrm{esc}}^{\mathsf k} \coloneqq\tau_{\mathrm{esc}}^{\mathsf k}(C_{\mathrm{esc}})$ for the escape times defined in~\eqref{eq:nonescape_escape_times}. On the initialization event above, $\tau_{\mathrm{esc}}^{\mathsf k}\ge1$. For every integer $1\le t\le\tau_{\mathrm{esc}}^{\mathsf k}$,
\[
\max_{1\le j\le s_{\mathsf k}}\widehat M(w_{t-1}^{j,\mathsf k}) < C_{\mathrm{esc}}\gamma_d.
\]
Thus bounds evaluated at the preceding iterate remain valid through the update producing the first exit. We will use this observation to apply the upper comparison at a hypothetical exit time and obtain a contradiction.

\subsection{An upper comparison for the correlations}
\label{sec:nonescape_comparison}

Throughout this subsection, $\Phi$ denotes the tied population objective. The finite Hermite expansion implies
\[
|\phi'(z)|\le C_\sigma|z|^{p-1}, \qquad |z|\le1.
\]
We combine this bound with Lemma~\ref{lem:population_drift} and the overlap-energy estimate~\eqref{eq:teacher_overlap_energy} from Appendix~\ref{sec:coordinate_noise_moments_proof}.

Let $\mathcal F_t$ denote the joint filtration generated by the teacher, all initial weights, and all observations used through time $t$. For $\mathsf k\in\{\mathsf T,\mathsf A\}$, let $G_t^{j,\mathsf k}$ be the tied sample gradient evaluated at $w_{t-1}^{j,\mathsf k}$. For $\mathsf k=\mathsf U$, let it be the single-weight sample gradient associated with \eqref{eq:single_filter_loss}. Define
\[
\xi_{k,t}^{j,\mathsf k} \coloneqq -\left\langle\Pi^kw^\ast,\, G_t^{j,\mathsf k} - \mathbb E[G_t^{j,\mathsf k}\mid\mathcal F_{t-1}]\right\rangle.
\]
These are the coordinate fluctuations already used in Sections~\ref{sec:corr_dynamics} and~\ref{sec:weak_recovery_untied}, now indexed by the procedure and weight. Fresh-sample independence gives
\[
\mathbb E[G_t^{j,\mathsf k}\mid\mathcal F_{t-1}] =
\begin{cases}
\nabla\Phi(w_{t-1}^{j,\mathsf k}), &\mathsf k\in\{\mathsf T,\mathsf A\},\\ 
\nabla\Psi(w_{t-1}^{j,\mathsf U}), &\mathsf k=\mathsf U,
\end{cases}
\]
where $\Psi$ is the single-weight population objective from~\eqref{eq:single_filter_population}. By the definition of the centered fluctuations,
\[
\mathbb E[\xi_{k,t}^{j,\mathsf k}\mid\mathcal F_{t-1}] = 0.
\]

\begin{lemma}[Upper comparison before escape]
\label{lem:nonescape_drift}
On $\mathcal G_d$, there exists $C_\sigma<\infty$ such that, for all sufficiently large $d$ and every $w\in\mathbb S^{d-1}$, 
\[
\max_{0\le k<r}\left|\left\langle\Pi^kw^\ast,\nabla\Phi(w)\right\rangle\right| \le C_\sigma\widehat M(w)^{p-1}.
\]
Moreover, for $\mathsf k\in\{\mathsf T,\mathsf A\}$ and every integer $0\le n\le\tau_{\mathrm{esc}}^{\mathsf k}$,
\begin{equation}
\widehat M_n^{\mathsf k} \le \widehat M_0^{\mathsf k} + C_\sigma\eta_dn(C_{\mathrm{esc}}\gamma_d)^{p-1}+2\eta_d\max_{\substack{1\le j\le s_{\mathsf k}\\0\le k<r}}\max_{0\le h\le n}\left|\sum_{t=1}^h\xi_{k,t}^{j,\mathsf k}\right|.
\label{eq:nonescape_tied_comparison}
\end{equation}
For the untied procedure and every integer $0\le n\le\tau_{\mathrm{esc}}^{\mathsf U}$,
\begin{equation}
\widehat M_n^{\mathsf U} \le \widehat M_0^{\mathsf U} + C_\sigma\frac{\eta_d}{r}
n(C_{\mathrm{esc}}\gamma_d)^{p-1}+\frac{2\eta_d}{\sqrt r}\max_{\substack{1\le j\le s\\0\le k<r}}\max_{0\le h\le n}\left|\sum_{t=1}^h\xi_{k,t}^{j,\mathsf U}\right|.
\label{eq:nonescape_untied_comparison}
\end{equation}
Both comparisons are pathwise and include the first exit step.
\end{lemma}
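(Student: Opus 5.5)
The plan has two parts. First we prove the pointwise drift bound. Then we prove the pathwise comparisons by applying the damping Lemma~\ref{lem:exact_damping} coordinatewise.

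\textbf{Drift bound.} By Lemma~\ref{lem:population_drift},
\[
|\langle\Pi^kw^\ast,\nabla\Phi(w)\rangle|\le|\phi'(m_k)|+\sum_{j\ne k}|c_{j-k}||\phi'(m_j)|+|m_k|\sum_j|m_j||\phi'(m_j)|.
\]
We bound the three terms separately using $|\phi'(z)|\le C_\sigma|z|^{p-1}$.
\begin{itemize}
\item The first term is at most $C_\sigma\widehat M^{p-1}$.
\item The second term is at most $C_\sigma r\mu_\ast\widehat M^{p-1}$, and $r\mu_\ast=o(1)$ on $\mathcal G_d$.
\item For the third term, $\sum_j|m_j||\phi'(m_j)|\le C_\sigma\widehat M^{p-2}\sum_j m_j^2$. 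The overlap-energy estimate~\eqref{eq:teacher_overlap_energy} gives $\sum_jm_j^2\le(1+r\mu_\ast)\|w^\parallel\|^2\le C$. Together with $|m_k|\le\widehat M$, the third term is $O(\widehat M^{p-1})$.
\end{itemize}
This proves the first claim.

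\textbf{Comparison.} Fix a procedure, a weight $j$ and a coordinate $k$, and write $x_t=m_k(w_t^{j,\mathsf k})$. From~\eqref{eq:ev_corr}, in its untied analogue~\eqref{eq:untied_overlap_exact} respectively the augmented reduction~\eqref{eq:augmented_effective_update}, the exact recursion is
\[
d_tx_{t+1}=x_t+\eta F_t+\eta' \xi_{k,t+1},
\]
with the following ingredients.
\begin{itemize}
\item $d_t=\sqrt{1+\eta_d^2\|G_{t+1}\|^2}\ge1$ for the tied and augmented procedures, and the analogous normalizer with $\eta_d^2/r$ for the untied one.
\item $F_t=-\langle\Pi^kw^\ast,\nabla\Phi(w_t)\rangle$ is the population drift.
\item The step sizes are $(\eta,\eta')=(\eta_d,\eta_d)$ for $\mathsf k\in\{\mathsf T,\mathsf A\}$ and $(\eta_d/r,\eta_d/\sqrt r)$ for $\mathsf U$.
\end{itemize}
We apply Lemma~\ref{lem:exact_damping} with $q_t\equiv1$, $N_t=\eta'\sum_{\ell\le t}\xi_{k,\ell}$ and forcing $\eta F_t$. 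This gives
\[
|x_n|\le|x_0|+\eta\sum_{t<n}|F_t|+2\max_{h\le n}|N_h|.
\]
For $t<n\le\tau^{\mathsf k}_{\mathrm{esc}}$ we have $\widehat M(w_t)<C_{\mathrm{esc}}\gamma_d$, as noted in Subsection~\ref{sec:nonescape_initialization}. The drift bound then gives $|F_t|\le C_\sigma(C_{\mathrm{esc}}\gamma_d)^{p-1}$, which for the untied case carries the extra factor $r^{-1}$ already absorbed into $\eta$. The step $n=\tau_{\mathrm{esc}}$ itself is included, because only the iterates before it enter the forcing. Finally we take the maximum over $j$ and $k$ and bound $|x_0|\le\widehat M_0^{\mathsf k}$. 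This yields \eqref{eq:nonescape_tied_comparison} and \eqref{eq:nonescape_untied_comparison}.

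\textbf{Main obstacle.} The delicate point is ensuring that the normalization factor can be treated as a pure damping, with $d_t\ge1$ and $q_t=1$, rather than as an error term. Because $d_t\ge1$ it only shrinks $|x|$, so no gradient-norm bound is needed here. This is what makes the comparison hold pathwise and deterministically, without any probabilistic input.
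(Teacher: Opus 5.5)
Your proposal is correct and follows the paper's proof essentially step for step. You use the same three-term bound on the drift from Lemma~\ref{lem:population_drift}, with $r\mu_\ast=o(1)$ and the overlap-energy estimate~\eqref{eq:teacher_overlap_energy}. You then apply Lemma~\ref{lem:exact_damping} with $q_t\equiv1$ coordinatewise to the exact normalized recursion, with forcing $\eta_d/r$ and noise $\eta_d/\sqrt r$ in the untied case, and bound the forcing only at iterates strictly before escape, so the first exit step is included.
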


\begin{proof}
\noindent\textbf{Population drift.}
Fix $w\in\mathbb S^{d-1}$ and write $M=\widehat M(w)$. If $M=0$, then every correlation vanishes and $\phi'(0)=0$, so Lemma~\ref{lem:population_drift} gives $\nabla\Phi(w)=0$. Suppose henceforth that $M>0$. Since $p\ge2$,
\[
\sum_{\ell=0}^{r-1}|m_\ell(w)|^p \le M^{p-2}\sum_{\ell=0}^{r-1}m_\ell(w)^2.
\]
By Lemma~\ref{lem:population_drift}, the derivative bound above, and the definition of $\mu_\ast$,
\[
\begin{aligned}
\left|\left\langle\Pi^kw^\ast,\nabla\Phi(w)\right\rangle\right| &\le C_\sigma\left[(1+r\mu_\ast)M^{p-1} + M\sum_{\ell=0}^{r-1}|m_\ell(w)|^p\right]\\ &\le C_\sigma M^{p-1}\left[1+r\mu_\ast+\sum_{\ell=0}^{r-1}m_\ell(w)^2\right]\\ &\le C_\sigma M^{p-1}.
\end{aligned}
\]
The last inequality uses $r\mu_\ast=o(1)$ on $\mathcal G_d$ and the uniform overlap-energy bound~\eqref{eq:teacher_overlap_energy}. After enlarging $C_\sigma$, this proves the first assertion, including the case $p=2$.

\medskip
\noindent\textbf{Tied and augmented weights.}
Fix $\mathsf k\in\{\mathsf T,\mathsf A\}$, a weight index $j$, and a teacher direction $k$. The exact coordinate update from the proof of Lemma~\ref{lem:correlation_evolution}, together with \eqref{eq:augmented_effective_update}, has the form
\[
d_t x_{t+1}=x_t+F_t+N_{t+1}-N_t,
\]
where
\[
x_t=m_k(w_t^{j,\mathsf k}), \qquad d_t=\sqrt{1+\eta_d^2\|G_{t+1}^{j,\mathsf k}\|^2},
\]
and
\[
\begin{aligned}
F_t &= -\eta_d\left\langle\Pi^kw^\ast,\nabla\Phi(w_t^{j,\mathsf k})\right\rangle,\\
N_t &=\eta_d\sum_{\ell=1}^t\xi_{k,\ell}^{j,\mathsf k}.
\end{aligned}
\]
Thus $d_t\ge1$ and $N_0=0$, so Lemma~\ref{lem:exact_damping} applies with $q_t=1$.

For every integer $n\le\tau_{\mathrm{esc}}^{\mathsf k}$, each $t<n$ lies strictly before escape. The population bound just proved therefore gives
\[
\begin{aligned}
\sum_{t=0}^{n-1}|F_t| \le C_\sigma\eta_d\sum_{t=0}^{n-1}\widehat M(w_t^{j,\mathsf k})^{p-1} \le C_\sigma\eta_dn(C_{\mathrm{esc}}\gamma_d)^{p-1}.
\end{aligned}
\]
Consequently, Lemma~\ref{lem:exact_damping} yields
\[
\begin{split}
|m_k(w_n^{j,\mathsf k})| \le |m_k(w_0^{j,\mathsf k})| + C_\sigma\eta_dn(C_{\mathrm{esc}}\gamma_d)^{p-1}+2\eta_d\max_{0\le h\le n}\left|\sum_{\ell=1}^h\xi_{k,\ell}^{j,\mathsf k}\right|.
\end{split}
\]
Taking the maximum over $j$ and $k$ proves~\eqref{eq:nonescape_tied_comparison}.

\medskip
\noindent\textbf{Untied weights.}
By~\eqref{eq:single_filter_population}, $\nabla\Psi=\nabla\Phi/\sqrt r$. For a fixed untied weight $j$ and teacher direction $k$, the exact update~\eqref{eq:untied_overlap_exact} therefore has the same damping form, now with
\[
x_t = m_k(w_t^{j,\mathsf U}), \qquad d_t = \sqrt{1+\frac{\eta_d^2}{r}\|G_{t+1}^{j,\mathsf U}\|^2},
\]
and
\[
\begin{aligned}
F_t &= -\frac{\eta_d}{r}\left\langle\Pi^kw^\ast,\nabla\Phi(w_t^{j,\mathsf U})\right\rangle,\\
N_t &= \frac{\eta_d}{\sqrt r}\sum_{\ell=1}^t\xi_{k,\ell}^{j,\mathsf U}.
\end{aligned}
\]
Again $d_t\ge1$, and for $n\le\tau_{\mathrm{esc}}^{\mathsf U}$,
\[
\sum_{t=0}^{n-1}|F_t| \le C_\sigma\frac{\eta_d}{r} n(C_{\mathrm{esc}}\gamma_d)^{p-1}.
\]
Applying Lemma~\ref{lem:exact_damping} with $q_t=1$ and taking the maximum over all weights and teacher directions gives~\eqref{eq:nonescape_untied_comparison}.

Both applications are pathwise. In particular, all drift terms used to bound the correlation at time $n$ are evaluated at times $t<n$. The comparisons therefore remain valid when $n$ is the first exit time.
\end{proof}

\subsection{Noise before escape}
\label{sec:nonescape_noise}

The moment estimates of Lemma~\ref{lem:coordinate_noise_moments} apply to the tied and augmented weights, while Lemma~\ref{lem:single_filter_stochastic} applies to the untied weights. Their proofs use only the finite Hermite expansion and $p\ge2$, so the estimates hold for every polynomial link considered in Theorem~\ref{thm:nonescape}. Before escape,
\[
r\,m_k(w_{t-1}^{j,\mathsf k})^2 \le C_{\mathrm{esc}}^2r\gamma_d^2=o(1).
\]
Thus the stopped coordinate fluctuations have bounded conditional variance and conditional $L^\nu$-norm of order $\nu^P$.

\begin{lemma}[Noise before escape]
\label{lem:nonescape_noise}

Fix a teacher $w^\ast\in\mathcal G_d$ and initial weights satisfying
\[
\max_{\mathsf k\in\{\mathsf T,\mathsf U,\mathsf A\}}\widehat M_0^{\mathsf k} \le C_{\mathrm{init}}\gamma_d.
\]
For every fixed $D>0$, there exists $C_{\sigma,D}<\infty$ such that the following holds for all sufficiently large $d$. For any deterministic integer horizons $1\le T_{\mathsf k}\le d^p$, with probability at least $1-d^{-D}$ over the SGD samples, simultaneously for $\mathsf k\in\{\mathsf T,\mathsf U,\mathsf A\}$,
\begin{equation}
\max_{\substack{1\le j\le s_{\mathsf k}\\0\le k<r}}\max_{0\le n\le T_{\mathsf k}}\left|\sum_{t=1}^{n\wedge\tau_{\mathrm{esc}}^{\mathsf k}}\xi_{k,t}^{j,\mathsf k}\right|\le C_{\sigma,D}\left(\sqrt{T_{\mathsf k}\log d} + (\log d)^{P+1}\right).
\label{eq:nonescape_noise}
\end{equation}
The constant may also depend on $\delta,\kappa,C_{\mathrm{init}},C_{\mathrm{esc}}$. The constants and dimension threshold are uniform over the conditioned teacher and initial weights.
\end{lemma}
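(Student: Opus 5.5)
The plan is to reuse the truncation--Freedman argument from the proof of Lemma~\ref{lem:martingale}, once per triple $(\mathsf k,j,k)$, and then take a union bound. Fix $\mathsf k$, a weight $j$ and a teacher direction $k$. Define the stopped increments
\[
Y_t\coloneqq\mathbf 1_{\{t\le\tau_{\mathrm{esc}}^{\mathsf k}\}}\xi_{k,t}^{j,\mathsf k},\qquad 1\le t\le T_{\mathsf k}.
\]
The event $\{t\le\tau_{\mathrm{esc}}^{\mathsf k}\}$ lies in $\mathcal F_{t-1}$, because $\tau_{\mathrm{esc}}^{\mathsf k}$ is a stopping time for the joint filtration. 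Since each $\xi_{k,t}^{j,\mathsf k}$ is a centered martingale difference, $(Y_t)$ is a martingale difference sequence. Moreover, $\sum_{t\le n}Y_t$ equals the stopped sum appearing in \eqref{eq:nonescape_noise}.

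\textbf{Conditional moment bounds.} On $\{t\le\tau_{\mathrm{esc}}^{\mathsf k}\}$ every correlation of $w_{t-1}^{j,\mathsf k}$ is below $C_{\mathrm{esc}}\gamma_d$, so $r\,m_k(w_{t-1}^{j,\mathsf k})^2\le C_{\mathrm{esc}}^2r\gamma_d^2=o(1)$.
\begin{itemize}
\item For $\mathsf k\in\{\mathsf T,\mathsf A\}$, Lemma~\ref{lem:coordinate_noise_moments} therefore gives $\E[Y_t^2\mid\mathcal F_{t-1}]\le C_\sigma$ and $\|Y_t\|_{L^\nu(\cdot\mid\mathcal F_{t-1})}\le C_\sigma\nu^P$. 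For the augmented weights this uses \eqref{eq:augmented_effective_update}, which identifies their fluctuations with tied fluctuations at $w_{t-1}^{j,\mathsf A}$.
\item For $\mathsf k=\mathsf U$, Lemma~\ref{lem:single_filter_stochastic} gives the same bounds uniformly, with no stopping needed.
\end{itemize}
One point must be checked. Those lemmas are stated with respect to single-trajectory filtrations, but here we condition on the larger joint filtration $\mathcal F_{t-1}$. This is harmless: $(x_t,y_t)$ is independent of $\mathcal F_{t-1}$, and the bounds are uniform over the current weight, so they transfer verbatim.

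\textbf{Truncation and Freedman.} Take $q_d=\lceil b\log d\rceil$ with $b$ large and $R_d=e\,C_\sigma q_d^P\asymp(\log d)^P$.
\begin{itemize}
\item By conditional Markov, $\mathbb P(|Y_t|>R_d\mid\mathcal F_{t-1})\le e^{-q_d}\le d^{-b}$.
\item Truncate and recenter to $\widehat Y_t$, so that $|\widehat Y_t|\le2R_d$ and the conditional variance is at most $C_\sigma$. The recentering bias is at most $R_de^{-q_d}$ per step, hence $O(T_{\mathsf k}R_dd^{-b})=o(1)$ in total.
\item Freedman's maximal inequality, applied to both signs with deviation $z=\sqrt{2C_\sigma T_{\mathsf k}q_d}+\tfrac43R_dq_d$ exactly as in Lemma~\ref{lem:h2_relative_martingale}, gives failure probability at most $2e^{-q_d}$.
\end{itemize}
On the complement of these failure events the stopped partial sums are bounded by
\[
C\bigl(\sqrt{T_{\mathsf k}\log d}+(\log d)^{P+1}\bigr),
\]
which is \eqref{eq:nonescape_noise} for the fixed triple.

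\textbf{Union bound.} There are at most $(1+2s)r\le 3d^{\kappa+\delta}$ triples and $T_{\mathsf k}\le d^p$ time steps. The total failure probability is therefore at most $C d^{\kappa+\delta}(d^p+2)d^{-b}$. Choosing $b>D+\kappa+\delta+p+1$ makes this at most $d^{-D}$ for large $d$. Sample sharing across the procedures does not matter, since the union bound needs no independence between them.

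\textbf{Main obstacle.} The only genuine subtlety is the measurability and uniformity in the moment step. One must verify that the stopping indicator is predictable for the joint filtration. One must also check that the constants in Lemmas~\ref{lem:coordinate_noise_moments} and~\ref{lem:single_filter_stochastic} hold uniformly over $w^\ast\in\mathcal G_d$ and over the current iterate. Only then can $b$, and hence $C_{\sigma,D}$, be chosen independently of the conditioned teacher and initialization, as the statement requires.
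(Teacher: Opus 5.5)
Your proposal is correct and follows the paper's proof essentially step for step. Both use stopped increments with an $\mathcal F_{t-1}$-measurable indicator, and both transfer the moment bounds of Lemmas~\ref{lem:coordinate_noise_moments} and~\ref{lem:single_filter_stochastic} to the joint filtration via freshness of $(x_t,y_t)$. Both then truncate, recenter, and apply Freedman at moment order $\asymp\log d$, and finish with a union bound over the $(1+2s)r$ coordinate processes that needs no independence. The one thing the paper adds is the explicit remark that these two moment lemmas use only the finite Hermite expansion, so they cover every centered polynomial link with $p\ge2$ in Theorem~\ref{thm:nonescape}, including $p=2$ and links without the parity assumption.
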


\begin{proof}
Fix the teacher and initial weights as in the statement, and work under the corresponding conditional law.

For each procedure, weight, and teacher direction, set
\[
Y_t \coloneqq \mathbf1_{\{t\le\tau_{\mathrm{esc}}^{\mathsf k}\}}\xi_{k,t}^{j,\mathsf k}, \qquad 1\le t\le T_{\mathsf k},
\]
suppressing the coordinate indices in $Y_t$. The stopping indicator is $\mathcal F_{t-1}$-measurable,
so $(Y_t)$ is a martingale difference sequence. On $\{t\le\tau_{\mathrm{esc}}^{\mathsf k}\}$,
\[
r\,m_k(w_{t-1}^{j,\mathsf k})^2 \le C_{\mathrm{esc}}^2r\gamma_d^2=o(1).
\]
Consequently, Lemmas~\ref{lem:coordinate_noise_moments} and~\ref{lem:single_filter_stochastic} give, uniformly over all coordinate processes,
\[
\mathbb E[Y_t^2\mid\mathcal F_{t-1}] \le C_\sigma, \qquad \left(\mathbb E[|Y_t|^\nu\mid\mathcal F_{t-1}]\right)^{1/\nu} \le C_\sigma\nu^P, \qquad \nu\ge2.
\]
The earlier moment calculations apply to the joint filtration because the observation at time $t$ is fresh conditionally on $\mathcal F_{t-1}$.

These are exactly the conditional moment bounds used in the proof of Lemma~\ref{lem:martingale}. Apply its truncation, recentering, and maximal-concentration argument with moment order proportional to $\log d$ and cutoff proportional to $(\log d)^P$. For any fixed $b>p$, choosing the proportionality constants sufficiently large gives
\[
\mathbb P\left(\max_{0\le n\le T_{\mathsf k}}\left|\sum_{t=1}^nY_t\right| > C_{\sigma,b}\left[\sqrt{T_{\mathsf k}\log d} + (\log d)^{P+1}\right]\right)\le C(T_{\mathsf k}+1)d^{-b}.
\]
Here the factor $T_{\mathsf k}$ accounts for the truncation events, while the maximal martingale bound contributes $Cd^{-b}$. The accumulated centering correction is at most
\[
C_{\sigma,b}T_{\mathsf k}(\log d)^P d^{-b} \le C_{\sigma,b}d^{p-b}(\log d)^P=o(1),
\]
and is absorbed into the displayed threshold.

Choose $b>D+\kappa+\delta+p$. A union bound over the $(1+2s)r$ coordinate processes bounds the total failure probability by
\[
C(1+2s)r(d^p+1)d^{-b} \le C d^{\kappa+\delta+p-b} \le d^{-D}
\]
for sufficiently large $d$. No independence between these processes is needed.

Finally,
\[
\sum_{t=1}^nY_t = \sum_{t=1}^{n\wedge\tau_{\mathrm{esc}}^{\mathsf k}}\xi_{k,t}^{j,\mathsf k},
\]
so the simultaneous partial-sum bounds give~\eqref{eq:nonescape_noise}. The stopping convention includes the exit-producing increment, and all estimates are uniform over the conditioned teacher and initial weights.
\end{proof}

\subsection{Proof of the non-escape theorem}
\label{sec:nonescape_theorem}

We now combine Lemmas~\ref{lem:nonescape_drift} and~\ref{lem:nonescape_noise} to prove Theorem~\ref{thm:nonescape}.

\begin{proof}[Proof of Theorem~\ref{thm:nonescape}]
Take $C_{\mathrm{init}}$ as in Subsection~\ref{sec:nonescape_initialization} and fix $C_{\mathrm{esc}}>C_{\mathrm{init}}$ depending only on $\delta,\kappa$.
Let $C_\sigma$ be the constant in Lemma~\ref{lem:nonescape_drift}, and choose $c>0$ so that
\[
C_{\mathrm{init}} + C_\sigma cC_{\mathrm{esc}}^{p-1} < C_{\mathrm{esc}}.
\]
With $T_d$ as in the theorem, define
\[
T_{\mathsf T} = T_{\mathsf A} \coloneqq \lfloor T_d\rfloor, \qquad T_{\mathsf U} \coloneqq \lfloor rT_d\rfloor.
\]
Since
\[
\frac1{\eta_d\gamma_d^{p-2}} = \frac{d^{p-1}(\log d)^A}{(\log r)^{(p-2)/2}} \asymp d^{p-1}(\log d)^{A-(p-2)/2},
\]
all three horizons diverge. Moreover,
\[
T_{\mathsf U} = O\!\left(d^{p-1+\delta}(\log d)^{A-(p-2)/2}\right) = o(d^p).
\]
Thus $1\le T_{\mathsf k}\le d^p$ for sufficiently large $d$, and Lemma~\ref{lem:nonescape_noise} applies.

\medskip
\noindent\textbf{Step 1: The accumulated noise is smaller than initialization.}
Fix a teacher $w^\ast\in\mathcal G_d$ and initial weights satisfying $\max_{\mathsf k}\widehat M_0^{\mathsf k} \le C_{\mathrm{init}}\gamma_d$.
Apply Lemma~\ref{lem:nonescape_noise} with the horizons above and any fixed $D>0$. On its event, which has conditional probability at least $1-d^{-D}$, the stochastic terms in \eqref{eq:nonescape_tied_comparison} and~\eqref{eq:nonescape_untied_comparison} are $o(\gamma_d)$ simultaneously for all procedures. Indeed, the definitions of the horizons give
\[
\max\left\{\frac{\eta_d\sqrt{T_{\mathsf T}\log d}}{\gamma_d}, \frac{\eta_d\sqrt{T_{\mathsf U}\log d}}{\sqrt r\,\gamma_d}\right\} \le \sqrt{c\eta_d\gamma_d^{-p}\log d} = O\!\left((\log d)^{-A/2-(p-2)/4}\right) = o(1).
\]
For the remaining term,
\[
\frac{\eta_d(\log d)^{P+1}}{\gamma_d} = O\!\left(d^{-(p-1)/2}(\log d)^{P+1/2-A}\right) = o(1).
\]
The corresponding untied term is smaller by $r^{-1/2}$. Both estimates hold for every fixed $A>0$ and $2\le p\le P<\infty$.

\medskip
\noindent\textbf{Step 2: Excluding the first exit.}

Work on this common noise event. For $n\le\tau_{\mathrm{esc}}^{\mathsf k}\wedge T_{\mathsf k}$, every partial sum with $h\le n$ agrees with its stopped
counterpart in~\eqref{eq:nonescape_noise}. For $\mathsf k\in\{\mathsf T,\mathsf A\}$,
Lemma~\ref{lem:nonescape_drift} therefore gives
\[
\widehat M_n^{\mathsf k} \le C_{\mathrm{init}}\gamma_d + C_\sigma\eta_dT_{\mathsf k}(C_{\mathrm{esc}}\gamma_d)^{p-1}+o(\gamma_d) \le \left(C_{\mathrm{init}} + C_\sigma cC_{\mathrm{esc}}^{p-1}+o(1)\right)\gamma_d.
\]
For the untied procedure, the same comparison yields
\[
\widehat M_n^{\mathsf U}\le C_{\mathrm{init}}\gamma_d + C_\sigma\frac{\eta_d}{r}T_{\mathsf U}(C_{\mathrm{esc}}\gamma_d)^{p-1}+o(\gamma_d) \le \left(C_{\mathrm{init}} + C_\sigma cC_{\mathrm{esc}}^{p-1}+o(1)\right)\gamma_d.
\]
By the choice of $c$, these upper bounds are strictly smaller than $C_{\mathrm{esc}}\gamma_d$ for sufficiently large $d$. If any procedure had $\tau_{\mathrm{esc}}^{\mathsf k}\le T_{\mathsf k}$, applying its comparison at $n=\tau_{\mathrm{esc}}^{\mathsf k}$ would contradict the definition of the exit time. Consequently, simultaneously for all three procedures,
\[
\tau_{\mathrm{esc}}^{\mathsf k}>T_{\mathsf k}, \qquad \max_{0\le t\le T_{\mathsf k}}\widehat M_t^{\mathsf k}<C_{\mathrm{esc}}\gamma_d.
\]
The conditional estimates are uniform over the teacher and initial weights under consideration. Lemma~\ref{lem:teacher_geometry} and \eqref{eq:nonescape_initialization} show that these conditioning requirements hold with probability $1-o(1)$. Averaging gives the same unconditional probability for the simultaneous non-escape bounds. Since the escape times are integer-valued, they exceed the unrounded horizon $T_d$ for the tied and augmented procedures and $rT_d$ for the untied procedure. This proves the theorem, with $c$ and $C_{\mathrm{esc}}$ depending only on $\sigma,\delta,\kappa$.
\end{proof}

%% file: appendices.tex
\clearpage
\appendix

\section{Teacher geometry and initialization}\label{sec:teacher_geometry}

The purpose of this section is to characterize the random geometry of the teacher orbit and the scale of the correlations at initialization. For a fixed unit vector \(v\in\mathbb{S}^{d-1}\) and an independent vector \(u\sim\operatorname{Unif}(\mathbb{S}^{d-1})\), the correlation \(\langle v,u\rangle\) is typically of order \(d^{-1/2}\). In our model, however, we consider the correlations of a random direction with the \(r\) shifted teacher directions \(w^\ast,\Pi w^\ast,\ldots,\Pi^{r-1}w^\ast\). Because these directions are nearly orthogonal, the corresponding correlations behave approximately as \(r\) independent Gaussian coordinates. Their largest relevant value is therefore enhanced by the usual extreme-value factor and occurs at the scale
\[
\gamma_d\coloneqq\sqrt{\frac{\log r}{d}}.
\]
We first show that the teacher orbit is nearly orthogonal. We then establish the initialization scale \(\gamma_d\) and a quantitative gap between the largest relevant correlation and its closest competitor, where in the even-\(p\) case correlations are compared in absolute value. This gap provides the initial asymmetry that is amplified by the nonlinear SGD dynamics.

Recall that the teacher direction is drawn uniformly at random, \(w^\ast\sim\operatorname{Unif}(\mathbb S^{d-1})\). We begin by controlling the correlations among its cyclic shifts. Define the maximum correlation within the teacher orbit by
\[
\mu_\ast \coloneqq \max_{1\le k \le r-1} \left| \left\langle w^\ast,\Pi^k w^\ast\right\rangle \right|.
\]

\begin{lemma}[Geometry of the teacher orbit] \label{lem:teacher_geometry}
Under Assumptions~\ref{hyp:teacher_init} and~\ref{hyp:scaling}, there exists a constant \(C_\ast <\infty\) such that, for the event
\[
\mathcal{G}_d \coloneqq \left\{\mu_\ast\le C_\ast\gamma_d\right\},
\]
it holds that
\[
\mathbb{P} (\mathcal{G}_d)\to 1.
\]
Moreover, on \(\mathcal{G}_d\), \(r\mu_\ast = o \left ((\log d)^{-2} \right)\).
\end{lemma}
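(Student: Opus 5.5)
The plan is to bound each shifted self-correlation $\langle w^\ast,\Pi^k w^\ast\rangle$, $1\le k\le r-1$, by a sub-Gaussian tail at scale $d^{-1/2}$, and then take a union bound over the $r-1$ shifts. Write $w^\ast=g/\|g\|$ with $g\sim\mathcal N(0,\Id_d)$, so that
\[
\langle w^\ast,\Pi^k w^\ast\rangle=\frac{g^\top\Pi^k g}{\|g\|^2}.
\]

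First, the norm $\|g\|^2$ concentrates: by the Hanson--Wright inequality~\eqref{eq:Hanson_Wright} with the identity matrix, or by a standard $\chi^2$ bound, $\|g\|^2\ge d/2$ except on an event of probability $e^{-cd}$.

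Next, fix $1\le k\le r-1$ and replace the quadratic form by its symmetrization, $g^\top\Pi^k g=g^\top S_k g$ with $S_k\coloneqq(\Pi^k+\Pi^{-k})/2$. Since $\Pi^k$ is a permutation matrix with no fixed points for $k\not\equiv0\pmod r$ (it shifts by $k\Delta$ coordinates, with $0<k\Delta<d$), the diagonal of $S_k$ vanishes and $\tr(S_k)=0$. Moreover, $\|S_k\|_{\mathrm{op}}\le1$ and $\|S_k\|_{\mathrm F}^2\le d$. Hanson--Wright then gives
\[
\mathbb P\bigl(|g^\top S_k g|>z\bigr)\le 2\exp\Bigl(-c\min\Bigl(\tfrac{z^2}{d},z\Bigr)\Bigr).
\]
Take $z=C\sqrt{d\log r}$. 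Because $\log r\ll d$, the quadratic branch of the minimum is active, so the tail is at most $2r^{-cC^2}$. On the norm event, this gives $|\langle w^\ast,\Pi^kw^\ast\rangle|\le 2C\sqrt{\log r/d}=2C\gamma_d$.

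A union bound over the $r-1$ shifts bounds the total failure probability by $2r^{1-cC^2}+e^{-cd}$. Choosing $C$ so large that $cC^2>1$, and using $r=d^\delta\to\infty$, this is $o(1)$. The event $\mathcal G_d$ therefore holds with $C_\ast=2C$.

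For the final claim, on $\mathcal G_d$ we have
\[
r\mu_\ast\le C_\ast d^{\delta-1/2}\sqrt{\delta\log d}.
\]
This decays polynomially because $\delta<1/2$, so it is in particular $o((\log d)^{-2})$. The only step requiring care is the vanishing diagonal, $\tr(\Pi^k)=0$, which makes the quadratic form centered. This holds because $r$ divides $d$, so a shift by $k\Delta$ with $0<k<r$ is a fixed-point-free permutation. With that checked, I expect no substantial obstacle.
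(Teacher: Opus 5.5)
Your proposal is correct and follows the paper's proof essentially step for step. Both arguments use the representation $w^\ast=g/\|g\|$, the symmetrized shift with vanishing trace (since $0<k\Delta<d$), Hanson--Wright at $z=C\sqrt{d\log r}$ with a union bound over the $r-1$ shifts, the bound $\|g\|^2\ge d/2$, and the resulting polynomial decay $r\mu_\ast\lesssim d^{\delta-1/2}\sqrt{\log d}$.
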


\begin{proof}
Assumption~\ref{hyp:teacher_init} allows us to write $w^\ast=g/\|g\|$ with $g\sim\mathcal{N}(0,\Id_d)$, so that
\[
\mu_\ast = \max_{1 \le a \le r-1} \frac{\left |g^\top \Pi^ag \right|}{\|g\|^2}.
\]
Thus we need an upper bound on \(\max_{1 \le a \le r-1} \left |g^\top \Pi^ag \right|\) and a lower bound on \(\|g\|^2\). By the Hanson--Wright inequality (see e.g.~\cite[Theorem 6.2.1]{vershynin}), for every symmetric matrix $A$,
\begin{equation} \label{eq:Hanson_Wright}
\mathbb{P} \left( \left|g^\top A g-\E[g^\top A g]\right|>t \right) \le 2 \exp \left ( -c \min \left\{ \frac{t^2}{\|A\|_{\mathrm F}^2}, \frac{t}{\|A\|_{\mathrm{op}}} \right \} \right ), \qquad t>0.
\end{equation}
Note that $\E [g^\top A g] = \tr A$. Fix $1\le a\le r-1$. Since $a \Delta = ad/r$ is not a multiple of $d$, the cyclic shift $\Pi^a$ has no fixed coordinates and hence \(\tr (\Pi^a) =0\). Since the matrix \(\Pi^a\) is not necessarily symmetric, we consider its symmetric part
\[
A_a\coloneqq\frac{\Pi^a+\Pi^{-a}}{2}.
\]
Then
\[
\tr(A_a)=0, \qquad \|A_a\|_{\mathrm{op}}\le1, \qquad \|A_a\|_{\mathrm{F}}^2 \le d.
\]
Moreover, since the skew-symmetric part does not contribute to a quadratic form, \(g^\top \Pi^ag = g^\top A_a g\). Hence, setting \(t = C \sqrt{d \log r}\) in~\eqref{eq:Hanson_Wright} gives
\[
\mathbb{P} \left( \left|g^\top \Pi^ag\right| > C\sqrt{d\log r} \right) \le 2 \exp \left ( - c \min \left \{C^2 \log r, C \sqrt{d \log r} \right \} \right).
\]
Since \(\log r = \delta \log d = o(d)\), for all sufficiently large \(d\),
\[
\mathbb{P} \left( \left|g^\top \Pi^ag\right| > C\sqrt{d\log r} \right) \le 2 e^{-c C^2 \log r} = 2r^{-cC^2}.
\]
Taking a union bound over \(a = 1,\ldots,r-1\), we obtain
\begin{equation} \label{eq:HW1}
\mathbb{P} \left( \max_{1\le a\le r-1} \left |g^\top \Pi^a g \right| > C \sqrt{d\log r} \right) \le 2r^{1-cC^2}.
\end{equation}
Choosing \(C\) sufficiently large, the right-hand side is \(o(1)\). For the lower bound on \(\|g\|^2\), we apply again~\eqref{eq:Hanson_Wright} to \(A=\Id_d\), for which \(\E\|g\|^2=d\), with \(t=d/2\). This gives
\begin{equation} \label{eq:HW2}
\mathbb{P} \left(\|g\|^2 < \frac{d}{2} \right) \le \mathbb{P} \left( \left|\|g\|^2-d\right|>\frac{d}{2} \right) \le 2e^{-cd}.
\end{equation}
Therefore, combining~\eqref{eq:HW1} and~\eqref{eq:HW2}, with probability at least \(1-2r^{1-cC^2}-2e^{-cd} = 1-o(1)\), we have
\[
\mu_\ast = \max_{1\le a\le r-1} \frac{\left|g^\top \Pi^ag\right|}{\|g\|^2} \le 2C \sqrt{\frac{\log r}{d}}.
\]
Finally, since \(r=d^\delta\) with \(\delta<1/2\) by Assumption~\ref{hyp:scaling},
\[
r\mu_\ast \le 2C d^{\delta-\frac12}\sqrt{\log r} = 2C\sqrt{\delta} d^{\delta-\frac12} \sqrt{\log d} = o\left((\log d)^{-2}\right).
\]
This completes the proof.
\end{proof}

We next introduce a generic initialization \(u\sim\operatorname{Unif}(\mathbb S^{d-1})\), independent of \(w^\ast\), and study its correlations with the teacher orbit. They can therefore be applied to the tied initialization and, separately, to any fixed component of the untied or augmented initialization. Recall that
\[
m_k(u) = \langle \Pi^k w^\ast,u\rangle, \qquad  q_k(u) =
\begin{cases}
|m_k(u)|, & p\text{ even},\\[1mm]
m_k(u), & p\text{ odd}.
\end{cases}
\]

\begin{lemma}[Initialization scale]\label{lem:initialization_scale}
Suppose that Assumption~\ref{hyp:scaling} holds. Let \(w^\ast,u\) be independent and uniformly distributed on \(\mathbb S^{d-1}\). Then
\[
\max_{0\le k<r} \left| m_k (u)  \right| = \left (\sqrt{2}+ o_{\mathbb{P}}(1)\right ) \gamma_d.
\]
Moreover,
\[
\max_{0\le k<r} q_k(u) = \left (\sqrt2+o_{\mathbb{P}}(1) \right) \gamma_d.
\]
In particular, there exist constants \(0<c<C<\infty\) such that, with probability \(1-o(1)\),
\[
\max_{0\le k<r}|m_k(u)|\le C\gamma_d, \qquad \max_{0\le k<r}q_k(u)\ge c\gamma_d.
\]
\end{lemma}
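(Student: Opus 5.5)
We will condition on the teacher and reduce the statement to the extreme value of a nearly independent Gaussian vector. Write \(u=h/\|h\|\) with \(h\sim\mathcal N(0,\Id_d)\) independent of \(w^\ast\). Conditionally on \(w^\ast\), the vector \(Z\coloneqq(\langle\Pi^kw^\ast,h\rangle)_{0\le k<r}\) is centered Gaussian with covariance equal to the teacher Gram matrix \(W^\ast\), whose entries are \(W^\ast_{jk}=c_{k-j}\). We have \(m_k(u)=Z_k/\|h\|\), and by \eqref{eq:HW2} together with its upper-tail analogue, \(\|h\|^2=d(1+O_{\mathbb P}(d^{-1/2}))\). It therefore suffices to show that on \(\mathcal G_d\) we have \(\max_k|Z_k|=(\sqrt2+o_{\mathbb P}(1))\sqrt{\log r}\), and that the same holds for \(\max_kZ_k\). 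These are exactly the two cases of \(q_k\). Lemma~\ref{lem:teacher_geometry} lets us work on \(\mathcal G_d\), where \(\mu_\ast\le C_\ast\gamma_d\) and \(r\mu_\ast=o(1)\).

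The upper bound only needs the marginals, which are all exactly \(N(0,1)\). A union bound gives
\[
\mathbb P\Bigl(\max_k|Z_k|>(1+\varepsilon)\sqrt{2\log r}\Bigr)\le 2r\,e^{-(1+\varepsilon)^2\log r}\to0.
\]
This proves the upper half of both claims, and in particular it gives the bound \(\max_k|m_k(u)|\le C\gamma_d\).

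For the lower bound we decouple the coordinates. On \(\mathcal G_d\) the matrix \(W^\ast\) satisfies \(\|W^\ast-\Id_r\|_{\mathrm{op}}\le r\mu_\ast=o(1)\), and this bound is used in Section~\ref{sec:weak_recovery_quadratic}. So we can write \(Z=(W^\ast)^{1/2}\zeta\) with \(\zeta\sim\mathcal N(0,\Id_r)\), and set \(E\coloneqq((W^\ast)^{1/2}-\Id_r)\zeta\). Then \(\|E\|_{\mathrm{op}}\)-wise we have \(\|(W^\ast)^{1/2}-\Id_r\|_{\mathrm{op}}\le r\mu_\ast\), and we bound the error coordinatewise rather than through \(\|E\|_2\), since \(\|E\|_2\lesssim r\mu_\ast\sqrt r\) may fail to be small. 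Each \(E_k\) is Gaussian with variance at most \(\|(W^\ast)^{1/2}-\Id_r\|_{\mathrm{op}}^2\le (r\mu_\ast)^2\). A union bound then gives \(\max_k|E_k|\le r\mu_\ast\sqrt{3\log r}=o(\sqrt{\log r})\) with probability \(1-o(1)\). For independent standard Gaussians, \(\max_k\zeta_k\ge(1-\varepsilon)\sqrt{2\log r}\) with probability \(1-(1-r^{-(1-\varepsilon)^2+o(1)})^r\to1\). Hence
\[
\max_kZ_k\ge\max_k\zeta_k-\max_k|E_k|=(\sqrt2-o_{\mathbb P}(1))\sqrt{\log r},
\]
and \(\max_k|Z_k|\ge\max_kZ_k\). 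Dividing by \(\|h\|\) gives both claimed asymptotics, and the final two-sided constants follow at once.

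The step we expect to be most delicate is making the lower bound uniform over teachers in \(\mathcal G_d\). In particular, the perturbation estimate must be coordinatewise, using the sup-norm via the operator-norm variance bound, and must not go through the Euclidean norm. The perturbation is \(o(\sqrt{\log r})\) precisely because \(r\mu_\ast=o(1)\), and this is where \(\delta<1/2\) enters. We will integrate the conditional estimates over \(w^\ast\) using \(\mathbb P(\mathcal G_d)\to1\).
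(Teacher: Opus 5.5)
Your proof is correct and follows the paper's skeleton: condition on a teacher in $\mathcal G_d$, write $u=h/\|h\|$, couple the correlation vector to i.i.d.\ standard Gaussians through $(W^\ast)^{1/2}$, and normalize by $\|h\|\approx\sqrt d$. The differences are technical. You take the upper bound straight from the exact $\mathcal N(0,1)$ marginals, so it needs no teacher geometry. You control the coupling error only for the lower bound, coordinatewise via $\|(W^\ast)^{1/2}-\Id_r\|_{\mathrm{op}}\le r\mu_\ast$ (this follows from $|\sqrt\lambda-1|\le|\lambda-1|$, which is worth stating) and a union bound. The paper instead bounds the sup-norm coupling error deterministically by $\varepsilon_d$ times the largest i.i.d.\ coordinate, with $\varepsilon_d=\|(W^\ast)^{1/2}-\Id_r\|_{\infty\to\infty}$ controlled by a binomial series, a multiplicative form it reuses for the relative gap in Lemma~\ref{lem:initialization_gap}.
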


We prove Lemma~\ref{lem:initialization_scale} below. We next show that the largest relevant correlation is separated from all competing correlations by a relative gap of order \((\log d)^{-2}\).

\begin{lemma}[Initialization gap]\label{lem:initialization_gap}
Suppose that Assumption~\ref{hyp:scaling} holds. Let \(w^\ast,u\) be independent and uniformly distributed on \(\mathbb S^{d-1}\), and set \(\nu_d\coloneqq(\log d)^{-2}\). With probability \(1-o(1)\), the maximizer
\[
i_\ast\coloneqq\arg\max_{0\le k<r}q_k(u)
\]
is unique and satisfies
\[
\max_{k\neq i_\ast}q_k(u) \le (1-\nu_d)q_{i_\ast}(u).
\]
\end{lemma}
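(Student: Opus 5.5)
We reduce Lemma~\ref{lem:initialization_gap} to a statement about nearly independent Gaussians. Write $u=h/\|h\|$ with $h\sim\mathcal N(0,\Id_d)$ independent of $w^\ast$, and condition on $w^\ast\in\mathcal G_d$. Then $\sqrt d\,m_k(u)=(\sqrt d/\|h\|)\,Z_k$ with $Z_k\coloneqq\langle\Pi^kw^\ast,h\rangle$. The vector $Z=(Z_0,\dots,Z_{r-1})$ is exactly Gaussian with covariance the Gram matrix $W^\ast$, whose off-diagonal entries are bounded by $\mu_\ast$. The common factor $\sqrt d/\|h\|>0$ does not affect which index is the maximizer, nor the ratios $q_k/q_{i_\ast}$. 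It therefore suffices to show that, with conditional probability $1-o(1)$, the top relevant value among $\widetilde Z_k\coloneqq Z_k$ (odd $p$) or $|Z_k|$ (even $p$) exceeds the second largest by a factor $(1-\nu_d)^{-1}$. Uniqueness of the maximizer is automatic, since $Z$ has a nondegenerate Gaussian law on $\mathcal G_d$ (as $W^\ast$ is positive definite when $r\mu_\ast<1$).

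Next we replace $Z$ by an exactly independent vector. On $\mathcal G_d$ we have $\|W^\ast-\Id_r\|_{\mathrm{op}}\le r\mu_\ast$, and we couple $Z=(W^\ast)^{1/2}\zeta$ with $\zeta\sim\mathcal N(0,\Id_r)$. Then
\[
\max_k|Z_k-\zeta_k|\le\|(W^\ast)^{1/2}-\Id_r\|_{\mathrm{op}}\|\zeta\|\lesssim r\mu_\ast\sqrt r,
\]
with high probability. Since $r=d^\delta$ with $\delta<1/2$ and $\mu_\ast\lesssim\gamma_d$, this is $O(d^{3\delta/2-1/2}\sqrt{\log d})$. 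That bound is polynomially small only when $\delta<1/3$, which falls short of the full range. We instead use a sharper coordinatewise bound: each row of $(W^\ast)^{1/2}-\Id_r$ has Euclidean norm at most $\|W^\ast-\Id_r\|_{\mathrm F}\lesssim\sqrt r\,\mu_\ast$ up to constants. Hence $Z_k-\zeta_k$ is Gaussian with standard deviation $O(\sqrt r\mu_\ast)=O(d^{\delta/2-1/2}\sqrt{\log d})$, and a union bound over $k$ gives $\max_k|Z_k-\zeta_k|\le d^{-c}$ for some $c>0$, which holds for all $\delta<1/2$. This perturbation is negligible compared with the gap we produce below.

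The core step is a gap estimate for i.i.d.\ standard normals. Let $X_{(1)}\ge X_{(2)}$ be the two largest of $r$ i.i.d.\ copies of $\zeta_k$ (odd $p$) or $|\zeta_k|$ (even $p$). Then $X_{(1)}=(\sqrt{2}+o(1))\sqrt{\log r}$, and by classical extreme-value theory the spacing $X_{(1)}-X_{(2)}$ is of order $1/\sqrt{\log r}$ and tight from below. We make this elementary. Condition on the value of the maximum, and note that the density of the maximum of $r$ i.i.d.\ variables is bounded, on the relevant window, by $r\cdot\max(\text{density})$. Then
\[
\mathbb P\bigl(X_{(1)}-X_{(2)}\le\varepsilon\bigr)\le\sum_{k\neq l}\mathbb P\bigl(|X_k-X_l|\le\varepsilon,\ X_k,X_l\ge x_0\bigr)+\mathbb P\bigl(X_{(1)}<x_0\bigr),
\]
where $x_0=\sqrt{2\log r}-\log\log r/\sqrt{\log r}$. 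Each pair term is at most $C\varepsilon\,\varphi(x_0)\,\overline\Phi(x_0)$. Since $\overline\Phi(x_0)\asymp(\log r)^{O(1)}/r$ and $\varphi(x_0)\asymp\sqrt{\log r}\,\overline\Phi(x_0)$, the pair sum is at most $r^2\varepsilon(\log r)^{O(1)}r^{-2}$. Taking $\varepsilon=X_{(1)}\nu_d\asymp\sqrt{\log d}\,(\log d)^{-2}$, the pair sum is $(\log d)^{O(1)-3/2}$. This step is the main obstacle, because the polylog bookkeeping must close. Conveniently, the power in $\overline\Phi(x_0)$ can be tuned by the choice of $x_0$: setting $x_0=\sqrt{2\log r}-C'/\sqrt{\log r}$ makes $r\overline\Phi(x_0)$ a large constant $K$, so $\mathbb P(X_{(1)}<x_0)\le e^{-K}$. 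The pair sum is then about $\varepsilon\sqrt{\log r}\,K^2\asymp K^2(\log d)^{-1}$, and letting $K\to\infty$ slowly gives $o(1)$. Finally, combining the relative gap $(X_{(1)}-X_{(2)})/X_{(1)}\ge 2\nu_d$ with the $d^{-c}$ coupling error and with the common normalization yields $\max_{k\neq i_\ast}q_k(u)\le(1-\nu_d)q_{i_\ast}(u)$.
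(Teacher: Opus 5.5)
Your argument is correct and follows the same skeleton as the paper. Both use the representation $u=h/\|h\|$, the Gaussian vector with covariance $W^\ast$, a square-root coupling to an i.i.d.\ standard Gaussian vector, a spacing estimate for i.i.d.\ normals or half-normals, and invariance of the maximizer and of ratios under the common factor $\|h\|^{-1}$. The two technical inputs are handled differently.

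For the coupling, the paper bounds $\|(W^\ast)^{1/2}-\Id_r\|_{\infty\to\infty}\le Cr\mu_\ast$ via the binomial series. This gives the deterministic multiplicative estimate $\|X-Z\|_\infty\le\varepsilon_dA_1$ with $\varepsilon_d=o(\nu_d)$, which plugs directly into a relative gap. It avoids both your detour through the operator norm and an extra high-probability event. Your additive $d^{-c}$ bound also works, but the reason you give is wrong: $\|W^\ast-\Id_r\|_{\mathrm F}$ can be of order $r\mu_\ast$, not $\sqrt r\,\mu_\ast$. The conclusion survives for either of two reasons. First, $r\mu_\ast\lesssim d^{\delta-1/2}\sqrt{\log d}$ is itself polynomially small. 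Second, with $B\coloneqq W^\ast-\Id_r$, row $k$ of $\sum_{m\ge1}\binom{1/2}{m}B^m$ has Euclidean norm at most $\|e_k^\top B\|_2\sum_{m\ge1}|\binom{1/2}{m}|\,\|B\|_{\mathrm{op}}^{m-1}\lesssim\sqrt r\,\mu_\ast$.

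For the spacing, the paper cites Nagaraja's theorem $\sqrt{2\log r}\,(A_1-A_2)\to\mathrm{Exp}(1)$ in distribution and concludes with a Slutsky-type comparison. You instead prove a sufficient quantitative bound by hand. Choose $x_0$ with $r\,\mathbb P(X_k>x_0)=K_d\to\infty$ and $K_d^2=o(\log d)$; your pairwise union bound then gives failure probability $O(e^{-K_d}+K_d^2/\log d)$. Your route is self-contained and comes with an explicit rate. The citation is shorter and identifies the sharp $1/\sqrt{\log r}$ scale of the spacing.

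Two small repairs are needed in your union bound.
\begin{enumerate}
\item On $\{X_{(1)}-X_{(2)}\le\varepsilon,\ X_{(1)}\ge x_0\}$ you only know $X_{(2)}\ge x_0-\varepsilon$, so drop the constraint $X_l\ge x_0$. Your pair estimate does not use it: it integrates the density of $X_k$ over $[x_0,\infty)$ against $\mathbb P(|X_l-x|\le\varepsilon)$.
\item Your $\varepsilon=2\nu_dX_{(1)}$ is random. Use a deterministic value such as $4\nu_d\sqrt{2\log r}$ instead, and add the negligible probability that $X_{(1)}>2\sqrt{2\log r}$.
\end{enumerate}
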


We now prove Lemmas~\ref{lem:initialization_scale} and \ref{lem:initialization_gap} together.
\begin{proof}[Proof of Lemmas~\ref{lem:initialization_scale} and \ref{lem:initialization_gap}]
We first record the extreme-value estimates used below. Let \(Z_0,\ldots,Z_{r-1}\) be i.i.d.\ \(\mathcal{N} (0,1)\). Denote by \(A_1\ge A_2\) the two largest values among \(|Z_0|,\ldots,|Z_{r-1}|\), and by \(P_1\ge P_2\) the two largest values among \(Z_0,\ldots,Z_{r-1}\). Standard Gaussian tail estimates imply
\begin{equation} \label{eq:gaussian_maxima}
\frac{A_1}{\sqrt{2\log r}}\xrightarrow{\mathbb{P}}1, \qquad \frac{P_1}{\sqrt{2\log r}}\xrightarrow{\mathbb{P}}1.
\end{equation}
Moreover, applying the asymptotics of extreme spacings in the Gumbel domain~\cite[Theorem~5]{nagaraja} to the standard normal and half-normal distributions gives
\begin{equation} \label{eq:extreme_spacing}
\sqrt{2\log r}\,(A_1-A_2) \xrightarrow{d}\operatorname{Exp}(1), \qquad \sqrt{2\log r}\,(P_1-P_2)\xrightarrow{d}\operatorname{Exp}(1).
\end{equation}
We now deduce the relative gap needed below. Recall that \(\nu_d = (\log d)^{-2}\). Since \(r=d^\delta\) by Assumption~\ref{hyp:scaling}, $\nu_d \log r = \delta ( \log d)^{-1} \to 0$. Moreover, by~\eqref{eq:gaussian_maxima},
\[
\nu_d \sqrt{2\log r} A_1 = (1+o_{\mathbb{P}}(1)) 2\nu_d\log r \xrightarrow{\mathbb{P}}0.
\]
For every fixed \(a>0\), setting
\[
U_d \coloneqq \sqrt{2 \log r} \, (A_1-A_2), \qquad V_d \coloneqq a \nu_d \sqrt{2 \log r} \, A_1,
\]
we have \(U_d\xrightarrow{d}\operatorname{Exp}(1)\) and \(V_d\xrightarrow{\mathbb{P}}0\). For every \(\varepsilon>0\),
\[
\mathbb{P}(U_d\le V_d) \le \mathbb{P}(U_d\le\varepsilon) + \mathbb{P}(V_d>\varepsilon).
\]
Taking the limit superior and then letting \(\varepsilon\downarrow0\) gives
\[
\mathbb{P}(U_d\le V_d)\to 0,
\]
since an \(\operatorname{Exp}(1)\) random variable is strictly positive almost surely. Hence, for every fixed \(a>0\),
\[
\mathbb{P} \left( A_1-A_2 \le a\nu_d A_1 \right) \to 0.
\]
Equivalently,
\begin{equation}\label{eq:absolute_gaussian_gap_general}
A_2\le(1-a\nu_d)A_1
\end{equation}
with probability \(1-o(1)\). The same argument yields
\begin{equation}\label{eq:signed_gaussian_gap_general}
P_2\le(1-a\nu_d)P_1
\end{equation}
with probability \(1-o(1)\).

We now transfer the preceding extreme-value estimates from independent Gaussian coordinates to the actual correlations with the teacher orbit. Define the teacher-geometry event of Lemma~\ref{lem:teacher_geometry}:
\[
\mathcal{G}_d \coloneqq \left \{ \mu_\ast \le C \gamma_d\right\}.
\]
By that lemma, \(\mathbb{P} \left ( \mathcal{G}_d \right ) = 1-o(1)\). Fix \(w^\ast \in \mathcal{G}_d\). In what follows, probabilities are taken conditionally on this realization of \(w^\ast\). Write
\[
u=\frac{h}{\|h\|}, \qquad h\sim\mathcal{N}(0,\Id_d),
\]
where \(h\) is independent of \(w^\ast\), and define
\[
X_k \coloneqq \left\langle \Pi^kw^\ast,h\right\rangle, \qquad 0\le k < r.
\]
Indeed, for \(0\le i,j<r\),
\[
\E[X_iX_j\mid w^\ast] = \left\langle\Pi^iw^\ast,\Pi^jw^\ast\right\rangle.
\]
Conditionally on \(w^\ast\), the vector \(X \coloneqq (X_0,\ldots,X_{r-1})\) is centered Gaussian with covariance matrix
\[
W^\ast \coloneqq \left( \left\langle \Pi^iw^\ast,\Pi^jw^\ast \right\rangle \right)_{0\le i,j\le r-1}.
\]
For a matrix $A = (A_{ij})_{i,j=1}^r$, let
\[
\|A\|_{\infty \to \infty} \coloneqq \sup_{\|x\|_\infty \le 1} \|A x\|_\infty = \max_{1 \le i \le r} \sum_{j=1}^r |A_{ij}|
\]
denote the operator norm induced by the vector \(\ell_\infty\)-norm. The diagonal entries of \(W^\ast\) are equal to one. Moreover, if \(i \neq j\), then
\[
\left| \left\langle\Pi^iw^\ast,\Pi^jw^\ast\right\rangle \right| = \left| \left\langle w^\ast,\Pi^{j-i} w^\ast \right\rangle \right| \le\mu_\ast,
\]
where the exponent is understood modulo \(r\). Hence, on \(\mathcal{G}_d\),
\[
\|W^\ast-\Id_r\|_{\infty\to\infty} \le (r-1)\mu_\ast \le C r \gamma_d.
\]
Moreover, \(b_d \coloneqq C r \gamma_d = o(\nu_d)\) according to Lemma~\ref{lem:teacher_geometry}. Since \(W^\ast-\Id_r\) is symmetric,
\[
\|W^\ast-\Id_r\|_{\mathrm{op}} \le \|W^\ast-\Id_r\|_{\infty\to\infty} \le b_d =o(1).
\]
Therefore, for all sufficiently large \(d\), every eigenvalue of \(W^\ast\) belongs to \([1-b_d , 1 + b_d] \subset (0,\infty)\), so \(W^\ast\) is positive definite. Set \(B \coloneqq W^\ast - \Id_r\). Then, for all sufficiently large \(d\), \(\|B\|_{\infty \to \infty} \le b_d < 1/2\). Hence the binomial series for the principal square root converges absolutely in the \(\|\cdot \|_{\infty \to \infty}\)-norm:
\[
(W^\ast)^{1/2} = (\Id_r + B)^{1/2} = \Id_r + \sum_{m=1}^\infty \binom{1/2}{m} B^m.
\]
By submultiplicativity,
\[
\| (W^\ast)^{1/2} - \Id_r \|_{\infty \to \infty} \le  \sum_{m=1}^\infty  \left | \binom{1/2}{m} \right | \|B\|_{\infty \to \infty}^m \le C  \|B\|_{\infty \to \infty} \le C b_d,
\]
where \(C<\infty\) is a universal constant. Therefore, if we set
\[
\varepsilon_d (w^\ast)\coloneqq \| (W^\ast)^{1/2} - \Id_r \|_{\infty \to \infty},
\]
then, uniformly over \(w^\ast \in \mathcal{G}_d\),
\begin{equation}\label{eq:epsilon_bound}
\varepsilon_d  (w^\ast) \le C b_d = o (\nu_d).
\end{equation}
Now define \(Z \coloneqq (W^\ast)^{-1/2}X\). Conditionally on \(w^\ast\), \(Z \sim \mathcal{N} (0,\Id_r)\), so its coordinates are independent standard Gaussian variables. Let \(A_1\ge A_2\) and \(P_1\ge P_2\) denote the corresponding absolute and signed order statistics, as above. Since \(X=(W^\ast)^{1/2}Z\), we have
\begin{equation}\label{eq:gaussian_coupling_error}
\|X-Z \|_\infty \le \varepsilon_d (w^\ast) \|Z\|_\infty = \varepsilon_d (w^\ast) A_1.
\end{equation}

\paragraph{Absolute maximum.}
Let \(k_\ast\) be the almost surely unique index satisfying $|Z_{k_\ast}|=A_1$. Taking \(a=2\) in~\eqref{eq:absolute_gaussian_gap_general}, with conditional probability \(1-o(1)\),
\[
A_2\le(1-2\nu_d)A_1.
\]
On this event,
\[
|X_{k_\ast}| \ge A_1-\|X-Z\|_\infty \ge (1-\varepsilon_d)A_1,
\]
while
\[
\max_{k\neq k_\ast}|X_k| \le A_2+\|X-Z\|_\infty \le (1-2\nu_d+\varepsilon_d)A_1.
\]
By~\eqref{eq:epsilon_bound}, \(\varepsilon_d=o(\nu_d)\) uniformly on \(\mathcal{G}_d\). Hence, for all sufficiently large \(d\),
\[
1-2\nu_d+\varepsilon_d \le (1-\nu_d)(1-\varepsilon_d).
\]
It follows that
\begin{equation}\label{eq:absolute_X_gap}
\max_{k\neq k_\ast}|X_k| \le (1-\nu_d)|X_{k_\ast}|.
\end{equation}
In particular, \(k_\ast\) is also the unique maximizer of \(|X_k|\). Furthermore,
\[
\left| \max_k|X_k|-A_1 \right| \le \|X-Z\|_\infty \le \varepsilon_d A_1.
\]
Since \(\varepsilon_d=o(1)\) uniformly on \(\mathcal{G}_d\), it follows from~\eqref{eq:gaussian_maxima} that, conditionally on \(w^\ast\in\mathcal{G}_d\),
\begin{equation}\label{eq:absolute_X_scale}
\max_{0\le k<r}|X_k| = (1+o_{\mathbb{P}}(1))\sqrt{2\log r},
\end{equation}
where the \(o_{\mathbb{P}}(1)\) is uniform over \(w^\ast\in\mathcal G_d\).

\paragraph{Signed maximum.} Let \(j_\ast\) be the almost surely unique index satisfying $Z_{j_\ast}=P_1$. By~\eqref{eq:gaussian_maxima},
\[
\frac{A_1}{P_1} \xrightarrow{\mathbb{P}}1.
\]
In particular, with probability \(1-o(1)\), \(A_1 \le 2P_1\). Intersect this event with the spacing event \(P_2\le(1-2\nu_d)P_1\), which also has probability \(1-o(1)\) by~\eqref{eq:signed_gaussian_gap_general}. On their intersection,
\[
X_{j_\ast} \ge P_1-\varepsilon_dA_1 \ge (1-2\varepsilon_d)P_1,
\]
while
\[
\max_{k\neq j_\ast}X_k \le P_2 + \varepsilon_dA_1 \le (1-2\nu_d+2\varepsilon_d)P_1.
\]
Since \(\varepsilon_d=o(\nu_d)\), for all sufficiently large \(d\),
\[
1-2\nu_d+2\varepsilon_d \le (1-\nu_d)(1-2\varepsilon_d).
\]
Therefore
\begin{equation}\label{eq:signed_X_gap}
\max_{k\neq j_\ast}X_k \le (1-\nu_d)X_{j_\ast}.
\end{equation}
Thus \(j_\ast\) is also the unique maximizer of the signed coordinates of \(X\). Furthermore,
\[
\left|\max_{0\le k<r}X_k-P_1\right| \le \|X-Z\|_\infty \le \varepsilon_dA_1.
\]
Since \(A_1/P_1\to1\) in probability and \(\varepsilon_d=o(1)\), we obtain
\begin{equation}\label{eq:signed_X_scale}
\max_kX_k = (1+o_{\mathbb{P}}(1))\sqrt{2\log r},
\end{equation}
again uniformly over \(w^\ast\in\mathcal{G}_d\).

\paragraph{Normalization and conclusion.}
Finally,
\[
m_k(u) = \left\langle\Pi^kw^\ast,u\right\rangle = \frac{X_k}{\|h\|}.
\]
Since $ \|h\| / \sqrt d \xrightarrow{\mathbb{P}}1$, combining this with~\eqref{eq:absolute_X_scale} yields
\[
\max_{0\le k<r}|m_k(u)| = (1+o_{\mathbb{P}}(1))\sqrt{\frac{2\log r}{d}} = (\sqrt2+o_{\mathbb{P}}(1))\gamma_d.
\]
Similarly,~\eqref{eq:signed_X_scale} gives
\[
\max_{0\le k<r}m_k(u) = (\sqrt2+o_{\mathbb{P}}(1))\gamma_d.
\]
Consequently,
\[
\max_{0\le k<r}q_k(u) = (\sqrt2+o_{\mathbb{P}}(1))\gamma_d,
\]
which proves Lemma~\ref{lem:initialization_scale}. The normalization by the common positive factor \(\|h\|\) does not change either the maximizing index or any relative gap. Hence~\eqref{eq:absolute_X_gap} gives, when \(p\) is even,
\[
\max_{k\neq i_\ast}q_k(u) = \max_{k\neq i_\ast}|m_k(u)| \le (1-\nu_d)|m_{i_\ast}(u)| = (1-\nu_d)q_{i_\ast}(u),
\]
whereas~\eqref{eq:signed_X_gap} gives, when \(p\) is odd,
\[
\max_{k\neq i_\ast}q_k(u) = \max_{k\neq i_\ast}m_k(u) \le (1-\nu_d)m_{i_\ast}(u) = (1-\nu_d)q_{i_\ast}(u).
\]
In both cases the maximizer is unique. All the preceding conditional estimates hold uniformly over \(w^\ast\in\mathcal{G}_d\). Therefore, if \(\mathcal{E}_d\) denotes the event on which the conclusions of Lemmas~\ref{lem:initialization_scale} and~\ref{lem:initialization_gap} hold, then
\[
\mathbb{P}(\mathcal{E}_d^\mathrm{c}) \le \mathbb{P}(\mathcal{G}_d^\mathrm{c}) + \sup_{w^\ast\in\mathcal{G}_d} \mathbb{P}(\mathcal{E}_d^\mathrm{c} \mid w^\ast) = o(1).
\]
This completes the proofs of Lemmas~\ref{lem:initialization_scale} and~\ref{lem:initialization_gap}.
\end{proof}

\section{Proofs of stochastic gradient estimates}
\label{sec:stochastic_gradient_proofs}

We use the notation of Section~\ref{sec:corr_dynamics} and the teacher-geometry event of Lemma~\ref{lem:teacher_geometry}.

\subsection{Feature-gradient covariance}
\label{sec:feature_gradient_covariance_proof}

\begin{proof}[Proof of Lemma~\ref{lem:feature_gradient_covariance}]
Computing the Euclidean gradient of \(f_w(x)\) defined in Definition~\ref{def:tied_student} gives
\[
\widehat\nabla f_w(x) = \frac1{\sqrt r} \sum_{i=0}^{r-1} \sigma' \left(\langle\Pi^iw,x\rangle\right)\Pi^{-i}x.
\]
Set \(Z_i = \langle \Pi^i w, x \rangle\) and \(V_i = \langle \Pi^i v, x \rangle\). Then
\[
\langle v, \widehat\nabla f_w(x) \rangle = \frac{1}{\sqrt{r}} \sum_{i=0}^{r-1} \sigma' (Z_i) V_i,
\]
and therefore, by cyclic invariance,
\[
\E \left [ \langle v, \widehat\nabla f_w(x) \rangle^2 \right ] = \sum_{j=0}^{r-1} \E \left [\sigma'(Z_0) \sigma' (Z_j) V_0 V_j \right ].
\]
For the jointly Gaussian vector \((Z_0, Z_j, V_0,V_j)\), since 
\[
\operatorname{Cov} (V_0,Z_0) = \operatorname{Cov} (V_j,Z_j) = \langle v,w\rangle =0, 
\]
Gaussian integration by parts gives
\[
 \E \left [\sigma'(Z_0) \sigma' (Z_j) V_0 V_j \right ] = \langle v, \Pi^j v\rangle \E [\sigma'(Z_0)\sigma'(Z_j)] + \langle \Pi^j w,v\rangle \langle w, \Pi^j v\rangle \E[\sigma''(Z_0)\sigma''(Z_j)] .
\]
Using
\[
\E [\sigma'(Z_0)\sigma'(Z_j)] = \phi'(a_j(w)), \qquad \E[\sigma''(Z_0)\sigma''(Z_j)] = \phi''(a_j(w)),
\]
yields the announced statement.
\end{proof}

\subsection{Coordinate noise moments}
\label{sec:coordinate_noise_moments_proof}

\begin{proof}[Proof of Lemma~\ref{lem:coordinate_noise_moments}]
Fix a teacher \(w^\ast\in\mathcal{G}_d\) and condition on \(\mathcal{F}_{t-1}\). Then \(w\coloneqq w_{t-1}\) is fixed and the only remaining randomness comes from the fresh Gaussian input \(x_t\). Throughout the proof, all expectations are with respect to this fresh sample. The constants below are uniform over \(w\in\mathbb S^{d-1}\) and \(w^\ast\in\mathcal{G}_d\).

We first claim that
\begin{equation}\label{eq:teacher_overlap_energy}
\sum_{k=0}^{r-1} m_k(w)^2 \le 2.
\end{equation}
Let
\[
U^\ast \coloneqq [ w^\ast, \Pi w^\ast, \ldots, \Pi^{r-1} w^\ast] \in \R^{d \times r},
\]
so that \(\left ( m_0(w), \ldots, m_{r-1} (w) \right ) = (U^\ast)^\top w\). Its Gram matrix is \( W^\ast = (U^\ast)^\top U^\ast\). On \(\mathcal{G}_d\), the diagonal entries of \(W^\ast\) are equal to one and the absolute sum of the off-diagonal entries in each row is at most
\[
(r-1) \mu_\ast \le r \mu_\ast = o(1).
\]
Hence, for all sufficiently large \(d\),
\[
\| W^\ast \|_{\mathrm{op}} \le \| W^\ast \|_{\infty \to \infty} \le 1 + (r-1) \mu_\ast \le 2.
\]
This implies that
\[
\sum_{k=0}^{r-1} m_k(w)^2  = \|(U^\ast)^\top w \|^2 \le \| U^\ast (U^\ast)^\top\|_{\mathrm{op}} = \|W^\ast\|_{\mathrm{op}} \le 2,
\]
thus proving~\eqref{eq:teacher_overlap_energy}.

Recall that $c_a=\langle w^\ast,\Pi^aw^\ast\rangle$ and $\phi (c) = \sum_{k=p}^P a_k^2 c^k$. By~\eqref{eq:teacher},
\[
\E[y^2] = \frac{1}{r} \sum_{i,j=0}^{r-1} \phi (c_{j-i}) = \sum_{a=0}^{r-1} \phi(c_a),
\]
because each difference \(a = j- i (\mod r)\) occurs exactly \(r\) times. For every \(|c| \le 1\),
\[
|\phi (c)| \le \sum_{k=p}^P a_k^2 |c|^k \le \left (\sum_{k=p}^P a_k^2 \right ) |c|^p = \|\sigma\|^2_{L^2} |c|^p.
\]
Therefore
\[
\E[y^2] \le \phi(1)+\sum_{a=1}^{r-1}|\phi(c_a)| \le C_\sigma(1+r \mu_\ast^p)\le C_\sigma,
\]
where the last inequality holds uniformly on \(\mathcal{G}_d\) for all sufficiently large \(d\). Indeed, \(\mu_\ast \le 1\) and \(r \mu_\ast = o(1)\). Since $y$ is a polynomial of degree at most $P$ in the Gaussian input, Gaussian hypercontractivity gives
\begin{equation}\label{eq:y_L4_general}
\|y\|_{L^4} \le C_\sigma.
\end{equation}

Fix \(0 \le k< r\), set \(m \coloneqq m_k(w)\), and define the tangent projection of the \(k\)th teacher direction by
\[
v \coloneqq (\Id_d - ww^\top) \Pi^k w^\ast = \Pi^k w^\ast - m w.
\]
Then $v\perp w$ and $\|v\|^2=1-m^2 \le 1$. Moreover, since \(G =  - y \nabla f_w(x)\), we have
\[
\langle \Pi^kw^\ast,G\rangle =-y\langle v,\widehat\nabla f_w(x)\rangle.
\]
Set \(Q \coloneqq \langle v, \widehat \nabla f_w(x) \rangle\) and \(a_j(w) \coloneqq \langle w, \Pi^j w \rangle\). By Lemma~\ref{lem:feature_gradient_covariance},
\begin{equation}\label{eq:Q_second_moment}
\E Q^2 = \sum_{j=0}^{r-1} \left( \phi'(a_j(w))\langle v,\Pi^jv\rangle + \phi''(a_j(w)) \langle w,\Pi^jv\rangle \langle\Pi^jw,v\rangle \right).
\end{equation}
For \(j=0\),
\[
\langle v,\Pi^0v\rangle=\|v\|^2\le1, \qquad \langle w,v\rangle=0,
\]
so the \(j=0\) contribution to~\eqref{eq:Q_second_moment} is bounded by a constant depending only on \(\sigma\). For \(j\neq0\), with all indices understood modulo \(r\),
\begin{align*}
\langle v,\Pi^jv\rangle &= c_j-m\bigl(m_{k+j}(w)+m_{k-j}(w)\bigr)+m^2a_j(w),\\
\langle w,\Pi^jv\rangle\langle \Pi^jw,v\rangle &= \bigl(m_{k+j}(w)-ma_j(w)\bigr)\bigl(m_{k-j}(w)-ma_j(w)\bigr).
\end{align*}
Using~\eqref{eq:teacher_overlap_energy}, the Cauchy--Schwarz inequality, and \(|a_j(w)|\le 1\), we obtain
\begin{equation} \label{eq:v_shift_v_sum}
\sum_{j\neq0} |\langle v,\Pi^jv\rangle| \le r\mu_\ast + 2|m|\sum_{j=0}^{r-1}|m_j(w)| + rm^2 \le r\mu_\ast + 2\sqrt{2r}|m| + rm^2,
\end{equation}
and
\begin{equation}\label{eq:v_shift_product_sum}
\begin{split}
\sum_{j\neq0} \left| \langle w,\Pi^jv\rangle \langle\Pi^jw,v\rangle \right| & \le \sum_{j=0}^{r-1} |m_{k+j}(w)m_{k-j}(w)| + 2|m|\sum_{j=0}^{r-1}|m_j(w)|+ rm^2 \\ &\le 2+2\sqrt{2r}|m|+rm^2.
\end{split}
\end{equation}
Here, for the first term in the last display, Cauchy--Schwarz and~\eqref{eq:teacher_overlap_energy} give
\[
\sum_{j=0}^{r-1} |m_{k+j}(w)m_{k-j}(w)| \le \left(\sum_jm_{k+j}(w)^2\right)^{1/2} \left(\sum_jm_{k-j}(w)^2\right)^{1/2} \le2.
\]
Since \(\phi'\) and \(\phi''\) are bounded on \([-1,1]\), combining~\eqref{eq:Q_second_moment}--\eqref{eq:v_shift_product_sum} gives
\[
\E Q^2 \le C_\sigma \left( 1+r\mu_\ast+\sqrt r\,|m|+rm^2 \right).
\]
On \(\mathcal{G}_d\), \(r\mu_\ast=o(1)\), and \(\sqrt r\,|m| \le (1+rm^2)/2\). Therefore,
\begin{equation}\label{eq:Q_second_moment_bound}
\E Q^2 \le C_\sigma(1+rm^2).
\end{equation}
The polynomial \(Q\) has degree at most \(P\) in the Gaussian input. Hence, by Gaussian hypercontractivity,
\[
\|Q\|_{L^4} \le 3^{P/2}\|Q\|_{L^2}.
\]
Using Cauchy--Schwarz,~\eqref{eq:y_L4_general}, and~\eqref{eq:Q_second_moment_bound},
\[
\E \left[ \langle\Pi^kw^\ast,G\rangle^2 \right] = \E [y^2Q^2] \le \|y\|_{L^4}^2 \|Q\|_{L^4}^2 \le C_\sigma(1+rm^2).
\]
Now
\[
\xi_{k,t} = -\left( \langle \Pi^kw^\ast,G_t\rangle - \E [ \langle\Pi^kw^\ast,G_t\rangle \mid \mathcal{F}_{t-1}] \right).
\]
Centering can only decrease the second moment, and therefore
\[
\E [ \xi_{k,t}^2 \mid \mathcal{F}_{t-1}] \le C_\sigma \left(1+r\,m_k(w_{t-1})^2\right),
\]
which proves~\eqref{eq:coordinate_noise_variance}.

Conditionally on \(\mathcal{F}_{t-1}\), the random variable \(\langle\Pi^kw^\ast,G_t\rangle\) is a polynomial of degree at most \(2P\) in the fresh Gaussian input \(x_t\). Subtracting its conditional expectation does not increase its degree, so \(\xi_{k,t}\) is also a Gaussian polynomial of degree at most \(2P\). Hence, for every \(\nu\ge2\), Gaussian hypercontractivity yields
\[
\|\xi_{k,t}\|_{L^\nu(\,\cdot\mid\mathcal{F}_{t-1})} \le (\nu-1)^P\|\xi_{k,t}\|_{L^2(\,\cdot\mid\mathcal{F}_{t-1})}.
\]
Using~\eqref{eq:coordinate_noise_variance},
\[
\left( \E[ |\xi_{k,t}|^\nu \mid\mathcal{F}_{t-1}] \right)^{1/\nu} \le C_\sigma\nu^P\sqrt{1+r\,m_k(w_{t-1})^2} \le C_\sigma\nu^P\left( 1+\sqrt r\,|m_k(w_{t-1})| \right),
\]
which proves~\eqref{eq:coordinate_noise_moments}.
\end{proof}

\subsection{Uniform control of the stochastic gradient}
\label{sec:uniform_gradient_proof}

\begin{proof}[Proof of Lemma~\ref{lem:uniform_gradient}]
We first work conditionally on a teacher \(w^\ast\in\mathcal{G}_d\) and establish the uniform second-moment bound
\begin{equation}\label{eq:second_moment}
\sup_{w\in\mathbb{S}^{d-1}} \E \|G(w;x,y)\|^2 \le C_\sigma d.
\end{equation}
All expectations in this first part are with respect to a fresh input \(x\sim\mathcal{N}(0,\Id_d)\). Fix $w$ and write $\mathrm{P}_w \coloneqq \Id_d-ww^\top$. Since
\[
G(w; x,y) = -y \nabla_w f_w(x) = - y \, \mathrm{P}_w \widehat{\nabla}_w f_w (x),
\]
Cauchy--Schwarz inequality gives
\begin{equation} \label{eq:norm_G}
\E \| G\|^2 = \E \left [ y^2  \| \mathrm{P}_w \widehat{\nabla}_w f_w (x)\|^2 \right] \le \|y\|_{L^4}^2 \left( \E\|\mathrm{P}_w \widehat{\nabla}_w f_w (x)\|^4 \right)^{1/2}.
\end{equation}
By~\eqref{eq:y_L4_general}, \(\|y\|_{L^4} \le C_\sigma\), uniformly over teachers in \(\mathcal{G}_d\). Choose an orthonormal basis $(v_1,\ldots,v_{d-1})$ of $w^\perp$, and set \(Q_i \coloneqq \langle v_i, \widehat{\nabla}_w f_w(x)\rangle\). Then
\[
\|\mathrm{P}_w \widehat{\nabla}_w f_w (x)\|^2 = \sum_{i=1}^{d-1} Q_i^2.
\]
Hence, by Minkowski's inequality in \(L^2\),
\[
\left(\E\|\mathrm{P}_w \widehat{\nabla}_w f_w (x)\|^4 \right )^{1/2}  = \left\|\sum_{i=1}^{d-1} Q_i^2 \right\|_{L^2} \le \sum_{i=1}^{d-1}\left\|Q_i^2\right\|_{L^2} =  \sum_{i=1}^{d-1}  \left\|Q_i\right\|_{L^4}^2.
\]
Each \(Q_i\) is a polynomial of degree at most $P$ in the Gaussian input $x$. Gaussian hypercontractivity therefore yields
\[
\left\|Q_i\right\|_{L^4} \le 3^{P/2} \left\|Q_i\right\|_{L^2},
\]
and consequently
\begin{equation} \label{eq:minkowski}
\left (\E\|\mathrm{P}_w \widehat{\nabla}_w f_w (x)\|^4 \right)^{1/2} \le 3^P\sum_{i=1}^{d-1}\E\left[Q_i^2 \right]  .
\end{equation}
It remains to control the sum on the right-hand side of~\eqref{eq:minkowski}. By Lemma~\ref{lem:feature_gradient_covariance},
\[
\sum_{i=1}^{d-1}\E\left[Q_i^2 \right ] = \sum_{j=0}^{r-1} \left(\phi'(a_j(w))\sum_{i=1}^{d-1}\langle v_i,\Pi^j v_i\rangle + \phi''(a_j(w))\sum_{i=1}^{d-1}\langle w,\Pi^jv_i\rangle \langle\Pi^jw,v_i\rangle \right),
\]
where  \(a_j(w)= \langle w, \Pi^j w \rangle\). For the first sum, since \(\mathrm{P}_w = \sum_{i=1}^{d-1} v_iv_i^\top\), we have
\[
\sum_{i=1}^{d-1} \langle v_i, \Pi^j v_i \rangle = \tr(\mathrm{P}_w \Pi^j).
\]
If \(j=0\), this equals \(d-1\). If \(1 \le j < r\), then \(\tr (\Pi^j)=0\), and therefore
\[
\tr(\mathrm{P}_w \Pi^j) = - \langle w, \Pi^j w\rangle = - a_j (w).
\]
As a result,
\[
\sum_{j=0}^{r-1}\left|\sum_{i=1}^{d-1}\langle v_i, \Pi^j v_i\rangle\right| \le d + r.
\]
For the second sum,
\[
\sum_{i=1}^{d-1}\langle w, \Pi^j v_i\rangle \langle\Pi^j w, v_i\rangle = \langle\mathrm{P}_w\Pi^{-j} w, \mathrm{P}_w\Pi^j w\rangle,
\]
and hence its absolute value is at most one. Since \(\phi'\) and \(\phi''\) are uniformly bounded on \([-1,1]\),
\[
\sum_{i=1}^{d-1}\E\left[Q_i^2\right] \le C_\sigma(d+r) \le C_\sigma d,
\]
where we used \(r=d^\delta \le d\). Combining this estimate with~\eqref{eq:minkowski} and~\eqref{eq:norm_G} proves~\eqref{eq:second_moment}.

We now derive higher conditional moments. Fix \(t\ge1\) and condition on \(\mathcal{F}_{t-1}\). Then \(w_{t-1}\) is fixed and the only randomness comes from the fresh Gaussian input \(x_t\). Choose an orthonormal basis \((v_1,\ldots,v_{d-1})\) of \(w_{t-1}^\perp\), and set \(R_i\coloneqq\langle v_i,G_t\rangle\). Each \(R_i\) is a polynomial of degree at most \(2P\) in \(x_t\). Thus, for every \(\nu\ge2\), Gaussian hypercontractivity gives
\[
\|R_i\|_{L^\nu(\,\cdot\mid\mathcal{F}_{t-1})} \le (\nu-1)^P \|R_i\|_{L^2(\,\cdot\mid\mathcal{F}_{t-1})}.
\]
Using Minkowski in \(L^{\nu/2}\),
\[
\begin{aligned}
\left( \E [ \|G_t\|^\nu \mid\mathcal{F}_{t-1}] \right)^{2/\nu} & = \left\| \sum_{i=1}^{d-1} R_i^2 \right\|_{L^{\nu/2}}\\ &\le \sum_{i=1}^{d-1} \|R_i\|_{L^\nu}^2\\ &\le (\nu-1)^{2P} \sum_{i=1}^{d-1} \|R_i\|_{L^2}^2\\ & = (\nu-1)^{2P} \E[ \|G_t\|^2 \mid\mathcal{F}_{t-1}]\\ & \le C_\sigma d\,\nu^{2P},
\end{aligned}
\]
where the last inequality follows from~\eqref{eq:second_moment}. Therefore
\begin{equation}\label{eq:gradient_high_moments}
\left( \E[ \|G_t\|^\nu \mid\mathcal{F}_{t-1}] \right)^{1/\nu} \le C_\sigma\sqrt d\nu^P.
\end{equation}
Take \(\nu_d=\lceil c\log d\rceil\) for a fixed \(c>0\). By conditional Markov's inequality and~\eqref{eq:gradient_high_moments}, for every \(D>0\) one can choose \(L=L(\sigma,D)<\infty\) sufficiently large so that
\[
\mathbb{P} \left( \|G_t\|> L\sqrt d\,(\log d)^P \,\middle|\, \mathcal{F}_{t-1} \right) \le d^{-D}
\]
uniformly in \(t\), for all sufficiently large \(d\), on the event \(\mathcal{G}_d\). Choose \(D>C+1\). Taking expectations and applying a union bound over \(1\le t\le T\le d^C\) gives
\[
\mathbb{P} \left( \left \{ \max_{1\le t\le T} \|G_t\| > L\sqrt d\,(\log d)^P \right \} \cap  \mathcal{G}_d \right) \le d^{C-D} =o(1).
\]
Finally, \(\mathbb{P}(\mathcal{G}_d^\mathrm{c})=o(1)\) by Lemma~\ref{lem:teacher_geometry}, so that
\[
\mathbb{P} \left( \max_{1\le t\le T} \|G_t\| > L \sqrt{d}\, (\log d)^P \right) = o(1).
\]
This proves the lemma.
\end{proof}